\newcommand\frontmatter{%
\cleardoublepage
%\@mainmatterfalse
\pagenumbering{roman}}
\newcommand\mainmatter{%
\cleardoublepage
% \@mainmattertrue
\pagenumbering{arabic}}
\newcommand*\diff{\mathop{}\!\mathrm{d}}
\renewcommand{\vec}[1]{\mathbf{#1}}
\newcommand{\Pk}{\mathcal{P}}
\newcommand{\E}{\mathrm{E}}
\newcommand{\T}{^\text{T}}
\renewcommand{\H}{\mathcal{H}}
\newcommand{\x}{\vec x}
\newcommand{\w}{{\vec w}}
\newcommand{\y}{\vec y}
\newcommand{\vtheta}{\boldsymbol{\theta}}
\newcommand{\norm}[1]{\left\lVert#1\right\rVert}
\newcommand{\wrt}{w.\,r.\,t.}
\newcommand{\eg}{e.\,g.}
\newcommand{\ie}{i.\,e.}
\newcommand{\phim}{\boldsymbol \phi_m}
\newcommand{\Ms}{\mathcal{M}\!\left([0,1]^d\right)}
\newcommand{\MR}{\mathcal{M}\!\left(\mathbb{R}^d\right)}
\newcommand{\vc}{\mathrm{VC}(\mathcal{F})}
\newcommand\blfootnote[1]{%
  \begingroup
  \renewcommand\thefootnote{}\footnote{#1}%
  \addtocounter{footnote}{-1}%
  \endgroup
}
\DeclarePairedDelimiter\ceil{\lceil}{\rceil}
\DeclarePairedDelimiter\floor{\lfloor}{\rfloor}
\DeclareMathOperator*{\argmax}{arg\,max}
\DeclareMathOperator*{\argmin}{arg\,min}
\DeclareMathOperator{\sign}{sign}
\newcommand\BackgroundPic{
\put(0,0){
\parbox[b][\paperheight]{\paperwidth}{%
\vfill
\centering
\includegraphics[width=\paperwidth,height=\paperheight]{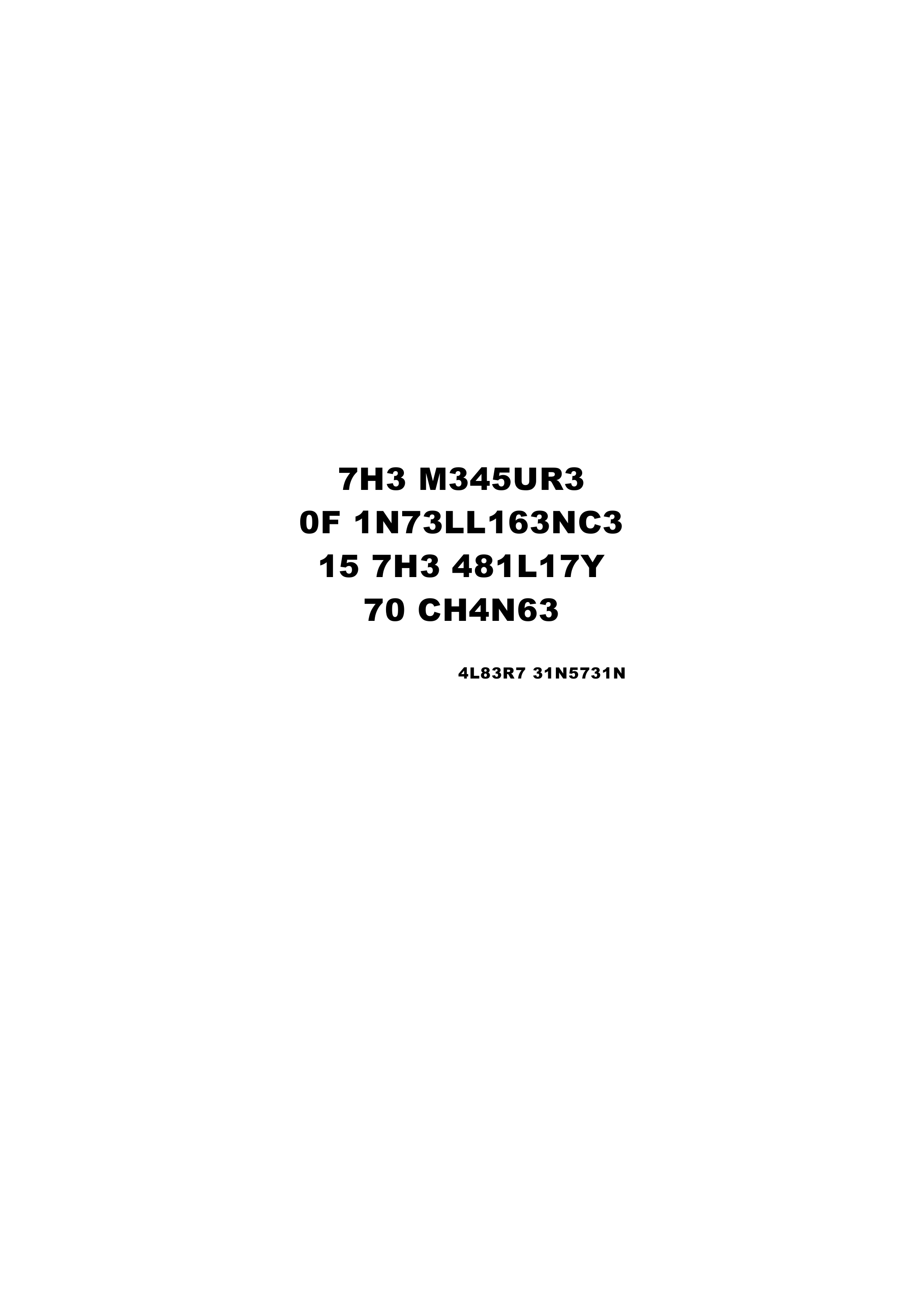}
\vfill
}}}
\renewenvironment{proof}{{\bfseries Proof.}}{\pushQED{\qed}\qedhere\popQED}
\newtheorem{example}{Example}[chapter]
\newtheorem{remark}{Remark}
\declaretheoremstyle[
    headfont=\bfseries,
    % notefont=\bfseries,
    notebraces={(}{)},
    % bodyfont=\normalfont\itshape,
    bodyfont=\normalfont,
    headpunct={},
    postheadspace=\newline,
    postheadhook={\textcolor{black}{\rule[.6ex]{\linewidth}{0.4pt}}\\},
    spacebelow=6pt,
    spaceabove=6pt,
    mdframed={
        backgroundcolor=white, 
            linecolor=black, 
            innertopmargin=6pt,
            roundcorner=0pt, 
            innerbottommargin=6pt, 
            skipabove=11pt, 
            skipbelow=6pt } 
]{myexamplestyle}
\declaretheoremstyle[
    headfont=\bfseries,
    % notefont=\bfseries,
    notebraces={(}{)},
    % bodyfont=\normalfont\itshape,
    bodyfont=\normalfont,
    headpunct={},
    postheadspace=\newline,
    postheadhook={\textcolor{black}{\rule[.6ex]{\linewidth}{0.4pt}}\\},
    spacebelow=6pt,
    spaceabove=6pt,
    mdframed={
        backgroundcolor=white, 
            linecolor=black, 
            innertopmargin=6pt,
            roundcorner=0pt, 
            innerbottommargin=6pt, 
            skipabove=11pt, 
            skipbelow=6pt } 
]{myexamplestyle}
\declaretheoremstyle[
    headfont=\bfseries,
    % notefont=\bfseries,
    notebraces={(}{)},
    % bodyfont=\normalfont\itshape,
    bodyfont=\normalfont,
    headpunct={},
    postheadspace=\newline,
    postheadhook={\textcolor{black}{\rule[.6ex]{\linewidth}{0.4pt}}\\},
    spacebelow=6pt,
    spaceabove=6pt,
    mdframed={
        backgroundcolor=gray!20, 
        linecolor=black, 
        innertopmargin=6pt,
        roundcorner=0pt, 
        innerbottommargin=6pt, 
        skipabove=11pt} 
]{myexamplestyle}
\declaretheoremstyle[
    headfont=\bfseries,
    % notefont=\bfseries,
    notebraces={(}{)},
    % bodyfont=\normalfont\itshape,
    bodyfont=\normalfont,
    headpunct={},
    postheadspace=\newline,
    postheadhook={\textcolor{black}{\rule[.6ex]{\linewidth}{0.4pt}}\\},
    spacebelow=6pt,
    spaceabove=6pt,
    mdframed={
        backgroundcolor=white, 
            linecolor=black, 
            innertopmargin=6pt,
            roundcorner=0pt, 
            innerbottommargin=6pt, 
            skipabove=11pt, 
            skipbelow=6pt } 
]{myexamplestyle}
\declaretheoremstyle[
    headfont=\bfseries,
    % notefont=\bfseries,
    notebraces={(}{)},
    % bodyfont=\normalfont\itshape,
    bodyfont=\normalfont,
    headpunct={},
    postheadspace=\newline,
    postheadhook={\textcolor{black}{\rule[.6ex]{\linewidth}{0.4pt}}\\},
    spacebelow=6pt,
    spaceabove=6pt,
    mdframed={
        backgroundcolor=white, 
            linecolor=black, 
            innertopmargin=6pt,
            roundcorner=0pt, 
            innerbottommargin=6pt, 
            skipabove=11pt, 
            skipbelow=6pt } 
]{myexamplestyle}
\begin{document}

\frontmatter

% \maketitle

\includepdf[pages=-]{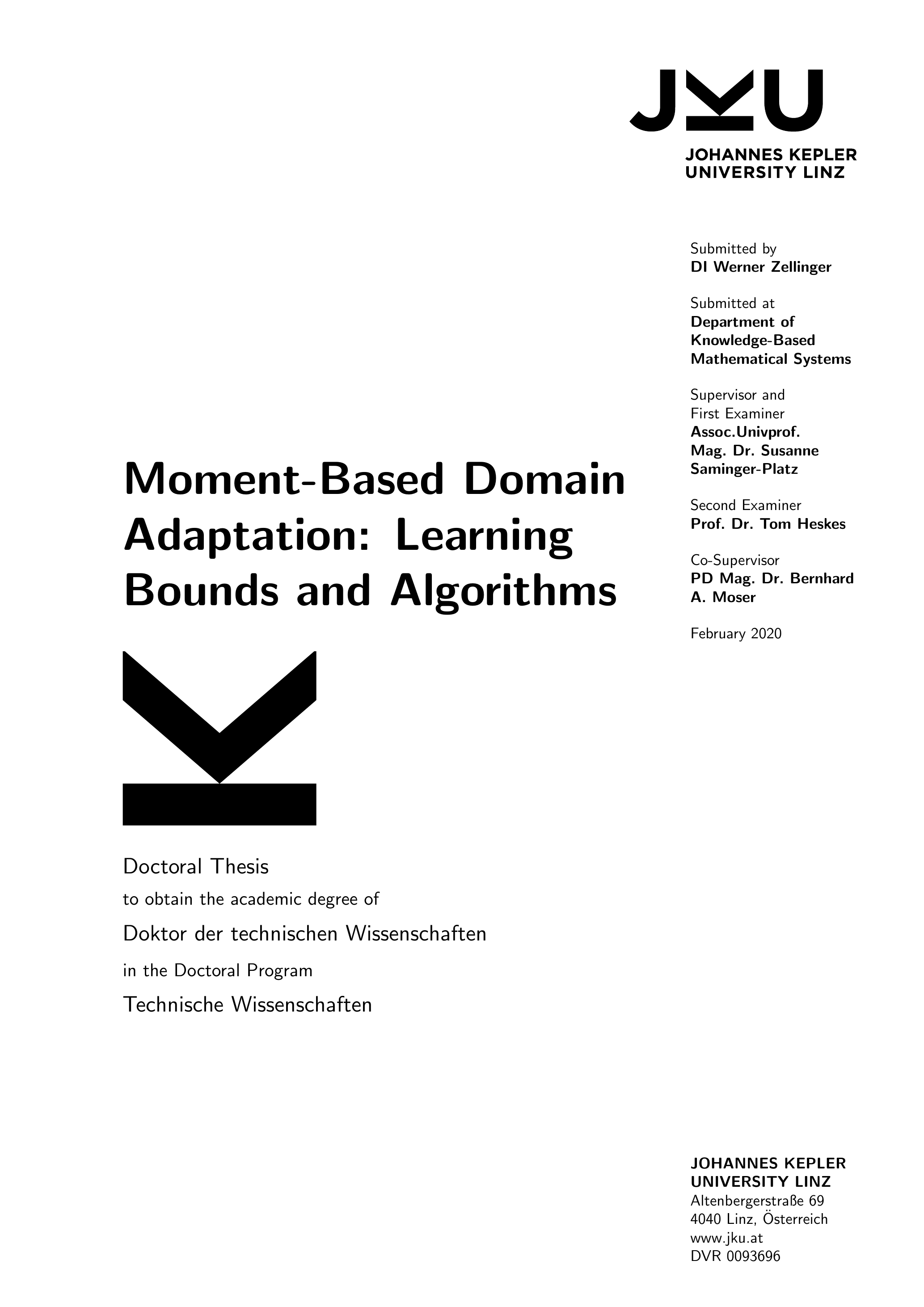}
% \newpage 
% \thispagestyle{empty}
% \quad 
\newpage

\chapter*{Statutory Declaration}

I hereby declare that the thesis submitted is my own unaided work, that I have not used other than the sources indicated, and that all direct and indirect sources are acknowledged as references.

This is a draft of the thesis which is similar to the final version.
\\\\\\\\
Linz, February 2020\hfill DI Werner Zellinger
\\\\

% \newpage 
% \quad 
\newpage

\section*{Abstract}

This thesis contributes to the mathematical foundation of domain adaptation as emerging field in machine learning.
In contrast to classical statistical learning, the framework of domain adaptation takes into account deviations between probability distributions in the training and application setting.
Domain adaptation applies for a wider range of applications as future samples often follow a distribution that differs from the ones of the training samples.
A decisive point is the generality of the assumptions about the similarity of the distributions.
Therefore, in this thesis we study domain adaptation problems under as weak similarity assumptions as can be modelled by finitely many moments.

By examining the generalization ability of discriminative models trained under this relaxed assumption we establish, in the first part, a framework for bounding the misclassification risk based on finitely many moments and additional smoothness conditions.
Our results show that a low misclassification risk of the discriminative models can be expected if a) the misclassification risk on the training sample is small, b) the sample size is large enough, c) finitely many moments of the underlying distributions are similar, and d) the samples' distributions meet an additional entropy condition.

In the second part, we apply our theoretical framework to the design of machine learning algorithms for domain adaptation.
We propose a new moment distance for metric-based regularization of neural networks.
Our methods aim at finding new data representations such that our weak assumptions on the similarity of the distributions are satisfied.
In this context, various relations of the new moment distance to other probability metrics are proven.
Further, a bound on the misclassification risk of our method is derived.
To underpin the relevance of our theoretical framework, we perform empirical experiments on several large-scale benchmark datasets.
The results show that our method, though based on weaker assumptions, often outperforms related alternatives based on stronger assumptions on the similarity of distributions.

In the third part, we apply our framework on two industrial regression problems.
The first problem is settled in the area of industrial manufacturing.
We propose a new algorithm that is based on the similarity of the first moments of multiple different distributions.
Our algorithm enables the modeling of time series from previously unseen distributions and outperforms several standard regression algorithms on real-world data.
The second problem stems from the area of analytical chemistry.
We propose a new moment-based domain adaptation algorithm for the calibration of chemical measurement systems.
In contrast to standard approaches, our algorithm is only based on unlabeled data from the application system.
Theoretical properties of the proposed algorithm are discussed and it is shown to empirically outperform standard alternatives on two real-world datasets.
% \end{abstract}

\newpage
\section*{Kurzfassung}

Diese Dissertation trägt zu den mathematischen Grundlagen des Bereichs "Domain Adaptation" bei, welcher einen aufstrebenden Teilbereich des Maschinellen Lernens bildet. Im Gegensatz zum klassischen Statistischen Lernen berücksichtigt das Framework Domain Adaptation auch Abweichungen zwischen den Wahrscheinlichkeits\-verteilungen der Trainings- und Anwendungsumgebung. Domain Adaptation kann damit in breiteren Bereichen eingesetzt werden, da Stichproben zukünftiger Daten oft einer anderen Wahrscheinlichkeitsverteilung folgen als Stichproben der Trainingsdaten. Ein wichtiger Punkt bei Domain Adaptation ist die Allgemeinheit der Annahmen über die Ähnlichkeit der Wahrscheinlichkeitsverteilungen. Aus diesem Grund studieren wir in dieser Dissertation Probleme von Domain Adaptation unter so schwachen Annahmen wie sie mit endlich vielen Momenten modelliert werden können.

Durch die Untersuchung der Generalisierungsfähigkeit von unterscheidenden Modellen, welche unter diesen verallgemeinerten Annahmen gelernt wurden, entwerfen wir im ersten Teil dieser Arbeit ein neues Framework, um obere Schranken für das Missklassifikationsrisiko zu finden. Diese neu beschriebenen oberen Schranken basieren auf endlich vielen Momenten und zusätzlichen Glattheitseigenschaften. Unsere Resultate zeigen, dass ein kleines Missklassifikationsrisiko von unterscheidenden Modellen erwartet werden kann, wenn a) das Missklassifikationsrisiko bezüglich der Trainingsstichprobe klein ist, b) die Stichprobengröße groß genug ist und c) die Wahrscheinlichkeitsverteilungen der Stichproben eine zusätzliche Entropieeigenschaft erfüllen.

Im zweiten Teil setzen wir unser Framework zur Entwicklung neuer Lernalgorithmen ein. Unter Anderem stellen wir eine neue, auf Momenten basierende Distanz für die Regularisierung von Neuronalen Netzen vor. Die von uns vorgestellten Methoden zielen darauf ab, neue Datenrepräsentationen zu finden, welche die im ersten Teil vorgestellten, schwachen Annahmen an die Ähnlichkeit von Wahrscheinlichkeitsverteilungen erfüllen. In diesem Kontext beweisen wir verschiedene Relationen zwischen der neuen, auf Momenten basierenden Distanz und anderen Distanzen auf Wahrscheinlichkeitsmaßen. Des Weiteren leiten wir mit Hilfe unseres Frameworks eine obere Schranke für das Missklassifikationsrisiko unserer Methode her. Um die Relevanz unseres theoretischen Frameworks zu untermauern, führen wir empirische Experimente auf zahlreichen großen Datenbanken durch. Die Resultate zeigen, dass unsere Methode, obwohl sie auf schwächeren Annahmen basiert, oft ähnliche alternative Methoden übertrifft, welche auf stärkeren Annahmen basieren.

Im dritten Teil wenden wir unser Framework auf zwei industrielle Regressionsprobleme an. Das erste Problem stammt aus dem Bereich der industriellen Produktion. Wir stellen einen neuen Algorithmus vor, der auf der Ähnlichkeit der ersten Momente von mehreren Wahrscheinlichkeitsverteilungen basiert. Unser Algorithmus ermöglicht die Modellierung von neuen, nicht der Wahrscheinlichkeitsverteilung der Trainingsdaten folgenden Zeitreihen und übertrifft, auf Datensätzen realer Problemstellungen, zahlreiche Standardregressionsalgorithmen. Das zweite Problem stammt aus dem Bereich der Analytischen Chemie. Wir stellen einen neuen, auf Momenten basierenden Algorithmus zur Kalibrierung chemischer Messsysteme vor. Im Gegensatz zu Standardalgorithmen basiert unser Algorithmus nur auf ungelabelten Daten des Anwendungsmesssystems. Wir diskutieren theoretische Eigenschaften des vorgestellten Algorithmus und zeigen, dass unser Algorithmus Standardalternativen oft übertrifft.
% \end{abstract}

% \newpage 
% \thispagestyle{empty}
% \quad 
\newpage

\newpage
\section*{Acknowledgements}

I wish to express my sincere appreciation to my supervisor and first examiner, Susanne Saminger-Platz, who convincingly guided and encouraged me to aim at mathematical excellence and correctness even when the road got tough.
Without her untiring effort of providing detailed reviews, especially during several evenings of work, the goal of this thesis would not have been realized.

I would like to pay my special regards to my co-supervisor, Bernhard Moser, who continuously helped me to identify the core questions guiding my work.

I would like to thank my second examiner, Tom Heskes, for taking the time and effort to review my thesis.

The physical and technical contribution of the Software Competence Center Hagenberg GmbH is truly appreciated. Without their support and funding, this project could not have reached its goal.

I would like to thank my co-workers for all the exciting research projects we have done together.
This thesis is the result of various collaborations and would not have been possible without them.
In particular I would like to thank Thomas Natschläger, Thomas Grubinger, Michael Zwick and Ramin Nikzad-Langerodi from the Software Competence Center Hagenberg GmbH, Edwin Lughofer from the Department of Knowledge-Based Mathematical Systems, and, Hamid Eghbal-zadeh and Gerhard Widmer from the Institute of Computational Perception.

Additionally I would like to thank Sepp Hochreiter, Helmut Gfrerer, Florian Sobieczky, Johannes Himmelbauer, Ciprian Zavoianu, Robert Pollak, Paul Wiesinger and Laura Peham for their valuable feedback on my work.

Last but by no means least, I would like to thank my wife Marion and my son Jakob for pointing me to the most important things.

The research reported in this doctoral thesis has been supported by the Austrian Ministry for Transport, Innovation and Technology, the Federal Ministry for Digital and Economic Affairs, and the Province of Upper Austria in the frame of the COMET center SCCH. I also gratefully acknowledge the support of NVIDIA Corporation with the donation of a Titan X GPU used for this research.
% \newpage 
% \thispagestyle{empty}
% \quad 
\newpage

% \includepdfset{pages=-}
% \includepdf{intelligence.pdf}
\AddToShipoutPicture*{\BackgroundPic}
\blfootnote{explanation on the last page}
\newpage 
\thispagestyle{empty}
\quad 
\newpage

\tableofcontents
% \newpage 
% \thispagestyle{empty}
% \quad 
\newpage

\thispagestyle{empty}
 
\listoffigures
 
\listoftables

\listofalgorithms
 
\newpage

\mainmatter

\chapter{Introduction}
\label{chap:introduction}

Inductive inference is to observe a phenomenon, to construct a model of that phenomenon and to make predictions using this model.
Indeed, this definition is very general and could roughly be taken as the goal of natural sciences.
\textit{Statistical learning} considers the process of inductive inference as a problem of estimating a desired dependency based on a finite sample.

Most results in statistical learning, both theoretical and empirical, assume an application sample that follows the same distribution as the training sample.
This assumption is violated in typical applications such as natural language processing, computer vision, industrial manufacturing and analytical chemistry.
\textit{Domain adaptation} extends the classical learning framework by allowing training and test samples which follow different distributions.

However, standard approaches study domain adaptation based on empirical estimations of strong similarity concepts between distributions.
It is the aim of this thesis to study domain adaptation under weak assumptions on the similarity of training and application distribution.

We model these assumptions based on moment distances which realize weaker similarity concepts than most other common probability metrics, see Figure~\ref{fig:metrics_with_moments}.

In our study we follow the four main components of statistical learning~\cite{vapnik2013nature}:

\begin{itemize}
	\item[(i)] We study conditions for the convergence of a discriminative learning process with increasing sample size.
	\item[(ii)] We give bounds describing the generalization ability of the learning process.
	\item[(iii)] We perform inductive inference based on the common principle of finding new data representations such that our weak assumptions are satisfied.
	\item[(iv)] We provide algorithms which follow our theoretical framework.
\end{itemize}
In particular, we start by describing the required preliminaries in Chapter~\ref{chap:background}.
The experienced reader is encouraged to skip this chapter and return to it if some background is missing.

In Chapter~\ref{chap:learning_bounds} we give conditions for the convergence of learning processes of discriminative models under the relaxed setting of weaker assumptions.
We provide upper bounds on the misclassification risk based on a moment distance and smoothness conditions on the underlying distributions.
We show that a small misclassification risk can be expected if the misclassification risk on the training sample is small, if the samples are large enough and its distributions have high entropy in the respective classes of densities sharing the same finite collection of moments.

In Chapter~\ref{chap:cmd_algorithm} we study the principle of learning new data representations such that all the samples' distributions have only finitely many moments in common.
We propose a new moment distance for metric-based regularization of neural networks.
Some relations of the new distance to other probability metrics are provided and a bound on the misclassification error of the new method is derived.
To underpin the relevance of our theoretical framework described in Chapter~\ref{chap:learning_bounds}, we perform empirical experiments on several large-scale benchmark datasets.
Results show that our method, though based on weaker assumptions, often outperforms related alternatives which are based on stronger concepts of similarity.

In Chapter~\ref{chap:applications}, we exploit our mathematical framework to come up with algorithms for two industrial regression problems.
The first problem is in the area of industrial manufacturing.
We propose a new algorithm that is based on the similarity of the first moments of multiple different distributions.
In contrast to standard regression methods, our algorithm enables the modeling of time series from previously unseen distributions.
The second problem is in the area of analytical chemistry.
We propose a new moment-based domain adaptation algorithm for the calibration of chemical measurement systems.
In contrast to standard approaches, our algorithm is only based on unlabeled application data.
Theoretical properties of the algorithm are discussed and it is shown to empirically outperform standard alternatives on two real-world datasets.

Chapter~\ref{chap:conclusion} concludes with a positioning of our research results from the point of view of current trends in statistical learning together with an outline of future research lines.

\begin{figure}[t]
	\centering
	\includegraphics[width=\linewidth]{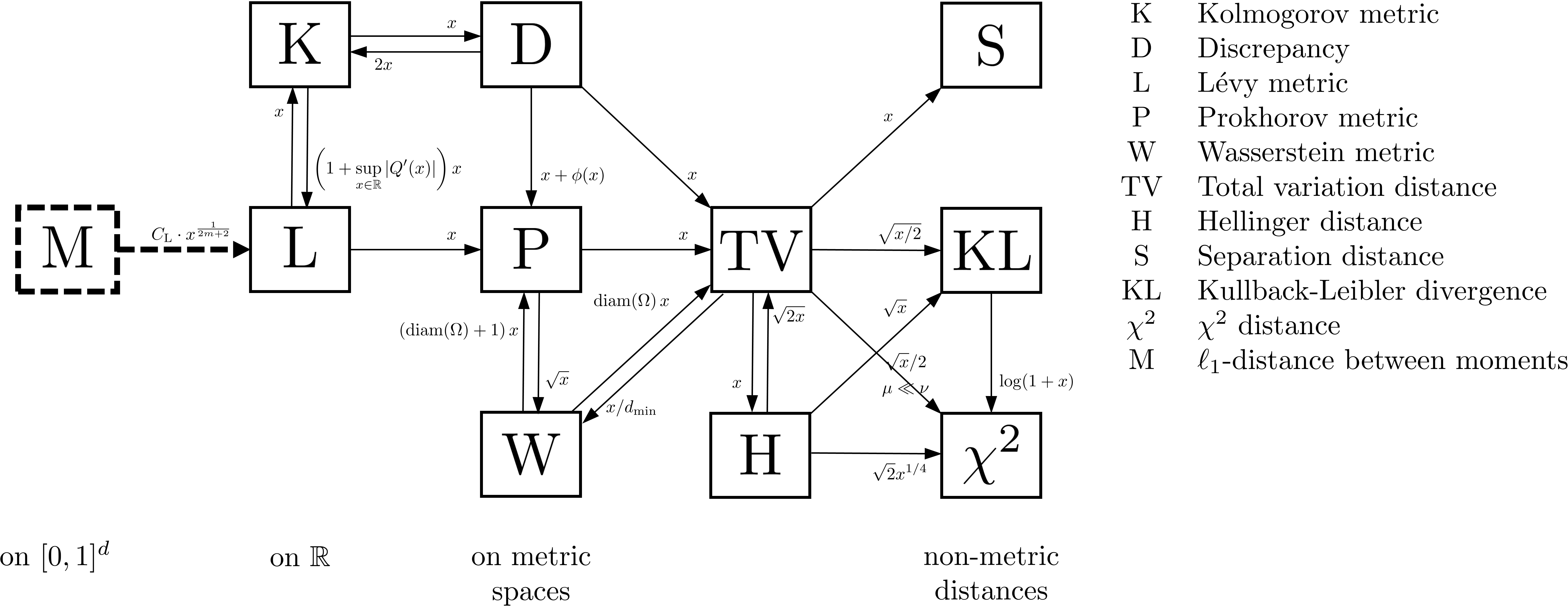}
	\caption[Relationships among probability metrics as illustrated in~\cite{gibbs2002choosing} and supplemented by a moment distance.]{Relationships among probability metrics as illustrated in~\cite{gibbs2002choosing} and supplemented by Lemma~\ref{lemma:moment_distance_bound_by_levy} (dashed).
		A directed arrow from $\text{A}$ to $\text{B}$ annotated by a function $h(x)$ means that $d_\text{A}\leq h(d_\text{B})$.
		For notations, restrictions and applicability see Section~\ref{sec:probability_metrics}.
	}
	\label{fig:metrics_with_moments}
\end{figure}

\section{Original Contribution}
\label{sec:original_contribution}

For the general interest of the reader, we now summarize the novel parts of our research, most of which have already been disseminated in scientific journals and conference proceedings.

The learning bounds for moment-based domain adaptation in Chapter~\ref{chap:learning_bounds} have been initially proposed in~\cite{zellinger2018compstat} and described at length in~\cite{zellinger2019amai}.

The metric for domain adaptation in Chapter~\ref{chap:cmd_algorithm} has been first proposed in~\cite{zellinger2017central} and described at length in~\cite{zellinger2017robust} with exception of the discussed relations to other probability metrics, \ie~Subsection~\ref{subsec:relation_of_cmd_to_other_probability_metrics} and Subsection~\ref{subsec:proof_relation_to_other_probability metrics}, which are completely new.
The source code of all experiments has been made publicly available\footnote{\url{https://github.com/wzell/mann} (accessed October 31, 2019)}.

The industrial applications in Chapter~\ref{chap:applications} have been published in~\cite{Zellinger2019multisource,nikzad2018domain,nikzad2019icmla,nikzad2020kbs}.
In particular, most of the work in Section~\ref{sec:cyclical_manufacturing} has been published in~\cite{Zellinger2019multisource}.
The details of the algorithm in Subsection~\ref{subsec:approach_scitsm} have been discovered through many years of industrial work by many of the included coauthors and the empirical evaluations in Subsection~\ref{subsec:use_case} have been mainly implemented by my coworkers Thomas Grubinger and Michael Zwick.
The algorithm as presented in Subsection~\ref{subsec:algo_dipls} and the implementations of the empirical evaluations as described in Subsection~\ref{subsec:experiments_dipls} are published in~\cite{nikzad2018domain,nikzad2019icmla} and are mainly due to my coworker Ramin Nikzad-Langerodi.
The learning bound in Subsection~\ref{subsec:learning_bound_dipls} together with the parameter heuristic in Subsection~\ref{subsec:parameter_setting} and parts of the discussion in Subsection~\ref{subsec:properties_dipls} are described in~\cite{nikzad2020kbs}.

\newpage

\section{Notation}
\label{sec:notations}

Most notations used in this work are either standard or defined on the spot.
This section provides our main conventions with a summarizing table at its end.
The reader is encouraged to skip this section and return to it if some notations are unclear.

We denote indices, natural numbers and abstract variables by lower case letters, \eg~$x$, $n$ and $\lambda$.

We denote by $\mathbb{N}$ the set of natural numbers including $0$, by $\mathbb{R}$ the set of real numbers and by $\mathbb{R}_+=\{x\in\mathbb{R}\mid x>0\}$ the set of positive real numbers.
We further denote by $\mathbb{R}^d$ the set of $d$-dimensional vectors over $\mathbb{R}$ and by $[0,1]^d\subset\mathbb{R}^d$ the $d$-dimensional unit cube.

Sets of functions are denoted by calligraphic letters, \eg~$\mathcal{F}$ and $\mathcal{G}$.

Finite multisets are denoted by uppercase letters, \eg~$X$ and $Y$.
The empty set is denoted by $\varnothing$, the union of two multisets $X$ and $Y$ is denoted by $X\cup Y$ and the cardinality of $X$ is denoted by $\left|X\right|$.

To emphasize that some objects are column vectors, we use boldface letters, \eg~$\vec x$ and $\boldsymbol{\phi}$.
The $i$-th element of a vector $\x$ is denoted by $x_i$.
We denote by $\x_1,\ldots,\x_n$ a sequence of $n$ vectors and by $x_{i,j}$ the $j$-th element of the $i$-th vector in the sequence.
We denote by $\mathrm{Span}(A)$ the linear span of a set $A$ of vectors.
We use upper case boldface letters for matrices, \eg~$\vec X$ and $\vec Y$, and denote its transpose by means of the letter $\text{T}$, \eg~${\vec X}^\text{T}$ and ${\vec Y}^\text{T}$.
The $i$-th element in the $j$-th column of a matrix $\vec X$ is denoted by $x_{i,j}$.

The element-wise multiplication of two vectors $\x$ and $\y$ is denoted by $\x\odot \y$.
The inner product between two vectors $\x$ and $\y$ on the Euclidean space $\mathbb{R}^d$ is denoted by $\langle\x,\y\rangle$.
The Euclidean norm, or $\ell^2$-norm, is denoted by $\norm{\x}_2=\sqrt{\langle\x,\x\rangle}$.
We denote the $\ell^1$-norm of $\x$ by $\norm{\vec x}_1=\sum_{i=1}^d \left|x_i\right|$.
The Frobenius norm of a matrix $\vec X\in\mathbb{R}^{n\times d}$ is denoted by $\norm{\vec X}_\mathrm{F}=\sqrt{\sum_{i=1}^n\sum_{j=1}^d \left|x_{i,j}\right|^2}$.

We denote by $(\Omega,d)$ a metric space with set $\Omega$ and metric $d$.

We denote by $(\Omega,\mathcal{A})$ a measurable space with set $\Omega$ and $\sigma$-algebra $\mathcal{A}$.
The Borel $\sigma$-algebra on a set $\Omega$ is denoted by $\mathcal{B}(\Omega)$.
For two probability measures $\mu$ and $\nu$ on $(\Omega,\mathcal{A})$ we denote by $\mu\ll\nu$ the property that $\mu$ is dominated by $\nu$, \ie~for all measurable sets $A$ it holds that $\nu(A)=0\implies\mu(A)=0$.
If they exits, we denote by $p$ and $q$ the density functions of $\mu$ and $\nu$, respectively.

We denote by $\int_\Omega f(\x)\diff\x$ the Lebesgue integral of a function $f:\Omega\to\mathbb{R}$ with $\Omega\subseteq\mathbb{R}^d$.
For example we often consider the integral $\int_{[0,1]^d} f(\x)\diff\x=\int_0^1\ldots\int_0^1 f(x_1,\ldots,x_d)\diff x_1\ldots\diff x_d$ on the unit cube $[0,1]^d$.
If the meaning is clear from the context we omit the support and the integration variables, \eg~we use $\int f$ to denote $\int_\Omega f(\x)\diff\x$.

We denote by $\mathcal{M}(\Omega)$ the set of all probability density functions \wrt~the Lebesgue reference measure and support $\Omega\subseteq\mathbb{R}^d$, \ie~the set of all functions $p:\Omega\to [0,\infty)$ with $\int_\Omega p(\x)\diff\x=1$.

Let $\mu$ be a measure on $(\Omega,\mathcal{B}(\Omega))$ with probability density function $p$ and $\Omega\subset\mathbb{R}$.
Let further $0<n<\infty$.
For
\begin{align*}
	\mathcal{L}^n=\left\{f:\Omega\to\mathbb{R}\,\middle|\, f~\text{measurable}, \int_\Omega \left|f(\x)\right|^n p(\x)\diff\x<\infty\right\}
\end{align*}
the $L^n$-norm is defined by
\begin{align*}
	\norm{.}_{L^n(p)}:\mathcal{L}^n&\to \mathbb{R}\\
	f&\mapsto \left(\int_\Omega \left|f(\x)\right|^n p(\x)\diff\x \right)^{1/n}.
\end{align*}
The $\infty$-norm of some function $f\in\left\{f:\Omega\to\mathbb{R}\mid f~\text{measurable}, \norm{f}_\infty<\infty\right\}$ is denoted by ${\norm{f}_\infty=\mathrm{ess}\sup_{\x\in\Omega} |f(\x)|}$.
If $\Omega=[0,1]^d$ is the unit cube and $\mu$ is the Lebesgue measure with uniform weight function $\tilde p$ we denote by $\norm{f}_{L^n}=\norm{f}_{L^n(\tilde p)}$ for simplicity.

$\mathbb{R}[x_1,\ldots,x_d]$ denotes the set of polynomials in the variables $x_1,\ldots,x_d$.
The maximum total degree $m$ of a polynomial
\begin{align*}
	\alpha_1\cdot x_1^{a_{1,1}}\cdots x_d^{a_{1,d}}+\ldots+\alpha_s\cdot x_1^{a_{s,1}}\cdots x_d^{a_{s,d}}\in\mathbb{R}[x_1,\ldots, x_d]
\end{align*}
with $\alpha_1,\ldots,\alpha_s\in\mathbb{R}$ and $a_{1,1},\ldots,a_{s,d}\in\mathbb{N}$ is $m=\max_{i\in\{1,\ldots,s\}} a_{i,1}+\cdots+a_{i,d}$.
Polynomials with maximum total degree $m$ are denoted by $\mathbb{R}_m[x_1,\ldots,x_d]$.
We often consider the vector space $\mathrm{Span}(\mathbb{R}_m[x_1]\cup\ldots\cup \mathbb{R}_m[x_d])$ of polynomials with only univariate terms of maximum total degree $m$.

For some polynomial $\phi\in \mathbb{R}[x_1,\ldots,x_d]$ and some vector $\boldsymbol{\alpha}\in\mathbb{R}^d$ we denote by $\phi(\boldsymbol{\alpha})$ the evaluation of the corresponding polynomial function at $\boldsymbol{\alpha}$.
For some vector $\boldsymbol{\phi}=(\phi_1,\ldots,\phi_n)^\text{T}$ of polynomials $\phi_1,\ldots,\phi_n\in\mathbb{R}[x_1,\ldots,x_d]$ we denote by $\boldsymbol{\phi}(\boldsymbol{\alpha})=\left( \phi_1(\boldsymbol{\alpha}),\ldots,\phi_n(\boldsymbol{\alpha}) \right)^\text{T}$ the vector of evaluations.

For some probability density function $p\in\mathcal{M}(\Omega)$ we denote by $\int_\Omega \boldsymbol{\phi}(\x) p(\x) \diff \x$, or sometimes just $\int\boldsymbol{\phi} p$,
the vector $\left(\int \phi_1 p,\ldots,\int \phi_n p\right)^\text{T}$.

We call a sequence $\phi_1, \ldots, \phi_n$ of polynomials with $\phi_1, \ldots, \phi_n\in \mathbb{R}[x_1,\ldots,x_d]$
\textit{orthonormal \wrt~a probability density $p\in\mathcal{M}(\Omega)$} if $\int_{\Omega}\phi_i(\x) \phi_j(\x) p(\x)\diff\x=0$ for $i\neq j$ and $\int_{\Omega}\phi_i(\x) \phi_j(\x) p(\x)\diff\x=1$ for $i=j$.
For simplicity, we call such a sequence \textit{orthonormal} if it is orthonormal \wrt~the uniform density on $[0,1]^d$, \ie~$p(\x)=1$ for $\x\in [0,1]^d$ and $p(\x)=0$ otherwise.

For a multiset $X=\{\x_1,\ldots,\x_n\}$ with $\x_1,\ldots,\x_n\in\mathbb{R}^d$ and a function $f:\mathbb{R}^d\to\mathbb{R}^s$, we denote by $f(X)=\{f(\x_1),\ldots,f(\x_n)\}$ the multiset consisting of the values of $f$ applied to each element in $X$.

Sometimes we use a probability density function $p$ as index of a $k$-sized multiset $X_p$ to emphasize that its elements are realizations of iid random variables with density $p$.
Such a multiset is called \textit{sample drawn from $p$}.
In this case, we denote by $\mathrm{E}[X_p]=\frac{1}{k}\sum_{\x\in X_p} \x$ the vector of arithmetic means of $X_p$.

For two functions $f:\mathbb{R}\to \mathbb{R}_+$ and $g:\mathbb{R}\to\mathbb{R}_+$ we write $f=O(g)$ if there exist $x_0, \alpha\in \mathbb{R}_+$ such that for all $x>x_0$ we have $f(x)\leq \alpha g(x)$.
Analogously we write $f=\Omega(g)$ if there exist $x_0, \alpha\in \mathbb{R}_+$ such that for all $x>x_0$ we have $f(x)\geq \alpha g(x)$.

The $r$-th derivative of a function $f:\mathbb{R}\to\mathbb{R}$ at $x$ is denoted by $f^{(r)}(x)=\frac{\diff^r\! f(x)}{\diff x^r}$.
We denote by $\partial^r_{x_i} f=\frac{\partial^r f}{\partial x_i^r}$ the $r$-th partial derivative in direction $x_i$ and by $\boldsymbol{D}^{\boldsymbol{\alpha}} f=\frac{\partial^{\alpha_1+\ldots+\alpha_d} f}{\partial x_1^{\alpha_1}\ldots\partial x_d^{\alpha_d}}$ the mixed partial derivative of some function $f:\mathbb{R}^d\to\mathbb{R}$ \wrt~some vector $\boldsymbol{\alpha}=(\alpha_1,\ldots,\alpha_d)^\text{T}\in\mathbb{N}^d$, especially $\boldsymbol{D}^{\boldsymbol{\alpha}} f=f$ for $\boldsymbol{\alpha}=(0,\ldots,0)^\text{T}$.

The factorial of some natural number $n$ is $n!=n\cdot(n-1)\cdots 1$.
The binomial coefficient of some natural number $n$ over some natural number $k$ is $\binom{n}{k}=\frac{n!}{(n-k)!\, k!}$.

We denote the number of monomials of total degree $m$ in $d$ variables by $\zeta(m,d)$. It is equal to the number of weak compositions and therefore $\zeta(m,d)=\binom{d+m-1}{m}$.
The number of monomials of maximum total degree $m$ in $d$ variables, excluding the monomial $1$ of degree $0$, is $\psi(m,d)=\sum_{i=1}^m \zeta(i,d)=\binom{d+m}{m}-1$.

We denote by $\argmax_{\x\in\Omega} f(\x)=\{\x\in \Omega\mid \forall \y\in \Omega: f(\y)\leq f(\x)\}$ the set of values $\x\in \Omega$ achieving the maximum of the function $f:\Omega\to\mathbb{R}$.
Analogously, we denote the set $\argmin_{\x\in\Omega} f(\x)=\{\x\in \Omega\mid \forall \y\in \Omega: f(\y)\geq f(\x)\}$.
If the set $\argmin_{\x\in\Omega} f(\x)$ has only one element, we write $\x=\argmin_{\x\in\Omega} f(\x)$ as abbreviation for $\{\x\}=\argmin_{\x\in\Omega} f(\x)$.

We use the multi-index notations $\x^{\boldsymbol{\alpha}}=x_1^{\alpha_1}\cdots x_d^{\alpha_d}$ and $\boldsymbol{\alpha}!=\alpha_1 !\cdots \alpha_d!$ for some vectors ${\x=(x_1,\ldots,x_d)^\text{T}\in\mathbb{R}^d}$ and $\boldsymbol{\alpha}=(\alpha_1,\ldots,\alpha_d)^\text{T}\in\mathbb{N}^d$.

The $n$-ary Cartesian product is denoted by $\bigtimes_{i=0}^n X_i=\left\{(x_1,\ldots,x_n)\mid x_i\in X_i, i\in \left\{1,\ldots,n\right\}\right\}$.

Let $x_1,x_2,\ldots$ be a sequence of elements in a set $A$ and $d$ be a metric on $A$. We say \textit{$x_1,x_2,\ldots$ converges in $d$ to $y$} iff $\lim_{i\to\infty} x_i=y$.
\\\\
\begin{longtable}{rl}
	% \footnotesize
	%	\renewcommand{\arraystretch}{1.3}
	%	\scriptsize
	% 	\centering
	% 	\begin{tabular}{ll}
	%		\hline
	Notation & Description\\
	\hline
	$\mathbb{N}$ & set of natural numbers including $0$\\
	$\mathbb{R}$ & set of real numbers\\
	$\mathbb{R}_+$ & $=\{x\in\mathbb{R}\mid x>0\}$, set of positive real numbers\\
	$\mathbb{R}^d$ & set of $d$-dimensional vectors over $\mathbb{R}$\\
	$[0,1]^d$ & unit cube of dimension $d$\\
	$\mathcal{F}$, $\mathcal{G}$ & sets of functions\\
	$X$, $Y$ & finite multisets of vectors\\
	$\varnothing$ & $=\{\}$, empty set\\
	$X\cup Y$ & union of multisets $X$ and $Y$\\
	$\left|X\right|$ & cardinality of set $X$\\
	$\mathrm{Span(A)}$ & linear span of set $A$ of vectors\\
	$\vec x$, $\vec y$, $\vec w$ & column vectors of real numbers\\
	$\vec X, \vec Y\in \mathbb{R}^{k\times d}$ & $k\times d$ matrices over $\mathbb{R}$\\
	${\vec X}^\text{T}$ & transpose of $\vec X$\\
	$x_i$ & $i$-th element of the vector $\vec x$\\
	$\x_1,\ldots, \vec {x}_n$ & sequence of $n$ vectors\\
	$x_{i,j}$ & the $j$-th element of the $i$-th vector in the sequence\\
	$\x\odot\y$ & $=(x_1 y_1,\ldots,x_d y_d)^\text{T}$, element-wise multiplication of vectors $\x$ and $\y$\\
	$\langle \vec x,\vec y\rangle$ & $=\sum_{i=1}^n x_i y_i$, inner product\\
	$\norm{\vec x}_2$ & $=\sqrt{\langle \vec x,\vec x\rangle}$, $\ell_2$-norm of $\vec x$\\
	$\norm{\vec x}_1$ & $=\sum_{i=1}^d \left|x_i\right|$, $\ell_1$-norm of $\vec x\in\mathbb{R}^d$\\
	$(\Omega, d)$ & metric space with set $\Omega$ and metric $d$\\
	$(\Omega,\mathcal{A})$ & measurable space with set $\Omega$ and $\sigma$-algebra $\mathcal{A}$\\
	$\mathcal{B}(\Omega)$ & Borel $\sigma$-algebra on $\Omega$\\
	$\mu\ll\nu$ & probability measure $\nu$ dominates probability measure $\mu$\\
	$p$, $q$ & density functions\\
	$\int f$ & $=\int_\Omega f(\x)\diff\x$, Lebesgue integral of function $f:\Omega\to\mathbb{R}$\\
	$\norm{f}_{L^2(p)}$ & $=\sqrt{\int_\Omega \left|f(\x)\right|^2 p(\x)\diff\x}$, $L^2(p)$-norm \wrt~density $p$\\
	$\norm{f}_{L^2}$ & $=\sqrt{\int_\Omega \left|f(\x)\right|^2\diff \x}$, $L^2$-norm \wrt~Lebesgue measure\\
	$\norm{f}_\infty$ & $=\mathrm{ess}\sup_{\vec x\in\mathbb{R}^d} f(\vec x)$, $\infty$-norm\\
	$\mathcal{M}(A)$ & set of probability density functions on $A\subset \mathbb{R}^d$\\
	$X_p$ & sample of $p\in\mathcal{M}(A)$ (see text)\\
	s.t. & abbreviation for \textit{subject to}\\
	a.e. & abbreviation for \textit{almost everywhere}\\
	iff & abbreviation for \textit{if and only if}\\
	iid & abbreviation for \textit{independent and identically distributed}\\
	$f(n)\to\alpha$ & abbreviation for $\lim_{n\to\infty} f(n)=\alpha$, pointwise convergence\\
	$f(X)$ & $=\{f(\x_1),\ldots,f(\x_n)\}$, function $f$ applied to multiset $X$\\
	$\mathrm{E}[X]$ & $=\frac{1}{k}\sum_{\x\in X} \x$, arithmetic mean of $k$-sized sample $X$\\
	$\mathbb{R}[x_1,\ldots,x_d]$ & set of polynomials in the variables $x_1,\ldots,x_d$\\
	$\left(\mathbb{R}[x_1,\ldots,x_d]\right)^n$ & set of $n$-dimensional vectors over $\mathbb{R}[x_1,\ldots,x_d]$\\
	$\int \boldsymbol{\phi}$ & $=(\int\phi_1,\ldots,\int\phi_n)^\text{T}$, vector of Lebesgue integrals\\
	$\mathbb{R}_m[x_1,\ldots,x_d]$ & set of polynomials with maximum total degree $m$\\
	$\boldsymbol{\phi}$, $\boldsymbol{\phi}_m$ & column vectors of polynomials\\
	$e$ & $=2.71828\ldots$, Euler's number\\
	$\log(x)$ & natural logarithm\\
	$\sign(x)$ & signum, equals $1$ iff $x>0$, $0$ iff $x=0$ and $-1$ iff $x<0$\\
	$O$ & asymptotic notation (see text)\\
	$\mathbbm{1}_A(\vec x)$ & function that equals $1$ iff $\vec x$ is in the set $A$ and $0$ otherwise\\
	$f^{(r)}(x)$ & $=\frac{\diff^r\! f(x)}{\diff x^r}$, $r$-th derivative of $f:\mathbb{R}\to\mathbb{R}$ at $x$\\
	$\partial^r_{x_i} f$ & $=\frac{\partial^r f}{\partial x_i^r}$, $r$-th partial derivative in direction $x_i$\\
	$\boldsymbol{D}^{\boldsymbol{\alpha}}$ & $=\frac{\partial^{\alpha_1+\ldots+\alpha_d} f}{\partial x_1^{\alpha_1}\ldots\partial x_d^{\alpha_d}}$, mixed partial derivative\\
	$n!$ & $=n\cdot(n-1)\cdots 1$, factorial of $n$\\
	$\binom{n}{k}$ & $=\frac{n!}{(n-k)!\, k!}$, binomial coefficient\\
	$\zeta(m,d)$ & $=\binom{d+m-1}{m}$, number of monomials of total degree $m$ in $d$ variables\\
	$\psi(m,d)$ & $=\binom{d+m}{m}-1$, number of monomials of maximum total degree $m$\\
	$\argmax_{\x\in \Omega} f(\x)$ & $=\{\x\in \Omega\mid \forall \y\in \Omega: f(\y)\leq f(\x)\}$\\
	$\argmin_{\x\in \Omega} f(\x)$ & $=\{\x\in \Omega\mid \forall \y\in \Omega: f(\y)\geq f(\x)\}$\\
	$\x^{\boldsymbol{\alpha}}$ & $=x_1^{\alpha_1}\cdots x_d^{\alpha_d}$, multi-index notation\\
	$\boldsymbol{\alpha}!$ & $=\alpha_1 !\cdots \alpha_d!$, multi-index notation\\
	$\bigtimes_{i=0}^n X_i$ & $=\left\{(x_1,\ldots,x_n)\mid x_i\in X_i, i\in \left\{1,\ldots,n\right\}\right\}$, Cartesian product\\
	\hline
	% 	\end{tabular}
	% 	\caption{Summary of notations}
	% 	\label{tab:notation}
\end{longtable}

\newpage

\chapter{Background}
\label{chap:background}

In this chapter we summarize the related work required for all the results and proofs of this thesis.
The experienced reader is encouraged to skip this chapter and return to it if some background is missing.

This chapter is structured as follows:
Section~\ref{sec:probability_metrics} describes related work about probability metrics.
Section~\ref{sec:stat_learn_th} reviews related work in statistical learning theory.
Section~\ref{sec:domain_adaptation} summarizes recent related work from the field of domain adaptation.
Section~\ref{sec:maximum_entropy_distribution} summarizes related work about the principle of maximum entropy applied on probability densities.
Section~\ref{sec:neural_networks} finalizes this chapter with related work on neural networks.

\section{Probability Metrics}
\label{sec:probability_metrics}

A central topic of this work is to quantify the distance between random elements.
Such distance concepts are called probability metrics~\cite{rachev2013methods}.
In this section, we discuss some examples of probability metrics, important properties and relationships among them.

This section is structured as follows:
Subsection~\ref{subsec:ten_probability_metrics} follows the work of Gibbs and Su~\cite{gibbs2002choosing} and describes ten important probability metrics.
Subsection~\ref{subsec:bounding_probability_metrics} reviews some important relationships among them and gives a summary in Figure~\ref{fig:metrics}.
Subsection~\ref{subsec:moment_distances} gives the notion of moment distances and some of its basic properties.

\subsection{Some Important Probability Metrics}
\label{subsec:ten_probability_metrics}

In this subsection, we follow Gibbs and Su~\cite{gibbs2002choosing} to review ten important probability metrics which have been proven to be useful and are depicted in Figure~\ref{fig:metrics}.
In these examples, we focus on distances between probability measures, \ie~\textit{simple probability metrics} rather than the broader class of probability metrics between random variables, \ie~\textit{compound probability metrics}~\cite{rachev2013methods}.
Note that many probability metrics are not metrics in the strict sense, but are simply notions of the dissimilarity between random elements.

In the following, let $(\Omega,\mathcal{A})$ denote a measurable space with state space $\Omega$ and $\sigma$-algebra $\mathcal{A}$.
Let $\mu$ and $\nu$ be two probability measures on $(\Omega,\mathcal{A})$ and $p, q$ be two corresponding density functions \wrt~some $\sigma$-finite dominating measure $\rho$.
For simplicity, we call $\mu$ and $\nu$ measures on $\Omega$ iff they are measures on $(\Omega,\mathcal{B}(\Omega))$ with Borel $\sigma$-algebra $\mathcal{B}(\Omega)$.
If $\Omega=\mathbb{R}$, let $P$ and $Q$ denote the corresponding cumulative distribution functions.
If $\Omega$ is a metric space with metric $d:\Omega\times\Omega\to[0,\infty)$, it will be understood as measurable space with Borel $\sigma$-algebra $\mathcal{B}(\Omega)$.
Recall that $d$ is a metric on $\Omega$ iff for all $x,y,z\in\Omega$ it holds that
\begin{align*}
	d(x,y)=0 \iff x=y,~d(x,y)=d(y,x)~\text{and}~d(x,z)\leq d(x,y)+d(y,z).
\end{align*}
If $\Omega$ is a bounded metric space, we denote by $\mathrm{diam}(\Omega)=\sup_{x,y\in\Omega}d(x,y)$ its diameter.

\begin{figure}[t]
	\centering
	\includegraphics[width=\linewidth]{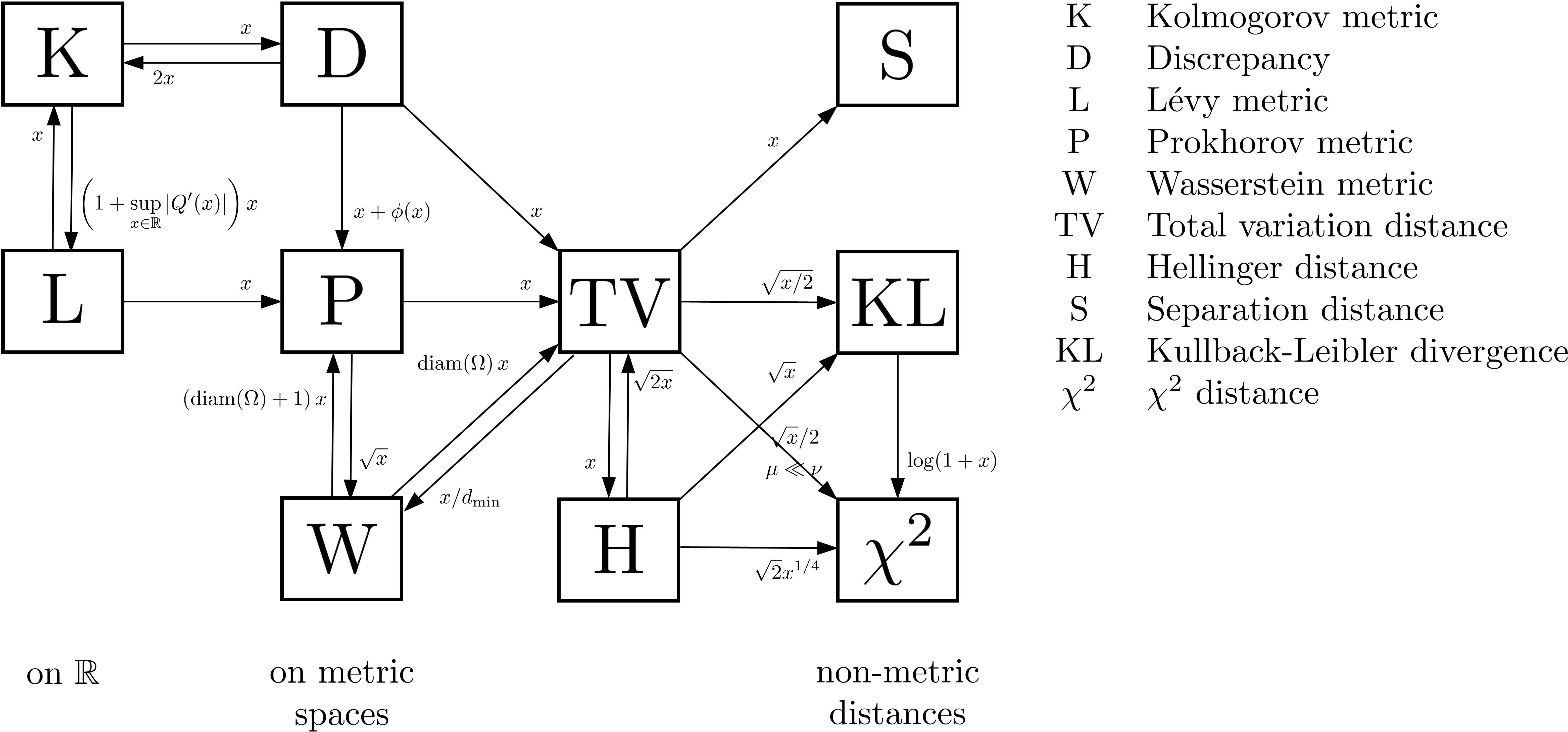}
	\caption[Relationships among probability metrics as illustrated in~\cite{gibbs2002choosing}.]{Relationships among probability metrics as illustrated in~\cite{gibbs2002choosing}.
		A directed arrow from $\text{A}$ to $\text{B}$ annotated by a function $h(x)$ means that $d_\text{A}\leq h(d_\text{B})$.
		For notations, restrictions and applicability see Section~\ref{sec:probability_metrics}.}
	\label{fig:metrics}
\end{figure}

\needspace{10\baselineskip}
\begin{restatable}[Discrepancy~\cite{weyl1916gleichverteilung,diaconis1988group}]{defrep}{discrepancy}%
	\label{def:discrepancy}%
	The discrepancy between two probability measures $\mu$ and $\nu$ on a metric space $\Omega$ is
	\begin{align}
		\label{eq:metric:discrepancy}
		d_\mathrm{D}(\mu,\nu)=\sup_{B_r(y)\in\mathcal{Q}}\left|\mu(B)-\nu(B)\right|,
	\end{align}
	where $\mathcal{Q}$ is the set of all closed balls $B=\{x\in\Omega\mid d(x,y)\leq r\}$ with $y\in\Omega$ and $r>0$.
\end{restatable}
The discrepancy assumes values in $[0,1]$ and is scale-invariant, \ie~multiplication with a positive constant does not affect the discrepancy.
The discrepancy has important applications in the study of random walks on groups~\cite{moser2014range}, as similarity measure in computer vision~\cite{moser2009similarity} and, as recently shown, in the foundation of bio-inspired threshold-based sampling~\cite{moser2017similarity,moser2019quasi}.
\Needspace{10\baselineskip}
\begin{restatable}[Hellinger Distance~\cite{hellinger1907orthogonalinvarianten}]{defrep}{hellingerdistance}%
	\label{def:Hellinger_distance}%
	The Hellinger distance between two probability measures $\mu$ and $\nu$ on a measurable space $\Omega$ is defined by
	\begin{align}
		\label{eq:metric:Hellinger_distance}
		d_\mathrm{H}(\mu,\nu)=\sqrt{\int_\Omega \left(p-q\right)^2\diff\rho}.
	\end{align}
\end{restatable}
The Hellinger distance does not depend on the choice of the dominating measure $\rho$.

\needspace{10\baselineskip}
\begin{restatable}[Kullback-Leibler Divergence~\cite{kullback1951information}]{defrep}{kldivergence}%
	\label{def:KL-divergence}%
	The Kullback-Leibler divergence (KL-divergence) between two probability measures $\mu$ and $\nu$ on a measurable space $\Omega$ is
	\begin{align}
		\label{eq:metric:KL_divergence}
		d_\mathrm{KL}(\mu,\nu)=\int_{S(\mu)} p\log\frac{p}{q}\diff\rho,
	\end{align}
	where $S(\mu)$ denotes the support of $\mu$.
\end{restatable}
The definition of the KL-divergence is independent of the choice of the dominating measure $\rho$.
The KL-divergence is not a metric as it is not symmetric and does not satisfy the triangle inequality.
However, it has many useful properties such as additivity over marginals, \ie~if $\mu=\mu_1\times\mu_2$ and $\nu=\nu_1\times\nu_2$ are measures on a product space $(\Omega_1\times\Omega_2,\mathcal{A}_1\otimes\mathcal{A}_2)$, then $d_\mathrm{KL}(\mu,\nu)=d_\mathrm{KL}(\mu_1,\nu_1)+d_\mathrm{KL}(\mu_2,\nu_2)$~\cite{cover2012elements}.
The KL-divergence is sometimes called \textit{relative entropy} and it was first introduced by Kullback and Leibler in~\cite{kullback1951information} as a measure of entropy.
It can be interpreted as the amount of information lost when identifying $\mu$ with the measure $\nu$~\cite{cover2012elements}.
The KL-divergence plays a central role in Chapter~\ref{chap:learning_bounds} of this work.

\Needspace{10\baselineskip}
\begin{restatable}[Kolmogorov Metric~\cite{kolmogorov1933sulla}]{defrep}{kolmogorovmetric}%
	\label{def:Kolmogorov_metric}%
	The Kolmogorov metric between two probability measures $\mu$ and $\nu$ on $\mathbb{R}$ is defined by
	\begin{align}
		\label{eq:metric:Kolmogorov}
		d_\mathrm{K}(\mu,\nu)=\sup_{x\in\mathbb{R}}\left|P(x)-Q(x)\right|,
	\end{align}
	where $P$ and $Q$ are the cumulative distribution functions of $\mu$ and $\nu$, respectively.
\end{restatable}%
The Kolmogorov metric assumes values in $[0,1]$, is invariant under all increasing one-to-one transformations of the real line and is sometimes called \textit{uniform metric}.
\Needspace{10\baselineskip}
\begin{restatable}[L\'evy Metric~\cite{levy1925probability}]{defrep}{levymetric}%
	\label{def:Levy_metric}%
	The L\'evy metric between two probability measures $\mu$ and $\nu$ on $\mathbb{R}$ is defined by
	\begin{align}
		\label{eq:metric:Levy}
		d_\mathrm{L}(\mu,\nu)=\inf\left\{\epsilon>0\,\middle|\, \forall x\in\mathbb{R}: P(x-\epsilon)-\epsilon\leq Q(x)\leq P(x+\epsilon)+\epsilon\right\}.
	\end{align}%
\end{restatable}
The L\'evy metric is shift-invariant and metrizes weak convergence of measures on $\mathbb{R}$.

\begin{restatable}[Prokhorov Metric~\cite{prokhorov1956convergence}]{defrep}{prokhorovmetric}%
	\label{def:Prokhorov_metric}%
	The Prokhorov metric between two probability measures $\mu$ and $\nu$ on a metric space $\Omega$ is defined by
	\begin{align}
		\label{eq:metric:Prokhorov}
		d_\mathrm{P}(\mu,\nu)=\inf\left\{\epsilon>0\,\middle|\, \forall B\in\mathcal{B}(\Omega): \mu(B)\leq \nu(B^\epsilon) +\epsilon\right\},
	\end{align}%
	where $B^\epsilon=\left\{x\in\Omega\,\middle|\, \inf_{y\in B} d(x,y)\leq \epsilon\right\}$.
\end{restatable}
The Prokhorov metric was introduced as the analogue to the L\'evy metric for more general spaces.
This metric is theoretically important because it metrizes weak convergence of measures on any separable metric space $(\Omega,d)$, \ie~any metric space that contains a countable and dense subset.

\begin{restatable}[Separation Distance~\cite{aldous1987strong}]{defrep}{separationdistance}%
	\label{def:separation_distance}%
	The separation distance between two probability measures $\mu$ and $\nu$ on a countable measurable space $\Omega$ is defined by
	\begin{align}
		\label{eq:metric:Separation}
		d_\mathrm{S}(\mu,\nu)=\max_{i\in\Omega}\left(1-\frac{\mu(i)}{\nu(i)}\right).
	\end{align}%
\end{restatable}
The separation distance is not a metric.
However, it is important in the study of Markov chains.
\Needspace{10\baselineskip}
\begin{restatable}[Total Variation Distance~\cite{aldous1987strong}]{defrep}{totalvariationdistance}%
	\label{def:TV_distance}%
	The total variation distance between two probability measures $\mu$ and $\nu$ on a measurable space $\Omega$ is defined by
	\begin{align}
		\label{eq:metric:total_variation}
		d_\mathrm{TV}(\mu,\nu)=\sup_{A\in\mathcal{A}} \left|\mu(A)-\nu(A)\right|,
	\end{align}%
	where $\mathcal{A}$ is the $\sigma$-algebra on $\Omega$.
\end{restatable}
The total variation distance assumes values in $[0,1]$.
The following theorem is useful for this work as it allows to focus on the $L^1$-difference between probability density functions when applying the total variation distance.
\Needspace{10\baselineskip}
\begin{restatable}[Total Variation Distance, see \eg~\cite{Tsybakov:2008:INE:1522486}]{thmrep}{thmtotalvariationdistance}%
	\label{thm:TV_distance}%
	Let $\mu$ and $\nu$ be probability measures on $\mathbb{R}^d$ with respective probability density functions $p$ and $q$ \wrt~the Lebesgue reference measure. Then the following holds:
	\begin{align}
		\label{eq:TV_and_L1_equality}
		d_\mathrm{TV}(\mu,\nu)=\frac{1}{2}\max_{h:\Omega\to [-1,1]} \left|\int h\diff\mu-\int h\diff\nu\right|=\frac{1}{2}\norm{p-q}_{L^1}.
	\end{align}
\end{restatable}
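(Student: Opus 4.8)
The plan is to prove the two equalities in \eqref{eq:TV_and_L1_equality} separately, linking them through the common middle term, and exploiting the fact that the decisive quantity is governed by the sign structure of $p-q$. First I would establish the right-hand equality $d_\mathrm{TV}(\mu,\nu)=\tfrac12\norm{p-q}_{L^1}$ directly from Definition~\ref{def:TV_distance}. Writing $A^+=\{\x\in\mathbb{R}^d\mid p(\x)>q(\x)\}$, the key observation is that for any measurable $A$ we have $\mu(A)-\nu(A)=\int_A(p-q)\diff\x$, and this is maximized precisely by taking $A=A^+$, since including any point where $p\leq q$ can only decrease the integral. Thus $\sup_A|\mu(A)-\nu(A)|=\int_{A^+}(p-q)\diff\x$. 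Because $\int_{\mathbb{R}^d}(p-q)\diff\x=0$ (both densities integrate to $1$), the positive and negative parts of $p-q$ have equal mass, so $\int_{A^+}(p-q)\diff\x=\tfrac12\int_{\mathbb{R}^d}|p-q|\diff\x=\tfrac12\norm{p-q}_{L^1}$, giving the claim.

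Next I would handle the variational middle term $\tfrac12\max_{h:\Omega\to[-1,1]}\left|\int h\diff\mu-\int h\diff\nu\right|$. Rewriting $\int h\diff\mu-\int h\diff\nu=\int h\,(p-q)\diff\x$, the plan is to bound this by H\"older's inequality: $\left|\int h\,(p-q)\right|\leq\norm{h}_\infty\,\norm{p-q}_{L^1}\leq\norm{p-q}_{L^1}$ whenever $\norm{h}_\infty\leq 1$. This shows $\tfrac12\max_h|\cdots|\leq\tfrac12\norm{p-q}_{L^1}$. For the reverse inequality, I would exhibit the extremal test function $h^\ast=\mathbbm{1}_{A^+}-\mathbbm{1}_{(A^+)^c}=\sign(p-q)$, which satisfies $\norm{h^\ast}_\infty\leq 1$ and yields $\int h^\ast(p-q)\diff\x=\int|p-q|\diff\x=\norm{p-q}_{L^1}$, so the bound is attained. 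Combining the two directions gives equality of the middle and right terms, and together with the first part all three expressions in \eqref{eq:TV_and_L1_equality} coincide.

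I would be slightly careful about two technical points. The supremum in Definition~\ref{def:TV_distance} ranges over $A\in\mathcal{A}$, and I should confirm the absolute value causes no trouble: by symmetry, swapping the roles of $\mu$ and $\nu$ (equivalently, replacing $A^+$ by its complement) gives the same value, so dropping the absolute value and optimizing over $A$ is harmless. The other point is whether the maximum over $h$ is genuinely attained rather than merely approached; since the measurable indicator-based $h^\ast$ lies in the admissible set and achieves the bound, writing $\max$ instead of $\sup$ is justified.

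The main obstacle, such as it is, is bookkeeping rather than depth: the entire argument rests on identifying the single optimizing set $A^+$ (equivalently the optimizing sign function), and then using $\int(p-q)=0$ to convert the one-sided mass $\int_{A^+}(p-q)$ into half of the symmetric quantity $\norm{p-q}_{L^1}$. The factor $\tfrac12$ is the only place where an error is likely, and it is pinned down precisely by the cancellation $\int_{A^+}(p-q)=-\int_{(A^+)^c}(p-q)$, which forces each one-sided integral to equal half of $\int|p-q|$.
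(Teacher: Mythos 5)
Your proof is correct. The paper does not prove this theorem at all — it is quoted as a background result with a citation to Tsybakov — and your argument is precisely the standard one from that reference (Scheff\'e-type identity): optimize the set $A^+=\{p>q\}$, use $\int(p-q)=0$ to split the mass evenly and obtain the factor $\tfrac12$, and realize the variational middle term with the sign function $h^\ast=\mathbbm{1}_{A^+}-\mathbbm{1}_{(A^+)^c}$, so nothing is missing.
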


\begin{restatable}[Wasserstein Metric~\cite{dudley2002real}]{defrep}{wassersteinmetric}%
	\label{def:Wasserstein_metric}%
	The Wasserstein metric between two probability measures $\mu$ and $\nu$ on a separable metric space $\Omega$ with metric $d$ is defined by
	\begin{align}
		\label{eq:metric:Wasserstein}
		d_\mathrm{W}(\mu,\nu)=\sup_{h\in\mathcal{W}}\left| \int h\diff\mu-\int h\diff\nu \right|,
	\end{align}%
	where $\mathcal{W}=\left\{h:\Omega\to\mathbb{R}\mid \norm{h}_\mathrm{L}\leq 1 \right\}$ and $\norm{h}_\mathrm{L}=\sup_{x,y\in\Omega, x\neq y} \frac{\left|h(x)-h(y)\right|}{d(x,y)}$.
\end{restatable}
The Wasserstein distance has found applications in information theory, mathematical statistics, mass transportation problems and is also called as the \textit{earth mover’s distance} in engineering applications, see \eg~\cite{sriperumbudur2009integral} for further references.

\begin{restatable}[$\chi^2$-Distance~\cite{csiszar1967information}]{defrep}{chisquaredistance}%
	\label{def:Chisquare_distance}%
	The $\chi^2$-distance between two probability measures $\mu$ and $\nu$ on a measurable space $\Omega$ is defined by
	\begin{align}
		\label{eq:metric:chisquare}
		d_{\chi^2}(\mu,\nu)=\int_{S(\mu)\cup S(\nu)} \frac{(p-q)^2}{q}\diff\rho,
	\end{align}
	where $S(\mu)$ and $S(\nu)$ denote the supports of $\mu$ and $\nu$.
\end{restatable}
Definition~\ref{def:Chisquare_distance} is independent of the choice of the dominating measure $\rho$.
The $\chi^2$-distance is not symmetric in $\mu$ and $\nu$.
The $\chi^2$-distance has origins in mathematical statistics dating back to Pearson.

It is interesting to observe that several distances in this subsection are instances of a larger class of probability metrics called $f$-divergences~\cite{csiszar1967information}.
For any convex function $f$ with $f(1)=0$, define
\begin{align}
	d_f(\mu,\nu)=\int_\Omega  f\!\left(\frac{p}{q}\right) q\diff\rho.
\end{align}
Choosing $f(t)=(t-1)^2$ yields the $\chi^2$-distance, $f(t)=(\sqrt{t}-1)^2$ the squared Hellinger distance, $f(t)=t\log t$ the KL-divergence and $f(t)=\frac{1}{2}\left|t-1\right|$ the total variation distance.

Another important class of probability metrics are \textit{integral probability metrics}~\cite{muller1997integral}.
For any set $\mathcal{F}$ of real-valued bounded measurable functions on $\Omega$, define
\begin{align}
	\label{eq:integral_probability_metrics}
	d_\mathcal{F}(\mu,\nu)=\sup_{f\in\mathcal{F}}\left|\int f \diff\mu-\int f\diff\nu \right|.
\end{align}
Choosing $\mathcal{F}=\left\{h:\Omega\to\mathbb{R}\mid \norm{h}_\mathrm{L}\leq 1\right\}$ yields the Wasserstein metric and $\mathcal{F}=\{h:\Omega\to\mathbb{R}\mid \norm{f}_\infty\leq 1\}$ yields the total variation distance, see Theorem~\ref{thm:TV_distance}.
The total variation distance is the only non-trivial $f$-divergence that is also an integral probability metric~\cite{sriperumbudur2009integral}.
In statistics, integral probability metrics are called \textit{maximum mean discrepancy} if $\mathcal{F}$ is the unit ball of a reproducing kernel Hilbert space (RKHS)~\cite{gretton2012kernel,sriperumbudur2009integral}.
A Hilbert space $\mathcal{H}$ is called RKHS, iff there exists a function $\kappa:\Omega\times\Omega\to\mathbb{R}$ satisfying
\begin{align*}
	\forall y\in\Omega:\kappa(.,y)\in\mathcal{H}\quad\text{and}\quad\forall y\in\Omega,\forall f\in\mathcal{H}:\langle f,\kappa(.,y)\rangle_\mathcal{H}=f(y).
\end{align*}
The function $\kappa$ is called \textit{reproducing kernel} of $\mathcal{H}$.
Important examples of kernels on $\mathbb{R}^d$ are linear kernels $\kappa(\x,\y)=\langle\x,\y\rangle + b$ with bias $b\in\mathbb{R}$, polynomial kernels $\kappa(\x,\y)=(\langle\x,\y\rangle + b)^m$ of order $2\leq m\in\mathbb{N}$ and Gaussian kernels $\kappa(\x,\y)=\exp\!\left(\frac{\norm{\x-\y}^2}{2\sigma^2}\right)$ with bandwidth $\sigma\in\mathbb{R}$.

The main results proposed in this work focus on $\Omega=\mathbb{R}^d$, the Borel $\sigma$-algebra $\mathcal{B}(\mathbb{R}^d)$ and probability measures $\mu,\nu$ which admit probability density functions $p,q\in\MR$ \wrt~the Lebesgue reference measure $\rho$.
Throughout this work, whenever possible, we therefore express probability metrics as distances between densities.
For example, we denote $d_\mathrm{KL}(p,q)=d_\mathrm{KL}(\mu,\nu)$ and $d_\mathrm{TV}(p,q)=d_\mathrm{TV}(\mu,\nu)$.

\subsection{Bounds}
\label{subsec:bounding_probability_metrics}

Our goal of weak similarity assumptions between probability measures requires some intuition about the strength of probability metrics.
In this subsection, we review some relationships among the probability metrics proposed in Subsection~\ref{subsec:ten_probability_metrics}.
We follow Gibbs and Su~\cite{gibbs2002choosing} who give an illustrative summary of these relationships which we summarize in the following three theorems and Figure~\ref{fig:metrics}.

\Needspace{15\baselineskip}
\begin{restatable}[Relationships on Measurable Spaces, see \eg~\cite{gibbs2002choosing}]{thmrep}{relationshipsmeasurable}%
	\label{thm:relationships_measurable}%
	Let $\mu$ and $\nu$ be probability measures on a measurable space $\Omega$ with densities $p$ and $q$, respectively. Then the following holds:
	\begin{align}
		\label{eq:relationship:Pinsker}
		d_\mathrm{TV}(\mu,\nu) &\leq \sqrt{\frac{1}{2} d_\mathrm{KL}(\mu,\nu)}\\
		\label{eq:hellinger_and_tv_relation}
		d_\mathrm{TV}(\mu,\nu) &\leq d_\mathrm{H}(\mu,\nu) \leq \sqrt{2 d_\mathrm{TV}(\mu,\nu)}\\
		d_\mathrm{H}(\mu,\nu) &\leq \sqrt{d_\mathrm{KL}(\mu,\nu)}\\
		d_\mathrm{KL}(\mu,\nu) &\leq \log(1+d_{\chi^2}(\mu,\nu))\\
		d_\mathrm{H}(\mu,\nu) &\leq \sqrt{2}d_{\chi^2}(\mu,\nu)^{1/4}.
	\end{align}%
	If $\mu$ is dominated by $\nu$ it further holds that:
	\begin{align}
		d_\mathrm{TV}(\mu,\nu) &\leq \frac{1}{2}\sqrt{d_{\chi^2}(\mu,\nu)}.
	\end{align}
\end{restatable}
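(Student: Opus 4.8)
The plan is to prove the six inequalities one at a time, in each case first rewriting the metric as an integral of the densities $p$ and $q$ and then invoking a single classical inequality. The two standing rewrites are Theorem~\ref{thm:TV_distance}, giving $d_\mathrm{TV}(\mu,\nu)=\tfrac12\norm{p-q}_{L^1}=\tfrac12\int_\Omega|p-q|\diff\rho$, and the affinity form of the Hellinger distance from Definition~\ref{def:Hellinger_distance}, namely $d_\mathrm{H}(\mu,\nu)^2=\int_\Omega(\sqrt p-\sqrt q)^2\diff\rho=2\bigl(1-\int_\Omega\sqrt{pq}\diff\rho\bigr)$, the second equality using $\int p=\int q=1$. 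The recurring tools are the Cauchy--Schwarz inequality, Jensen's inequality for the concave logarithm, and the elementary pointwise bound $\log t\le t-1$.

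The Hellinger--total-variation pair rests on the factorisation $|p-q|=\lvert\sqrt p-\sqrt q\rvert(\sqrt p+\sqrt q)$. For the upper estimate, $(\sqrt p-\sqrt q)^2\le|p-q|$ (since $\lvert\sqrt p-\sqrt q\rvert\le\sqrt p+\sqrt q$) integrates to $d_\mathrm{H}^2\le 2d_\mathrm{TV}$, \ie~$d_\mathrm{H}\le\sqrt{2d_\mathrm{TV}}$; for the lower estimate, Cauchy--Schwarz on the factorisation together with $\int_\Omega(\sqrt p+\sqrt q)^2\diff\rho=2+2\int\sqrt{pq}\le4$ yields $2d_\mathrm{TV}\le 2d_\mathrm{H}$, \ie~$d_\mathrm{TV}\le d_\mathrm{H}$. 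The two logarithmic bounds come next: applying $\log t\le t-1$ with $t=\sqrt{q/p}$ inside $d_\mathrm{KL}=-2\int_\Omega p\log\sqrt{q/p}\,\diff\rho$ lower-bounds $d_\mathrm{KL}$ by $d_\mathrm{H}^2$, giving $d_\mathrm{H}\le\sqrt{d_\mathrm{KL}}$, while Jensen's inequality for $\log$ against the probability measure $p\diff\rho$ gives $d_\mathrm{KL}=\int_\Omega p\log\tfrac pq\diff\rho\le\log\int_\Omega\tfrac{p^2}{q}\diff\rho=\log(1+d_{\chi^2})$, using the identity $\int\tfrac{p^2}{q}=1+d_{\chi^2}$. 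The chi-square bound on the total variation distance is a one-line Cauchy--Schwarz, $d_\mathrm{TV}=\tfrac12\int_\Omega\tfrac{|p-q|}{\sqrt q}\sqrt q\,\diff\rho\le\tfrac12\sqrt{\int_\Omega\tfrac{(p-q)^2}{q}\diff\rho}=\tfrac12\sqrt{d_{\chi^2}}$, which needs $\mu\ll\nu$ so that the integrand is well-defined; chaining $d_\mathrm{H}^2\le 2d_\mathrm{TV}\le\sqrt{d_{\chi^2}}$ then produces $d_\mathrm{H}\le d_{\chi^2}^{1/4}\le\sqrt2\,d_{\chi^2}^{1/4}$ (and when $\mu\not\ll\nu$ we have $d_{\chi^2}=\infty$, so this bound is vacuously true).

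The genuine obstacle is Pinsker's inequality $d_\mathrm{TV}\le\sqrt{\tfrac12 d_\mathrm{KL}}$, the only one that does not fall to a single Cauchy--Schwarz or Jensen step. I would reduce it to the two-point case: taking the optimal set $A=\{p>q\}$ gives $d_\mathrm{TV}=\mu(A)-\nu(A)$, and a coarse-graining estimate for the binary partition $\{A,A^c\}$ (the log-sum inequality, itself an instance of Jensen) shows $d_\mathrm{KL}(\mu,\nu)\ge d_\mathrm{KL}\bigl(\mathrm{Ber}(\mu(A)),\mathrm{Ber}(\nu(A))\bigr)$. It then remains to settle the scalar inequality $a\log\tfrac ab+(1-a)\log\tfrac{1-a}{1-b}\ge 2(a-b)^2$ for $a,b\in[0,1]$, which I would verify by fixing $b$, differentiating the difference in $a$, and checking that its unique interior minimum is attained at $a=b$ with value $0$. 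Combining the binary reduction with this scalar bound gives $d_\mathrm{KL}\ge 2d_\mathrm{TV}^2$, which is the claim, and completes the assembly of all six inequalities.
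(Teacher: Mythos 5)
Your proof is correct, but there is little in the paper to compare it against: Theorem~\ref{thm:relationships_measurable} is quoted as background from Gibbs and Su~\cite{gibbs2002choosing}, and the thesis itself gives no proof of it. What you supply is therefore a self-contained reconstruction, and it follows the standard lines: the Hellinger--total-variation sandwich via the factorisation $|p-q|=|\sqrt p-\sqrt q|\,(\sqrt p+\sqrt q)$ together with Cauchy--Schwarz; $d_\mathrm{H}\leq\sqrt{d_\mathrm{KL}}$ via the pointwise bound $\log t\leq t-1$; the two $\chi^2$ bounds via Jensen and a one-line Cauchy--Schwarz (with the identity $\int p^2/q\,\diff\rho=1+d_{\chi^2}$); and Pinsker's inequality via the binary reduction (log-sum inequality) plus the scalar estimate $a\log\frac{a}{b}+(1-a)\log\frac{1-a}{1-b}\geq 2(a-b)^2$, which indeed follows from your calculus argument since the second derivative in $a$ is $\frac{1}{a(1-a)}-4\geq 0$. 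One point deserves flagging: you read Definition~\ref{def:Hellinger_distance} in its standard affinity form $d_\mathrm{H}^2=\int(\sqrt p-\sqrt q)^2\diff\rho=2\bigl(1-\int\sqrt{pq}\,\diff\rho\bigr)$, whereas the paper's displayed formula literally reads $\sqrt{\int(p-q)^2\diff\rho}$, i.e.\ it omits the square roots on the densities. Your reading is the right one: with the $L^2$ definition as printed, the inequality $d_\mathrm{TV}\leq d_\mathrm{H}$ is false (spread two densities with a fixed $L^1$ gap over a long interval and the $L^2$ distance drops below the total variation), so your proof silently corrects what is evidently a typo in the paper's definition, and it matches the definition used by Gibbs and Su. Your handling of the edge cases is also sound, in particular the observation that $\mu\not\ll\nu$ forces $d_{\chi^2}=\infty$, which makes the unconditional bound $d_\mathrm{H}\leq\sqrt 2\,d_{\chi^2}^{1/4}$ vacuous exactly where the chaining through $d_\mathrm{TV}\leq\frac12\sqrt{d_{\chi^2}}$ would otherwise need the domination hypothesis.
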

Recall that $\mu$ is dominated by $\nu$, denoted by $\mu\ll\nu$, iff $\nu(A)=0$ implies $\mu(A)=0$ for all measurable sets $A$.

\begin{restatable}[Relationships on Metric Spaces, see \eg~\cite{gibbs2002choosing}]{thmrep}{relationshipsmetricspaces}%
	\label{thm:relationships_metric_spaces}%
	Let $\mu$ and $\nu$ be probability measures on a metric space $\Omega$ with metric $d$. Then the following holds:
	\begin{align}
		d_\mathrm{P}(\mu,\nu)^2 &\leq d_\mathrm{W}(\mu,\nu) \leq (\mathrm{diam}(\Omega) + 1)\, d_\mathrm{P}(\mu,\nu)\\
		d_\mathrm{D}(\mu,\nu) &\leq d_\mathrm{TV}(\mu,\nu)\\
		d_\mathrm{P}(\mu,\nu) &\leq d_\mathrm{TV}(\mu,\nu)\\
		d_\mathrm{W}(\mu,\nu) &\leq \mathrm{diam}(\Omega)\, d_\mathrm{TV}(\mu,\nu),
	\end{align}%
	where $\mathrm{diam}(\Omega)=\sup_{x,y\in\Omega} d(x,y)$. If $\nu$ satisfies $\nu(B^\epsilon)\leq \nu(B)+\phi(\epsilon)$ for all $B\in\mathcal{B}(\Omega)$, $B^\epsilon=\left\{x\in\Omega\,\middle|\, \inf_{y\in B} d(x,y)\leq \epsilon\right\}$ and some right-continuous function $\phi$ then
	\begin{align}
		d_\mathrm{D}(\mu,\nu) &\leq d_\mathrm{P}(\mu,\nu) + \phi(d_\mathrm{P}(\mu,\nu)).
	\end{align}
	If $\Omega$ is finite then the following holds:
	\begin{align}
		d_{\min{}}\, d_\mathrm{TV}(\mu,\nu) &\leq d_\mathrm{W}(\mu,\nu),
	\end{align}%
	where $d_{\min{}} = \min_{x,y\in\Omega, x\neq y} d(x,y)$.
\end{restatable}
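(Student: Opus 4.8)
The plan is to reduce every inequality to the coupling (Strassen-type) representations of the three metrics involved. Writing $\mathcal{J}(\mu,\nu)$ for the set of couplings of $\mu$ and $\nu$, I would use the Kantorovich--Rubinstein duality $d_\mathrm{W}(\mu,\nu)=\inf_J\int d(x,y)\diff J$, the Strassen characterisation $d_\mathrm{P}(\mu,\nu)=\inf_J\inf\{\epsilon>0\mid J(d(x,y)>\epsilon)\leq\epsilon\}$, and the maximal-coupling identity $d_\mathrm{TV}(\mu,\nu)=\inf_J J(x\neq y)$, all infima taken over $J\in\mathcal{J}(\mu,\nu)$. These three representations are the deep ingredients and I would cite them; \emph{establishing them is the main obstacle}, while everything downstream is elementary Markov-type estimation together with bookkeeping for infima that need not be attained.

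For the two Prokhorov--Wasserstein bounds I would fix a near-optimal coupling. To get $d_\mathrm{P}^2\leq d_\mathrm{W}$, take $J$ with $\int d(x,y)\diff J$ close to $d_\mathrm{W}$ and apply Markov's inequality $J(d(x,y)>\epsilon)\leq\epsilon^{-1}\int d(x,y)\diff J$; choosing $\epsilon=\sqrt{\int d(x,y)\diff J}$ makes the right-hand side equal to $\epsilon$, so the Strassen condition holds at level $\sqrt{d_\mathrm{W}}$ after a limit. For the reverse bound $d_\mathrm{W}\leq(\mathrm{diam}(\Omega)+1)\,d_\mathrm{P}$, take $\epsilon>d_\mathrm{P}$ and a coupling $J$ with $J(d(x,y)>\epsilon)\leq\epsilon$, then split $\int d(x,y)\diff J$ over $\{d\leq\epsilon\}$ and $\{d>\epsilon\}$ and bound the two pieces by $\epsilon$ and $\mathrm{diam}(\Omega)\,J(d(x,y)>\epsilon)\leq\mathrm{diam}(\Omega)\,\epsilon$, finishing with $\epsilon\downarrow d_\mathrm{P}$.

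The four remaining inequalities are direct. For $d_\mathrm{D}\leq d_\mathrm{TV}$, every closed ball is a Borel set, so the supremum defining $d_\mathrm{D}$ ranges over a subfamily of the sets defining $d_\mathrm{TV}$. For $d_\mathrm{P}\leq d_\mathrm{TV}$, I would check that $\epsilon=d_\mathrm{TV}$ satisfies the Prokhorov condition: since $B\subseteq B^\epsilon$ we have $\mu(B)\leq\nu(B)+d_\mathrm{TV}\leq\nu(B^\epsilon)+d_\mathrm{TV}$. For $d_\mathrm{W}\leq\mathrm{diam}(\Omega)\,d_\mathrm{TV}$, take the maximal coupling and estimate $\int d(x,y)\diff J\leq\mathrm{diam}(\Omega)\,J(x\neq y)$, using $d(x,x)=0$, then pass to the infimum. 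The finite-space bound $d_{\min}\,d_\mathrm{TV}\leq d_\mathrm{W}$ is the same estimate run the other way: for every coupling $\int d(x,y)\diff J=\sum_{x\neq y}d(x,y)J(x,y)\geq d_{\min}\,J(x\neq y)\geq d_{\min}\,d_\mathrm{TV}$, and taking the infimum over $J$ gives the claim.

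For the discrepancy bound under the regularity hypothesis on $\nu$, I would take $\epsilon$ slightly above $d_\mathrm{P}$ so that $\mu(B)\leq\nu(B^\epsilon)+\epsilon\leq\nu(B)+\phi(\epsilon)+\epsilon$ for every ball $B$; the reverse estimate for $\nu(B)-\mu(B)$ follows analogously, applying the Prokhorov condition to complements and using the symmetry of $d_\mathrm{P}$ (a standard fact for the one-sided definition). Taking the supremum over balls gives $d_\mathrm{D}\leq\epsilon+\phi(\epsilon)$, and letting $\epsilon\downarrow d_\mathrm{P}$ the right-continuity of $\phi$ yields $d_\mathrm{D}\leq d_\mathrm{P}+\phi(d_\mathrm{P})$. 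The recurring care throughout is that the coupling infima need not be attained, so each argument is phrased with a near-optimal $J$ and a closing limit, which is precisely where right-continuity of $\phi$ enters in this last step.
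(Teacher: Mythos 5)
Your proposal is correct, but there is nothing in the thesis to compare it against: the thesis states this theorem as background, citing the survey of Gibbs and Su~\cite{gibbs2002choosing}, and gives no proof of its own. Measured against the proofs in that cited literature, your coupling-based route is essentially the standard one (Huber's argument for the Prokhorov--Wasserstein sandwich via Strassen's theorem, maximal couplings for the total-variation bounds, and the observation that balls form a subfamily of the Borel sets for $d_\mathrm{D}\leq d_\mathrm{TV}$), so you are reconstructing the intended argument rather than diverging from it. Two points of precision are worth recording. First, your plan correctly isolates the deep inputs, and it is only the easy direction of Strassen that enters $d_\mathrm{P}^2\leq d_\mathrm{W}$ (a coupling with $J(d(x,y)>\epsilon)\leq\epsilon$ yields $\mu(B)\leq J(x\in B,\,d(x,y)\leq\epsilon)+J(d(x,y)>\epsilon)\leq\nu(B^\epsilon)+\epsilon$ directly from the definition), whereas the hard converse direction, and with it separability of $\Omega$ (which the paper's definition of $d_\mathrm{W}$ assumes anyway), is needed only for $d_\mathrm{W}\leq(\mathrm{diam}(\Omega)+1)\,d_\mathrm{P}$. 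Second, in the final discrepancy bound your appeal to the symmetry of $d_\mathrm{P}$ is not what closes the reverse estimate: symmetry gives $\nu(B)\leq\mu(B^\epsilon)+\epsilon$, which is useless here because the regularity hypothesis concerns $\nu$, not $\mu$. What does work --- and what you also name --- is the complement trick on its own: apply the one-sided Prokhorov condition to $B^c$ and the hypothesis $\nu\bigl((B^c)^\epsilon\bigr)\leq\nu(B^c)+\phi(\epsilon)$ to obtain $\nu(B)-\mu(B)=\mu(B^c)-\nu(B^c)\leq\epsilon+\phi(\epsilon)$, which is legitimate precisely because the thesis states the hypothesis for all Borel sets and not only for balls. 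With that reading, every step of your sketch is sound, including the closing limit $\epsilon\downarrow d_\mathrm{P}$ that uses the right-continuity of $\phi$.
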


\Needspace{10\baselineskip}
\begin{restatable}[Relationships on $\mathbb{R}$, see \eg~\cite{gibbs2002choosing}]{thmrep}{relationshipsR}%
	\label{thm:relationships_R}%
	Let $\mu$ and $\nu$ be probability measures on $\mathbb{R}$ with cumulative distribution functions $P$ and $Q$ respectively. Then the following holds:
	\begin{align}
		d_\mathrm{K}(\mu,\nu) &\leq d_\mathrm{D}(\mu,\nu) \leq 2 d_\mathrm{K}(\mu,\nu)\\
		d_\mathrm{L}(\mu,\nu) &\leq d_\mathrm{P}(\mu,\nu)\\
		d_\mathrm{L}(\mu,\nu) &\leq d_\mathrm{K}(\mu,\nu).
	\end{align}%
	If $\nu$ is dominated by the Lebesgue measure it further holds that:
	\begin{align}
		d_\mathrm{K}(\mu,\nu) &\leq \left(1+\sup_{x\in\mathbb{R}}\left| Q'(x)\right|\right) d_\mathrm{L}(\mu,\nu).
	\end{align}%
\end{restatable}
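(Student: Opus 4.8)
The plan is to treat the four inequalities separately, in each case unwinding the relevant definitions from Subsection~\ref{subsec:ten_probability_metrics} and exploiting that on $\mathbb{R}$ the closed balls appearing in the discrepancy are closed intervals, while the sets governing the Kolmogorov and Lévy metrics are half-lines $(-\infty,x]$.

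For the pair $d_\mathrm{K}\leq d_\mathrm{D}\leq 2d_\mathrm{K}$, I would first write $P(x)-Q(x)=\mu((-\infty,x])-\nu((-\infty,x])$ and approximate the half-line from the left by the closed intervals $[a,x]$ with $a\to-\infty$; each such interval is an admissible ball in $\mathcal{Q}$, so continuity of measures from below gives $|P(x)-Q(x)|\leq d_\mathrm{D}(\mu,\nu)$, and taking the supremum over $x$ yields the first inequality. For the reverse bound I would write any closed interval as $\mu([a,b])=P(b)-P(a^-)$, where $P(a^-)=\lim_{t\uparrow a}P(t)$, so that $|\mu([a,b])-\nu([a,b])|\leq|P(b)-Q(b)|+|P(a^-)-Q(a^-)|\leq 2d_\mathrm{K}(\mu,\nu)$; here the left limits are controlled because $d_\mathrm{K}$ is a supremum and therefore also bounds limits of $|P-Q|$. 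Taking the supremum over intervals finishes this part.

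For $d_\mathrm{L}\leq d_\mathrm{K}$ I would set $\delta=d_\mathrm{K}(\mu,\nu)$ and verify that $\epsilon=\delta$ satisfies the defining condition of the Lévy metric: using $|P-Q|\leq\delta$ together with monotonicity of $P$ gives $P(x-\delta)-\delta\leq P(x)-\delta\leq Q(x)$ and $Q(x)\leq P(x)+\delta\leq P(x+\delta)+\delta$, which is exactly the Lévy condition, so $d_\mathrm{L}\leq\delta$. For $d_\mathrm{L}\leq d_\mathrm{P}$ I would fix any $\epsilon>d_\mathrm{P}(\mu,\nu)$, so that both $\mu(B)\leq\nu(B^\epsilon)+\epsilon$ and $\nu(B)\leq\mu(B^\epsilon)+\epsilon$ hold for all Borel $B$ (using the symmetry of the Prokhorov metric). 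Taking $B=(-\infty,x]$, whose $\epsilon$-neighbourhood is $B^\epsilon=(-\infty,x+\epsilon]$, the first inequality gives $P(x-\epsilon)-\epsilon\leq Q(x)$ after a shift and the second gives $Q(x)\leq P(x+\epsilon)+\epsilon$; thus $\epsilon$ satisfies the Lévy condition, and letting $\epsilon\downarrow d_\mathrm{P}(\mu,\nu)$ yields the claim.

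The last and most delicate inequality uses that $\nu\ll\lambda$, so that $Q$ is Lipschitz with constant $K=\sup_x|Q'(x)|$, i.e.\ $|Q(x)-Q(y)|\leq K|x-y|$. Fixing $\epsilon>d_\mathrm{L}(\mu,\nu)$, the Lévy condition gives $P(x)\leq Q(x+\epsilon)+\epsilon$ and $Q(x-\epsilon)\leq P(x)+\epsilon$ after suitable shifts; combining each with the Lipschitz bound $|Q(x\pm\epsilon)-Q(x)|\leq K\epsilon$ produces $P(x)-Q(x)\leq(K+1)\epsilon$ and $Q(x)-P(x)\leq(K+1)\epsilon$. Taking the supremum over $x$ and then letting $\epsilon\downarrow d_\mathrm{L}(\mu,\nu)$ gives $d_\mathrm{K}\leq(1+\sup_x|Q'(x)|)\,d_\mathrm{L}$. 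The main obstacle I anticipate is the bookkeeping with one-sided limits and with the strict-versus-nonstrict infima in the Lévy and Prokhorov definitions; in particular one must argue that the Lipschitz continuity can be used only on the $\nu$-side, since $\mu$ is not assumed absolutely continuous, so the shift of $\epsilon$ must always be absorbed through $Q$ rather than through $P$.
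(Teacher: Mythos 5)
Your proof is correct in all four parts, but note that the paper itself contains no proof of this statement: Theorem~\ref{thm:relationships_R} is quoted as background from Gibbs and Su~\cite{gibbs2002choosing}, so there is no internal argument to compare against; what you have written is essentially the standard elementary proof that reference relies on. Each step checks out: the interval approximation $[a,x]\uparrow(-\infty,x]$ and the control of left limits give $d_\mathrm{K}\leq d_\mathrm{D}\leq 2d_\mathrm{K}$; verifying the L\'evy condition at $\epsilon=d_\mathrm{K}$ (the case $d_\mathrm{K}=0$ being trivial) gives $d_\mathrm{L}\leq d_\mathrm{K}$; and in the last inequality you correctly absorb the $\epsilon$-shifts only through $Q$, which is the side where absolute continuity is available. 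The one step that deserves a flag is $d_\mathrm{L}\leq d_\mathrm{P}$: you invoke the symmetry of the Prokhorov metric to obtain $\nu(B)\leq\mu(B^\epsilon)+\epsilon$, but the paper's Definition~\ref{def:Prokhorov_metric} is one-sided, and the fact that the one-sided Prokhorov distance is symmetric is true yet genuinely nontrivial (it is a classical result, e.g.\ in Dudley), so you should either cite it explicitly or sidestep it. A sidestep stays within the given definition: apply the one-sided condition also to the complement $B=(x,\infty)$, whose $\epsilon$-neighbourhood is $[x-\epsilon,\infty)$; this yields $Q((x-\epsilon)^-)\leq P(x)+\epsilon$, i.e.\ $Q(x^-)\leq P(x+\epsilon)+\epsilon$ after a shift, and since $Q(x)\leq Q((x+\delta)^-)$ for every $\delta>0$, the upper half of the L\'evy condition holds at level $\epsilon+\delta$ for all $\delta>0$, which suffices after letting $\delta\downarrow 0$ and $\epsilon\downarrow d_\mathrm{P}(\mu,\nu)$.
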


Theorem~\ref{thm:relationships_measurable}, Theorem~\ref{thm:relationships_metric_spaces} and Theorem~\ref{thm:relationships_R} provide several interesting relationships between topologies on the space of measures. For example, Eq.~\eqref{eq:hellinger_and_tv_relation} shows that the total variation distance and the Hellinger distance induce equivalent topologies. Other inequalities induce other topologies.
Moreover, the following interesting statements follow immediately.

\Needspace{10\baselineskip}
\begin{restatable}[Weak Convergence, see \eg~\cite{gibbs2002choosing}]{correp}{wekconvcor}%
	\label{cor:weak_convergence}%
	For measures on $\mathbb{R}$, the L\'evy metric metrizes weak convergence.
	Convergence under the discrepancy and Kolmogorov metric imply weak convergence.
	The discrepancy and Kolmogorov metric metrize weak convergence of a sequence $\mu_1,\mu_2,\ldots$ towards $\nu$ if $\nu$ is dominated by the Lebesgue reference measure.
	
	For measures on a measurable space $\Omega$, the Prokhorov metric metrizes weak convergence.
	Convergence under the Wasserstein metric implies weak convergence.
	
	Furthermore, if $\Omega$ is bounded, the Wasserstein metric metrizes weak convergence and convergence under any of the following metrics implies weak convergence: total variation, Hellinger distance, separation distance, KL-divergence and the $\chi^2$-divergence.
	
	If $\Omega$ is both bounded and finite, the total variation and Hellinger distance metrize weak convergence.
\end{restatable}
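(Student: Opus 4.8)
The plan is to treat this corollary as a bookkeeping exercise built on two facts already recorded: the L\'evy metric metrizes weak convergence of measures on $\mathbb{R}$ (noted after Definition~\ref{def:Levy_metric}) and the Prokhorov metric metrizes weak convergence on any separable metric space (noted after Definition~\ref{def:Prokhorov_metric}). The guiding observation is that every bound in Theorem~\ref{thm:relationships_measurable}, Theorem~\ref{thm:relationships_metric_spaces} and Theorem~\ref{thm:relationships_R} has the shape $d_\mathrm{A}(\mu,\nu)\leq h\!\left(d_\mathrm{B}(\mu,\nu)\right)$ with $h$ continuous and $h(0)=0$ (square roots, logarithms, fourth roots and linear maps all qualify). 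Hence $d_\mathrm{B}(\mu_n,\nu)\to 0$ forces $d_\mathrm{A}(\mu_n,\nu)\to 0$, and whenever such a bound is available in both directions the two metrics induce exactly the same convergent sequences, so one metrizes weak convergence as soon as the other does. I would state this once as a small observation and then simply chain.

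For measures on $\mathbb{R}$, Theorem~\ref{thm:relationships_R} gives $d_\mathrm{L}\leq d_\mathrm{K}$ and $d_\mathrm{K}\leq d_\mathrm{D}\leq 2\,d_\mathrm{K}$, so $d_\mathrm{L}\leq d_\mathrm{D}$; convergence in either $d_\mathrm{K}$ or $d_\mathrm{D}$ therefore forces convergence in $d_\mathrm{L}$ and hence weak convergence, which settles the ``imply weak convergence'' claims. For the converse under $\nu\ll$ Lebesgue, the bound $d_\mathrm{K}\leq\left(1+\sup_{x}\left|Q'(x)\right|\right)d_\mathrm{L}$ shows that $d_\mathrm{L}$-convergence forces $d_\mathrm{K}$-convergence, the constant being fixed by the limit $\nu$; thus $d_\mathrm{K}$ and $d_\mathrm{L}$ are equivalent and $d_\mathrm{K}$ metrizes weak convergence toward such $\nu$. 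Since $d_\mathrm{K}\leq d_\mathrm{D}\leq 2\,d_\mathrm{K}$ makes $d_\mathrm{D}$ and $d_\mathrm{K}$ equivalent unconditionally, the same conclusion transfers to $d_\mathrm{D}$.

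For the general case I would start from $d_\mathrm{P}^2\leq d_\mathrm{W}$ in Theorem~\ref{thm:relationships_metric_spaces}, which gives $d_\mathrm{W}$-convergence $\Rightarrow$ $d_\mathrm{P}$-convergence $\Rightarrow$ weak convergence. When $\Omega$ is bounded the reverse bound $d_\mathrm{W}\leq\left(\mathrm{diam}(\Omega)+1\right)d_\mathrm{P}$ has a finite constant, so $d_\mathrm{W}$ and $d_\mathrm{P}$ become equivalent and $d_\mathrm{W}$ metrizes weak convergence. The remaining distances are reduced to $d_\mathrm{P}$ via $d_\mathrm{P}\leq d_\mathrm{TV}$, together with $d_\mathrm{TV}\leq\sqrt{\frac{1}{2}\,d_\mathrm{KL}}$, $d_\mathrm{TV}\leq d_\mathrm{H}\leq\sqrt{2\,d_\mathrm{TV}}$ and $d_\mathrm{H}\leq\sqrt{2}\,d_{\chi^2}^{1/4}$ from Theorem~\ref{thm:relationships_measurable}; each of total variation, Hellinger, KL and $\chi^2$ convergence thus implies $d_\mathrm{P}$-convergence and hence weak convergence. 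The separation distance is the one case not covered by the three transcribed theorems, so here I would invoke the bound $d_\mathrm{TV}\leq d_\mathrm{S}$ depicted in Figure~\ref{fig:metrics} (or prove the one-line estimate directly) to route it through total variation as well.

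Finally, for finite $\Omega$ (which is automatically bounded) the bound $d_{\min}\,d_\mathrm{TV}\leq d_\mathrm{W}$ from Theorem~\ref{thm:relationships_metric_spaces}, with $d_{\min}=\min_{x\neq y}d(x,y)>0$, supplies the missing reverse direction: weak convergence yields $d_\mathrm{W}$-convergence (by the bounded case just treated) and hence $d_\mathrm{TV}$-convergence, which combined with $d_\mathrm{TV}\Rightarrow$ weak convergence shows that $d_\mathrm{TV}$ metrizes weak convergence; the equivalence $d_\mathrm{TV}\leq d_\mathrm{H}\leq\sqrt{2\,d_\mathrm{TV}}$ then transfers this to the Hellinger distance. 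The main obstacle is not any single inequality but the ``metrizes'' direction throughout: the real work is to confirm that the reverse bounds needed for equivalence are available precisely under the stated hypotheses (domination, boundedness, finiteness) and that their constants $\sup_{x}\left|Q'(x)\right|$, $\mathrm{diam}(\Omega)+1$ and $1/d_{\min}$ are finite and determined by the fixed limit measure, so that they do not interfere with taking $n\to\infty$.
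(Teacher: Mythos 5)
Your proposal is correct and takes essentially the same route as the paper, which offers no explicit proof but introduces the corollary with the remark that it ``follows immediately'' from Theorem~\ref{thm:relationships_measurable}, Theorem~\ref{thm:relationships_metric_spaces} and Theorem~\ref{thm:relationships_R} together with the metrization facts recorded for the L\'evy and Prokhorov metrics --- precisely the chaining of bounds $d_\mathrm{A}\leq h(d_\mathrm{B})$ with $h$ continuous and $h(0)=0$ that you carry out, including the reverse bounds with constants $\sup_x\left|Q'(x)\right|$, $\mathrm{diam}(\Omega)+1$ and $1/d_{\min}$ for the ``metrizes'' directions. Your observation that the separation-distance claim requires the inequality $d_\mathrm{TV}\leq d_\mathrm{S}$, which does not appear in the three transcribed theorems and must be imported from Figure~\ref{fig:metrics} (equivalently, from Gibbs and Su) or verified directly, is accurate and fills a small gap that the paper's ``follows immediately'' glosses over.
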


\subsection{Moment Distances}
\label{subsec:moment_distances}

In this work, we analyze domain adaptation problems under weak assumptions on the similarity of the underlying probability measures.
Our assumptions are based on \textit{moment distances} which imply a weak form of similarity of probability measures~\cite{rachev2013methods}.

Simple probability metrics as proposed in Subsection~\ref{subsec:ten_probability_metrics} satisfy the identity of indiscernibles, \ie~for all probability measures $\mu$ and $\nu$ on the measurable space $(\Omega,\mathcal{A})$ it holds that
\begin{align}
	\label{eq:identity_of_indiscernibles}
	d(\mu,\nu)=0\iff \mu=\nu.
\end{align}
In contrast, a moment distance between probability measures $\mu$ and $\nu$ on $\mathbb{R}^d$ satisfies
\begin{align}
	\label{eq:moment_distance_definition}
	d(\mu,\nu)=0 \iff h(\mu)=h(\nu)
\end{align}
where $h(\mu)=\int \boldsymbol{\phi}\diff\mu$ is a vector of moments corresponding to some vector $\boldsymbol{\phi}=(\phi_1,\ldots,\phi_n)^\text{T}$ of polynomials $\phi_1,\ldots,\phi_n\in\mathbb{R}[x_1,\ldots,x_d]$.
Moment distances can be extended to more general measurable spaces and more general functionals $h$.
Such metrics are called \textit{primary probability metrics}.
However, in this work, we are only interested in the real case and functionals $h$ as given above.

One important example of a moment distance is the following extension of the \textit{Engineer's metric}~\cite{rachev2013methods}.
\Needspace{10\baselineskip}
\begin{restatable}[$\ell_1$-Distance Between Moments]{defrep}{dm}%
	\label{def:ell1_moment_distance}%
	The $\ell_1$-distance between moments \wrt~some $\boldsymbol{\phi}\in\left(\mathbb{R}[x_1,\ldots,x_d]\right)^n$ between two probability measures $\mu$ and $\nu$ on the unit cube $[0,1]^d$ is defined by
	\begin{align}
		\label{eq:ellone_moment_distance}
		d_\mathrm{M}(\mu,\nu)=\norm{\int\boldsymbol{\phi}\diff\mu-\int\boldsymbol{\phi}\diff\nu}_1.
	\end{align}
\end{restatable}
Note that the focus on probability measures $\mu$ and $\nu$ on the unit cube implies that the vectors $\int\boldsymbol{\phi}\diff\mu$ and $\int\boldsymbol{\phi}\diff\mu$ are finite.

Given the moment distance above, questions about its relation to the probability metrics described in Subsection~\ref{subsec:ten_probability_metrics} arise.
The following theorem gives some intuition.
\needspace{25\baselineskip}
\begin{restatable}[Rachev et al.~\cite{rachev2013methods}]{thmrep}{thmrech}%
	\label{thm:rachev}%
	Let $\mu$ and $\nu$ be probability measures on $[0,1]$ with characteristic functions $f$ and $g$, respectively, fulfilling
	\begin{align}
		\label{eq:bounded_zolotarev}
		\sup_{\left|t\right|\leq T_0} |f(t)-g(t)|\leq \varepsilon
	\end{align}
	for some real constants $T_0$ and $\varepsilon$. Then there exists an absolute constant $C_\text{Z}$ such that for all $n\in\mathbb{N}$ with
	\begin{align}
		n^3 C_\text{Z}^{\frac{1}{n+1}} \varepsilon^{\frac{1}{n+1}}\leq T_0/2
	\end{align}
	we have
	\begin{align}
		\left| \int x^n \diff\mu - \int x^n \diff\nu\right|\leq C_\text{Z} n^3 \varepsilon^{\frac{1}{n+1}}.
	\end{align}
\end{restatable}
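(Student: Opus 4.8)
The plan is to express the moment difference as a high-order derivative of $h:=f-g$ at the origin and then to bound that derivative from the smallness of $h$ on $[-T_0,T_0]$ together with the uniform boundedness of all derivatives of $h$. Since $\mu$ and $\nu$ are supported on $[0,1]$, their characteristic functions are entire, with $f^{(n)}(t)=\int_0^1(ix)^n e^{itx}\diff\mu$; evaluating at $t=0$ gives $f^{(n)}(0)=i^n\int x^n\diff\mu$, and likewise for $g$, so that $\left|\int x^n\diff\mu-\int x^n\diff\nu\right|=|h^{(n)}(0)|$. The same representation yields $|f^{(k)}(t)|\le\int_0^1 x^k\diff\mu\le 1$ for every $k$ and $t$, hence $\sup_t|h^{(k)}(t)|\le 2$; in particular $\sup_t|h^{(n+1)}(t)|\le 2$. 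Everything thus reduces to the Landau--Kolmogorov--type problem of estimating $|h^{(n)}(0)|$ from a small sup-norm of $h$ and a uniform bound on $h^{(n+1)}$.

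Next I would estimate $h^{(n)}(0)$ through the $n$-th forward difference $\Delta^n_\delta h(0)=\sum_{k=0}^n(-1)^{n-k}\binom{n}{k}h(k\delta)$ with a step $\delta>0$ to be chosen. The identity $\Delta^n_\delta h(0)=\int_{[0,\delta]^n}h^{(n)}(u_1+\cdots+u_n)\diff u_1\cdots\diff u_n$, provable by induction on $n$, lets me write $h^{(n)}(s)=h^{(n)}(0)+(h^{(n)}(s)-h^{(n)}(0))$ with $s=u_1+\cdots+u_n\le n\delta$, so that $|h^{(n)}(s)-h^{(n)}(0)|\le s\sup_t|h^{(n+1)}(t)|\le 2n\delta$. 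This gives $|h^{(n)}(0)|\le \delta^{-n}|\Delta^n_\delta h(0)|+2n\delta$; working from this direct expansion rather than the mean value theorem is essential, since the latter is unavailable for the complex-valued $h$. As long as $n\delta\le T_0$, all sampling points $k\delta$ lie in $[0,T_0]$ where $|h|\le\varepsilon$, so $|\Delta^n_\delta h(0)|\le 2^n\varepsilon$ and therefore $|h^{(n)}(0)|\le 2^n\varepsilon\,\delta^{-n}+2n\delta$.

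Finally I would balance the two terms. The unconstrained minimizer is $\delta=(2^{n-1}\varepsilon)^{1/(n+1)}$; a short computation shows $2^n\varepsilon\,\delta^{-n}=2^{2n/(n+1)}\varepsilon^{1/(n+1)}\le 4\varepsilon^{1/(n+1)}$ and $2n\delta\le 4n\,\varepsilon^{1/(n+1)}$, so $|h^{(n)}(0)|\le 8n\,\varepsilon^{1/(n+1)}\le C_\text{Z}\,n^3\varepsilon^{1/(n+1)}$ with the absolute constant $C_\text{Z}=8$ (the cruder power $n^3$ in the statement being harmless). To legitimise the finite-difference step I must check $n\delta\le T_0$: the hypothesis $n^3 C_\text{Z}^{1/(n+1)}\varepsilon^{1/(n+1)}\le T_0/2$ together with $C_\text{Z}\ge 1$ gives $\varepsilon^{1/(n+1)}\le T_0/(2n^3)$, whence $n\delta\le 2n\,\varepsilon^{1/(n+1)}\le T_0/n^2\le T_0$, closing the argument (the case $n=0$ being trivial).

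The main obstacle is exactly the derivative-recovery step: a uniformly small function may have arbitrarily large high derivatives, so the bound lives or dies by choosing $\delta$ to trade the difference error $2^n\varepsilon\,\delta^{-n}$ against the Taylor error $2n\delta$, and this balance is compatible with the admissibility constraint $n\delta\le T_0$ only because the hypothesis quantitatively couples $\varepsilon$, $T_0$ and $n$. The remaining delicate point is the bookkeeping that keeps $C_\text{Z}$ genuinely independent of $n$ while taming the $2^n$ factor in the finite difference.
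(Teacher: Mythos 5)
Theorem~\ref{thm:rachev} is one of the statements the thesis imports as background without proof: it is attributed to Rachev et al.~\cite{rachev2013methods} and used as a black box in the proof of Lemma~\ref{lemma:moment_distance_bound_by_levy}, so there is no in-paper argument to measure yours against. Judged on its own, your proof is correct and fully self-contained. The reduction $\left|\int x^n\diff\mu-\int x^n\diff\nu\right|=\left|h^{(n)}(0)\right|$ for $h=f-g$, the uniform bound $\sup_t\left|h^{(k)}(t)\right|\leq 2$ coming from the support in $[0,1]$, and the exact identity $\Delta^n_\delta h(0)=\int_{[0,\delta]^n}h^{(n)}(u_1+\cdots+u_n)\diff u_1\cdots\diff u_n$ are all sound, and you are right that working through this integral representation rather than a mean value theorem is necessary for the complex-valued $h$. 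The trade-off $\left|h^{(n)}(0)\right|\leq 2^n\varepsilon\,\delta^{-n}+2n\delta$, the choice $\delta=(2^{n-1}\varepsilon)^{1/(n+1)}$, and the resulting bound $4(n+1)\,\varepsilon^{1/(n+1)}\leq 8n\,\varepsilon^{1/(n+1)}$ (for $n\geq 1$) check out, as does the admissibility verification $n\delta\leq T_0$, which correctly exploits $C_\text{Z}=8\geq 1$ so that the theorem's hypothesis implies $\varepsilon^{1/(n+1)}\leq T_0/(2n^3)$; there is no circularity in having $C_\text{Z}$ appear on both sides, since you fix its value once. In fact your argument yields the moment bound with the factor $n$ in place of $n^3$, i.e.\ a strictly stronger conclusion than the cited statement, which is consistent with the thesis only ever needing the cruder $n^3$ form. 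A useful consequence of your proof being self-contained is that it would let the thesis state Lemma~\ref{lemma:moment_distance_bound_by_levy} with an explicit constant rather than an unspecified absolute one.
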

Theorem~\ref{thm:rachev} gives a bound on the differences between moments based on a local bound on the underlying probability measures.

In Subsection~\ref{subsec:bound_on_moment_distance_by_levy_metric} we extend this theorem to an upper bound on $d_\mathrm{M}$ in terms of the L\'evy metric.
This implies that $d_\mathrm{M}$ can be bounded from above by all probability metrics described in Subsection~\ref{subsec:ten_probability_metrics}.
One consequence of Theorem~\ref{thm:rachev} is that weak convergence on compact intervals implies convergence of finitely many moments.
This result also follows from Portmanteau's theorem, see \eg~\cite{dudley2002real}.
\begin{restatable}[]{lemmarep}{weakconvtomomentconv}%
	\label{lemma:from_weak_convergence_to_moment_convergence}%
	The weak convergence of a sequence $\mu_1,\mu_2,\ldots$ of probability measures on $[0,1]^d$ to some probability measure $\nu$ on $[0,1]^d$ implies the convergence of $\mu_1,\mu_2,\ldots$ to $\nu$ in $d_\mathrm{M}$.
\end{restatable}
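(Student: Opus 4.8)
The plan is to reduce the claim directly to the definition of weak convergence, exploiting that the test functions defining $d_\mathrm{M}$ are polynomials and hence bounded continuous functions on the compact cube $[0,1]^d$. First I would recall that the weak convergence of $\mu_1,\mu_2,\ldots$ to $\nu$ means precisely that $\int f\diff\mu_k\to\int f\diff\nu$ for every bounded continuous function $f\colon[0,1]^d\to\mathbb{R}$. The entire argument is then a specialization of this defining property to the finitely many polynomial test functions appearing in Definition~\ref{def:ell1_moment_distance}; it is essentially the content of Portmanteau's theorem restricted to polynomial integrands.

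Next I would observe that each component $\phi_i$ of the vector $\boldsymbol{\phi}=(\phi_1,\ldots,\phi_n)^\text{T}$ defining $d_\mathrm{M}$ is continuous and, crucially, \emph{bounded} on $[0,1]^d$ because the cube is compact. Hence every $\phi_i$ is an admissible test function for weak convergence, so $\int\phi_i\diff\mu_k\to\int\phi_i\diff\nu$ for each $i\in\{1,\ldots,n\}$. This shows that every one of the finitely many components of the vector $\int\boldsymbol{\phi}\diff\mu_k-\int\boldsymbol{\phi}\diff\nu\in\mathbb{R}^n$ tends to $0$ as $k\to\infty$.

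Finally, since $n$ is fixed and finite, componentwise convergence in $\mathbb{R}^n$ is equivalent to convergence in the $\ell_1$-norm, so I would conclude
\[
	d_\mathrm{M}(\mu_k,\nu)=\norm{\int\boldsymbol{\phi}\diff\mu_k-\int\boldsymbol{\phi}\diff\nu}_1=\sum_{i=1}^n\left|\int\phi_i\diff\mu_k-\int\phi_i\diff\nu\right|\longrightarrow 0,
\]
which is exactly the statement that $\mu_1,\mu_2,\ldots$ converges to $\nu$ in $d_\mathrm{M}$. The proof carries no genuine obstacle; the one point deserving care is that weak convergence only controls \emph{bounded} continuous test functions, so it is essential to invoke the compactness of $[0,1]^d$ to guarantee that the polynomials $\phi_i$ are bounded. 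On an unbounded support the same step would fail without an additional uniform-integrability hypothesis, which is precisely why the statement is phrased for measures on the cube.
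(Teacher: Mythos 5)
Your proof is correct and follows exactly the route the paper itself indicates: the paper gives no detailed argument but states that the lemma "follows from Portmanteau's theorem," which is precisely your specialization of the definition of weak convergence to the finitely many polynomial test functions, bounded and continuous on the compact cube $[0,1]^d$. Your closing remark about why compactness is essential is also well placed, since on an unbounded support polynomials are not admissible test functions for weak convergence.
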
%
However, a zero moment distance does not imply identical probability measures and convergence in moments does not imply weak convergence for general probability measures.
Therefore, questions about the difference of two probability measures based on finitely many moments arise.

The literature about \textit{moment problems}~\cite{akhiezer1965classical,tardella2001note,kleiber2013multivariate,schmudgen2017moment} provides bounds on the difference between two one-dimensional probability measures on $\mathbb{R}$ with finitely many coinciding moments.
However, bounds in the multivariate case remain scarce~\cite{laurent2009sums,di2018multidimensional}.

Lindsay and Basak show~\cite{lindsay2000moments} that the Kolmogorov metric between two probability measures with finitely many coinciding moments can be very large.

Tagliani et al.~\cite{tagliani2003note,tagliani2002entropy,tagliani2001numerical,milev2012moment} show that, in the case of compactly supported probability measures, this difference can be bounded by means of the KL-divergence between the probability density function and the maximum entropy density sharing the same finite collection of moments.

Barron and Sheu~\cite{barron1991approximation} give bounds on the KL-divergence between a compactly supported probability density function and its approximation by estimators of maximum entropy densities.
They establish rates of convergence for log-density functions assumed to have square integrable derivatives.
Their analysis involves moment-based bounds which we will review in more detail in Subsection~\ref{subsec:approximation_theory}.

\section{Statistical Learning Theory}
\label{sec:stat_learn_th}

The process of inductive inference which can roughly be summarized as follows~\cite{bousquet2003introduction}: (1) observe a phenomenon, (2) construct a model of that phenomenon and (3) make predictions using this model.
It is the goal of learning theory to formalize this process.
In this thesis, we rely on a classical part of learning theory which is the statistical learning framework for binary classification.
Most results in binary classification can be readily extended to more general settings as \eg~multi-class classification and regression~\cite{shalev2014understanding}.

In this section we follow Vapnik~\cite{vapnik2013nature} and Ben-David et al.~\cite{ben2010theory}.

For simplicity, we focus on distributions which are represented by probability density functions $p\in\MR$ \wrt~the Lebesgue reference measure.

This section is structured as follows: Subsection~\ref{subsec:binary_classification} formalizes the problem of binary classification and the principle of empirical risk minimization.
Subsection~\ref{subsec:learning_bounds_in_stat_learn_th} summarizes related results.

\subsection{Binary Classification}
\label{subsec:binary_classification}

In the framework of binary classification observations are considered in the form of instance-label pairs.
The instances are vectors in $\mathbb{R}^d$.
We follow~\cite{ben2007analysis,ben2010theory} and assume labels in $[0,1]$, where intermediate values are used to model non-deterministic, \eg~expected, behaviour.
The goal of binary-classification is the estimation of some unknown function ${l:\mathbb{R}^d\to [0,1]}$ based on finitely many such instance-label pairs.

\Needspace{10\baselineskip}
\begin{restatable}[Binary Classification, see \eg~\cite{vapnik2013nature}]{problemrep}{binaryclassification}%
	\label{problem:binary_classification}%
	Consider some probability density $p\in\MR$ and a labeling function $l:\mathbb{R}^d\to[0,1]$.
	
	Given a \textit{training} sample $X_p=\{\x_1,\ldots,\x_k\}$ drawn from $p$ and a corresponding multiset $Y=\{l(\x_1),\ldots,l(\x_k)\}$ of labels,
	find some function $f:\mathbb{R}^d\to\{0,1\}$ with a small {\it misclassification risk}
	\begin{align}
		\label{eq:misclassification_risk}
		\int_{\mathbb{R}^d} \left|f(\x)-l(\x)\right| p(\x)\diff\x.
	\end{align}
\end{restatable}
~
\begin{remark}[A Note on Integrability]
	For the misclassification risk in Eq.~\eqref{eq:misclassification_risk} to exist, the Lebsgue integral of $\left|f-l\right|$ need to exist.
	We therefore assume that all labeling functions are Lebesgue integrable and focus on functions $f$ which are integrable.
\end{remark}

In the following let $\mathcal{F}$ be a set of integrable binary classifiers, \ie
\begin{align}
	\label{eq:setF}
	\mathcal{F}\subset\left\{f:\mathbb{R}^d\to\{0,1\}\,\middle|\, f~\text{integrable}\right\}.
\end{align}

Note that the probability density $p$ and the labeling function $l$ in Problem~\ref{problem:binary_classification} are typically unknown in practical applications.
Therefore, different principles have been proposed to solve Problem~\ref{problem:binary_classification} based on the samples $X_p$ and $Y$.
The principle of {\it empirical risk minimization} is to choose a function $f\in\mathcal{F}$ with small {\it empirical misclassification risk} which is given by:
\begin{align}
	\label{eq:empirical_misclassification_risk}
	\frac{1}{k}\sum_{\x\in X_p} \left|f(\x)-l(\x)\right|.
\end{align}
Other principles which extend empirical risk minimization are {\it structural risk minimization}, {\it regularization} and \textit{normalized regularization}.
For an overview, we refer to~\cite{bousquet2003introduction}.

All these principles require an a priori choice of a function class $\mathcal{F}$ from which $f$ is chosen.
Another common feature is that the empirical misclassification risk as given by Eq.~\eqref{eq:empirical_misclassification_risk} is still considered as part of the optimization procedure, \ie~as a term of the corresponding objective.
One important question is therefore:
\begin{center}
	\textit{Under which conditions can we expect empirical risk minimization to solve Problem~\ref{problem:binary_classification}?}
\end{center}
It turns out that the success can be expected for large samples $X_p$ and function classes $\mathcal{F}$ of finite complexity.
This answer is formalized in the next subsection.

\subsection{Learning Bounds}
\label{subsec:learning_bounds_in_stat_learn_th}

This subsection provides results regarding the success of the empirical risk minimization principle for solving Problem~\ref{problem:binary_classification}.
First proofs are given by Vapnik and Chervonenkis in 1966, see~\cite{vapnik2015uniform} for an English translation.

These results take the form of probabilistic upper bounds on the absolute difference between the misclassification risk in Eq.~\eqref{eq:misclassification_risk} and the empirical misclassification risk in Eq.~\eqref{eq:empirical_misclassification_risk}.
The bounds are based on the sample size of $X_p$ and a measure of the complexity of set $\mathcal{F}$.

To obtain the complexity measure, the idea is to look at the function class 'projected' on a sample.

\Needspace{10\baselineskip}
\begin{restatable}[Growth Function~\cite{vapnik2015uniform}]{defrep}{growthfunction}%
	\label{def:growth_function}%
	The growth function of $\mathcal{F}$ is defined by
	\begin{align}
		\label{eq:growth_function}
		\begin{split}
			S_\mathcal{F}:\mathbb{N} &\to \mathbb{N}\\
			k &\mapsto \left| \left\{\left(f\left(\x_1\right),\ldots,f\left(\x_k\right)\right)\,\middle|\, f\in\mathcal{F},\, \x_1,\ldots,\x_k\in\mathbb{R}^d\right\} \right|.
		\end{split}
	\end{align}
\end{restatable}
The growth function value $S_\mathcal{F}(k)$ is the maximum number of ways into which $k$ points can be classified by the function class $\mathcal{F}$.
As shown in~\cite{vapnik2015uniform}, the value of the growth function can be used to bound the absolute difference between the misclassification risk and the empirical misclassification risk.
\begin{restatable}[Vapnik and Chervonenkis~\cite{vapnik2015uniform}]{thmrep}{growthfunctionbound}%
	\label{thm:growth_function}%
	Consider some probability density $p\in\MR$ and a labeling function $l:\mathbb{R}^d\to[0,1]$.
	
	For any $\delta\in (0,1)$ and any $f\in\mathcal{F}$ the following holds with probability at least $1-\delta$ over the choice of a $k$-sized sample $X_p$ drawn from $p$:
	\begin{align}
		\int\left| f-l \right| p \leq \frac{1}{k}\sum_{\x\in X_p} \left|f(\x)-l(\x)\right| + \sqrt{8 \frac{\log S_\mathcal{F}(2 k)+\log\frac{2}{\delta}}{k}}.
	\end{align}
\end{restatable}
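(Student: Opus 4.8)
The plan is to prove the stronger \emph{uniform} deviation bound and then read off the per-$f$ statement, since the appearance of $S_\mathcal{F}(2k)$ signals that we must control the whole class $\mathcal{F}$ simultaneously (a single fixed $f$ would only need a direct Hoeffding bound). As the theorem only asserts an upper bound on the true risk, it suffices to control the one-sided deviation, and I would establish
\[
P\left(\sup_{f\in\mathcal{F}} \left( \int |f-l|\, p - \frac{1}{k}\sum_{\x\in X_p} |f(\x)-l(\x)| \right) > \varepsilon \right) \leq 2\, S_\mathcal{F}(2k)\, e^{-k\varepsilon^2/8}
\]
for every $\varepsilon>0$. Write $g_f(\x)=|f(\x)-l(\x)|$ and note $g_f(\x)\in[0,1]$, because $f(\x)\in\{0,1\}$ and $l(\x)\in[0,1]$; this boundedness is exactly what makes Hoeffding-type concentration applicable to the terms appearing below.

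First I would carry out the symmetrization (ghost-sample) argument. Introduce a second independent sample $X_p'=\{\x_1',\ldots,\x_k'\}$ drawn from $p$, with empirical risk $\hat R'(f)=\frac1k\sum_i g_f(\x_i')$, and show
\[
P\left(\sup_{f}\left(\int g_f\, p - \hat R(f)\right) > \varepsilon\right) \leq 2\, P\left(\sup_{f}\left(\hat R'(f)-\hat R(f)\right) > \varepsilon/2\right),
\]
where $\hat R(f)=\frac1k\sum_i g_f(\x_i)$. The decisive gain is that the right-hand side depends on each $f$ only through its values on the $2k$ points $\x_1,\ldots,\x_k,\x_1',\ldots,\x_k'$; hence, conditioned on these $2k$ points, the supremum over the possibly infinite class $\mathcal{F}$ collapses to a maximum over at most $S_\mathcal{F}(2k)$ distinct label patterns, which is precisely where the growth function enters.

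Next I would introduce Rademacher sign variables $\sigma_1,\ldots,\sigma_k$. Since $\x_i$ and $\x_i'$ are i.i.d., swapping the $i$-th pair leaves the joint law unchanged, so $\hat R'(f)-\hat R(f)$ has the same distribution as $\frac1k\sum_i \sigma_i\bigl(g_f(\x_i')-g_f(\x_i)\bigr)$. For each of the finitely many fixed patterns I would apply a one-sided Hoeffding inequality to this signed sum of terms bounded in $[-1,1]$, obtaining the factor $e^{-k\varepsilon^2/8}$ per pattern, and then take a union bound over the at most $S_\mathcal{F}(2k)$ patterns. Combining with the symmetrization step yields the displayed uniform bound. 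Finally, setting $2\, S_\mathcal{F}(2k)\, e^{-k\varepsilon^2/8}=\delta$ and solving gives $\varepsilon=\sqrt{8\bigl(\log S_\mathcal{F}(2k)+\log\tfrac{2}{\delta}\bigr)/k}$; rearranging the resulting inequality, which holds with probability at least $1-\delta$, produces the claimed bound for every $f\in\mathcal{F}$.

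The hard part will be the symmetrization lemma: justifying the factor-of-two reduction from the true risk to the ghost-sample empirical risk requires a lower bound on the probability that $\hat R'(f)$ stays close to $\int g_f\, p$ for the offending $f$, so that a large deviation of the true risk from $\hat R(f)$ forces a comparable deviation between $\hat R'(f)$ and $\hat R(f)$. This is where a variance estimate (via Chebyshev) and the boundedness of $g_f$ must be combined, and where $k$ must be large enough for the estimate to bite. Everything afterward — the passage to finitely many patterns through $S_\mathcal{F}(2k)$, the Rademacher swap, and the Hoeffding union bound — is routine once this reduction is in place.
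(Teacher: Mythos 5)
The paper never proves this statement: Theorem~\ref{thm:growth_function} sits in the background chapter and is quoted directly from the cited Vapnik--Chervonenkis reference, so there is no internal proof to compare against. Judged on its own, your plan is the standard and correct argument — indeed essentially the classical one from the cited source (modern form with Rademacher variables in place of the original permutation argument): one-sided uniform deviation, ghost-sample symmetrization with the factor $2$ and the halving of $\varepsilon$, collapse of the supremum to at most $S_\mathcal{F}(2k)$ patterns on the $2k$ points, Hoeffding per pattern giving $e^{-k\varepsilon^2/8}$, and a union bound; solving $2\,S_\mathcal{F}(2k)\,e^{-k\varepsilon^2/8}=\delta$ reproduces the stated constant exactly. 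You also correctly read the theorem as a uniform statement over $\mathcal{F}$ despite its per-$f$ phrasing, which is what the presence of $S_\mathcal{F}(2k)$ forces. The one loose end you flag — that symmetrization via Chebyshev needs $k\varepsilon^2\geq 2$ — is closed by the standard observation that when $k\varepsilon^2<2$ the claimed bound $2\,S_\mathcal{F}(2k)\,e^{-k\varepsilon^2/8}\geq 2e^{-1/4}>1$ holds trivially, so no largeness assumption on $k$ is actually needed; with that one line added, the proof is complete.
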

The question remains how to compute the growth function. Therefore the following quantity is of special importance.
\begin{restatable}[VC-dimension~\cite{vapnik2015uniform}]{defrep}{vcdimension}%
	\label{def:vc_dimension}%
	The Vapnik Chervonenkis dimension (VC-dimension) $\mathrm{VC}(\mathcal{F})$ of $\mathcal{F}$ is the largest $k\in\mathbb{N}$ such that
	\begin{align}
		\label{eq:vc_def_eqution}
		S_\mathcal{F}(k)=2^k.
	\end{align}
\end{restatable}
One interpretation of the VC-dimension is that it measures the size of the projections of a function class onto finite samples~\cite{bousquet2003introduction}.
The VC-dimension does not just 'count' the number of functions in the class but depends on the geometry of the class.
For example consider the VC-dimension of linear and affine functions:
\begin{align}
	&\mathrm{VC}\!\left(\left\{f:\mathbb{R}^d\to\left\{0,1\right\}\,\middle|\,
	% f(\x)=\sign(\langle{\vec w}, \x\rangle),
	f(\x)=\mathbbm{1}_{\mathbb{R}_+}(\langle{\vec w}, \x\rangle),
	{\vec w}\in\mathbb{R}^d \right\}\right)=d\\
	\label{eq:vc_dim__of_affine_functions}
	&\mathrm{VC}\!\left(\left\{f:\mathbb{R}^d\to\left\{0,1\right\}\,\middle|\, f(\x)=\mathbbm{1}_{\mathbb{R}_+}(\langle{\vec w}, \x\rangle + b), {\vec w}\in\mathbb{R}^d, b\in\mathbb{R} \right\}\right)=d+1
\end{align}
where $\mathbbm{1}_{\mathbb{R}_+}(\x)$ is one iff $\x$ is a positive real number and it is zero otherwise.

The following lemma serves as a key to upper bound the growth function.
It was independently discovered by Sauer in combinatorics, Shelah in model theory and Vapnik and Chervonenkis in statistics.
\needspace{25\baselineskip}
\begin{restatable}[Vapnik and Chervonenkis, Sauer, Shelah, see \eg~\cite{vapnik2013nature}]{lemmarep}{sauershelahlemma}%
	\label{lemma:sauer_shelah}%
	If $\vc<\infty$ then the following holds for all $k\in\mathbb{N}$:
	\begin{align}
		S_\mathcal{F}(k)\leq \sum_{i=1}^{\vc} \binom{k}{i}
	\end{align}
	and for all $k\geq\vc$ the following holds:
	\begin{align}
		\label{eq:important_sauer_shelah}
		S_\mathcal{F}(k)\leq \left(\frac{e k}{\vc}\right)^{\vc}.
	\end{align}
\end{restatable}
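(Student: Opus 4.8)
The plan is to reduce the statement to a purely combinatorial fact about families of binary vectors and then prove that fact by induction on the sample size. Fix points $\x_1,\ldots,\x_k\in\mathbb{R}^d$ and record the dichotomies they induce as the family
\begin{align*}
	\mathcal{C}=\left\{\left(f(\x_1),\ldots,f(\x_k)\right)\,\middle|\,f\in\mathcal{F}\right\}\subseteq\{0,1\}^k .
\end{align*}
By Definition~\ref{def:growth_function}, $S_\mathcal{F}(k)$ equals the maximum of $\left|\mathcal{C}\right|$ over all point configurations, so it suffices to bound $\left|\mathcal{C}\right|$ for an arbitrary such $\mathcal{C}$. Here Definition~\ref{def:vc_dimension} translates into the statement that no index set $I\subseteq\{1,\ldots,k\}$ with $\left|I\right|>\vc$ is \emph{shattered} by $\mathcal{C}$, where $I$ is called shattered if the projection of $\mathcal{C}$ onto the coordinates in $I$ is all of $\{0,1\}^{\left|I\right|}$; indeed, such an $I$ exhibits $\left|I\right|$ points on which $\mathcal{F}$ realizes every labeling, which by Definition~\ref{def:vc_dimension} forces $\left|I\right|\leq\vc$.

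The heart of the argument is the combinatorial claim
\begin{align*}
	\left|\mathcal{C}\right|\leq\#\left\{I\subseteq\{1,\ldots,k\}\,\middle|\,I\text{ is shattered by }\mathcal{C}\right\},
\end{align*}
which I would prove by induction on $k$. For the inductive step I split on the last coordinate: let $\mathcal{C}'\subseteq\{0,1\}^{k-1}$ be the projection of $\mathcal{C}$ that deletes coordinate $k$, and let $\mathcal{C}''\subseteq\{0,1\}^{k-1}$ consist of those vectors occurring in $\mathcal{C}$ with \emph{both} values in coordinate $k$. Counting extensions prefix by prefix gives $\left|\mathcal{C}\right|=\left|\mathcal{C}'\right|+\left|\mathcal{C}''\right|$. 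Applying the induction hypothesis to $\mathcal{C}'$ and $\mathcal{C}''$ and then observing that (i) every $I$ shattered by $\mathcal{C}'$ is shattered by $\mathcal{C}$ and avoids $k$, while (ii) every $I$ shattered by $\mathcal{C}''$ yields $I\cup\{k\}$ shattered by $\mathcal{C}$, produces two disjoint families of shattered subsets of $\mathcal{C}$ whose total size is at least $\left|\mathcal{C}'\right|+\left|\mathcal{C}''\right|=\left|\mathcal{C}\right|$.

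Granting the claim, the first inequality is immediate: every shattered set has size at most $\vc$, so the number of shattered subsets is at most the number of subsets of $\{1,\ldots,k\}$ of size at most $\vc$, namely $\sum_{i=0}^{\vc}\binom{k}{i}$ (the term $i=0$ accounting for the always-shattered empty set). To obtain Eq.~\eqref{eq:important_sauer_shelah} for $k\geq\vc$ I would insert the factors $\left(\frac{k}{\vc}\right)^{\vc-i}\geq 1$ and extend the summation range, so that with the binomial theorem and $\left(1+\frac{\vc}{k}\right)^{k}\leq e^{\vc}$,
\begin{align*}
	\sum_{i=0}^{\vc}\binom{k}{i}\leq\left(\frac{k}{\vc}\right)^{\vc}\sum_{i=0}^{k}\binom{k}{i}\left(\frac{\vc}{k}\right)^{i}=\left(\frac{k}{\vc}\right)^{\vc}\left(1+\frac{\vc}{k}\right)^{k}\leq\left(\frac{e k}{\vc}\right)^{\vc}.
\end{align*}

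I expect the main obstacle to be the inductive combinatorial claim, in particular the verification of step (ii): one must argue that if $\mathcal{C}''$ shatters $I$ then $\mathcal{C}$ shatters $I\cup\{k\}$, which uses precisely that every vector of $\mathcal{C}''$ extends to $\mathcal{C}$ with both last-coordinate values. The base case $k=0$ and the bookkeeping that the two families of shattered sets are genuinely disjoint -- one consisting of sets that contain $k$, the other of sets that avoid $k$ -- also require care but are routine.
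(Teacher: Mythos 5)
The paper never proves this lemma: it is stated as known background, attributed to Vapnik--Chervonenkis, Sauer and Shelah with a citation to Vapnik's book, and is only \emph{used} (combined with Theorem~\ref{thm:growth_function} to get Theorem~\ref{thm:vc_bound}). So there is no internal proof to compare against. Your argument is the standard self-contained proof: the reduction to a family $\mathcal{C}\subseteq\{0,1\}^k$ of dichotomies, Pajor's refinement ($|\mathcal{C}|$ is at most the number of index sets shattered by $\mathcal{C}$) proved by induction on $k$ via the split $|\mathcal{C}|=|\mathcal{C}'|+|\mathcal{C}''|$, and then the usual padding by $(k/\vc)^{\vc-i}\geq 1$ plus the binomial theorem and $(1+\vc/k)^k\leq e^{\vc}$ to get Eq.~\eqref{eq:important_sauer_shelah}. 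This is correct; the points you flag as delicate (that $I$ shattered by $\mathcal{C}''$ forces $I\cup\{k\}$ shattered by $\mathcal{C}$, and the disjointness of the two families of shattered sets) are exactly where the argument lives, and your treatment of both is right.

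One discrepancy is worth recording: your proof yields $S_\mathcal{F}(k)\leq\sum_{i=0}^{\vc}\binom{k}{i}$, including the $i=0$ term, whereas the lemma as printed starts the sum at $i=1$. The printed version is an off-by-one misstatement rather than something you failed to reach: for threshold classifiers $f_t=\mathbbm{1}_{(t,\infty)}$ on $\mathbb{R}$ one has $\vc=1$ but $S_\mathcal{F}(k)=k+1>\binom{k}{1}$. Your form is the correct one, it is tight in this example, and it still implies Eq.~\eqref{eq:important_sauer_shelah} for $k\geq\vc$ by your final computation, so Theorem~\ref{thm:vc_bound} and everything downstream in the paper are unaffected.
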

Combining Lemma~\ref{lemma:sauer_shelah} with Theorem~\ref{thm:growth_function} yields the following learning bound.
\begin{restatable}[Learning Bound, Vapnik and Chervonenkis~\cite{vapnik2015uniform}]{thmrep}{vcbound}%
	\label{thm:vc_bound}%
	Consider some probability density $p\in\MR$ and a labeling function $l:\mathbb{R}^d\to[0,1]$.
	
	If $\vc\leq k<\infty$, then, for any $\delta\in (0,1)$ and any $f\in\mathcal{F}$, the following holds with probability at least $1-\delta$ over the choice of a $k$-sized sample $X_p$ drawn from $p$:
	\begin{align}
		\int\left| f-l \right| p \leq \frac{1}{k}\sum_{\x\in X_p} \left|f(\x)-l(\x)\right| + \sqrt{8 \frac{\vc \log \frac{2 e k}{\vc}+\log\frac{2}{\delta}}{k}}.
	\end{align}
\end{restatable}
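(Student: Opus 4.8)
The plan is to combine the two preceding results directly: Theorem~\ref{thm:growth_function} supplies a probabilistic bound on the gap between the true and empirical misclassification risks in terms of the growth function $S_\mathcal{F}(2k)$, while Lemma~\ref{lemma:sauer_shelah} lets me replace this combinatorial quantity by an explicit polynomial expression in $k$ governed by $\vc$. Since the hypotheses of the statement ($p\in\MR$, $l:\mathbb{R}^d\to[0,1]$, $f\in\mathcal{F}$, and a $k$-sized sample $X_p$) coincide with those of Theorem~\ref{thm:growth_function}, I would first invoke that theorem to obtain, with probability at least $1-\delta$,
\[
	\int\left| f-l \right| p \leq \frac{1}{k}\sum_{\x\in X_p} \left|f(\x)-l(\x)\right| + \sqrt{8 \frac{\log S_\mathcal{F}(2 k)+\log\frac{2}{\delta}}{k}}.
\]

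The next step is to control $S_\mathcal{F}(2k)$. The second estimate of Lemma~\ref{lemma:sauer_shelah}, namely $S_\mathcal{F}(m)\leq\left(\frac{e m}{\vc}\right)^{\vc}$, is valid for every $m\geq\vc$; applying it at $m=2k$ requires only $2k\geq\vc$, which is immediate from the standing assumption $\vc\leq k$. Hence $S_\mathcal{F}(2k)\leq\left(\frac{2ek}{\vc}\right)^{\vc}$, and taking logarithms yields $\log S_\mathcal{F}(2k)\leq\vc\log\frac{2ek}{\vc}$.

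Finally, I would substitute this logarithmic estimate into the bound above. Because the map $t\mapsto\sqrt{8\,(t+\log\frac{2}{\delta})/k}$ is monotonically increasing in $t$, replacing $\log S_\mathcal{F}(2k)$ by the larger quantity $\vc\log\frac{2ek}{\vc}$ preserves the inequality and produces exactly the claimed bound on the same probability-$(1-\delta)$ event. There is essentially no hard step here: the only points requiring care are verifying the domain condition $2k\geq\vc$ before invoking the polynomial form of Sauer--Shelah (so that the second, rather than the first, estimate of Lemma~\ref{lemma:sauer_shelah} applies), and noting the monotonicity under the square root that legitimises the substitution without altering the direction of the inequality.
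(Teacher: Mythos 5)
Your proposal is correct and follows exactly the route the paper indicates: the thesis introduces this theorem with the remark that it is obtained by ``combining Lemma~\ref{lemma:sauer_shelah} with Theorem~\ref{thm:growth_function}'', which is precisely your argument of bounding $\log S_\mathcal{F}(2k)$ by $\vc\log\frac{2ek}{\vc}$ (valid since $2k\geq k\geq\vc$) and substituting into the growth-function bound by monotonicity. Nothing is missing; your care about the domain condition $2k\geq\vc$ for the polynomial form of Sauer--Shelah is exactly the right detail to check.
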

Theorem~\ref{thm:vc_bound} shows that the empirical risk minimization principle solves Problem~\ref{problem:binary_classification} of binary classification if the sample size $k$ is large enough and the function class $\mathcal{F}$ has small VC-dimension $\vc$.
Moreover, it leads to upper bounds on the following important quantity.

\begin{restatable}[Sample Complexity, see \eg~\cite{shalev2014understanding}]{defrep}{samplecomplexity}%
	\label{def:sample_complexity}%
	Consider some probability density $p\in\MR$ and a labeling function $l:\mathbb{R}^d\to[0,1]$.
	
	The sample complexity $k_\mathcal{F}(\varepsilon,\delta)$ of $\mathcal{F}$ is the minimum $k\in\mathbb{N}$ such that the following holds for all $f\in\mathcal{F}$ with probability at least $1-\delta$ over the choice of a $k$-sized sample $X_p$ drawn from $p$:
	\begin{align}
		\label{eq:generalization_error_inequality}
		\left|\int\left| f-l \right| p - \frac{1}{k}\sum_{\x\in X_p} \left|f(\x)-l(\x)\right|\right|\leq \varepsilon.
	\end{align}
\end{restatable}
It follows from Theorem~\ref{thm:vc_bound}, see \eg~\cite{shalev2014understanding}, that there exists some constant $C>0$ with
\begin{align}
	\label{eq:nearly_fundamental_bound}
	k_\mathcal{F}(\varepsilon,\delta)\leq C\frac{\vc\log\frac{\vc}{\varepsilon}+\log\frac{1}{\delta}}{\varepsilon^2}.
\end{align}
This result is one of the biggest breakthroughs in machine learning.
It shows that the sample size required for accurately estimating the true misclassification risk does often grow slower than exponentially with the dimension $d$.
This result seems to be counter intuitive in the light of the exponential rate of convergence in the Weierstrass approximation theorem for non-smooth functions~\cite{vapnik2013nature}.

It turns out that Eq.~\eqref{eq:nearly_fundamental_bound} can be even improved based on a careful analysis of the so-called Rademacher complexity using a technique called \textit{chaining}.
This leads to the following result often called the quantitative version of the fundamental theorem of statistical learning.

\begin{restatable}[Fundamental Theorem of Statistical Learning, see \eg~\cite{shalev2014understanding}]{thmrep}{fundamentaltheorem}%
	\label{thm:fundamental}%
	Consider some probability density $p\in\MR$ and a labeling function $l:\mathbb{R}^d\to[0,1]$.
	
	If $\vc<\infty$ then there exist constants $C_1,C_2\in\mathbb{N}$ such that
	\begin{align}
		C_1 \frac{\vc+\log 1/\delta}{\varepsilon^2}\leq k_{\mathcal{F}}(\epsilon,\delta)\leq C_2 \frac{\vc+\log 1/\delta}{\varepsilon^2}.
	\end{align}
	
\end{restatable}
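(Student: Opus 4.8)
The plan is to prove the two inequalities separately: the upper bound on $k_\mathcal{F}(\varepsilon,\delta)$ refines Eq.~\eqref{eq:nearly_fundamental_bound} by removing its logarithmic factor, while the lower bound is an information-theoretic obstruction showing that no smaller sample can suffice. Throughout I would work with the loss class $\ell_\mathcal{F}=\{\x\mapsto|f(\x)-l(\x)|\mid f\in\mathcal{F}\}$, whose members take values in $[0,1]$ and whose covering numbers are controlled by $\vc$ because $f\mapsto|f-l|$ is a coordinatewise $1$-Lipschitz transform of $f\in\mathcal{F}$.

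For the upper bound, the first step is a standard symmetrization argument bounding the expected uniform deviation $\mathbb{E}\sup_{f\in\mathcal{F}}|\int|f-l|p-\frac1k\sum_{\x\in X_p}|f(\x)-l(\x)||$ by twice the expected Rademacher complexity $\mathbb{E}[\mathcal{R}_k(\ell_\mathcal{F})]$. The second, decisive step is to bound this Rademacher complexity by \emph{chaining}: combining Haussler's bound on the $L^2$-covering numbers of a VC class, $N(\epsilon,\ell_\mathcal{F},L^2)\leq(C/\epsilon)^{c\,\vc}$, with Dudley's entropy integral yields $\mathcal{R}_k(\ell_\mathcal{F})\leq C'\sqrt{\vc/k}$. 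This is exactly what replaces the crude Massart estimate $\sqrt{\log S_\mathcal{F}(k)/k}=\sqrt{\vc\log k/k}$ underlying Eq.~\eqref{eq:nearly_fundamental_bound} and thereby eliminates the spurious $\log(1/\varepsilon)$ factor. Finally, since altering one sample point changes the empirical average by at most $1/k$, McDiarmid's bounded-differences inequality concentrates the supremum around its mean, contributing a term $\sqrt{\log(2/\delta)/(2k)}$. Collecting both contributions gives, with probability at least $1-\delta$, a uniform deviation at most $C'\sqrt{\vc/k}+\sqrt{\log(2/\delta)/(2k)}$; requiring this to be $\leq\varepsilon$ and solving for $k$ produces the upper bound of the form $C_2(\vc+\log(1/\delta))/\varepsilon^2$.

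For the lower bound I would establish the two summands separately. For the $\log(1/\delta)/\varepsilon^2$ term, fix a single $f$ with $\int|f-l|p=\tfrac12$; then $\frac1k\sum_{\x\in X_p}|f(\x)-l(\x)|$ is an average of i.i.d.\ Bernoulli variables, and a converse (anti-concentration) estimate for the binomial tail shows its deviation exceeds $\varepsilon$ with probability larger than $\delta$ unless $k\geq C_1'\log(1/\delta)/\varepsilon^2$. For the $\vc/\varepsilon^2$ term, I would take a set of $\vc$ points shattered by $\mathcal{F}$, place a suitable distribution on it, and exhibit a classifier whose empirical and true risks differ by more than $\varepsilon$ with constant probability whenever $k<C_1''\,\vc/\varepsilon^2$, via a lower bound on the Rademacher averages of a shattered set together with a Paley--Zygmund second-moment argument to pass from expectation to probability. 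Choosing $C_1$ a suitable fraction of $\min(C_1',C_1'')$ and combining the two constructions yields the stated lower bound.

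The step I expect to be the main obstacle is the chaining bound $\mathcal{R}_k(\ell_\mathcal{F})\leq C'\sqrt{\vc/k}$: obtaining the clean square-root rate without the $\log k$ factor requires the full Dudley integral together with a sharp covering-number bound for VC classes, which is considerably more delicate than the union-bound reasoning behind Theorem~\ref{thm:growth_function}. The $\vc/\varepsilon^2$ part of the lower bound is the secondary difficulty, since converting the expected-deviation estimate on a shattered set into a high-probability statement requires care.
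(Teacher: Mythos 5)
The paper never proves this theorem: it is quoted as a background result from~\cite{shalev2014understanding}, with only the remark that the improvement over Eq.~\eqref{eq:nearly_fundamental_bound} comes from ``a careful analysis of the so-called Rademacher complexity using a technique called chaining.'' Your outline --- symmetrization to Rademacher complexity, Haussler's covering-number bound combined with Dudley's entropy integral to get the clean $\sqrt{\vc/k}$ rate, McDiarmid for concentration, plus the standard binomial anti-concentration and shattered-set constructions for the lower bound --- is exactly the route the paper alludes to and the cited reference carries out, so your proposal is correct and takes essentially the same approach.
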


The results above assume one unique labeling function $l$ and samples $X_p$ with elements being realizations of random variables with the same probability density function $p$.
However, these assumptions are violated in many practical tasks.
In the next section we give a short overview of the field of domain adaptation which is concerned with the generalization of these assumptions.

\section{Domain Adaptation}
\label{sec:domain_adaptation}

One motivating question for the framework of domain adaptation is the following:
\begin{center}
	\it
	Under which conditions can we expect a classifier to perform well on some target data from a situation different from the training one?
\end{center}
To answer this question, the classical statistical learning theory described in Section~\ref{sec:stat_learn_th} must be extended.
One such extension is the framework of domain adaptation.

In this section, we describe the problem of domain adaptation for binary classification following Ben-David et al.~\cite{ben2010theory}, Mansour, Mohri and Rostamizadeh~\cite{mansour2009domain} and Cortes and Mohri~\cite{cortes2014domain}.
We also briefly summarize related work on learning bounds for domain adaptation and algorithms for solving practical domain adaptation problems.

For simplicity and consistency, we focus on distributions represented by probability density functions $p\in\MR$ \wrt~the Lebesgue reference measure.

This section is structured as follows:
Subsection~\ref{subsec:motivation_for_domain_adaptation} motivates the generalization of classical statistical learning theory, Subsection~\ref{subsec:domain_adaptation_for_binary_classification} formalizes the problem of domain adaptation for binary classification, Subsection~\ref{subsec:learning_bounds_for_domain_adaptation} summarizes important results and Subsection~\ref{subsec:da_algorithms} reviews different algorithms for solving domain adaptation problems.

\subsection{Motivation}
\label{subsec:motivation_for_domain_adaptation}

\begin{figure}[ht]
	\includegraphics[width=\linewidth]{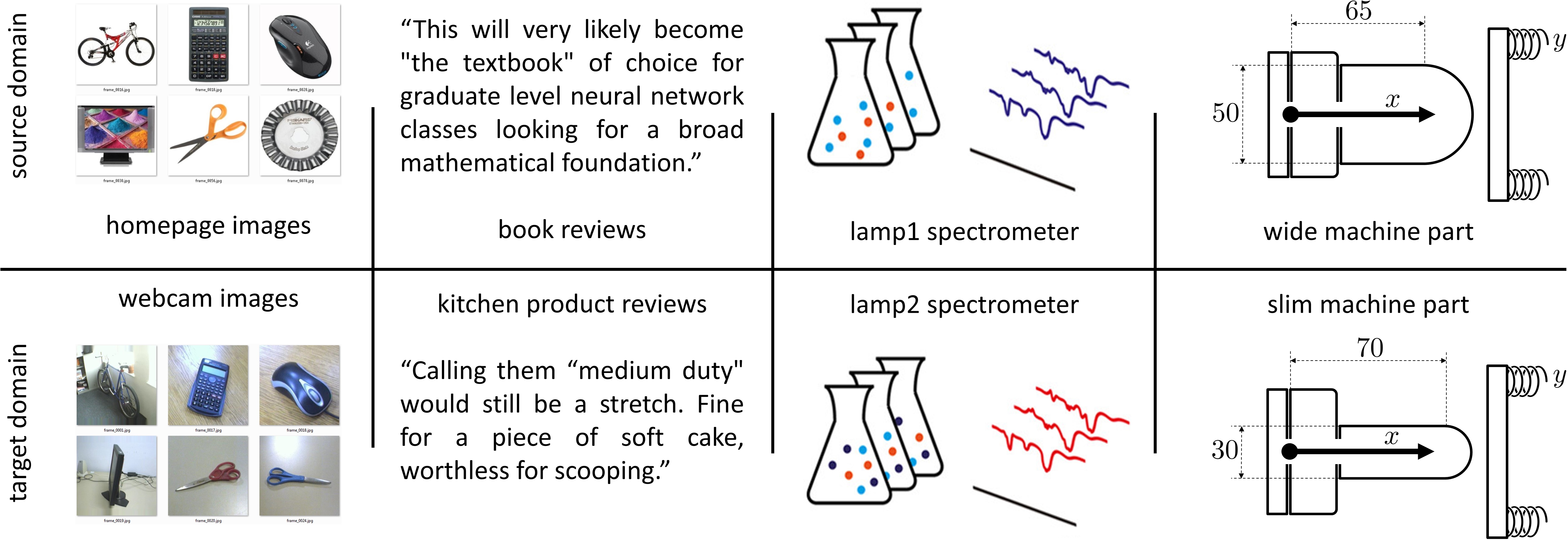}
	\caption[Practical examples violating assumptions of statistical learning theory.]{Practical examples violating assumptions of statistical learning theory. The goal is to learn a statistical model on a source domain which performs well on a target domain.
	}
	\label{fig:da_examples}
\end{figure}

Most results in statistical learning, both practical and theoretical, assume that the underlying data follow one fixed distribution and one labeling function~\cite{ben2010theory,vapnik2013nature,shalev2014understanding}.
For example consider Problem~\ref{problem:binary_classification} of binary classification, which assumes one unique labeling function $l$ and a sample $X_p$ with elements being realizations of random variables with the same probability density function $p$.
However, these assumptions are violated in many practical tasks.

Figure~\ref{fig:da_examples} shows different practical examples violating the assumptions made in statistical learning theory.
The goal is to learn a statistical model on some \textit{source domain} such that it performs well on some \textit{target domain}.
One example is the training of statistical classifiers on images from a homepage showing clear white backgrounds with the goal of a small misclassification risk on images captured by a webcam~\cite{saenko2010adapting}.
Another important example is sentiment analysis of product reviews, where a model is trained on data of a source product category, \eg~book reviews, and it is tested on data of a related category, \eg~kitchen product reviews~\cite{glorot2011domain}.
A third example is the regression of spectroscopic measurements where different instrumental responses, environmental conditions, or sample matrices can lead to different source and target measurements~\cite{malli2017standard}.
A fourth example is the drilling of steel components where different machine settings can lead to different torque curves during time~\cite{pena2005monitoring,ferreiro2012bayesian}.
As a last example consider the content-based depth range adaptation of unlabeled stereoscopic videos by means of labeled data from movies~\cite{zellinger2016linear,zellinger2016improving,seitner2015trifocal}.

The examples above are discussed in the general area of transfer learning~\cite{pan2010survey}.
General transfer learning problems can have the goal of adapting functions such that they solve new learning tasks, \eg~using a binary classifier to find a function separating three classes.
Such problems are too general for our purpose and we restrict ourselves to the more specific sub-area of domain adaptation.
Domain adaptation is concerned with the learning of statistical models that perform well on some target data with a labeling function and a distribution different from some source data.

Subsection~\ref{subsec:domain_adaptation_for_binary_classification} extends the Problem~\ref{problem:binary_classification} of binary classification to the setting of domain adaptation.

\subsection{Binary Classification}
\label{subsec:domain_adaptation_for_binary_classification}

We start with a formal definition of a domain.
\needspace{10\baselineskip}
\begin{restatable}[Domain~\cite{ben2007analysis,blitzer2007domain,ben2010theory}]{defrep}{domain}%
	\label{def:domain}%
	A domain is a pair $\left(p,l\right)$ of a probability density function $p\in\MR$ and a labeling function $l:\mathbb{R}^d\to [0,1]$.
\end{restatable}
Recall that a labeling function is assumed to be integrable.
The problem of domain adaptation for binary classification can now be defined as follows.
\needspace{10\baselineskip}
\begin{restatable}[Domain Adaptation for Binary Classification~\cite{ben2007analysis,blitzer2007domain,ben2010theory}]{problemrep}{domain_adaptation_for_binary_classification}%
	\label{problem:da_for_binary_classification}%
	Consider two domains, a \textit{source} domain $\left(p,l_p\right)$ and a \textit{target} domain $\left(q,l_q\right)$.
	
	Given a source sample $X_p=\{\x_1,\ldots,\x_k\}$ drawn from $p$ with corresponding labels $Y_p=\{l_p(\x_1),\ldots,l_p(\x_k)\}$ and a target sample $X_q=\{\x_1',\ldots,\x_s'\}$ drawn from $q$ with corresponding labels $Y_q\subseteq\{l_q(\x_1'),\ldots,l_q(\x_s')\}$, find some function $f:\mathbb{R}^d\to\{0,1\}$ with a small target misclassification risk
	\begin{align}
		\label{eq:misclassification_risk_domain_adaptation}
		\int_{\mathbb{R}^d} \left|f(\x)-l_q(\x)\right| q(\x)\diff\x.
	\end{align}%
\end{restatable}%
Problem~\ref{problem:da_for_binary_classification} is based on two domains.
However, it can be easily extended to multiple domains, see \eg~\cite{zhao2018multiple}.

Sometimes, equality of the two labeling functions is assumed, \ie~$l_p=l_q$.
This assumption is called \textit{covariate shift assumption}~\cite{sugiyama2005generalization,sugiyama2012machine,ben2014domain}.
Covariate shift problems have seen significant work~\cite{huang2007correcting,sugiyama2008direct,cortes2008sample} in the area of sample selection bias~\cite{heckman1979sample} which can be seen as a sub-area of domain adaptation.

Note that the target sample $X_q$ in Problem~\ref{problem:da_for_binary_classification} can be empty and the label multiset $Y_q$ can be a strict subset of the full label set $\{l_q(\x_1'),\ldots,l_q(\x_s')\}$.
According to the size of $X_q$ and $Y_q$, different variants of Problem~\ref{problem:da_for_binary_classification} are considered:
If $s>0$ and $Y_q=\{l_q(\x_1'),\ldots,l_q(\x_s')\}$, the problem is called \textit{supervised}. The setting of supervised domain adaptation is similar to the one of \textit{multi-task learning}~\cite{caruana1997multitask}.
However, in contrast to domain adaptation, multi-task learning aims at finding functions with a high performance on both domains, source and target.
If $s>0$ and $\varnothing\neq Y_q\subset\{l_q(\x_1'),\ldots,l_q(\x_s')\}$, Problem~\ref{problem:da_for_binary_classification} is called \textit{semi-supervised}.
If $s>0$ and no target labels are given, \ie~$Y_q=\varnothing$, Problem~\ref{problem:da_for_binary_classification} is called \textit{unsupervised}.
Unsupervised domain adaptation is a problem which often arises in practice when collecting labels is expensive~\cite{blitzer2007biographies,saenko2010adapting,ganin2016domain}.
In Chapter~\ref{chap:cmd_algorithm}, we propose a new algorithm for unsupervised domain adaptation and test it on benchmark datasets.
If $s=0$, Problem~\ref{problem:da_for_binary_classification} is called \textit{domain generalization}.
This problem often arises in industrial applications where application data has a distribution that is different from the one of the training data~\cite{malli2017standard}.
A problem of domain generalization from industrial manufacturing is discussed in more detail in Section~\ref{sec:cyclical_manufacturing}.

It is important to note that Problem~\ref{problem:da_for_binary_classification} is not solvable if the distance between the two domains is large.
This intuition is formalized by Ben-David in~\cite{ben2010impossibility} and Ben-David and Urner in~\cite{ben2014domain}.
As the authors point out, the unsupervised domain adaptation problem becomes intractable when the labeling functions are too different.
The same holds if the input distributions largely differ.

However, when the domains are similar, it has been empirically and theoretically shown that Problem~\ref{problem:da_for_binary_classification} can be solved.
Most of these theoretical results take the form of learning bounds as described in Subsection~\ref{subsec:learning_bounds_for_domain_adaptation}.

\subsection{Learning Bounds}
\label{subsec:learning_bounds_for_domain_adaptation}

In the following, let $\mathcal{F}\subset\left\{f:\mathbb{R}^d\to\{0,1\}\,\middle|\, f~\text{integrable}\right\}$ be a class of binary classifiers.
Following~\cite{ben2010theory}, we may state the following result.
\begin{restatable}[Ben-David et al.~\cite{ben2010theory}]{thmrep}{lonedomainadaptationbound}%
	\label{thm:lone_domain_adaptation_bound}%
	Let $(p,l_p)$ and $(q,l_q)$ be two domains.
	Then the following holds for all $f\in\mathcal{F}$:
	\begin{align}
		\label{eq:simple_da_equation}
		\int\left|f-l_q\right| q\leq \int\left|f-l_p\right| p + \norm{p-q}_{L^1} + \lambda^*
	\end{align}
	where
	\begin{align}
		\label{eq:minimal_combined_error}
		\lambda^* = \inf_{h\in\mathcal{F}}\left(\int\left|h-l_q\right| q+\int\left|h-l_p\right| p\right).
	\end{align}
\end{restatable}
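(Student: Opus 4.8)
The plan is to exploit the fact that, for any fixed density $r\in\MR$, the functional $(g_1,g_2)\mapsto\int|g_1-g_2|\,r$ is a pseudometric on the integrable functions, namely the $L^1(r)$-distance, and therefore obeys the triangle inequality. This is the only structural ingredient needed: it lets me interpose an auxiliary classifier $h\in\mathcal{F}$ between $f$ and the labeling functions, and it lets me bound the error of $f$ against $h$ under $q$ by the corresponding error under $p$ plus a discrepancy term. The term $\lambda^*$ is an infimum, so I would work with a \emph{generic} $h\in\mathcal{F}$ throughout and only pass to the infimum at the very end, once the two terms depending on $h$ have been isolated.

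Concretely, I would fix an arbitrary $f\in\mathcal{F}$ and an arbitrary $h\in\mathcal{F}$ and first apply the triangle inequality with respect to $q$:
\begin{align*}
	\int\left|f-l_q\right| q \leq \int\left|f-h\right| q + \int\left|h-l_q\right| q.
\end{align*}
The second summand is already one of the two pieces of $\lambda^*$. For the first summand I would switch the integrating density from $q$ to $p$ by writing
\begin{align*}
	\int\left|f-h\right| q = \int\left|f-h\right| p + \int\left|f-h\right| (q-p),
\end{align*}
and then control the error term. Here the key step uses that $f$ and $h$ take values in $\{0,1\}$, so $\left|f-h\right|\leq 1$ pointwise, whence
\begin{align*}
	\int\left|f-h\right| (q-p) \leq \int\left|f-h\right|\left|q-p\right| \leq \int\left|p-q\right| = \norm{p-q}_{L^1}.
\end{align*}

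Applying the triangle inequality a second time, now with respect to $p$, gives $\int\left|f-h\right| p \leq \int\left|f-l_p\right| p + \int\left|h-l_p\right| p$. Collecting all the pieces yields
\begin{align*}
	\int\left|f-l_q\right| q \leq \int\left|f-l_p\right| p + \norm{p-q}_{L^1} + \left(\int\left|h-l_p\right| p + \int\left|h-l_q\right| q\right).
\end{align*}
Since the first two terms on the right do not depend on $h$, taking the infimum over $h\in\mathcal{F}$ in the parenthesized expression produces exactly $\lambda^*$, which is the claim. I do not expect a serious obstacle: the argument is two triangle inequalities glued together by one change-of-density estimate, and the estimate $\left|f-h\right|\leq 1$ is precisely what makes the discrepancy collapse to $\norm{p-q}_{L^1}$ rather than some weighted quantity. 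The only point requiring a little care is bookkeeping the order of operations so that the infimum is taken after $f$ is fixed and after the $p$-dependent and discrepancy terms have been separated out.
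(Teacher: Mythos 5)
Your proposal is correct and follows essentially the same route as the paper: interpose an arbitrary $h\in\mathcal{F}$ via two triangle inequalities, control $\int\left|f-h\right|(q-p)$ by $\norm{p-q}_{L^1}$ using the boundedness of $\left|f-h\right|$, and take the infimum over $h$ at the end. The only cosmetic difference is that you justify the change-of-density estimate by the pointwise bound $\left|f-h\right|\leq 1$, whereas the paper passes through the supremum over functions $g:\mathbb{R}^d\to[-1,1]$ and invokes Theorem~\ref{thm:TV_distance}; these are the same estimate.
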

\begin{proof}
	Following the proof of Theorem~2 in~\cite{ben2010theory}, we obtain for any $h\in\mathcal{F}$,
	\begin{align*}
		\int\left|f-l_q\right| q &= \int\left|f-l_q\right| q + \int\left|f-h\right| p - \int\left|f-h\right| p\\
		&\leq \int\left|f-h\right| q + \int\left|h-l_q\right| q + \int\left|f-l_p\right| p + \int\left|l_p-h\right| p - \int\left|f-h\right| p,
	\end{align*}
	where the last inequality follows from the triangle inequality. Note that
	\begin{align}
		\int\left|f-h\right| q - \int\left|f-h\right| p &\leq \sup_{h\in\mathcal{F}} \left|\int \left|f-h\right| q-\int \left|f-h\right| p\right|\nonumber\\
		% &\leq \sup_{g=\left|f-h\right|,h\in\mathcal{F}} \left|\int g q-\int g p\right|\\
		&\leq \sup_{g:\mathbb{R}^d\to [-1,1]} \left|\int g q-\int g p\right|\nonumber\\
		&= \norm{p-q}_{L^1}\nonumber,
	\end{align}
	where the last equality is due to Theorem~\ref{thm:TV_distance} in Section~\ref{sec:probability_metrics}.
	Combining the two inequalities yields
	\begin{align*}
		\int\left|f-l_q\right| q &\leq \int\left|f-l_p\right| p + \norm{p-q}_{L^1} + \int\left|h-l_q\right| q + \int\left|l_p-h\right| p
	\end{align*}
	and the theorem follows by taking the infimum over all $h\in\mathcal{F}$.
\end{proof}
\\\\
The term $\lambda^*$ is called the {\it minimum combined misclassification risk} and it embodies a notion of \textit{adaptability} of a classifier~\cite{ben2010theory}.
If $\lambda^*$ is large, then Problem~\ref{problem:da_for_binary_classification} cannot be solved by focusing on functions with a small source misclassification risk.
On the other hand, if $\lambda^*$ is small, the $L^1$-difference between the densities can be used to measure \textit{adaptability}.
This can be seen from Theorem~\ref{thm:lone_domain_adaptation_bound}.
If $\norm{p-q}_{L^1}$ is small, a classifier $f\in\mathcal{F}$ with a small source misclassification risk shows also a small target misclassification risk.

Theorem~\ref{thm:lone_domain_adaptation_bound} has important implications for the problem of domain generalization where no target data is available.
Together with Theorem~\ref{thm:vc_bound} it shows that empirical risk minimization in the source domain using an appropriately large function class $\mathcal{F}$ solves this problem in settings where the domains are similar.

Theorem~\ref{thm:lone_domain_adaptation_bound} has also implications for the problem of unsupervised domain adaptation.
In unsupervised domain adaptation no labels of the target domain are available and the left-hand side of Eq.~\eqref{eq:simple_da_equation} cannot be sampled.
Theorem~\ref{thm:lone_domain_adaptation_bound} motivates a large class of algorithms which aim at minimizing the right-hand side, see Subsection~\ref{subsec:da_algorithms}.

Unfortunately, the $L^1$-norm cannot be accurately sampled~\cite{batu2000testing}.
Different approaches have been proposed to overcome this problem.

One approach extends the statistical learning theory described in Section~\ref{sec:stat_learn_th}.
It is based on the \textit{empirical $\mathcal{F}$-divergence} $\widehat{d}_\mathcal{F}(X_p,X_q)$ between two $k$-sized samples $X_p$ and $X_q$.
The empirical $\mathcal{F}$-divergence of a symmetric function class $\mathcal{F}$, \ie~a function class $\mathcal{F}$ such that for all~$h\in\mathcal{F}$ also $1-h\in\mathcal{F}$, is defined by~\cite{kifer2004detecting,ben2007analysis}
\begin{align}
	\label{eq:h_divergence}
	\widehat{d}_\mathcal{F}(X_p,X_q)=2 \left( 1-\inf_{f\in\mathcal{F}}\left( \frac{1}{k}\sum_{\x:f(\x)=0}\mathbbm{1}_{X_p}(\x) + \frac{1}{k}\sum_{\x:f(\x)=0}\mathbbm{1}_{X_q}(\x) \right) \right).
\end{align}
It is shown in~\cite{ben2010theory} that the empirical $\mathcal{F}$-divergence can be efficiently approximated.
\needspace{25\baselineskip}
\begin{restatable}[Ben-David et al.~\cite{ben2010theory,ben2007analysis,blitzer2006domain}]{thmrep}{hdivergencebound}%
	\label{thm:h_divergence}%
	Consider two domains $(p,l_p)$ and $(q,l_q)$ and let $\mathcal{F}$ be symmetric.
	
	If $\vc\leq k<\infty$, then, for any $\delta\in (0,1)$ and any $f\in\mathcal{F}$, the following holds with probability at least $1-\delta$ over the choice of two $k$-sized samples $X_p$ drawn from $p$ and $X_q$ drawn from $q$:
	\begin{align}
		\begin{split}
			\int\left| f-l_q \right| q &\leq \frac{1}{k}\sum_{\x\in X_p} \left|f(\x)-l_p(\x)\right| + \sqrt{8 \frac{\vc \log \frac{2 e k}{\vc}+\log\frac{2}{\delta}}{k}}\\
			&\phantom{\leq} + \frac{1}{2} \widehat{d}_\mathcal{F}(X_p,X_q) + 4\sqrt{\frac{2\vc\log(2 k)+\log\frac{2}{\delta}}{k}} + \lambda^*.
		\end{split}
	\end{align}
	where $\lambda^*$ is defined as in Eq.~\eqref{eq:minimal_combined_error}. 
\end{restatable}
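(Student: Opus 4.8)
The plan is to combine three ingredients glued together by a union bound: the deterministic decomposition underlying Theorem~\ref{thm:lone_domain_adaptation_bound}, the VC learning bound of Theorem~\ref{thm:vc_bound} applied to the source domain, and a uniform-convergence estimate of the $\mathcal{F}$-divergence from finite samples.

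First I would revisit the proof of Theorem~\ref{thm:lone_domain_adaptation_bound}. Its chain of triangle inequalities shows that for every $f,h\in\mathcal{F}$,
\begin{align*}
	\int\left|f-l_q\right| q \leq \int\left|f-l_p\right| p + \left(\int\left|f-h\right| q-\int\left|f-h\right| p\right) + \int\left|h-l_q\right| q + \int\left|l_p-h\right| p.
\end{align*}
Rather than relaxing the middle parenthesis to all functions with values in $[-1,1]$ (which produced $\norm{p-q}_{L^1}$ via Theorem~\ref{thm:TV_distance}), I would keep the restriction $f,h\in\mathcal{F}$. Since $f$ and $h$ are binary, $\left|f-h\right|$ is the indicator of the disagreement set $\{\x\mid f(\x)\neq h(\x)\}$, and bounding the middle term by $\sup_{f,h\in\mathcal{F}}\left|\int|f-h| q-\int|f-h| p\right|$ — which no longer depends on the particular $h$ — yields half of a population divergence measured over such disagreement sets. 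Using that $\mathcal{F}$ is symmetric, I would identify this supremum with the population counterpart of the empirical $\mathcal{F}$-divergence of Eq.~\eqref{eq:h_divergence}. Taking the infimum over $h\in\mathcal{F}$ in the two remaining terms then turns them into $\lambda^*$, leaving the target risk bounded by the \emph{population} source risk, half the population $\mathcal{F}$-divergence, and $\lambda^*$.

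The two probabilistic terms arise from replacing each population quantity by its sample estimate. The population source risk $\int\left|f-l_p\right| p$ is controlled by the empirical source risk plus $\sqrt{8(\vc\log\frac{2ek}{\vc}+\log\frac{2}{\delta})/k}$ through a direct application of Theorem~\ref{thm:vc_bound} to $X_p$. The population $\mathcal{F}$-divergence is controlled by $\widehat{d}_\mathcal{F}(X_p,X_q)$ plus a deviation of order $8\sqrt{(2\vc\log(2k)+\log\frac{2}{\delta})/k}$, so that its halved version contributes the stated $4\sqrt{\cdot}$ term. A final union bound over these two deviation events, splitting the confidence budget, yields the overall $1-\delta$ guarantee, and collecting all four summands gives the claim.

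The hard part will be the $\mathcal{F}$-divergence estimation step, which — unlike the source-risk bound — is not a single invocation of Theorem~\ref{thm:vc_bound}. Because the $\mathcal{F}$-divergence is itself a supremum over $\mathcal{F}$, I would need uniform control of $\left|\Pr_p[f=1]-\widehat{\Pr}_{X_p}[f=1]\right|$, and its target analogue, simultaneously over all $f\in\mathcal{F}$, obtained from the growth function together with Lemma~\ref{lemma:sauer_shelah}, before the population supremum can be exchanged with the empirical $\inf$ appearing in Eq.~\eqref{eq:h_divergence}. Keeping the symmetry hypothesis on $\mathcal{F}$ in play throughout — so that the disagreement-set supremum genuinely reduces to the separability-based $\widehat{d}_\mathcal{F}$ — is the delicate bookkeeping on which the argument hinges.
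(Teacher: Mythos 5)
You should first note that the thesis itself gives no proof of Theorem~\ref{thm:h_divergence}: it is quoted as background from Ben-David et al., with only the remark that its proof rests on the triangle inequality for the misclassification error. So the relevant comparison is with the source's argument, and your skeleton does match it — the decomposition underlying Theorem~\ref{thm:lone_domain_adaptation_bound} with $h$ kept inside $\mathcal{F}$, Theorem~\ref{thm:vc_bound} for the source risk, a uniform-convergence estimate for the divergence, and a union bound.

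The genuine gap is the step where you identify $\sup_{f,h\in\mathcal{F}}\bigl|\int|f-h|\,q-\int|f-h|\,p\bigr|$ with half the population counterpart of $\widehat{d}_\mathcal{F}$ ``using that $\mathcal{F}$ is symmetric.'' That identification fails in general. The disagreement regions $\{\x\mid f(\x)\neq h(\x)\}$, as $f,h$ range over $\mathcal{F}$, form the symmetric-difference class $\mathcal{F}\Delta\mathcal{F}$, not $\mathcal{F}$; symmetry in the paper's sense ($h\in\mathcal{F}\Rightarrow 1-h\in\mathcal{F}$) does not make such a region expressible as a decision region $\{g=1\}$ of a single $g\in\mathcal{F}$. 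Concretely, take $\mathcal{F}$ to be the threshold classifiers on $\mathbb{R}$ together with their complements (a symmetric class), $p$ uniform on $[0,1]$ and $q$ uniform on $[1/4,3/4]$: the supremum of $|\Pr_p-\Pr_q|$ over disagreement regions (intervals) is $1/2$, while over decision regions (half-lines and their complements) it is only $1/4$. So your middle term cannot be collapsed to $\tfrac12 d_\mathcal{F}$; this is exactly why Ben-David et al.\ (2010) state the bound with $\widehat{d}_{\mathcal{H}\Delta\mathcal{H}}$, the role of symmetry being a different one — it lets the supremum form of the divergence coincide with the classifier-error form in Eq.~\eqref{eq:h_divergence}. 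Note also that the factor $2\vc$ inside the deviation term $4\sqrt{(2\vc\log(2k)+\log\frac{2}{\delta})/k}$, which you copy but do not derive, is the fingerprint of this issue: the uniform-convergence step (via the growth function and Lemma~\ref{lemma:sauer_shelah}) must be applied to $\mathcal{F}\Delta\mathcal{F}$, whose VC dimension is at most $2\,\vc$; under your identification with the plain $\mathcal{F}$-divergence there would be no reason for that factor of two. The repair is to carry $\mathcal{F}\Delta\mathcal{F}$ through the whole argument — bound the middle term by $\tfrac12 d_{\mathcal{F}\Delta\mathcal{F}}(p,q)$, estimate that divergence from the two samples, and read the theorem's $\widehat{d}_\mathcal{F}$ as $\widehat{d}_{\mathcal{F}\Delta\mathcal{F}}$, which is how the cited source states it. (A minor additional discrepancy: splitting the confidence budget over the two probabilistic events yields $\log\frac{4}{\delta}$ terms, as in Ben-David et al., not the $\log\frac{2}{\delta}$ appearing in the statement.)
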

For large samples, symmetric classes $\mathcal{F}$ with small VC-dimension and well solvable domain adaptation problems, \ie~$\lambda^*\approx 0$, Theorem~\ref{thm:h_divergence} shows that the empirical source error and the empirical $\mathcal{F}$-divergence can be used to estimate an upper bound on the target error.

The proofs of Theorem~\ref{thm:lone_domain_adaptation_bound} and Theorem~\ref{thm:h_divergence} are based on the triangle inequality for the binary misclassification error.
Other types of errors lead to other forms of these bounds~\cite{crammer2008learning}.

Mansour et al.~\cite{mansour2009domain,mansour2009multiple,mansour2014robust} extend the arguments of Ben-David et al.~by more general distance measures~\cite{mansour2009domain}, robustness concepts of algorithms~\cite{mansour2014robust} and tighter error bounds based on the Rademacher complexity.

Recently, Vural considered the problem of transforming two differently distributed samples by means of two different functions in a common latent space and subsequently learn a discriminative model~\cite{vural2018}.
Her assumptions imply that the two different functions do not map differently labeled sample points onto the same point in the latent space.

In Chapter~\ref{chap:learning_bounds} we provide learning bounds for domain adaptation based on moment distances in order to provide learning guarantees under weak similarity assumptions on the source and target density.

\subsection{Algorithms}
\label{subsec:da_algorithms}

Theorem~\ref{thm:lone_domain_adaptation_bound} suggests various algorithms for domain adaptation based on empirical risk minimization and the minimization of distances between the transformed source and target distributions.
The large majority of them follow one of the two principles~\cite{cortes2019adaptation}: (a) to reweight the source and target sample or (b) to learn new feature representations.

Algorithms following principle (a) aim at correcting the domain difference by multiplying the loss at each training example by a positive weight.
Many of these algorithms are based on the minimization of probability metrics as discussed in Section~\ref{sec:probability_metrics}.
For example, the algorithm proposed in~\cite{huang2007correcting} is based on minimizing the maximum mean discrepancy, \ie~an integral probability metric based on a reproducing kernel Hilbert space.
The KL-divergence is minimized in~\cite{sugiyama2008direct}.
The generalization bounds proposed in~\cite{mansour2009domain} motivate an algorithm that minimizes a new distance between empirical distribution functions which is based on a function space and a distance between two functions from this space.
This algorithm has been further extended in~\cite{cortes2019adaptation}.

Principle (b) of learning new data representations is illustrated in Figure~\ref{fig:grafical_abstract} for the problem of unsupervised domain adaptation.
Consider two domains $(p,l_p)$ and $(q,l_q)$, a source sample $X_p$ drawn from $p\in\MR$ and a target sample $X_q$ drawn from $q\in\MR$.
Algorithms which follow principle (b) aim at finding some functions $g:\mathbb{R}^d\to\mathbb{R}^s$ and $f:\mathbb{R}^s\to\{0,1\}$ such that $f\circ g:\mathbb{R}^d\to\{0,1\}$ has a small source risk and such that the probability density functions $\tilde p$ and $\tilde q$ of the sample representations $g(X_p)$ and $g(X_q)$ are similar.
This is, in the case of binary classification, often done by minimizing an approximation of the following objective function:
\begin{align}
	\label{eq:objective_of_principle_of_representation_learning_for_da}
	\frac{1}{k}\sum_{\x\in X_p}\left|f(g(\x))-l_p(\x)\right| + \lambda\cdot \hat{d}\left(g(X_p),g(X_q)\right)
\end{align}
where $\lambda>0$ is a parameter and $\hat d$ is a distance between the source and target sample representation $g(X_p)$ and $g(X_q)$, \eg~an empirical estimation of some probability metric $d:\MR\times\MR\to [0,\infty)$.

For example, some algorithms focus on the minimization of empirical estimations of the maximum mean discrepancy with linear kernel~\cite{tzeng2014deep, csurka2016unsupervised} or the maximum mean discrepancy with Gaussian kernel~\cite{bousmalis2016domain,long2015learning,long2016joint}.
The Wasserstein distance is applied in~\cite{courty2017joint,shalit2017estimating}.
An empirical estimation of the KL-divergence is minimized in~\cite{zhuang2015supervised}.
Moment distances based on first and second moments are applied in~\cite{sun2016deep,li2016revisiting}.
An empirical estimator of the $\mathcal{F}$-divergence as defined in Eq.~\eqref{eq:h_divergence} is applied in~\cite{ganin2016domain,tzeng2017adversarial,eghbal2019mixture}.
Other divergences are used in~\cite{si2009bregman,berisha2015empirically,muandet2013domain}.

In Chapter~\ref{chap:cmd_algorithm}, we propose a new moment distance based on higher-order moments and apply it to the representation learning principle.

The parameter $\lambda$ in Eq.~\eqref{eq:objective_of_principle_of_representation_learning_for_da} is sometimes called domain regularization parameter.
Its selection is a hard problem in unsupervised domain adaptation and domain generalization due to missing target labels~\cite{you2019towards}.
This problem is discussed in more detail in Subsection~\ref{subsec:domain_adaptation_by_nns}.

It is important to note that under the covariate shift assumption, \ie~$l_p=l_q$, two new domains $(\tilde p,l_{\tilde p})$ and $(\tilde q,l_{\tilde q})$ with $l_{\tilde p},l_{\tilde q}:\mathbb{R}^s \to [0,1]$ are defined by
\begin{align}
	\label{eq:ben_david_labeling_function}
	l_{\tilde p}(\vec a)=\frac{\int_{\{\x\mid g(\x) =\vec a\}} l_p(\x) p(\x) \diff\x }{\int_{\{\x\mid g(\x) =\vec a\}} p(\x)\diff\x}
\end{align}
and $l_{\tilde q}$ analogously~\cite{ben2007analysis}.
Based on this definition, Theorem~\ref{thm:lone_domain_adaptation_bound} and Theorem~\ref{thm:h_divergence} can be used to provide learning bounds for algorithms following principle (b).
This is done in Chapter~\ref{chap:learning_bounds} for moment distances.

\begin{figure}[t]
	\begin{center}
		\includegraphics[width=0.8\linewidth]{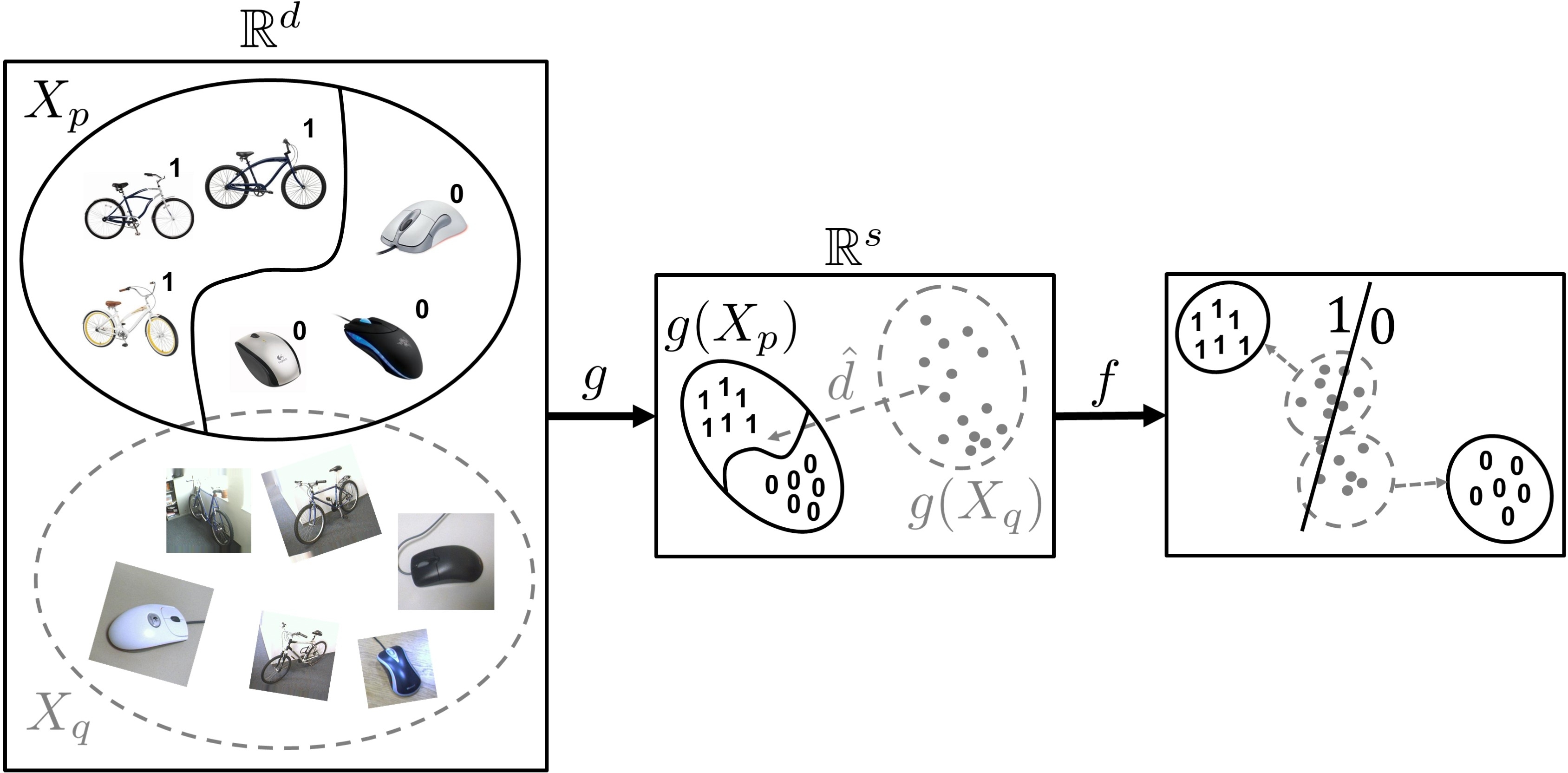}
	\end{center}
	\caption[Principle of learning representations for unsupervised domain adaptation.]{Principle of learning representations for unsupervised domain adaptation. Given: Source sample $X_p$ with labels and \textit{unlabeled} target sample $X_q$; Goal: Find a function $f\circ g:\mathbb{R}^d\to \{0,1\}$ with a small misclassification risk on target density $q$; Method: Minimizing misclassification risk on source sample $X_p$ and distance $\hat d$ between the sample representations $g(X_p)$ and $g(X_q)$.
	}
	\label{fig:grafical_abstract}
\end{figure}

\section{Maximum Entropy Distribution}
\label{sec:maximum_entropy_distribution}

In this work, we often choose some specific probability distribution from a broader class of distributions having finitely many moments in common.
We take these decisions based on the {\it principle of maximum entropy}.
This principle states that the probability distribution which best represents the current state of knowledge is the one with the largest entropy, in the context of precisely stated prior data.
In this work, the prior data is given by a finite set of (sample) moments.

In this section, we describe the concept of maximum entropy distributions following mainly Cover and Thomas~\cite{cover2012elements}, and, Wainwright and Jordan~\cite{wainwright2008graphical}.
We also review some approximation results following Barron and Sheu~\cite{barron1991approximation}.

This section is structured as follows:
Subsection~\ref{subsec:maximum_entropy_density} gives basic definitions.
Subsection~\ref{subsec:existence_of_maxent} and subsection~\ref{subsec:properties} review some properties of maximum entropy distributions.
Finally, Subsection~\ref{subsec:approximation_theory} describes some approximation properties of maximum entropy distributions.

\subsection{Maximum Entropy}
\label{subsec:maximum_entropy_density}

In this subsection, we focus on distributions which are represented by probability density functions $p\in\Ms$ on the $d$-dimensional unit cube $[0,1]^d$ \wrt~the Lebesgue reference measure.

We rely on the following measure of the entropy of a density $p\in\Ms$.
\needspace{25\baselineskip}
\begin{restatable}[Differential Entropy, see \eg~\cite{cover2012elements}]{defrep}{differentialentropy}%
	\label{def:differential_entropy}%
	Shannon's differential entropy $h(p)$ of a probability density $p\in\Ms$ is given by
	\begin{align}
		\label{eq:entropy}
		h(p)=-\int_{[0,1]^d} p(\x) \log p(\x)\diff \x.
	\end{align}
\end{restatable}%
The differential entropy is concave, may be negative, and may be potentially infinite if the integral in Eq.~\eqref{eq:entropy} diverges~\cite{cover2012elements}.

\Needspace{10\baselineskip}
\begin{restatable}[Maximum Entropy Density, see \eg~\cite{cover2012elements}]{defrep}{maxent}%
	\label{def:maxent_density}%
	Let $\boldsymbol{\phi}\in\left(\mathbb{R}[x_1,\ldots,x_d]\right)^n$ be a vector of polynomials and let $\boldsymbol{\mu}\in\mathbb{R}^n$.
	The maximum entropy density $p$ satisfying the moment constraint $\int\boldsymbol{\phi} p=\boldsymbol{\mu}$ is the probability density function with maximum differential entropy $h(q)$ among all probability density functions in the set
	\begin{align}
		\label{eq:maxent_set}
		\left\{q\in\Ms\,\middle|\,\int_{[0,1]^d} \boldsymbol{\phi}(\x) q(\x)\diff\x =\boldsymbol{\mu}\right\}.
	\end{align}
\end{restatable}
Note that our focus on probability densities $q\in\Ms$ on the unit cube implies that the elements of the vector $\int\boldsymbol{\phi} q$ are finite.
However, the set in Eq.~\eqref{eq:maxent_set} can be empty and the maximum entropy density might not exist~\cite{cover2012elements}.

\subsection{Existence and Uniqueness}
\label{subsec:existence_of_maxent}

If the maximum entropy density as specified in Definition~\ref{def:maxent_density} exists, there is a special relation to the following class of probability density functions.

\begin{restatable}[Polynomial Exponential Family]{defrep}{polyexpfamily}%
	\label{def:poly_exp_family}%
	Let $\boldsymbol{\phi}\in\left(\mathbb{R}[x_1,\ldots,x_d]\right)^n$ be a vector of polynomials.
	The polynomial exponential family $\mathcal{E}_{\boldsymbol{\phi}}$ corresponding to $\boldsymbol{\phi}$ is the set all probability density functions $q\in\Ms$ of the form
	\begin{align}
		\label{eq:maxent_distr_formula}
		q(\x) = c(\boldsymbol\lambda) \exp\!\left(-\langle \boldsymbol\lambda,\boldsymbol\phi(\x)\rangle\right)
	\end{align}
	where $\boldsymbol\lambda\in\mathbb{R}^{n}$ is a parameter vector and
	\begin{align}
		\label{eq:constant_of_normalization}
		c(\boldsymbol\lambda)=\left(\int_{[0,1]^n} \exp\!\left(-\langle \boldsymbol\lambda,\boldsymbol\phi(\x)\rangle\right) d\x\right)^{-1}
	\end{align}
	is the constant of normalization.
\end{restatable}
The function $f:\mathbb{R}^{n}\to\mathbb{R}, \boldsymbol\lambda \mapsto -\log c(\boldsymbol\lambda)$ is sometimes called \textit{cumulant function} and is continuous, see e.g.~\cite[Proposition~3.1]{wainwright2008graphical}.
The following Lemma~\ref{lemma:uniqueness_of_maximum_entropy} shows the uniqueness of maximum entropy distributions and its special relation to polynomial exponential families.

\Needspace{10\baselineskip}
\begin{restatable}[Uniqueness of Maximum Entropy Density, see \eg~\cite{cover2012elements}]{lemmarep}{uniquemaxent}%
	\label{lemma:uniqueness_of_maximum_entropy}%
	Let $\boldsymbol{\phi}\in\left(\mathbb{R}[x_1,\ldots,x_d]\right)^n$ be a vector of polynomials and let $\boldsymbol{\mu}\in\mathbb{R}^n$.
	If there exists some $q\in\mathcal{E}_{\boldsymbol{\phi}}$ with $\int \boldsymbol{\phi} q=\boldsymbol{\mu}$ then $q$ is the unique maximum entropy density satisfying the moment constraint $\int \boldsymbol{\phi} q=\boldsymbol{\mu}$.
\end{restatable}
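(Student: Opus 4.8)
The plan is to show two things: first, that the candidate $q \in \mathcal{E}_{\boldsymbol{\phi}}$ satisfying $\int \boldsymbol{\phi} q = \boldsymbol{\mu}$ has entropy at least as large as any competitor in the constraint set, and second, that equality of entropy forces equality of densities almost everywhere. The central tool is the nonnegativity of the Kullback-Leibler divergence (Definition~\ref{def:KL-divergence}), which is nonnegative and vanishes if and only if the two densities agree almost everywhere. This is the classical variational argument, and the polynomial exponential form of $q$ is exactly what makes it go through.

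First I would take an arbitrary competitor $g$ in the set from Eq.~\eqref{eq:maxent_set}, that is, $g \in \Ms$ with $\int \boldsymbol{\phi} g = \boldsymbol{\mu}$, and compare $h(g)$ with $h(q)$. Writing $q(\x) = c(\boldsymbol\lambda) \exp(-\langle \boldsymbol\lambda, \boldsymbol\phi(\x)\rangle)$, so that $\log q(\x) = \log c(\boldsymbol\lambda) - \langle \boldsymbol\lambda, \boldsymbol\phi(\x)\rangle$, the key observation is that $-\log q$ is an affine function of $\boldsymbol\phi$. Hence $\int_{[0,1]^d} g(\x) \log q(\x)\diff\x$ depends on $g$ only through the moments $\int \boldsymbol{\phi} g = \boldsymbol{\mu}$, and therefore equals $\int_{[0,1]^d} q(\x) \log q(\x)\diff\x$. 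Consequently
\begin{align*}
	h(g) &= -\int_{[0,1]^d} g \log g \\
	&= -\int_{[0,1]^d} g \log q - \int_{[0,1]^d} g \log\frac{g}{q} \\
	&= -\int_{[0,1]^d} q \log q - d_\mathrm{KL}(g,q) \\
	&= h(q) - d_\mathrm{KL}(g,q),
\end{align*}
where the third line uses the moment-matching identity just described and the definition of the KL-divergence. Since $d_\mathrm{KL}(g,q)\ge 0$, this yields $h(g)\le h(q)$, so $q$ attains the maximum entropy in the constraint set, establishing that $q$ is a maximum entropy density.

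For uniqueness, suppose $g$ is also a maximizer, so $h(g)=h(q)$. Then the displayed identity forces $d_\mathrm{KL}(g,q)=0$, and the fact that the KL-divergence vanishes only when its arguments coincide almost everywhere gives $g=q$ almost everywhere, i.e. $g$ and $q$ are the same element of $\Ms$. I expect the main subtlety to lie not in the algebra but in justifying the interchange and finiteness of the integrals: one must check that $\int g\log q$ is well defined and finite (which follows from the affine-in-$\boldsymbol\phi$ form of $\log q$ together with the finiteness of $\boldsymbol{\mu}$ and the boundedness of the polynomials $\boldsymbol\phi$ on the compact cube $[0,1]^d$), and that $g\log g$ is integrable so that $h(g)$ is meaningful. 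The compactness of the support and the polynomial nature of $\boldsymbol\phi$ are exactly the hypotheses that keep these terms controlled, so the integrability bookkeeping is the part that needs care, while the variational inequality itself is immediate from $d_\mathrm{KL}\ge 0$.
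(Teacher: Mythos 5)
Your proof is correct, and since the paper gives no proof of this lemma at all — it is quoted as background with the citation to Cover and Thomas — your argument is precisely the classical variational one that the citation points to: decompose $h(g)=h(q)-d_{\mathrm{KL}}(g,q)$ using the fact that $\log q$ is affine in $\boldsymbol{\phi}$, so that $\int g\log q$ depends on $g$ only through the shared moment vector $\boldsymbol{\mu}$, then conclude from $d_{\mathrm{KL}}(g,q)\geq 0$ with equality iff $g=q$ almost everywhere. The integrability points you flag are handled exactly as you suggest, by compactness of $[0,1]^d$ and boundedness of the polynomials, which make $\log q$ bounded and keep the negative part of $g\log g$ integrable.
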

% \begin{proof}
% See e.g.~Theorem~11.1.1 in~\cite{cover2012elements}.
% \end{proof}
% \\\\
Definition~\ref{def:poly_exp_family} of polynomial exponential families is based on an arbitrary vector $\boldsymbol{\phi}$ of polynomials.
More guarantees can be given for specific polynomials.
Let therefore $\psi(m,d)$ denote the number of monomials of maximum degree $m$ in $d$ variables without the zero-degree monomial $1$.
\Needspace{10\baselineskip}
\begin{restatable}[Existence of Maximum Entropy Density, see e.g.~\cite{wainwright2008graphical}]{lemmarep}{existmaxent}%
	\label{lemma:existence_of_maximum_entropy}%
	Consider some vector $\phim=(\phi_1,\ldots,\phi_{\psi(m,d)})^\text{T}$ such that $1,\phi_1,\ldots,\phi_{\psi(m,d)}$ is a basis of the space $\mathbb{R}_m[x_1,\ldots,x_d]$ of polynomials with maximum degree $m$.
	Then for each $\boldsymbol{\mu}$ in the interior of the set
	\begin{align}
		\label{eq:moment_space}
		\mathcal{P}:=\left\{\boldsymbol{\mu}\in\mathbb{R}^{\psi(m,d)}\,\middle|\, \exists\, p\in\Ms~\text{s.t.}~\int\phim p=\boldsymbol{\mu}\right\}
	\end{align}
	there exists some probability density $q\in\mathcal{E}_{\phim}$ satisfying $\int\phim q=\boldsymbol{\mu}$.
\end{restatable}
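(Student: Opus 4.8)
The plan is to recast the existence of the desired density as the existence of a minimizer of a smooth, strictly convex function on $\mathbb{R}^{\psi(m,d)}$, and to extract that minimizer from a coercivity estimate driven precisely by the hypothesis $\boldsymbol{\mu}\in\mathrm{int}(\mathcal{P})$. Writing $q_{\boldsymbol{\lambda}}(\x)=c(\boldsymbol{\lambda})\exp(-\langle\boldsymbol{\lambda},\phim(\x)\rangle)$ for the member of $\mathcal{E}_{\phim}$ with parameter $\boldsymbol{\lambda}$, the claim is equivalent to producing a $\boldsymbol{\lambda}\in\mathbb{R}^{\psi(m,d)}$ with $\int\phim q_{\boldsymbol{\lambda}}=\boldsymbol{\mu}$. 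Since $[0,1]^d$ is compact and $\phim$ is continuous, the normalizing integral is finite and strictly positive for every $\boldsymbol{\lambda}$, so the log-normalizer $A(\boldsymbol{\lambda})=\log\int_{[0,1]^d}\exp(-\langle\boldsymbol{\lambda},\phim(\x)\rangle)\diff\x$ is well defined on all of $\mathbb{R}^{\psi(m,d)}$.

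First I would introduce the objective $g(\boldsymbol{\lambda})=A(\boldsymbol{\lambda})+\langle\boldsymbol{\lambda},\boldsymbol{\mu}\rangle$. Differentiating under the integral sign (justified by compactness of $[0,1]^d$ and smoothness of the integrand) gives $\nabla g(\boldsymbol{\lambda})=\boldsymbol{\mu}-\int\phim q_{\boldsymbol{\lambda}}$ and Hessian $\nabla^2 g(\boldsymbol{\lambda})=\int \phim\phim\T q_{\boldsymbol{\lambda}}-\left(\int\phim q_{\boldsymbol{\lambda}}\right)\left(\int\phim q_{\boldsymbol{\lambda}}\right)\T$, which is the covariance matrix of $\phim$ under $q_{\boldsymbol{\lambda}}$. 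For any $\vec{a}\neq\vec{0}$ the quadratic form $\vec{a}\T\nabla^2 g(\boldsymbol{\lambda})\vec{a}$ equals the variance of $\langle\vec{a},\phim\rangle$ under $q_{\boldsymbol{\lambda}}$, and this is strictly positive: were it zero, $\langle\vec{a},\phim\rangle$ would be constant almost everywhere, contradicting the linear independence of $1,\phi_1,\ldots,\phi_{\psi(m,d)}$. Hence $g$ is strictly convex, so it has at most one critical point, and a point $\boldsymbol{\lambda}^\ast$ with $\nabla g(\boldsymbol{\lambda}^\ast)=\vec{0}$ solves exactly $\int\phim q_{\boldsymbol{\lambda}^\ast}=\boldsymbol{\mu}$. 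It therefore remains to show that the strictly convex $g$ actually attains a minimum.

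The main obstacle, and the only place where interiority enters, is proving that $g$ is coercive. Since $g$ is a finite convex function on $\mathbb{R}^{\psi(m,d)}$, it is a standard fact of convex analysis that $g$ is coercive as soon as its recession function $g_\infty(\vec{u})=\lim_{t\to\infty}g(t\vec{u})/t$ is strictly positive for every unit vector $\vec{u}$. I would compute this limit explicitly. Setting $m(\vec{u})=\min_{\x\in[0,1]^d}\langle\vec{u},\phim(\x)\rangle$ (attained by compactness and continuity), the bound $\langle\vec{u},\phim(\x)\rangle\geq m(\vec{u})$ yields $A(t\vec{u})\leq -t\,m(\vec{u})$, while integrating $\exp(-t\langle\vec{u},\phim\rangle)$ over a positive-measure neighborhood on which $\langle\vec{u},\phim\rangle\leq m(\vec{u})+\varepsilon$ yields a matching lower bound; letting $\varepsilon\to 0$, the two bounds give $g_\infty(\vec{u})=\langle\vec{u},\boldsymbol{\mu}\rangle-m(\vec{u})$.

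Finally I would use interiority to make this quantity strictly positive. For every density $p$ one has $\langle\vec{u},\int\phim p\rangle=\int\langle\vec{u},\phim\rangle\,p\geq m(\vec{u})$, so $\langle\vec{u},\boldsymbol{\nu}\rangle\geq m(\vec{u})$ for all $\boldsymbol{\nu}\in\mathcal{P}$; because $\boldsymbol{\mu}\in\mathrm{int}(\mathcal{P})$ the point $\boldsymbol{\mu}-\varepsilon\vec{u}$ still lies in $\mathcal{P}$ for small $\varepsilon>0$, whence $\langle\vec{u},\boldsymbol{\mu}\rangle-\varepsilon\geq m(\vec{u})$ and thus $g_\infty(\vec{u})=\langle\vec{u},\boldsymbol{\mu}\rangle-m(\vec{u})>0$. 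Coercivity of the continuous $g$ then forces the infimum to be attained at some $\boldsymbol{\lambda}^\ast$, and by the gradient computation $q_{\boldsymbol{\lambda}^\ast}\in\mathcal{E}_{\phim}$ satisfies $\int\phim q_{\boldsymbol{\lambda}^\ast}=\boldsymbol{\mu}$, as required; its being the unique maximum entropy density is then guaranteed a posteriori by Lemma~\ref{lemma:uniqueness_of_maximum_entropy}. The step I expect to require the most care is the two-sided Laplace-type estimate identifying $\lim_{t\to\infty}A(t\vec{u})/t$ with $-m(\vec{u})$, since this is exactly what converts the geometric interiority hypothesis into the analytic coercivity needed for existence.
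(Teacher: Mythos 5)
Your proof is correct, and it is worth noting that the paper itself contains no proof of this lemma: it is quoted as background and attributed to Wainwright and Jordan~\cite{wainwright2008graphical}, so there is no in-paper argument to compare against. What you have written is essentially the standard argument from that literature, namely surjectivity of the gradient of the log-partition function onto the interior of the realizable-moment set: minimize the strictly convex $g(\boldsymbol{\lambda})=A(\boldsymbol{\lambda})+\langle\boldsymbol{\lambda},\boldsymbol{\mu}\rangle$, identify its Hessian with the covariance of $\phim$ under $q_{\boldsymbol{\lambda}}$, and obtain attainment of the minimum from coercivity. All three pillars of your argument are sound: differentiation under the integral sign is unproblematic on the compact cube; the two-sided Laplace estimate giving $\lim_{t\to\infty}A(t\vec{u})/t=-m(\vec{u})$ is handled correctly (the lower bound needs only that the relatively open subset of $[0,1]^d$ where $\langle\vec{u},\phim\rangle<m(\vec{u})+\varepsilon$ has positive Lebesgue measure); and the conversion of interiority of $\boldsymbol{\mu}$ into strict positivity of the recession function $g_\infty(\vec{u})=\langle\vec{u},\boldsymbol{\mu}\rangle-m(\vec{u})$ is exactly the right use of the hypothesis, since $\mathcal{P}$ lies in the half-space $\left\{\boldsymbol{\nu}:\langle\vec{u},\boldsymbol{\nu}\rangle\geq m(\vec{u})\right\}$ for every unit vector $\vec{u}$. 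One step you should spell out if you write this up in full: when the variance of $\langle\vec{a},\phim\rangle$ under $q_{\boldsymbol{\lambda}}$ vanishes, you get that this polynomial is constant almost everywhere on $[0,1]^d$; since $q_{\boldsymbol{\lambda}}$ has full support and a polynomial that is constant on a set with nonempty interior is constant identically, this contradicts the assumed linear independence of $1,\phi_1,\ldots,\phi_{\psi(m,d)}$ — that polynomiality/full-support step is what upgrades positive semidefiniteness of the Hessian to strict positive definiteness.
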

It is interesting to note that
\begin{align*}
	\psi(m,d)=\sum_{i=1}^m \zeta(i,d)=\sum_{i=1}^m \binom{d+i-1}{i}=\binom{d+m}{m}-1,
\end{align*}
where $\zeta(m,d)$ denotes the number of monomials of total degree $m$ in $d$ variables.
For a proof of the equality $\zeta(i,d)=\binom{d+i-1}{i}$ see \eg~\cite{heubach2004compositions}.

The following serves as a key observation.

\begin{restatable}[Maximum Entropy Density]{thmrep}{maxentdensity}%
	\label{thm:uniqueness_and_existence_of_maxent}%
	Let $p\in\Ms$ and $\phim=(\phi_1,\ldots,\phi_{\psi(m,d)})^\text{T}$ be such that $1,\phi_1,\ldots,\phi_{\psi(m,d)}$ is a basis of the space $\mathbb{R}_m[x_1,\ldots,x_d]$ of polynomials with maximum degree $m$.
	Then the maximum entropy probability density $p^*$ which satisfies $\int\phim p^*=\int\phim p$ exists, is unique and belongs to the exponential family $\mathcal{E}_{\phim}$.
\end{restatable}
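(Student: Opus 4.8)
The plan is to reduce everything to the two preceding lemmas by verifying a single interiority condition. Put $\boldsymbol{\mu}:=\int\phim p$; since $p\in\Ms$, this vector lies in the moment set $\mathcal{P}$ of Eq.~\eqref{eq:moment_space}. Granting for the moment that $\boldsymbol{\mu}$ lies in the \emph{interior} of $\mathcal{P}$, Lemma~\ref{lemma:existence_of_maximum_entropy} produces a density $q\in\mathcal{E}_{\phim}$ with $\int\phim q=\boldsymbol{\mu}=\int\phim p$, and Lemma~\ref{lemma:uniqueness_of_maximum_entropy}, applied with this same $\boldsymbol{\mu}$, states that such a $q$ is the unique maximum entropy density subject to the moment constraint $\int\phim p^{*}=\int\phim p$. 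Setting $p^{*}:=q$ then delivers existence, uniqueness and membership in $\mathcal{E}_{\phim}$ simultaneously, so no further computation is needed once interiority is established.

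The real content, and the step I expect to be the main obstacle, is therefore the claim $\boldsymbol{\mu}\in\operatorname{int}\mathcal{P}$. I would argue by contradiction. The set $\mathcal{P}$ is convex, being the image of the convex set $\Ms$ under the linear map $p\mapsto\int\phim p$, so if $\boldsymbol{\mu}\notin\operatorname{int}\mathcal{P}$ the supporting hyperplane theorem yields a nonzero $\boldsymbol{a}\in\mathbb{R}^{\psi(m,d)}$ with $\langle\boldsymbol{a},\boldsymbol{\nu}\rangle\geq\langle\boldsymbol{a},\boldsymbol{\mu}\rangle$ for every $\boldsymbol{\nu}\in\mathcal{P}$. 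Consider the polynomial $P:=\langle\boldsymbol{a},\phim\rangle=\sum_{i}a_i\phi_i$. Because $1,\phi_1,\ldots,\phi_{\psi(m,d)}$ is a basis of $\mathbb{R}_m[x_1,\ldots,x_d]$, the sum $\sum_i a_i\phi_i$ can be constant only if $\boldsymbol{a}=0$; hence $P$ is non-constant. Since $\langle\boldsymbol{a},\int\phim q\rangle=\int P\,q$ for every $q\in\Ms$, the supporting inequality reads $\int P\,q\geq\int P\,p$ for all $q\in\Ms$, that is, $\int P\,p=\inf_{q\in\Ms}\int P\,q$.

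It remains to evaluate this infimum and derive a contradiction. Writing $m_0:=\min_{\x\in[0,1]^d}P(\x)$, which exists by continuity of $P$ on the compact cube, one has $\int P\,q\geq m_0$ for every $q\in\Ms$, while choosing uniform densities supported on shrinking balls around a minimiser of $P$ shows $\inf_{q\in\Ms}\int P\,q=m_0$. Thus $\int(P-m_0)\,p=0$ with a nonnegative integrand, forcing $p=0$ almost everywhere on the complement of the level set $Z=\{\x\in[0,1]^d\mid P(\x)=m_0\}$. But $Z$ is contained in the zero set of the nonzero polynomial $P-m_0$ and therefore has Lebesgue measure zero, so $\int_{[0,1]^d}p\,\diff\x=\int_{Z}p\,\diff\x=0$, contradicting $p\in\Ms$. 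Hence $\boldsymbol{\mu}\in\operatorname{int}\mathcal{P}$ and the theorem follows. (The same concentration argument shows that for every nonzero $\boldsymbol{a}$ the value $\int P\,q$ is non-constant in $q\in\Ms$, so $\mathcal{P}$ is contained in no hyperplane and is full-dimensional; this justifies both the reference to $\operatorname{int}\mathcal{P}$ and the use of a supporting hyperplane above.)
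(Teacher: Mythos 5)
Your proposal is correct, and its top-level structure coincides with the paper's proof: both reduce the theorem to Lemma~\ref{lemma:existence_of_maximum_entropy} (existence and exponential form) plus Lemma~\ref{lemma:uniqueness_of_maximum_entropy} (uniqueness) by verifying that $\boldsymbol{\mu}=\int\phim p$ lies in the interior of the moment set $\mathcal{P}$ of Eq.~\eqref{eq:moment_space}. The difference is how that interiority is handled. The paper disposes of it in one sentence: it remarks that densities \wrt~the Lebesgue reference measure exclude convex combinations of Dirac deltas and cites~\cite{frontini2011hausdorff} for the conclusion. You instead prove it from scratch: convexity of $\mathcal{P}$ as the linear image of the convex set $\Ms$, a supporting hyperplane at a hypothetical non-interior $\boldsymbol{\mu}$, the observation that the associated polynomial $\langle\boldsymbol{a},\phim\rangle$ is non-constant whenever $\boldsymbol{a}\neq 0$ because $1,\phi_1,\ldots,\phi_{\psi(m,d)}$ is a basis, identification of the infimum of its integral over $\Ms$ with its minimum $m_0$ on the compact cube, and finally the contradiction that $\int (P-m_0)\,p=0$ forces $p$ to concentrate on the zero set of the nonzero polynomial $P-m_0$, which is Lebesgue-null. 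Each step checks out, and your closing full-dimensionality remark, while not strictly necessary (the contradiction argument already covers the degenerate case in which $\mathcal{P}$ lies in a hyperplane), is also correct. What the paper's route buys is brevity and a pointer into the moment-problem literature; what yours buys is a self-contained, elementary argument that makes precise exactly where the paper's informal ``no Dirac deltas'' intuition enters—namely, that a genuine density cannot put all of its mass on the measure-zero minimum level set of a non-constant polynomial.
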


\begin{proof}
	Our restriction of probability densities \wrt~the Lebesgue reference measure excludes convex combinations of Dirac delta functions.
	It follows that the moment vector $\int\phim p$ lies in the interior of the set $\mathcal{P}$ in Eq.~\eqref{eq:moment_space} as it is shown \eg~in~\cite{frontini2011hausdorff}).
	Lemma~\ref{lemma:existence_of_maximum_entropy} gives the existence and the form of the solution. Lemma~\ref{lemma:uniqueness_of_maximum_entropy} gives the uniqueness of the solution.
\end{proof}
\\\\
For the rest of this work and some given $p\in\Ms$ and $\phim$, we denote by $p^*$ the maximum entropy density satisfying the constraint $\int\phim p^*=\int\phim p$.
We further denote by $h_{\phim}(p)=h(p^*)$ the entropy of $p^*$.

\subsection{Further Properties}
\label{subsec:properties}

The following Lemma~\ref{lemma:properties_of_maxent} summarizes some important properties of maximum entropy densities as characterized by Theorem~\ref{thm:uniqueness_and_existence_of_maxent}.
\Needspace{10\baselineskip}
\begin{restatable}[Properties of Maximum Entropy Densities, see~\cite{csiszar1975, tagliani1999hausdorff, borwein1991convergence, wainwright2008graphical}]{lemmarep}{maxentproperties}%
	\label{lemma:properties_of_maxent}%
	Let $p\in\Ms$ and $\phim=(\phi_1,\ldots,\phi_{\psi(m,d)})^\text{T}$ be such that $1,\phi_1,\ldots,\phi_{\psi(m,d)}$ is a basis of the space $\mathbb{R}_m[x_1,\ldots,x_d]$ of polynomials with maximum total degree $m$.
	Then the following holds:
	\begin{enumerate}
		\item $p^*=\argmin_{q\in\mathcal{E}_{\phim}} d_{\mathrm{KL}}(p,q)$
		\item $d_{\mathrm{KL}}(p,p^*)=h_{\phim}(p)-h(p)$
		\item $h_{\phim}(p)\to h(p)$ as $m\to\infty$
		\item $p^*=c(\boldsymbol{\lambda}^*) \exp\!\left(-\langle \boldsymbol{\lambda}^*,\phim\rangle\right)$ iff $\boldsymbol{\lambda}^*=\min_{\boldsymbol{\lambda}\in\mathbb{R}^{\psi(m,d)}} \langle \boldsymbol{\lambda}, \int\phim p\rangle - \log(c(\boldsymbol{\lambda}))$
	\end{enumerate}
\end{restatable}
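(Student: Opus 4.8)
The plan is to isolate one algebraic identity and read off three of the four parts from it, leaving only part (3) as genuine analysis. By Theorem~\ref{thm:uniqueness_and_existence_of_maxent} I may take $p^*\in\mathcal{E}_{\phim}$, so that $\log p^*=\log c(\boldsymbol{\lambda}^*)-\langle\boldsymbol{\lambda}^*,\phim\rangle$ is an affine function of $\phim$, and I will use the defining moment constraint $\int\phim p^*=\int\phim p$ repeatedly. The key observation, which I would prove first, is a substitution identity: for every $q\in\mathcal{E}_{\phim}$ the map $\log q$ is affine in $\phim$, hence its integral against a density depends on that density only through the moment vector $\int\phim$; since $p$ and $p^*$ share that vector, $\int p\log q=\int p^*\log q$. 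Part (2) then follows immediately: expanding $d_{\mathrm{KL}}(p,p^*)=\int p\log p-\int p\log p^*$ and replacing $\int p\log p^*$ by $\int p^*\log p^*$ via the identity gives $d_{\mathrm{KL}}(p,p^*)=-h(p)+h(p^*)=h_{\phim}(p)-h(p)$, using $h_{\phim}(p)=h(p^*)$.

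For part (1) I would establish the Pythagorean decomposition $d_{\mathrm{KL}}(p,q)=d_{\mathrm{KL}}(p,p^*)+d_{\mathrm{KL}}(p^*,q)$ for every $q\in\mathcal{E}_{\phim}$. Subtracting gives $d_{\mathrm{KL}}(p,q)-d_{\mathrm{KL}}(p,p^*)=\int p\log\frac{p^*}{q}$, and since $\log\frac{p^*}{q}=\log p^*-\log q$ is again affine in $\phim$, the substitution identity converts this cross term into $\int p^*\log\frac{p^*}{q}=d_{\mathrm{KL}}(p^*,q)$. As $d_{\mathrm{KL}}(p^*,q)\ge 0$ with equality iff $q=p^*$ almost everywhere, $p^*$ is the unique minimizer, which is the claim $p^*=\argmin_{q\in\mathcal{E}_{\phim}}d_{\mathrm{KL}}(p,q)$.

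Part (4) is a convex-duality statement, which I would handle separately. I introduce $G(\boldsymbol{\lambda})=\langle\boldsymbol{\lambda},\int\phim p\rangle-\log c(\boldsymbol{\lambda})$, whose second summand is (up to sign) the cumulant function of Definition~\ref{def:poly_exp_family}. Differentiating $\log c$ and using the form of the normalizing constant yields $\nabla G(\boldsymbol{\lambda})=\int\phim p-\int\phim q_{\boldsymbol{\lambda}}$, where $q_{\boldsymbol{\lambda}}\in\mathcal{E}_{\phim}$ is the member with parameter $\boldsymbol{\lambda}$. Thus $\nabla G(\boldsymbol{\lambda})=\vec 0$ exactly when $q_{\boldsymbol{\lambda}}$ matches the moments of $p$, i.e. when $q_{\boldsymbol{\lambda}}=p^*$ by Lemma~\ref{lemma:uniqueness_of_maximum_entropy}. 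Since the Hessian of $G$ equals the covariance matrix of $\phim$ under $q_{\boldsymbol{\lambda}}$, which is positive definite because $1,\phi_1,\ldots,\phi_{\psi(m,d)}$ are linearly independent, $G$ is strictly convex and the stationarity condition is equivalent to global minimization; this gives the stated equivalence.

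Part (3) is the genuine analytic step and the main obstacle. By part (2) it is equivalent to $d_{\mathrm{KL}}(p,p^*_m)\to 0$, where $p^*_m$ is the degree-$m$ maximum entropy density. I would first record monotonicity: matching all moments up to degree $m+1$ is more restrictive than up to degree $m$, so the feasible sets are nested and $h_{\phim}(p)=h(p^*_m)$ is non-increasing and bounded below by $h(p)$ (as $p$ is always feasible), whence the limit exists. To identify it with $h(p)$ I would use part (1): it suffices to exhibit \emph{some} sequence $q_m\in\mathcal{E}_{\phim}$ with $d_{\mathrm{KL}}(p,q_m)\to 0$, since then $d_{\mathrm{KL}}(p,p^*_m)\le d_{\mathrm{KL}}(p,q_m)$. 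Taking $q_m\propto\exp(r_m)$ with $r_m$ a polynomial of degree at most $m$ approximating $\log p$ reduces everything to polynomial approximation of $\log p$ plus convergence of the normalizing constants. Making this rigorous for a general $p\in\Ms$ requires control of $\log p$ (for instance $p$ bounded away from $0$, or log-density with square-integrable derivatives), and this low-regularity case is exactly where I would invoke the approximation estimates of Barron and Sheu~\cite{barron1991approximation} and the convergence results of Borwein and Lewis~\cite{borwein1991convergence}; this is the crux of the lemma.
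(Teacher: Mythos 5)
First, a point of comparison you could not have known: the thesis never proves Lemma~\ref{lemma:properties_of_maxent}. It appears in the background chapter as known material, with the proof delegated entirely to the cited references \cite{csiszar1975,tagliani1999hausdorff,borwein1991convergence,wainwright2008graphical}. So your proposal must be judged on its own merits rather than against an in-paper argument. On that basis, parts (1), (2) and (4) are correct and complete. The substitution identity --- that $\int p\log q$ depends on $p$ only through $\int\phim p$ whenever $q\in\mathcal{E}_{\phim}$, since $\log q$ is affine in $\phim$ and polynomials are bounded on the cube --- is exactly Csisz\'ar's information-projection argument, and it yields both the Pythagorean identity behind (1) and the entropy-difference formula (2); invoking Theorem~\ref{thm:uniqueness_and_existence_of_maxent} to place $p^*$ inside $\mathcal{E}_{\phim}$ is the right prerequisite, and absolute continuity $p\ll p^*$ is automatic because $\log p^*$ is bounded on $[0,1]^d$. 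Your duality computation for (4) --- gradient $\nabla G(\boldsymbol{\lambda})=\int\phim p-\int\phim q_{\boldsymbol{\lambda}}$, Hessian equal to the covariance of $\phim$ under $q_{\boldsymbol{\lambda}}$, positive definite by linear independence of $1,\phi_1,\ldots,\phi_{\psi(m,d)}$ --- is in fact carried out almost verbatim inside the thesis's own proof of Lemma~\ref{lemma:convergence_in_M}, which cites Property~4; you also silently repaired the statement's typo by reading $\min$ as $\argmin$, which is clearly the intended meaning.

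The one place where your proof is not self-contained is part (3), and you flag this honestly. The monotonicity step (nested feasible sets, so $h(p^*_m)$ is non-increasing and bounded below by $h(p)$) is fine, and the reduction via (1)--(2) to exhibiting \emph{some} sequence $q_m\in\mathcal{E}_{\phim}$ with $d_{\mathrm{KL}}(p,q_m)\to 0$ is the standard route. But the lemma is asserted for arbitrary $p\in\Ms$, where $\log p$ need not be bounded, continuous, or even finite almost everywhere, so the construction $q_m\propto\exp(r_m)$ with $r_m$ a polynomial approximant of $\log p$ does not go through unaided; closing the general case genuinely requires the convergence theory of \cite{borwein1991convergence} (with \cite{barron1991approximation} covering the smooth case). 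Since the thesis itself offers nothing beyond those same citations for this lemma, your deferral is legitimate and mirrors the paper's treatment --- just be clear that what you have actually proved is (1), (2), (4) in full, and (3) only under additional regularity of $\log p$.
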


Property~1 shows that $p^*$ is the best approximation of $p$ by exponential families in $\mathcal{E}_{\phim}$ \wrt~the KL-divergence.
This fact gives reason to call $p^*$ {\it information projection} of $p$ onto the space $\mathcal{E}_{\phim}$~\cite{csiszar1975}.
Applying Property~2, Property~3 and Eq.~\eqref{eq:relationship:Pinsker} shows that ${\norm{p^*-p}_{L^1}\to 0}$ as $m\to\infty$.
Property~4 is often used in optimization algorithms to compute approximations of $p^*$, see \eg~\cite{batou2013calculation}.

\subsection{Approximation by Maximum Entropy Densities}
\label{subsec:approximation_theory}

In this subsection, we recall some results from the theory of approximation by sequences of maximum entropy densities as proposed by Barron, Sheu and Cox mainly in~\cite{barron1991approximation} and~\cite{cox1988approximation}.
Before that, we need the following definition.

\begin{restatable}[Sobolev Space, see \eg~\cite{adams2003sobolev}]{defrep}{sobolevspace}%
	\label{def:sobolev_space}%
	The Sobolev space of order $r$ \wrt~the $L^2$-norm is defined by
	\begin{align}
		\label{eq:sobolev_space_defi}
		W_2^r=\left\{f:\mathbb{R}^d\to\mathbb{R}\,\middle|\, 
		% \norm{f}_{L^2}<\infty~\text{and}~
		\norm{\boldsymbol{D}^{\boldsymbol{\alpha}} f}_{L^2}<\infty~\forall\boldsymbol{\alpha}\in\mathbb{N}^d: \norm{\boldsymbol{\alpha}}_1\leq r\right\}.
	\end{align}
\end{restatable}
Note that $f\in W_2^r$ implies that $\norm{f}_{L^2}<\infty$ since $\boldsymbol{D}^{\boldsymbol{\alpha}} f=f$ for $\boldsymbol{\alpha}=(0,\ldots,0)^\text{T}\in\mathbb{N}^d$.

\Needspace{10\baselineskip}
\begin{restatable}[Barron and Sheu~\cite{barron1991approximation}]{lemmarep}{barronandshoulemmafive}%
	\label{lemma:barron_and_sheu_lemma5}%
	Consider some $\phim=(\phi_1,\ldots,\phi_m)^\text{T}$ such that $1,\phi_1,\ldots,\phi_m$ is a basis of $\mathbb{R}_m[x]$ orthonormal with respect to some probability density $q$ with $\norm{\log q}_\infty<\infty$.
	Further consider some $A_q\in\mathbb{R}$ such that $\norm{f_m}_\infty\leq A_q \norm{f_m}_{L^2(q)}$ for all $f_m\in \mathbb{R}_m[x]$.
	
	Let $\boldsymbol{\mu}\in [0,1]^m$, $p_0\in\mathcal{M}([0,1])$ and denote by $\boldsymbol{\mu}_0=\int \phim p_0$ and $b=e^{\norm{\log q/p_0^*}_\infty}$.
	
	If
	\begin{align}
		\label{eq:valid_moment_diff}
		\norm{\boldsymbol{\mu}-\boldsymbol{\mu}_0}_2\leq \frac{1}{4 A_q e b}
	\end{align}
	then the maximum entropy probability density $p^*\in\mathcal{M}([0,1])$ fulfilling $\int\phi_m p^*=\boldsymbol{\mu}$ exists and satisfies
	\begin{align}
		\label{eq:log_ration}
		\norm{\log{p_0^*}/{p^*}}_\infty &\leq 4 e^t b A_q \norm{\boldsymbol{\mu}-\boldsymbol{\mu}_0}_2 \leq t\\
		\label{eq:barron_inequ}
		d_{\mathrm{KL}}(p_0^*,p^*) &\leq 2 e^t b \norm{\boldsymbol{\mu}-\boldsymbol{\mu}_0}_2^2
	\end{align}
	for $t$ satisfying $4 e b A_q \norm{\boldsymbol{\mu}-\boldsymbol{\mu}_0}_2\leq t\leq 1$.
\end{restatable}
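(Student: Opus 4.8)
The plan is to recast the problem inside the polynomial exponential family $\mathcal{E}_{\phim}$ based at $p_0^*$. Writing a candidate as $p_{\boldsymbol\theta} = p_0^*\exp(\langle\boldsymbol\theta,\phim\rangle - \Psi(\boldsymbol\theta))$, where $\Psi(\boldsymbol\theta) = \log\int_{[0,1]} p_0^*\exp(\langle\boldsymbol\theta,\phim\rangle)$ is the cumulant function relative to the base $p_0^*$, I would reduce the moment constraint $\int\phim p^* = \boldsymbol\mu$ to the equation $\nabla\Psi(\boldsymbol\theta) = \boldsymbol\mu$, using that $\nabla\Psi(\boldsymbol\theta) = \int\phim p_{\boldsymbol\theta}$. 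Since $p_0^*$ shares the moments of $p_0$, the base satisfies $\nabla\Psi(\boldsymbol 0) = \boldsymbol\mu_0$ and $\Psi(\boldsymbol 0) = 0$, so the task becomes to invert the smooth, strictly convex map $\nabla\Psi$ near $\boldsymbol 0$ and to read off quantitative estimates from the size of $\boldsymbol\mu - \boldsymbol\mu_0$.

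The analytic heart is a two-sided bound on the Hessian $\nabla^2\Psi(\boldsymbol\theta) = \mathrm{Cov}_{p_{\boldsymbol\theta}}(\phim)$. For any coefficient vector $\boldsymbol a$, the variance of $\langle\boldsymbol a,\phim\rangle$ under $p_{\boldsymbol\theta}$ equals $\min_c\int(\langle\boldsymbol a,\phim\rangle - c)^2 p_{\boldsymbol\theta}$. I would sandwich this by first invoking the comparability $q/b\le p_0^*\le bq$ that comes from $b = e^{\norm{\log q/p_0^*}_\infty}$, together with the bootstrap bound $e^{-t}\le p_{\boldsymbol\theta}/p_0^*\le e^t$ (valid wherever $\norm{\log p_0^*/p_{\boldsymbol\theta}}_\infty\le t$), and then the orthonormality relations $\int\phim q = \boldsymbol 0$ and $\int\phim\phim^T q = I$, which give $\min_c\int(\langle\boldsymbol a,\phim\rangle - c)^2 q = \norm{\boldsymbol a}_2^2$. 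This yields $\frac{e^{-t}}{b}I\preceq\nabla^2\Psi(\boldsymbol\theta)\preceq e^t b I$ on the region where the bootstrap holds.

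With these bounds in hand, I would establish existence and the sup-norm estimate \eqref{eq:log_ration} simultaneously by a continuity (a-priori-estimate) argument along the segment $\boldsymbol\mu_s = \boldsymbol\mu_0 + s(\boldsymbol\mu-\boldsymbol\mu_0)$, $s\in[0,1]$: wherever a solution $\boldsymbol\theta(s)$ exists with $\norm{\log p_0^*/p_{\boldsymbol\theta(s)}}_\infty\le t$, the Hessian lower bound forces $\norm{\boldsymbol\theta(s)}_2\le e^t b\norm{\boldsymbol\mu_s-\boldsymbol\mu_0}_2$, and the Nikolskii-type inequality $\norm{f_m}_\infty\le A_q\norm{f_m}_{L^2(q)}$ applied to the polynomial $\Psi(\boldsymbol\theta(s)) - \langle\boldsymbol\theta(s),\phim\rangle$ converts this into $\norm{\log p_0^*/p_{\boldsymbol\theta(s)}}_\infty\le 4 e^t b A_q\norm{\boldsymbol\mu_s-\boldsymbol\mu_0}_2$, the numerical factor $4$ absorbing the contribution of the normalizing constant $\Psi(\boldsymbol\theta(s))$. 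Because the hypothesis $\norm{\boldsymbol\mu-\boldsymbol\mu_0}_2\le\frac{1}{4A_q e b}$ together with $t\le 1$ makes this right-hand side strictly smaller than $t$, the bootstrap assumption can never be saturated, so the admissible set of $s$ is open and closed in $[0,1]$ and reaches $s=1$. This self-referential tension, in which the exponent $t$ in the Hessian estimate feeds back into the conclusion it produces, is the main obstacle, and the smallness hypothesis is calibrated precisely to defuse it.

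Finally, for the Kullback-Leibler estimate \eqref{eq:barron_inequ} I would use that the divergence between two members of an exponential family is a Bregman divergence of $\Psi$, namely $d_{\mathrm{KL}}(p_0^*,p^*) = \Psi(\boldsymbol\theta) - \Psi(\boldsymbol 0) - \langle\nabla\Psi(\boldsymbol 0),\boldsymbol\theta\rangle$, which by Bregman duality equals the Bregman divergence of the conjugate $\Psi^*$ evaluated at the moment vectors $\boldsymbol\mu_0,\boldsymbol\mu$. Since $\nabla^2\Psi^* = (\nabla^2\Psi)^{-1}$ has eigenvalues bounded above by $e^t b$, a second-order Taylor expansion gives $d_{\mathrm{KL}}(p_0^*,p^*)\le\frac12 e^t b\norm{\boldsymbol\mu-\boldsymbol\mu_0}_2^2\le 2 e^t b\norm{\boldsymbol\mu-\boldsymbol\mu_0}_2^2$, as claimed. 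That the resulting $p^*$ is a genuine maximum entropy density, and not merely an exponential-family member, then follows from Theorem~\ref{thm:uniqueness_and_existence_of_maxent}.
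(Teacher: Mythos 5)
Your proposal is correct and follows essentially the same route as the argument this thesis defers to: the paper gives no proof of this lemma at all but cites it verbatim from Barron and Sheu, and their original proof of Lemma~5 proceeds exactly as you describe --- exponential-family parametrization based at $p_0^*$, two-sided Hessian (covariance) bounds from $L^2(q)$-orthonormality together with the comparability constants $b$ and $e^{t}$, an a priori estimate combined with a continuation argument yielding existence and the sup-norm bound \eqref{eq:log_ration}, and convex (Bregman) duality for the Kullback--Leibler bound \eqref{eq:barron_inequ}. The one step to execute with care is the bootstrap: sub-level sets of $\norm{\log p_0^*/p_{\boldsymbol\theta}}_\infty$ are not obviously star-shaped in $\boldsymbol\theta$, so the quantitative estimate $\norm{\boldsymbol\theta(s)}_2\leq e^{t} b \norm{\boldsymbol\mu_s-\boldsymbol\mu_0}_2$ should be obtained by integrating $\left(\nabla^2\Psi\right)^{-1}$ along the solution path $s\mapsto\boldsymbol\theta(s)$ (or by bootstrapping the convex quantity $\norm{\langle\boldsymbol\theta,\phim\rangle}_\infty$, which is how Barron and Sheu arrange it), rather than along straight segments in parameter space --- a repair your own use of the Nikolskii-type inequality already accommodates.
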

The following Corollary~\ref{lemma:barron_proof_thm3} follows from Lemma~\ref{lemma:barron_and_sheu_lemma5} and shows the relation between results on the approximation by exponential families and results on the approximation by polynomials.
\needspace{10\baselineskip}
\begin{restatable}[Barron and Sheu~\cite{barron1991approximation}]{correp}{barronproofthmthree}%
	\label{lemma:barron_proof_thm3}%
	Consider some vector $\phim=(\phi_1,\ldots,\phi_m)^\text{T}$ of polynomials such that $1,\phi_1,\ldots,\phi_m$ is an orthonormal basis of $\mathbb{R}_m[x]$.
	
	Let $p\in\mathcal{M}([0,1])$ such that $\log p\in W_2^r$ and $A_p\in\mathbb{R}$ such that $\norm{f_m}_\infty\leq A_p \norm{f_m}_{L^2(p)}$ for all $f_m\in \mathbb{R}_m[x]$.
	
	Denote by $f=\log p$. Further denote by $\gamma=\min_{f_m\in\mathbb{R}_m[x]} \norm{f-f_m}_\infty$ and $\xi=\min_{f_m\in\mathbb{R}_m[x]} \norm{f-f_m}_{L^2(p)}$ minimal errors of approximating $f$ by polynomials $f_m\in \mathbb{R}_m[x]$.
	Then the following holds:
	\begin{align*}
		4 e^{4\gamma + 1} A_p \xi \leq 1\quad\implies\quad
		\norm{\log{p/p^*}}_\infty \leq 2 \gamma + 4 e^{4 \gamma + 1} \xi A_p.
	\end{align*}
\end{restatable}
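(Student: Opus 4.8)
The plan is to bound $\norm{\log(p/p^*)}_\infty$ by inserting an exponential-family density between $p$ and its information projection $p^*$, and then to invoke Lemma~\ref{lemma:barron_and_sheu_lemma5}. Since $\log p\in W_2^r$ on the bounded interval $[0,1]$, Sobolev embedding makes $f=\log p$ continuous and bounded, so the minimisers defining $\gamma$ and $\xi$ exist and $p^*$ is well defined as the maximum entropy density matching the moments $\int\phim p$ (Lemma~\ref{lemma:properties_of_maxent}, property~1). Throughout I will use that any member of $\mathcal{E}_{\phim}$ is, by Lemma~\ref{lemma:uniqueness_of_maximum_entropy}, its own maximum entropy density, so that Lemma~\ref{lemma:barron_and_sheu_lemma5} can be applied with such a density serving as $p_0$ (so that $p_0^*$ equals it).

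First I would dispose of the $L^\infty$ part. Fix $f_\gamma\in\mathbb{R}_m[x]$ with $\norm{f-f_\gamma}_\infty=\gamma$ and put $p_\gamma=e^{f_\gamma}/Z_\gamma$, $Z_\gamma=\int_0^1 e^{f_\gamma}$, so that $p_\gamma\in\mathcal{E}_{\phim}$. Then
\begin{align*}
	\log(p/p_\gamma)=(f-f_\gamma)+\log Z_\gamma ,
\end{align*}
and since $\left|f_\gamma-f\right|\le\gamma$ pointwise together with $\int_0^1 p=1$ forces $Z_\gamma=\int_0^1 e^{f_\gamma-f}\,p\in[e^{-\gamma},e^{\gamma}]$, we get $\left|\log Z_\gamma\right|\le\gamma$ and hence $\norm{\log(p/p_\gamma)}_\infty\le 2\gamma$. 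This yields the first summand of the asserted bound.

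Next I would compare the exponential-family density with $p^*$ through Lemma~\ref{lemma:barron_and_sheu_lemma5}, applied with target moment vector $\boldsymbol{\mu}=\int\phim p$ (so the lemma's $p^*$ is exactly our $p^*$) and with the intermediate density in the role of $p_0$. The lemma then returns a bound of the shape $4e^{t}bA_p\norm{\Delta}_2$ on $\norm{\log(p_\gamma/p^*)}_\infty$, where $\norm{\Delta}_2$ is the gap between the moments of the intermediate density and $\int\phim p$, the factor $b=e^{\norm{\log(\,\cdot\,/p_\gamma)}_\infty}$ is governed by the pointwise ratio bounds already established, and $t\le1$. Combining this with $\norm{\log(p/p^*)}_\infty\le\norm{\log(p/p_\gamma)}_\infty+\norm{\log(p_\gamma/p^*)}_\infty$ gives the two-term estimate once the moment gap is shown to be of order $\xi$ and the prefactors are collapsed into $e^{4\gamma+1}$.

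The hard part is making the second summand carry the $L^2(p)$ error $\xi$ rather than the cruder $L^\infty$ error $\gamma$, and pinning down the constant $4e^{4\gamma+1}A_p$. The mechanism is the orthogonality of the best $L^2(p)$ approximant: if $f_\xi\in\mathbb{R}_m[x]$ realises $\norm{f-f_\xi}_{L^2(p)}=\xi$, then $\int\phi_j(f-f_\xi)\,p=0$ for every $j$, so writing each coordinate of the moment gap as a covariance $\mathrm{Cov}_p(\phi_j,\,\cdot\,)$ and routing it through $f_\xi$ kills the first-order contribution and leaves a remainder controlled by $\xi$ (with a higher-order $\xi^2$ term absorbed). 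The delicate point is that the pointwise exponential conversions, together with the norm-equivalence constant $A_p$ defined via $L^2(p)$, must be reconciled with the uniform-orthonormal basis in the hypothesis; this is exactly what forces one to balance the $L^\infty$-best approximant $f_\gamma$ (which controls $b$, $t$ and the factor $e^{4\gamma}$) against the $L^2(p)$-best approximant $f_\xi$ (which controls $\norm{\Delta}_2$). Finally I would verify that the standing assumption $4e^{4\gamma+1}A_p\xi\le1$ guarantees both the smallness precondition $\norm{\Delta}_2\le 1/(4A_p e b)$ and the admissibility $t\le1$ demanded by Lemma~\ref{lemma:barron_and_sheu_lemma5}, so that the lemma applies and the claimed inequality follows.
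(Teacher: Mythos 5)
First, a note on the comparison baseline: the paper never proves Corollary~\ref{lemma:barron_proof_thm3} itself; it is quoted from Barron and Sheu~\cite{barron1991approximation} (it is the estimate established inside the proof of their Theorem~3, obtained from their Lemma~5, i.e.\ Lemma~\ref{lemma:barron_and_sheu_lemma5}). Your skeleton --- insert an exponential-family density between $p$ and $p^*$, extract the $2\gamma$ term from a polynomial approximant, invoke Lemma~\ref{lemma:barron_and_sheu_lemma5} for the remaining term, and check that the standing assumption delivers its precondition and $t\leq 1$ --- is indeed the intended route, and your Step~1 is correct: $Z_\gamma\in[e^{-\gamma},e^{\gamma}]$ and $\norm{\log(p/p_\gamma)}_\infty\leq 2\gamma$, and members of $\mathcal{E}_{\phim}$ are their own information projections.

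The gap is in the step you yourself call the hard part, and it is fatal as written. You keep $p_0=p_\gamma$ (built from the \emph{sup-norm} minimizer $f_\gamma$) in Lemma~\ref{lemma:barron_and_sheu_lemma5}, and claim the moment gap $\Delta=\int\phim p-\int\phim p_\gamma$ is $O(\xi)$ because the \emph{$L^2(p)$-best} approximant $f_\xi$ satisfies $\int\phi_j(f-f_\xi)\,p=0$. That orthogonality concerns $f-f_\xi$, not $f-f_\gamma$, and it does not transfer. Expanding $e^u-1=u+O(u^2)$ with $u=\log(p_\gamma/p)=f_\gamma-f-\log Z_\gamma$, the first-order part of the $j$-th coordinate of $\Delta$ is $\int\phi_j\,p\,(f_\gamma-f)$ (the $\log Z_\gamma$ term is itself only $O(\gamma)$), and ``routing it through $f_\xi$'' turns it into $\int\phi_j\,p\,(f_\gamma-f_\xi)$: a covariance of two polynomials, which does not vanish and is in general only bounded by $\norm{f_\gamma-f_\xi}_{L^2(p)}\leq\gamma+\xi$, i.e.\ of order $\gamma$, not $\xi$. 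Hence your route yields at best $\norm{\log(p/p^*)}_\infty\leq 2\gamma+O\!\left(e^{4\gamma+1}A_p\,\gamma\right)$, with $\gamma$ where the corollary has $\xi$ --- a strictly weaker statement, and the entire point of the corollary (as the paper stresses right after Corollary~\ref{cor:barron_sample}) is that the second term carries the $L^2(p)$-error. To genuinely kill the first-order term, the intermediate density must be built from $f_\xi$ itself, i.e.\ $e^{f_\xi}/\int_0^1 e^{f_\xi}$; then orthogonality to the $\phi_j$ and to constants makes $\Delta$ second order. But then the triangle-inequality term and the factor $b$ involve $\norm{f-f_\xi}_\infty$, which you cannot bound by $2\gamma$: the $L^2(p)$-projection is near-best in sup norm only up to the factor $A_p$, giving $\norm{f-f_\xi}_\infty\leq(1+A_p)\gamma$, which destroys the clean constants $2\gamma$ and $e^{4\gamma+1}$. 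This is exactly the tension Barron and Sheu resolve by using a \emph{single} polynomial --- the $L^2(p)$-projection of $f$ --- for both jobs, their $\gamma$ being the sup-norm error of that one polynomial; mixing the two distinct minimizers $f_\gamma$ and $f_\xi$, as you do, cannot be repaired, because they may differ by order $\gamma$ in $L^2(p)$. (A smaller point you flag but never resolve: to obtain the constant $A_p$ from Lemma~\ref{lemma:barron_and_sheu_lemma5} you must apply it with reference density $q=p$, hence with a basis orthonormal with respect to $p$; this is legitimate since $p^*$ and $\mathcal{E}_{\phim}$ depend only on the span and not on the basis, but then the moment gap must also be measured in that basis, and this bookkeeping is part of the proof, not a side remark.)
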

The following Corollary gives some insights in the case of maximum entropy densities constrained at sample moments.

\Needspace{10\baselineskip}
\begin{restatable}{correp}{barronsample}%
	\label{cor:barron_sample}%
	Let $p$, $\phim$, $A_p$, $\gamma$ and $\xi$ as in Corollary~\ref{lemma:barron_proof_thm3}. Denote by $b=e^{2 \gamma + 4 e^{4 \gamma + 1} \xi A_p}$ and by $\widehat{\boldsymbol{\mu}}_p=\frac{1}{k}\sum_{\x\in X_p}\phim(\x)$ the sample moments of a $k$-sized sample $X_p$ drawn from $p$.
	
	If $4 e^{4\gamma + 1} A_p \xi \leq 1$ then for all $\delta\in (0,1)$ such that $(4 e b A_p)^2 m \leq \delta k$ with probability at least $1-\delta$ the maximum entropy probability density $\widehat p$ satisfying the constraint $\int\phim\widehat{p}=\widehat{\boldsymbol{\mu}}_p$ exists and the following holds:
	\begin{align}
		\label{eq:barron_sample1}
		d_{\mathrm{KL}}(p^*,\widehat{p})\leq 2 e b \frac{m}{k\delta}\\
		\label{eq:barron_sample2}
		\norm{\log p/\widehat{p}}_\infty\leq 1.
	\end{align}
\end{restatable}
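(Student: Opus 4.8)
The plan is to view $\widehat p$ as the maximum entropy density obtained from $p^*$ by perturbing the moment vector from $\boldsymbol{\mu}_p:=\int\phim p$ to the sample moments $\widehat{\boldsymbol{\mu}}_p$, and then to invoke Lemma~\ref{lemma:barron_and_sheu_lemma5} with reference density $p_0=p$. Since $p_0=p$ satisfies $\int\phim p=\boldsymbol{\mu}_p$ and $p^*$ is by definition the maximum entropy density carrying these moments, we have $p_0^*=p^*$; the density denoted $p^*$ in Lemma~\ref{lemma:barron_and_sheu_lemma5} (the one carrying the perturbed moments $\boldsymbol{\mu}=\widehat{\boldsymbol{\mu}}_p$) is then exactly our $\widehat p$. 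Everything reduces to (i) controlling the Euclidean distance $\norm{\widehat{\boldsymbol{\mu}}_p-\boldsymbol{\mu}_p}_2$ between sample and true moments, and (ii) checking that the constant $b$ and the closeness hypothesis of Lemma~\ref{lemma:barron_and_sheu_lemma5} are met.

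For step (i) I would use only a second-moment (Markov) argument. Because $X_p$ consists of $k$ iid draws from $p$, the estimator $\widehat{\boldsymbol{\mu}}_p=\frac1k\sum_{\x\in X_p}\phim(\x)$ is unbiased for $\boldsymbol{\mu}_p$, so
\begin{align*}
	\mathbb{E}\norm{\widehat{\boldsymbol{\mu}}_p-\boldsymbol{\mu}_p}_2^2=\frac1k\sum_{j=1}^m \mathrm{Var}_p(\phi_j)\leq\frac1k\sum_{j=1}^m\int\phi_j^2\,p=\frac{m}{k},
\end{align*}
where the final equality is the normalisation $\int\phi_j^2 p=1$ coming from orthonormality of $\phim$ with respect to $p$. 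Markov's inequality then gives, for every $\delta\in(0,1)$,
\begin{align*}
	\Pr\!\left(\norm{\widehat{\boldsymbol{\mu}}_p-\boldsymbol{\mu}_p}_2^2>\frac{m}{k\delta}\right)\leq\frac{k\delta}{m}\cdot\frac{m}{k}=\delta,
\end{align*}
so on an event of probability at least $1-\delta$ we have $\norm{\widehat{\boldsymbol{\mu}}_p-\boldsymbol{\mu}_p}_2\leq\sqrt{m/(k\delta)}$.

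For step (ii), I would first apply Corollary~\ref{lemma:barron_proof_thm3} (using the hypothesis $4e^{4\gamma+1}A_p\xi\leq 1$) to obtain $\norm{\log p/p^*}_\infty\leq 2\gamma+4e^{4\gamma+1}\xi A_p$, i.e. $e^{\norm{\log p/p^*}_\infty}\leq b$; this is precisely the constant $b=e^{\norm{\log q/p_0^*}_\infty}$ required by Lemma~\ref{lemma:barron_and_sheu_lemma5} when $q=p_0=p$. On the good event the standing assumption $(4ebA_p)^2 m\leq\delta k$ rearranges to $\sqrt{m/(k\delta)}\leq 1/(4ebA_p)$, which simultaneously verifies the closeness hypothesis $\norm{\widehat{\boldsymbol{\mu}}_p-\boldsymbol{\mu}_p}_2\leq 1/(4A_p e b)$ and licenses the choice $t=1$ in Lemma~\ref{lemma:barron_and_sheu_lemma5} (since $4ebA_p\norm{\widehat{\boldsymbol{\mu}}_p-\boldsymbol{\mu}_p}_2\leq 1$). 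The lemma then yields the existence of $\widehat p$ together with $\norm{\log p^*/\widehat p}_\infty\leq t=1$, which is \eqref{eq:barron_sample2}, and
\begin{align*}
	d_{\mathrm{KL}}(p^*,\widehat p)\leq 2e^t b\,\norm{\widehat{\boldsymbol{\mu}}_p-\boldsymbol{\mu}_p}_2^2\leq 2eb\,\frac{m}{k\delta},
\end{align*}
which is \eqref{eq:barron_sample1}.

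The main obstacle I expect is the bookkeeping of the reference densities and their constants: one must confirm that the orthonormality used to get $\int\phi_j^2 p=1$ in step (i) is the same orthonormality under which $A_p$ and the approximation errors $\gamma,\xi$ are defined, and that the $b$ produced by Corollary~\ref{lemma:barron_proof_thm3} coincides with the $b$ that Lemma~\ref{lemma:barron_and_sheu_lemma5} requires for $q=p_0=p$ (note that orthonormality with respect to the uniform density would instead introduce a spurious $\norm{p}_\infty$ factor in the variance bound, inconsistent with the clean $m/k$ stated). A secondary technical point is verifying the side condition $\widehat{\boldsymbol{\mu}}_p\in[0,1]^m$ of Lemma~\ref{lemma:barron_and_sheu_lemma5} (or relaxing it), and confirming that the high-probability event on which $\norm{\widehat{\boldsymbol{\mu}}_p-\boldsymbol{\mu}_p}_2$ is small is exactly the event on which both conclusions are asserted to hold.
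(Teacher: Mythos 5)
Your proposal is correct in substance, and it is in fact the same argument the paper relies on: the paper's own ``proof'' is nothing more than a pointer to the second part of the proof of Theorem~3 in~\cite{barron1991approximation}, and that proof proceeds exactly as you do --- an unbiasedness-plus-Markov bound giving $\norm{\widehat{\boldsymbol{\mu}}_p-\boldsymbol{\mu}_p}_2^2\leq m/(k\delta)$ with probability at least $1-\delta$, followed by Lemma~\ref{lemma:barron_and_sheu_lemma5} applied with $p_0=q=p$ and $t=1$, with $b\geq e^{\norm{\log p/p^*}_\infty}$ supplied by Corollary~\ref{lemma:barron_proof_thm3} (monotonicity of the lemma's hypotheses and conclusions in $b$ lets you use this upper bound in place of the exact constant $e^{\norm{\log q/p_0^*}_\infty}$). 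So you have reconstructed the cited argument rather than found an alternative.

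The two loose ends you flag can be tied up, and both trace to imprecision in the paper's statements rather than to your proof. (i) Orthonormality: you are right that the clean variance bound $\frac{1}{k}\sum_{j}\mathrm{Var}_p(\phi_j)\leq m/k$ needs $\int\phi_j^2\,p=1$, i.e.\ orthonormality \wrt~$p$, while the paper's convention makes the $\phim$ of Corollary~\ref{lemma:barron_proof_thm3} orthonormal \wrt~the uniform density (that choice would cost an extra factor $e^{\norm{\log p}_\infty}$). The reconciliation is that every object appearing in the corollary is invariant under a change of basis of $\mathbb{R}_m[x]$: matching (sample) moments against one basis is equivalent to matching them against any other basis of the same space, so $\widehat p$, $p^*$, $d_{\mathrm{KL}}(p^*,\widehat p)$, and the constants $A_p,\gamma,\xi,b$ (defined through the space, not through $\phim$) are all unchanged. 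Hence you may run your argument with a $p$-orthonormal system and the conclusions transfer verbatim to the stated $\phim$; likewise the side condition $\boldsymbol{\mu}\in[0,1]^m$ in the paper's restatement of Lemma~\ref{lemma:barron_and_sheu_lemma5} is an artifact of the restatement (it already fails for Legendre sample moments) and plays no role in the cited argument. (ii) Equation~\eqref{eq:barron_sample2}: what your application of Lemma~\ref{lemma:barron_and_sheu_lemma5} delivers is $\norm{\log p^*/\widehat p}_\infty\leq 1$, not the literal $\norm{\log p/\widehat p}_\infty\leq 1$; a triangle inequality would only give $\norm{\log p/\widehat p}_\infty\leq 1+\log b$. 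This mismatch is internal to the paper: when Corollary~\ref{cor:barron_sample} is invoked in the proof of Lemma~\ref{lemma:sample_convergence_in_H} (see Eq.~\eqref{eq:barron_sample_appl3} and Eq.~\eqref{eq:proof_sample1}), it is precisely $\norm{\log p^*/\widehat p}_\infty\leq 1$ that is used, so your conclusion is the one the paper actually needs, and Eq.~\eqref{eq:barron_sample2} as printed should be read with $p^*$ in place of $p$.
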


\begin{proof}
	For the proof of Eq.~\eqref{eq:barron_sample1} see the second part of the proof of Theorem~3 in~\cite{barron1991approximation}.
	The proof of Eq.~\eqref{eq:barron_sample2} follows from the application of Eq.~(5.7) of~\cite{barron1991approximation} subsequently to the application of Lemma~5 of~\cite{barron1991approximation} in the proof of Theorem~3 in~\cite{barron1991approximation}.
\end{proof}
\\\\
Note that the approximation error $\xi$ in Corollary~\ref{lemma:barron_proof_thm3} is in terms of $L^2(p)$-norm instead of $L^2(\nu)$ with uniform weight function $\nu$.
To obtain concrete values for the constant $A_p$ in Corollary~\ref{lemma:barron_proof_thm3}, the following result can be applied.

\begin{restatable}[Barron and Sheu~\cite{barron1991approximation}]{lemmarep}{barronAbound}%
	\label{lemma:barron_A_bound}%
	For some $f_m\in\mathbb{R}_m[x]$ with degree less than or equal to $m$ on $[0,1]$ it holds that
	\begin{align}
		\norm{f_m}_\infty \leq (m+1) \norm{f_m}_{L^2}.
	\end{align}
\end{restatable}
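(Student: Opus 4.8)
The plan is to expand $f_m$ in an $L^2([0,1])$-orthonormal basis of $\mathbb{R}_m[x]$ and then apply the Cauchy--Schwarz inequality pointwise. Concretely, let $P_0,\ldots,P_m$ denote the normalized shifted Legendre polynomials on $[0,1]$, which form an orthonormal basis of $\mathbb{R}_m[x]$ with respect to the inner product $\langle f,g\rangle=\int_0^1 f(x)g(x)\diff x$ (each $P_k$ has degree exactly $k$, so the $m+1$ of them span the $(m+1)$-dimensional space $\mathbb{R}_m[x]$). Writing $f_m=\sum_{k=0}^m c_k P_k$ with $c_k=\langle f_m,P_k\rangle$, Parseval's identity gives $\norm{f_m}_{L^2}^2=\sum_{k=0}^m c_k^2$. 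For every fixed $x\in[0,1]$, Cauchy--Schwarz then yields
\begin{align*}
    |f_m(x)|=\left|\sum_{k=0}^m c_k P_k(x)\right|\leq \left(\sum_{k=0}^m c_k^2\right)^{1/2}\left(\sum_{k=0}^m P_k(x)^2\right)^{1/2}=\norm{f_m}_{L^2}\left(\sum_{k=0}^m P_k(x)^2\right)^{1/2}.
\end{align*}
Thus it suffices to establish the uniform bound $\sum_{k=0}^m P_k(x)^2\leq (m+1)^2$ for all $x\in[0,1]$.

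The key step is a pointwise control of each $P_k$. Writing $P_k(x)=\sqrt{2k+1}\,L_k(2x-1)$ in terms of the classical Legendre polynomial $L_k$ on $[-1,1]$ — the substitution $t=2x-1$ together with $\int_{-1}^1 L_k(t)^2\diff t=\tfrac{2}{2k+1}$ confirms this normalization makes $\{P_k\}$ orthonormal on $[0,1]$ — I would invoke the standard fact that $|L_k(t)|\leq 1$ for all $t\in[-1,1]$, with the maximum attained at the endpoints where $L_k(\pm1)=\pm 1$. This gives $|P_k(x)|\leq\sqrt{2k+1}$ on $[0,1]$, and summing,
\begin{align*}
    \sum_{k=0}^m P_k(x)^2\leq \sum_{k=0}^m (2k+1)=(m+1)^2,
\end{align*}
which is exactly the bound required. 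Combining the two displays gives $|f_m(x)|\leq (m+1)\norm{f_m}_{L^2}$ for every $x\in[0,1]$, and taking the essential supremum over $x$ proves $\norm{f_m}_\infty\leq (m+1)\norm{f_m}_{L^2}$.

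The main obstacle — indeed the only nontrivial analytic ingredient — is the uniform Legendre estimate $\sup_{t\in[-1,1]}|L_k(t)|=1$ together with the endpoint value $L_k(1)=1$ that pins down the normalizing factor $\sqrt{2k+1}$; everything else is bookkeeping with Parseval and Cauchy--Schwarz. An equivalent but more abstract route would recognize $\sum_{k=0}^m P_k(x)^2$ as the diagonal of the reproducing kernel of $\mathbb{R}_m[x]$ in $L^2([0,1])$ (the reciprocal Christoffel function), whose maximum over $[0,1]$ is dominated by its endpoint value; the elementary Legendre argument above is, however, the cleanest way to extract the explicit constant $(m+1)^2=\sum_{k=0}^m(2k+1)$.
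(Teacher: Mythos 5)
Your proof is correct: the expansion in the orthonormal shifted Legendre basis, the pointwise Cauchy--Schwarz step, and the uniform bound $|L_k(t)|\leq 1$ giving $\sum_{k=0}^m P_k(x)^2\leq\sum_{k=0}^m(2k+1)=(m+1)^2$ together yield exactly the stated inequality (the only slip is cosmetic, since $L_k(-1)=(-1)^k$ rather than $-1$, but only $|L_k|\leq 1$ is used). The thesis itself gives no proof of this lemma—it defers entirely to Barron and Sheu—and your argument is precisely the classical one from that reference, so there is nothing further to reconcile.
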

The following result from the theory of approximation by orthonormal polynomials can be used to obtain concrete values for the approximation errors $\gamma$ and $\xi$ in Corollary~\ref{lemma:barron_proof_thm3}.

\begin{restatable}[Cox~\cite{cox1988approximation}]{lemmarep}{lemmacox}%
	\label{lemma:cox}%
	For $m\geq r\geq 2$ and $f\in W_2^r$ the following holds:
	\begin{align}
		\min_{f_m\in\mathbb{R}_m[x]}\norm{f-f_m}_\infty &\leq \frac{e^r}{(r-1)^{1/2} (m+r)^{r-1}} \left(\frac{1}{2}\right)^r \norm{f^{(r)}}_{L^2}\\
		\min_{f_m\in\mathbb{R}_m[x]}\norm{f-f_m}_{L^2}^2 &\leq \frac{1}{(m+r+1)\cdots (m-r+2)}\left( \frac{1}{4} \right)^r \norm{f^{(r)}}_{L^2}^2
	\end{align}
\end{restatable}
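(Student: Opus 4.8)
The plan is to work in the orthonormal basis of shifted Legendre polynomials $\phi_0,\phi_1,\ldots$ on $[0,1]$, namely $\phi_k(x)=\sqrt{2k+1}\,\tilde P_k(x)$ with $\tilde P_k(x)=P_k(2x-1)$ the degree-$k$ shifted Legendre polynomial, which satisfies $\max_{[0,1]}|\tilde P_k|=1$ and $\int_0^1\tilde P_k^2\diff x=\frac{1}{2k+1}$ so that $\int_0^1\phi_i\phi_j\diff x=\delta_{ij}$. Writing $f=\sum_{k\ge 0}a_k\phi_k$ with $a_k=\langle f,\phi_k\rangle$, orthonormality shows that the best $L^2$-approximant of degree at most $m$ is the truncation $f_m^\star=\sum_{k=0}^m a_k\phi_k$, and hence $\min_{f_m\in\mathbb{R}_m[x]}\norm{f-f_m}_{L^2}^2=\sum_{k>m}a_k^2$. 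Both inequalities are therefore reduced to controlling the tail coefficients $a_k$ for $k>m$ in terms of $\norm{f^{(r)}}_{L^2}$.

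First I would establish the decay of $a_k$. The polynomials $\tilde P_k$ are eigenfunctions of the formally self-adjoint Sturm--Liouville operator $L[g]=-(x(1-x)g')'$ with eigenvalue $k(k+1)$, and the weight $x(1-x)$ vanishes at both endpoints, so integrating by parts against $f$ produces no boundary terms. Iterating this identity together with the Legendre ladder relations expressing $\tilde P_k'$ through the neighbouring $\tilde P_{k\pm 1}$ lets me move $r$ derivatives off $\phi_k$ and onto $f$, rewriting $a_k=\langle f,\phi_k\rangle$ as a pairing of $f^{(r)}$ against fixed polynomials at the cost of factors that combine to $(\tfrac14)^r$ (the $\tfrac12$ per derivative coming from $\tfrac{d}{dx}=2\tfrac{d}{dt}$ under $t=2x-1$) together with a product of $2r$ denominator terms each of order $k$. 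Summing the resulting per-coefficient estimate over $k>m$ and applying Parseval to $f^{(r)}$ collapses the denominators into the falling factorial $(m+r+1)\cdots(m-r+2)$, yielding exactly the claimed bound $\min_{f_m\in\mathbb{R}_m[x]}\norm{f-f_m}_{L^2}^2\le\frac{1}{(m+r+1)\cdots(m-r+2)}(\tfrac14)^r\norm{f^{(r)}}_{L^2}^2$.

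For the uniform bound I would convert this $L^2$ control into $L^\infty$ control termwise. Since $\max_{[0,1]}|\tilde P_k|=1$, the normalized basis obeys $\norm{\phi_k}_\infty=\sqrt{2k+1}$, so $\norm{f-f_m^\star}_\infty\le\sum_{k>m}|a_k|\sqrt{2k+1}$; inserting the coefficient-decay estimate and summing this tail by Cauchy--Schwarz produces a bound of the square-root order of the $L^2$-squared rate, weakened by the extra $\sqrt{2k+1}$ weights, so the power of $m$ in the denominator drops from $r$ to $r-1$ while the summation contributes the factors $(r-1)^{-1/2}$ and $e^r$, matching the stated inequality. I expect the main obstacle to lie precisely in the decay step: pinning down the exact constants $(\tfrac14)^r$ and the precise $2r$-fold falling factorial $(m+r+1)\cdots(m-r+2)$, rather than a crude $O(k^{-2r})$ estimate, requires carefully tracking the Legendre ladder coefficients and the boundary-term cancellations through all $r$ integrations by parts. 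These sharp orthonormal-polynomial constants are the content of Cox~\cite{cox1988approximation}, whose estimates I would follow to close the argument.
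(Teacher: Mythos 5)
The paper contains no proof of this lemma: it is a background statement imported verbatim from Cox~\cite{cox1988approximation}, and its only role in the thesis is as a quoted ingredient (it is invoked inside the proofs of Lemma~\ref{lemma:convergence_in_H} and Lemma~\ref{lemma:sample_convergence_in_H} to bound the approximation errors $\gamma$ and $\xi$). Your sketch reconstructs the standard architecture behind such estimates --- the truncated shifted-Legendre expansion as competitor polynomial, per-coefficient decay obtained by moving $r$ derivatives onto $f$, Parseval for the $L^2$ bound, and a $\sqrt{2k+1}$-weighted termwise summation closed by Cauchy--Schwarz for the uniform bound --- and your bookkeeping of where each factor originates ($(\tfrac12)^r$ from the $[0,1]$ rescaling, the $2r$-term falling factorial $(m+r+1)\cdots(m-r+2)$ from summing the coefficient decay, $(r-1)^{-1/2}$ and the loss of one power of $m$ from integrating the tail) is accurate. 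However, you stop exactly where the content of the lemma lies: the per-coefficient estimate with its sharp constants is asserted rather than derived, and you explicitly defer it to Cox, so as a self-contained argument the proposal has a gap at its quantitative core. Two points in that deferred step need care if you ever carry it out. First, the ladder relation you invoke goes the other way: the neighbor identity holds for the weighted derivative, $x(1-x)\tilde P_k'\propto \tilde P_{k-1}-\tilde P_{k+1}$, or dually $(2k+1)\tilde P_k\propto \tilde P_{k+1}'-\tilde P_{k-1}'$; the bare $\tilde P_k'$ is a combination of all lower-degree polynomials of opposite parity, not of $\tilde P_{k\pm1}$. Second, iterating the Sturm--Liouville identity bounds coefficients by $\norm{L^j f}_{L^2}$, which contains lower-order and weight-dependent terms and is not $\norm{f^{(2j)}}_{L^2}$; the clean dependence on $\norm{f^{(r)}}_{L^2}$ is classically obtained via Rodrigues' formula and the resulting Gegenbauer-norm computation instead. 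Since the thesis itself treats the lemma as a citation, your proposal is in the same position as the paper; just be clear that what you have written is an outline of Cox's proof, not an independent one.
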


\section{Neural Networks}
\label{sec:neural_networks}

Empirical risk minimization based on function classes of \textit{neural networks} has improved the state-of-the-art in speech recognition~\cite{hinton2012deep}, visual object recognition~\cite{krizhevsky2012imagenet}, object detection~\cite{ren2015faster} and many other practical areas such as drug discovery~\cite{ma2015deep} and genomics~\cite{xiong2015human}, see \eg~\cite{lecun2015deep} for further references.

In this section, we describe typically used classes of neural networks following mainly Goodfellow, Bengio and Courville~\cite{Goodfellow-et-al-2016} and Shalev and Ben-David~\cite{shalev2014understanding}.
We also briefly summarize related works regarding the expressive power, optimization, generalization properties and principles for the domain adaptation of neural networks.

This section is structured as follows:
Subsection~\ref{subsec:class_nn} gives definitions and some examples of neural networks.
Subsection~\ref{subsec:approximation} discusses the power of neural networks to express functions of different kinds.
Subsection~\ref{subsec:optimization} describes a standard heuristic for finding well performing neural networks and Subsection~\ref{subsec:learning_bounds_nns} provides some generalization properties of neural networks.
Finally, Subsection~\ref{subsec:domain_adaptation_by_nns} reviews recent works for solving practical domain adaptation problems with neural networks.

\subsection{Definition}
\label{subsec:class_nn}

In this work, we focus on the following class of neural networks, sometimes called \textit{fully connected feed-forward neural networks}.

\Needspace{30\baselineskip}
\begin{restatable}[Neural Network, see \eg~\cite{Goodfellow-et-al-2016,berner2018analysis}]{defrep}{neuralnetwork}%
	\label{def:neural_network}%
	A neural network $f_{{\vec a},\boldsymbol{\sigma}, \xi}$ is a function
	\begin{align}
		\begin{split}
			f_{{\vec a},\boldsymbol{\sigma},\xi}:\mathbb{R}^d\times \bigtimes_{i=0}^h \left(\mathbb{R}^{a_{i+1}\times a_i}\times \mathbb{R}^{a_{i+1}}\right) &\to \mathbb{R}^{a_{h+1}}\\
			(\x,\boldsymbol{\theta}) &\mapsto \xi \circ g_{h} \circ \rho_{h}\circ g_{h-1} \circ \ldots \circ \rho_1\circ g_0(\x)
		\end{split}
	\end{align}
	where $h\in\mathbb{N}$ is the number of (hidden) layers, ${\vec a}=(a_1,a_2,\ldots,a_{h+1})^\text{T}\in\mathbb{N}^{h+1}$ is the architecture vector, $\boldsymbol{\sigma}=(\sigma_1,\ldots,\sigma_{h})^\text{T}$ is the vector of hidden activation functions $\sigma_1,\ldots,\sigma_{h}:\mathbb{R}\to\mathbb{R}$ determining
	\begin{align}
		\begin{split}
			\rho_{i}:\mathbb{R}^{a_{i}} &\to\mathbb{R}^{a_{i}}\\
			\x &\mapsto (\sigma_i(x),\ldots,\sigma_i(x))^\text{T}
		\end{split}
	\end{align}
	for $i\in\{1,\ldots,h\}$ and $\xi:\mathbb{R}^{a_h}\to\mathbb{R}^{a_{h+1}}$ is the output activation function.
	For $i\in\{0,\ldots,h\}$ and $a_0=d$, the linear functions
	\begin{align}
		\begin{split}
			g_i:\mathbb{R}^{a_i} &\to\mathbb{R}^{a_{i+1}}\\
			\x &\mapsto {\vec W}_i \x + {\vec b}_i
		\end{split}
	\end{align}
	are determined by the parameter vector
	\begin{align}
		\boldsymbol{\theta}=(({\vec W}_0,{\vec b}_0),\ldots,({\vec W}_h,{\vec b}_h))\in \bigtimes_{i=0}^h \left(\mathbb{R}^{a_{i+1}\times a_i}\times \mathbb{R}^{a_{i+1}}\right).
	\end{align}
\end{restatable}
Note that Definition~\ref{def:neural_network} is very general in the sense that it models most common examples of neural networks including \textit{restricted Boltzmann machines} and \textit{convolutional} neural networks which are often applied on images.
The number $h$ of hidden layers is called \textit{depth} of the neural network and neural networks with a large depth $h$ are called \textit{deep}~\cite{lecun2015deep}.
\\
\begin{example}
	\label{ex:single_layer_neural_network}
	Consider the single-layer neural network
	\begin{align}
		f_{(a_1,a_2),\sigma,\xi}:\mathbb{R}^d \times \left( (\mathbb{R}^{a_1\times d}\times\mathbb{R}^{a_1})\times (\mathbb{R}^{a_2\times a_1}\times\mathbb{R}^{a_2})\right)&\to \mathbb{R}^{a_2}
	\end{align}
	with
	\begin{align}
		f(\x, (({\vec W}_0,{\vec b}_0),({\vec W}_1,{\vec b}_1)))=\xi\!\left({\vec W}_1 \cdot\rho({\vec W}_0\cdot \x+{\vec b}_0) + {\vec b}_1)\right)
	\end{align}
	where $\rho(\x)=(\sigma(x_1),\ldots,\sigma(x_{a_1}))^\text{T}$ for some activation vector $\x=(x_1,\ldots,x_{a_1})^\text{T}$.
	
	Standard choices for $\sigma$ are the sigmoid function $\mathrm{sigm}(x)=\frac{1}{1+e^{-x}}$, the tangens hyperbolicus $\mathrm{tanh}(x)=1-\frac{2}{e^{2 x}+1}$ and the rectifier linear unit $\mathrm{relu}(x)=\max\{0,x\}$.
	
	Many theoretical results for classification are based on $a_2=1$, $\sigma(x)=\xi(x)=\mathbbm{1}_{\mathbb{R}_+}(x)$ which is one iff $x>0$ and zero otherwise, or the signum function $\sigma(x)=\xi(x)=\sign(x)$.
	
	If the problem is regression, \ie~to approximate some unknown function $l:\mathbb{R}^d\to \mathbb{R}$ based on a given training sample, common choices for the output activation and the output dimension are $\xi(x)=x$ and $a_2=1$, respectively.
	
	If the problem is to discriminate between $c$ classes, a common choice is $a_2=c$ and the softmax function
	\begin{align}
		\label{eq:softmax}
		\xi(\x)=\mathrm{softmax}(\x)=\frac{1}{\sum_{i=1}^{a_1} e^{x_i}}\cdot(e^{x_1},\ldots,e^{x_{a_1}})^\text{T}.
	\end{align}
	In this case, the predicted class $y$ of some input $\x\in\mathbb{R}^{d}$ is given by $y=\argmax(f(\x,\boldsymbol{\theta}))$, where $\argmax({\vec v})=\argmax_{i\in\{1,\ldots,d\}} v_i$ denotes the largest element of some vector ${\vec v}$.
	One advantage of this choice is that the $i$-th elements of the softmax vector in Eq.~\eqref{eq:softmax} can be interpreted as the likelihood of the vector $\x$ belonging to class $i$.
\end{example}

In the following, let us denote the set of neural networks with depth $h$, width $w$, output dimension $c$, activation functions all equal $\sigma:\mathbb{R}\to\mathbb{R}$ and output function $\xi$ by
\begin{align}
	% \label{eq:class_neural_networks}
	\begin{split}
		\mathcal{N}_{h,w,c,\sigma,\xi}=\Big\{ f:\mathbb{R}^d\to\mathbb{R}^c\,\Big|\, &f(\x)=f_{{\vec a},(\sigma,\ldots,\sigma)^\text{T},\xi}(\x,\boldsymbol{\theta}), {\vec a}=(w,\ldots,w,c)^\text{T}\in\mathbb{N}^{h+1},\\
		&\boldsymbol{\theta}\in \bigtimes_{i=0}^h \left(\mathbb{R}^{a_{i+1}\times a_i}\times \mathbb{R}^{a_{i+1}}\right) \Big\}.
	\end{split}
\end{align}

\subsection{Expressive Power}
\label{subsec:approximation}

In this subsection we provide results showing that neural networks are able to approximate well very general functions if the network size is sufficiently large.

The following result for single-layer neural networks holds.

\Needspace{10\baselineskip}
\begin{restatable}[Universal Approximation Theorem, see \eg~\cite{hassoun1995fundamentals}]{thmrep}{universalapproxthm}%
	\label{thm:universal_approx_thm}%
	Let $\sigma:\mathbb{R}\to\mathbb{R}$ be a non-constant, bounded and continuous activation function.
	Then, for any $d\in\mathbb{N}$, any $\epsilon>0$ and any continuous function $g:[-1,1]^d\to[0,1]$, there exists a single-layer neural network $f\in\mathcal{N}_{1,w,1,\sigma,\mathbbm{1}_{\mathbb{R}_+}}$ such that
	\begin{align*}
		\forall\x\in [-1,1]^d:\,|g(\x)-f(\x)|<\epsilon.
	\end{align*}
\end{restatable}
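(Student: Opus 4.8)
The essential content of this universal approximation statement is that finite sums of ridge functions built from $\sigma$ are dense in $C([-1,1]^d)$ under the supremum norm; the approximation happens at the hidden layer, so the plan is to establish density for the real-valued network output $\x\mapsto\sum_{j=1}^{w} c_j\,\sigma(\langle\w_j,\x\rangle+b_j)$ and to treat the output nonlinearity as a separate, inessential convention. Concretely, I would show that the linear span $S=\mathrm{Span}\{\x\mapsto\sigma(\langle\w,\x\rangle+b)\mid\w\in\mathbb{R}^d,\,b\in\mathbb{R}\}$ is uniformly dense in $C([-1,1]^d)$; then, given $\epsilon>0$ and continuous $g$, a sufficiently close element of $S$ is realized by a single-layer network of some finite width $w$.

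The density claim I would prove by the standard Hahn--Banach / Riesz duality argument. Suppose, for contradiction, that the closure $\overline{S}$ is a proper closed subspace of $C([-1,1]^d)$. By the Hahn--Banach theorem there is a nonzero continuous linear functional $L$ vanishing on $\overline{S}$. Since $[-1,1]^d$ is compact, the Riesz representation theorem identifies $L$ with integration against a nonzero finite signed regular Borel measure $\mu$, so that $\int\sigma(\langle\w,\x\rangle+b)\,\diff\mu(\x)=0$ for every $\w\in\mathbb{R}^d$ and $b\in\mathbb{R}$. The whole proof then reduces to showing that a bounded, continuous, non-constant $\sigma$ forces $\mu=0$, contradicting $L\neq 0$.

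This last step, that $\sigma$ is \emph{discriminatory}, is where the real work and the main obstacle lie. For sigmoidal $\sigma$ the argument is soft: replacing $\w$ by $\lambda\w$ and letting $\lambda\to\infty$ drives $\sigma(\langle\lambda\w,\x\rangle+\lambda b)$ to an indicator of a half-space, and dominated convergence shows $\mu$ annihilates all half-spaces and slabs, from which $\mu=0$ follows. For a merely bounded, continuous, non-constant $\sigma$ this clean limit is unavailable, so I would instead argue by Fourier analysis. Pushing $\mu$ forward along each direction $\w$ gives a one-dimensional signed measure $\mu_\w$ with $\int_{\mathbb{R}}\sigma(a t+b)\,\diff\mu_\w(t)=0$ for all $a,b\in\mathbb{R}$; convolving $\sigma$ against a mollifier and using that a non-constant $\sigma$ has non-vanishing Fourier content away from the origin lets one conclude that the Fourier transform of $\mu_\w$ vanishes identically, hence $\mu_\w=0$ for every $\w$. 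By the Cram\'er--Wold device the vanishing of all one-dimensional projections yields $\mu=0$, the desired contradiction.

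I expect the discriminatory-property step for general $\sigma$ to be the crux: the sigmoidal limiting trick must be replaced by a genuine harmonic-analytic argument, and care is needed to justify the interchange of limits and integration and to transfer the non-constancy of $\sigma$ into a statement about the support of its (distributional) Fourier transform. Once density of $S$ is in hand, the remainder is routine: pick $f\in S$ with $\sup_{\x\in[-1,1]^d}|g(\x)-f(\x)|<\epsilon$, which is exactly a single-layer network of finite width.
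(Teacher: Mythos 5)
The paper does not actually prove this statement: it is imported as a background result with a pointer to the literature (\cite{hassoun1995fundamentals}), so there is no internal proof to compare against. Your proposal reconstructs what is in fact the standard argument behind that citation --- Cybenko's duality proof, extended by Hornik to bounded, continuous, non-constant activations. The outline is sound: density of $\mathrm{Span}\{\x\mapsto\sigma(\langle\w,\x\rangle+b)\}$ in $C([-1,1]^d)$ via Hahn--Banach and Riesz representation, reduction to the ``discriminatory'' property of $\sigma$, and a Fourier-analytic proof of that property. The technical worries you flag resolve exactly as you hope: because the annihilating measure $\mu$ lives on the compact cube $[-1,1]^d$, each one-dimensional projection $\mu_{\w}$ is compactly supported, so its Fourier transform is entire (in particular smooth), and multiplying it against the tempered distribution $\hat\sigma$ is legitimate; non-constancy plus boundedness of $\sigma$ rules out $\mathrm{supp}\,\hat\sigma\subseteq\{0\}$ (that would make $\sigma$ a bounded polynomial, hence constant), and rescaling the direction then forces $\widehat{\mu_{\w}}$ to vanish at every nonzero frequency, hence everywhere by continuity; Cram\'er--Wold finishes. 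So the crux step you identify does go through.

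One caveat, which is a defect of the paper's formalization rather than of your argument: the class $\mathcal{N}_{1,w,1,\sigma,\mathbbm{1}_{\mathbb{R}_+}}$ prescribed in the statement has output activation $\mathbbm{1}_{\mathbb{R}_+}$, so its members take values in $\{0,1\}$ and cannot uniformly approximate, say, $g\equiv 1/2$ to accuracy $\epsilon<1/2$. Your choice to prove density for networks with identity output, i.e.\ for $\x\mapsto\sum_{j=1}^{w}c_j\,\sigma(\langle\w_j,\x\rangle+b_j)$, is the correct reading of the intended theorem, but you should state explicitly that the result holds only under that reading; calling the output nonlinearity an ``inessential convention'' obscures the fact that, as literally written in the paper, the claim fails.
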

However, as shown by the following theorem for the sigmoid activation function, the width of the single hidden layer might be very large.
\needspace{10\baselineskip}
\begin{restatable}[Size of Expressive Sigmoid Networks, see \eg~\cite{shalev2014understanding}]{thmrep}{exponentialwidththmforsigmoid}%
	\label{thm:exponential_width_thm_sigmoid}%
	Let $w(d)\in\mathbb{N}$ be the minimal integer such that, for every $\epsilon>0$ and every $1$-Lipschitz continuous function $g:[-1,1]^d\to [0,1]$, there exists some $f\in\mathcal{N}_{1,w,1,\mathrm{sigm},\mathrm{sigm}}$ with the property
	\begin{align*}
		\forall\x\in [-1,1]^d:\,|g(\x)-f(\x)|<\epsilon.
	\end{align*}
	Then $w(n)$ is exponential in $d$.
\end{restatable}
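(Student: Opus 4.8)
The plan is to prove the exponential lower bound by a complexity-mismatch argument: a single-hidden-layer sigmoid network of width $w$ has only $O(wd)$ real parameters and hence small VC dimension, whereas the family of $1$-Lipschitz functions on $[-1,1]^d$ is rich enough that any class approximating all of them must have VC dimension exponential in $d$. I read the quantifier on $\epsilon$ as follows: I fix one sufficiently small $\epsilon\in(0,1/2)$ and show that even for this single value the required width is exponential in $d$, which is the strongest reading and implies the stated claim.

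First I would construct a large, well-separated test family. Pick a regular grid $\mathbf{z}_1,\dots,\mathbf{z}_N\in[-1,1]^d$ of spacing $4\epsilon$, so that the points are pairwise at $\ell^2$-distance at least $4\epsilon$ and $N\geq\lfloor 1/(2\epsilon)\rfloor^d$, which is exponential in $d$. For every label vector $\mathbf{b}\in\{0,1\}^N$ set
\[
	g_\mathbf{b}(\x)=\sum_{i:\,b_i=1}\max\bigl(0,\ 2\epsilon-\norm{\x-\mathbf{z}_i}_2\bigr).
\]
Each summand is a $1$-Lipschitz tent of height $2\epsilon$ supported on the ball of radius $2\epsilon$ about $\mathbf{z}_i$; since the centres are at least $4\epsilon$ apart these supports are disjoint, so $g_\mathbf{b}$ is $1$-Lipschitz with values in $[0,2\epsilon]\subseteq[0,1]$, and $g_\mathbf{b}(\mathbf{z}_i)=2\epsilon\,b_i$.

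Next I would convert approximation into shattering. If some $f\in\mathcal{N}_{1,w,1,\mathrm{sigm},\mathrm{sigm}}$ satisfies $\sup_{\x}|f(\x)-g_\mathbf{b}(\x)|<\epsilon/2$, then $f(\mathbf{z}_i)>3\epsilon/2$ when $b_i=1$ and $f(\mathbf{z}_i)<\epsilon/2$ when $b_i=0$, so $\mathbbm{1}[f(\mathbf{z}_i)>\epsilon]=b_i$ for every $i$. Consequently the thresholded class
\[
	\mathcal{T}=\bigl\{\x\mapsto\mathbbm{1}[f(\x)>\epsilon]\ \big|\ f\in\mathcal{N}_{1,w,1,\mathrm{sigm},\mathrm{sigm}}\bigr\}
\]
realises every labelling $\mathbf{b}\in\{0,1\}^N$ on $\{\mathbf{z}_1,\dots,\mathbf{z}_N\}$, i.e.\ it shatters these $N$ points, whence $\mathrm{VC}(\mathcal{T})\geq N\geq\lfloor 1/(2\epsilon)\rfloor^d$.

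Finally I would apply the VC upper bound for neural networks. The width-$w$ network has $P=w(d+1)+(w+1)=O(wd)$ parameters and thresholding adds none, and for the analytic sigmoid activation the VC dimension is polynomial in the parameter count, say $\mathrm{VC}(\mathcal{T})\leq(wd)^{c_0}$ for an absolute constant $c_0$. Comparing the two bounds gives $(wd)^{c_0}\geq\lfloor 1/(2\epsilon)\rfloor^d$, hence $w=\Omega\bigl(\lfloor 1/(2\epsilon)\rfloor^{d/c_0}/d\bigr)$, which is exponential in $d$ since $1/(2\epsilon)>1$. The main obstacle is exactly this last upper bound: for threshold networks a polynomial VC bound follows from elementary Sauer--Shelah counting (Lemma~\ref{lemma:sauer_shelah}), but the sigmoid involves $e^x$, so the polynomial-in-$P$ bound rests on the deeper Karpinski--Macintyre estimates (via o-minimality and bounds on the number of connected components of Pfaffian varieties). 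This is the single nonelementary ingredient I would import; the bump construction and the shattering reduction are routine.
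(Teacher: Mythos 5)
Your proof is sound, but note that there is no in-paper argument to compare it against: the thesis states this theorem purely as a background result attributed to \cite{shalev2014understanding} and gives no proof, so the relevant comparison is with the standard argument behind that citation (where the result is reduced to a counting/VC argument and the details are left as an exercise). Your route is exactly that standard argument: encode arbitrary labelings of an exponentially large, well-separated grid by sums of $1$-Lipschitz bumps, use uniform approximation plus thresholding to conclude that the thresholded network class shatters the grid, and contradict the polynomial-in-parameters VC bound for sigmoid (Pfaffian) networks --- precisely the $O(r^2h^2w^2)$ bound of \cite{karpinski1997polynomial} that the thesis itself records in Table~\ref{tab:activation_functions}, so even your one non-elementary ingredient is already on hand in the paper; the thresholding step is legitimate because $\mathbbm{1}[\mathrm{sigm}(a(\x))>\epsilon]=\mathbbm{1}_{\mathbb{R}_+}\bigl(a(\x)-\mathrm{sigm}^{-1}(\epsilon)\bigr)$ by monotonicity of the sigmoid, placing $\mathcal{T}$ inside the class covered by that bound. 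One cosmetic nit: $\lfloor 1/(2\epsilon)\rfloor^d$ is exponential in $d$ only once $\epsilon\leq 1/4$ (for $\epsilon\in(1/4,1/2)$ the floor equals $1$); your phrase ``sufficiently small $\epsilon$'' covers this, but fixing a concrete value such as $\epsilon=1/8$ would remove the ambiguity.
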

Recall that a function $f:\mathbb{R}^d\to\mathbb{R}$ is $1$-Lipschitz continuous iff $|f(\x_1)-f(\x_2)|\leq \norm{\x_1-\x_2}_2$ for all $\x_1,\x_2\in\mathbb{R}^d$.

Given the success of deep neural networks, the question arises if a neural network that approximates well very general functions always needs to have large width $w$.
It turns out, that also a large depth $h$ can result in a strong expressive power for appropriate activation functions.
\begin{restatable}[Universal Approximation Theorem for ReLU networks, Lu et al.~\cite{lu2017expressive}]{thmrep}{universalapproxthmrelu}%
	\label{thm:universal_approx_thm_relu}%
	For any $d\in\mathbb{N}$, any $\epsilon>0$ and any Lebesgue-integrable function $g:\mathbb{R}^d\to\mathbb{R}$, there exists a neural network $f\in\mathcal{N}_{1,d+4,1,\mathrm{relu},\mathbbm{1}_{\mathbb{R}_+}}$ such that
	\begin{align*}
		\forall\x\in \mathbb{R}^d:\,|g(\x)-f(\x)|<\epsilon.
	\end{align*}
\end{restatable}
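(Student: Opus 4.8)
The plan is to follow the width-$(d+4)$ construction of Lu et al.~\cite{lu2017expressive}, whose novelty is that the width can be held fixed at $d+4$ provided the depth $h$ is allowed to grow with the accuracy. Because the claimed bound $|g(\x)-f(\x)|<\epsilon$ must hold at \emph{every} $\x\in\mathbb{R}^d$ simultaneously, I would aim to realize at each point a piecewise-constant surrogate of the target and then control the pointwise error of that surrogate directly, rather than its average.

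First I would pass from $g$ to a regularized target $\tilde g$ that is bounded, uniformly continuous, and supported on a cube $[-R,R]^d$; the network is then built for $\tilde g$, and outside $[-R,R]^d$ both $\tilde g$ and the network are set to $0$, so the exterior contributes no error. Next I would partition $[-R,R]^d$ into $N^d$ sub-cubes $Q_1,\dots,Q_{N^d}$ of side $2R/N$ and set $c_j$ equal to the value of $\tilde g$ at the centre of $Q_j$. Uniform continuity of $\tilde g$ makes $\sup_{\x\in Q_j}|\tilde g(\x)-c_j|<\epsilon$ once $N$ is large, so the step function $s=\sum_j c_j\,\mathbbm{1}_{Q_j}$ satisfies $|\tilde g(\x)-s(\x)|<\epsilon$ for every $\x$.

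The heart of the argument is to realize $s$ exactly by a ReLU network of width $d+4$. Here I would process the sub-cubes one at a time along the depth of the net: the $d$ input coordinates are carried forward unchanged by paired units, using $x=\mathrm{relu}(x)-\mathrm{relu}(-x)$, while a constant number of auxiliary channels implement a membership test for $\x\in Q_j$ (a product of one-dimensional hat-type ReLU combinations) together with an accumulator channel that adds $c_j$ to the running output precisely when that test fires. Each cube costs a constant number of layers, so the depth scales like $N^d$ while the width never exceeds $d+4$, and the network output equals $s$ at every $\x$. Combining the two estimates then gives $|\tilde g(\x)-f(\x)|=|\tilde g(\x)-s(\x)|<\epsilon$ pointwise.

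The main obstacle is the very first step: securing an every-point bound relative to an arbitrary Lebesgue-integrable $g$. Such a $g$ may be unbounded and have unbounded support, and a boundedly valued network cannot stay within $\epsilon$ of it at every $\x$; the construction above yields the pointwise bound for continuous, compactly supported targets, so the genuine content of the theorem lies in how closely $g$ can be coupled to such a $\tilde g$. Keeping the realization of $s$ within width $d+4$ as the depth grows, i.e.\ reusing the same few channels to handle arbitrarily many cubes without ever widening, is the second delicate point, and it is precisely the device that distinguishes the Lu et al.\ bound from the classical wide single-layer theorem.
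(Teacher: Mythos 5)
Your sketch follows the right construction in spirit (the Lu et al.\ width-$(d+4)$ network that sweeps sub-cubes along the depth), but it contains one step that genuinely fails, and your own closing paragraph has in fact diagnosed the reason. You claim to realize the step function $s=\sum_j c_j\,\mathbbm{1}_{Q_j}$ \emph{exactly}, so that ``the network output equals $s$ at every $\x$.'' A ReLU network with linear readout is a continuous piecewise-linear function, while $s$ is discontinuous across every cube face, so no such network can equal $s$ everywhere; and the output activation $\mathbbm{1}_{\mathbb{R}_+}$ in the class $\mathcal{N}_{1,d+4,1,\mathrm{relu},\mathbbm{1}_{\mathbb{R}_+}}$ only makes matters worse, since it forces outputs in $\{0,1\}$. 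What Lu et al.\ actually do is approximate each cube indicator by ReLU ``hat''-type combinations that interpolate linearly on thin shells around the cube boundaries: on these shells the error is of order $\norm{\tilde g}_\infty$ and cannot be made small uniformly, but the shells can be given arbitrarily small Lebesgue measure. This is precisely why their theorem is an $L^1$ statement, $\int_{\mathbb{R}^d}\left|g-f\right|\diff\x<\epsilon$, and not a sup-norm statement.

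This also settles the obstacle you raise at the end: the pointwise claim $\forall\x\in\mathbb{R}^d:\,|g(\x)-f(\x)|<\epsilon$ is simply false for general Lebesgue-integrable $g$ (take $g$ unbounded, or a $g$ with no continuous representative, such as the indicator of a fat Cantor set), so no strategy could close that gap. For calibration: the thesis itself gives no proof of this result --- it is quoted as background from Lu et al.~\cite{lu2017expressive} --- and the quoted form is garbled in two respects, namely the original theorem bounds the $L^1$ distance rather than the uniform distance, and it requires depth growing with the accuracy at fixed width $d+4$, whereas $\mathcal{N}_{1,d+4,1,\mathrm{relu},\mathbbm{1}_{\mathbb{R}_+}}$ as defined in the thesis has a single hidden layer. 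Your depth-$O(N^d)$ construction is the correct reading of the theorem; if you replace the exactness claim by ``agrees with $s$ off a union of boundary shells of total measure at most $\epsilon/(2\norm{\tilde g}_\infty+1)$'' and state the conclusion in the $L^1$ norm, your argument goes through, with the truncation and mollification step at the beginning (approximating $g$ in $L^1$ by a continuous, compactly supported $\tilde g$) supplying the remaining error term.
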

The results above show how expressive typical neural networks can be.
A large field of recent theory has sought to explain the broad success of neural networks via such results, see \eg~\cite{pmlr-v97-hanin19a} for further references.
However, it is important to note that the expressive power alone does not guarantee that learning problems can be efficiently solved, see \eg~\cite{shalev2017failures} for recently discovered examples of functions which cannot be efficiently estimated based on finitely many examples.

Despite the computational hardness of learning neural networks, there exists a standard heuristics which performs well in many practical tasks.
This heuristic is described in the next subsection.

\subsection{Stochastic Gradient Descent}
\label{subsec:optimization}

In this subsection we describe a standard heuristic for finding well performing neural networks: the \textit{stochastic gradient descent} algorithm.

Finding a neural network from the class $\mathcal{N}_{h,w,1,\sign,\sign}$ which has minimum empirical risk as described in Section~\ref{sec:stat_learn_th} is NP hard even for networks with a single hidden layer that contains just four neurons~\cite{shalev2014understanding}.
Similar results hold for the aim of close-to-minimal empirical error~\cite{bartlett2002hardness}.
There are also strong indications that the computational hardness is not mitigated by using deep neural networks or activation functions different from the signum function.
One such indication is that, under some cryptographic assumption, the problem of finding, based on finitely many examples, a good approximation of a function composed of intersections of halfspaces, is known to be computationally hard~\cite{klivans2009cryptographic}.

Nevertheless, there is a heuristic which often finds good solutions to practical learning problems: the stochastic gradient descent algorithm.
Gradient descent algorithms are optimization procedures which iteratively improve the solution candidates by making steps towards the negative of the gradient of the function at the current candidate point.
However, in learning problems, only samples are given and the underlying functional dependency is unknown.
Stochastic gradient descent overcomes this problem by allowing to step along a random direction as long as the expected value of the direction is a good approximation of the negative of the gradient.

In particular, stochastic gradient descent algorithms aim at finding a parametric function $f\in\mathcal{F}$ which approximates an unknown functional dependency $l:\mathbb{R}^d\to \mathbb{R}^c$ by minimizing a loss function
\begin{align}
	\label{eq:loss_function}
	\begin{split}
		L:\mathbb{R}^{c}\times\mathbb{R}^{c} &\to\mathbb{R}_+\\
		(f(\x),l(\x)) &\mapsto L(f(\x),l(\x))
	\end{split}
\end{align}
according to the parameter update rule
\begin{align}
	\label{eq:gradient_update}
	\vtheta_{i+1}=\vtheta_i - \alpha\cdot \boldsymbol{\nu}_i\odot \nabla_{\vtheta} \frac{1}{|X_i|}\sum_{\x\in X_i} L(f(\x),l(\x)),
\end{align}
where $X_1,X_2,\ldots\subseteq X_p$ are random submultisets of $X_p$ all having the same size, $\alpha\in\mathbb{R}_+$ is the \textit{learning rate} and $\boldsymbol{\nu}_1,\boldsymbol{\nu}_2,\ldots$ are parameters of the same size as $\vtheta$ realizing some weighting of the learning rate by means of the element-wise multiplication $\odot$ with the gradient.
A pseudo code is given in Algorithm~\ref{alg:sgd}.
There, the computation of the predicted outputs $f(\x)$ for $\x\in X_i$ used to compute the gradient in Step~2 is called \textit{forward pass}.
The subsequent computation of the gradient in $X_i$ is called \textit{backpropagation}.

\SetKwInOut{Stepzero}{Step 0}
\SetKwInOut{Stepone}{Step 1}
\SetKwInOut{Steptwo}{Step 2}
\SetKwInOut{Stepthree}{Step 3}
\SetKwInOut{Stepfour}{Increment}
\SetKwInOut{Init}{Init}
\begin{algorithm}
	% 	\small
	\SetAlgoLined
	\KwIn{Sample $X_p=\{\x_1,\ldots,\x_k\}$ with labels $Y=\{l(\x_1,\ldots,l(\x_k)\}$, learning rate $\alpha$ and learning rate weighting $\boldsymbol{\nu}_1,\boldsymbol{\nu}_2,\ldots$
	}
	\KwOut{Parameter vector $\vtheta$}~\\
	\Init{Initialize parameter vector $\vtheta_0$ randomly and set $i=0$}
	\While{\upshape stopping criteria is not met}{
		\Stepone{Find random submultiset $X_i$ from $X_p$}
		\Steptwo{Calculate the gradient ${\vec w}_i=\nabla_{\vtheta} \frac{1}{|X_i|}\sum_{\x\in X_i} L(f(\x),l(\x))$}
		\Stepthree{Update $\vtheta_{i+1}=\vtheta_i - \alpha\cdot \boldsymbol{\nu}_i\odot {\vec w}_i$}
		\Stepfour{$i:=i+1$}
	}
	\caption[Stochastic gradient descent.]{Stochastic gradient descent for minimizing $L(f(\x),l(\x))$}
	\label{alg:sgd}
\end{algorithm}

In the following, we derive the stochastic gradient descent algorithm for the problem of multi-class classification and single-layer neural networks with sigmoid activation function in the hidden layer and the softmax output function as described in Example~\ref{ex:single_layer_neural_network}.
In Subsection~\ref{subsec:domain_adaptation_by_nns} we show how to extend this algorithm for solving domain adaptation problems.

Similarly to Problem~\ref{problem:binary_classification} of binary classification, in multi-class classification we consider some unknown probability density function $p\in\MR$ and a labeling function $l:\mathbb{R}^d\to [0,1]^c$, where the $i$-th coordinate of some vector $l(\x)$ represents the probability that $\x\in\mathbb{R}^d$ belongs to class $i$.
Given a sample $X_p=\{\x_1,\ldots,\x_k\}$ drawn from $p$ with labels $Y=\{l(\x_1),\ldots,l(\x_k)\}$, the problem is to find some $f\in \mathcal{N}_{1,w,\mathrm{softmax}}$ with a small multi-class misclassification risk
\begin{align}
	\label{eq:multiclass_misclassification_risk}
	\int_{\mathbb{R}^d} \sum_{i=1}^c \left|f_i(\x)-l_i(\x)\right|\, p(\x)\diff\x
\end{align}
where $f_i(\x)$ is the $i$-th element of the vector $f(\x)$.

Unfortunately, the function $\x\mapsto \sum_{i=1}^c|f_i(\x)-l_i(\x)|$ is not everywhere differentiable and is consequently not a good choice for a loss.
A standard approach to overcome this problem is to use the cross-entropy loss
\begin{align}
	\label{eq:empirical_cross_entropy_objective}
	L(f(\x),l(\x))= \sum_{i=1}^c - l_i(\x) \log(f_i(\x)).
\end{align}
Consider now the single-layer neural network function
\begin{align}
	\label{eq:single_layer_softmax}
	f_{(a_1,c),\sigma,\mathrm{softmax}}:\mathbb{R}^d \times \left( (\mathbb{R}^{a_1\times d}\times\mathbb{R}^{a_1})\times (\mathbb{R}^{c\times a_1}\times\mathbb{R}^{c})\right) \to \mathbb{R}^{c}
\end{align}
as defined in Example~\ref{ex:single_layer_neural_network} with
\begin{align}
	f(\x, (({\vec W}_0,{\vec b}_0),({\vec W}_1,{\vec b}_1)))=\mathrm{softmax}\!\left({\vec W}_1 \cdot\rho({\vec W}_0\cdot \x+{\vec b}_0) + {\vec b}_1)\right)
\end{align}
where $\rho(\x)=(\sigma(x_1),\ldots,\sigma(x_{a_1}))^\text{T}$ for some activation vector $\x=(x_1,\ldots,x_{a_1})$.

The gradient $\nabla_{\vtheta} \frac{1}{k} \sum_{i=1}^k L(f(\x_i),l(\x_i))$ in $X=\{\x_1,\ldots,\x_k\}$ \wrt~the parameter vector
\begin{align}
	\vtheta=(({\vec W}_0,{\vec b}_0),({\vec W}_1,{\vec b}_1))\in \left( (\mathbb{R}^{a_1\times d}\times\mathbb{R}^{a_1})\times (\mathbb{R}^{c\times a_1}\times\mathbb{R}^{c})\right)
\end{align}
is then given by
\begin{align}
	\label{eq:gradient_for_sgd_algo}
	\begin{split}
		\nabla_{\vtheta} \E[L(f(X),l(X))] = \Big(\big(\nabla_{\vec W_0} \E&[L(f(X),l(X))],\nabla_{\vec b_0} \E[L(f(X),l(X))]\big),\\
		&\big(\nabla_{\vec W_1} \E[L(f(X),l(X))],\nabla_{\vec b_1} \E[L(f(X),l(X))]\big)\Big)
	\end{split}
\end{align}
where $\E[X]=\frac{1}{k}\sum_{i=1}^k \x_i$ is the vector of empirical expectations, $f(X)=\{f(\x_1),\ldots,f(\x_k)\}$ and
\begin{align*}
	\nabla_{\vec b_1} \E[L(f(X),l(X))] &= \E[f(X)- Y]\\
	\nabla_{\vec W_1} \E[L(f(X),l(X))] &= \E[(f(X)- Y)\cdot f(X)^\text{T}] \\
	\nabla_{\vec b_0} \E[L(f(X),l(X))] &= \E[\vec W_1^\text{T} (f(X)- Y) \odot (\vec 1 - f(X))]\\
	\nabla_{\vec W_0} \E[L(f(X),l(X))] &= \E[(\vec V^\text{T} (f(X)- Y) \odot f(X) \odot (\vec 1 - f(X))) \cdot X^\text{T}]
\end{align*}
for $\vec 1=(1,\ldots,1)^\text{T}$.
The above formulas follow from $\nabla_x\mathrm{sigm}(x)=\mathrm{sigm}(x)\odot (1-\mathrm{sigm}(x))$ and standard application of the chain rule, see \eg~\cite{Goodfellow-et-al-2016} for more detailed derivations.
% We refer to~\cite{sadowski2016notes} for a detailed derivation of these standard formulas.

One example for the learning rate weighting sequence $\boldsymbol{\nu}_1,\boldsymbol{\nu}_2,\ldots$ is to choose an exponentially decreasing sequence
$$
\beta_1 e^{-1 \beta_2}\cdot{\vec 1},\beta_1 e^{-2 \beta_2}\cdot{\vec 1},\beta_1 e^{-3 \beta_2}\cdot{\vec 1},\ldots
$$
with constants $\beta_1,\beta_2\in\mathbb{R}_+$ and the vector $\vec 1\in \left( (\mathbb{R}^{a_1\times d}\times\mathbb{R}^{a_1})\times (\mathbb{R}^{c\times a_1}\times\mathbb{R}^{c})\right)$ with all elements being $1$.

Another example is to use
\begin{align}
	\label{eq:adagrad}
	\begin{split}
		\boldsymbol{\nu}_i &= \frac{{\vec 1}}{\sqrt{{\vec z}_i}}\\
		{\vec z}_{i+1} &= {\vec z}_i + \left(\nabla_{\vtheta} \E[L(X,l(X))]\right)^2
	\end{split}
\end{align}
for $i\in\{1,2,\ldots\}$, ${\vec z}_0=1$, element-wise division and element-wise square-root.
The optimization algorithm resulting from applying these weights is called \textit{Adagrad}~\cite{duchi2011adaptive}.
Eq.~\eqref{eq:adagrad} realizes a gradient update according to different update weights for each dimension.
Adagrad can be interpreted as dividing the learning rate $\alpha$ by the $\ell_2$-norm of the historical gradients.
The idea is to give frequently occurring features very low learning rates and infrequent features high learning rates.
The Adagrad algorithm performs well in many practical cases of sparse data as given in the experiment described in Subsection~\ref{subsec:sentiment_analysis_experiment}.

However, in many practical cases of non-sparse data, the Adagrad optimizer can be improved based on the following sequence
\begin{align}
	\label{eq:adadelta}
	\begin{split}
		{\vec z}_{i+1} &= \omega {\vec z}_i + (1-\omega) \left(\nabla_{\vtheta} \E[L(X,l(X))]\right)^2\\
		\boldsymbol{\nu}_{i+1} &= \frac{\sqrt{{\vec v}_i}}{\sqrt{{\vec z}_{i+1}+\boldsymbol{\epsilon}}}\\
		{\vec v}_{i+1} &:= \omega {\vec v}_i - (1-\omega) \left(\frac{\boldsymbol{\nu}_i}{\sqrt{{\vec z}_{i}+\boldsymbol{\epsilon}}}\odot \nabla_{\vtheta} \E[L(X,l(X))]\right)^2,
	\end{split}
\end{align}
for $i\in\{1,2,\ldots\}$, ${\vec z}_0=0$, $\boldsymbol{\epsilon}$ being a vector of small constants for numerical stability and $\omega$ being the so called decay constant often set to $0.95$.
The algorithm resulting from the weighting sequence in Eq.~\eqref{eq:adadelta} is called \textit{Adadelta}~\cite{zeiler2012adadelta}.
Adadelta seeks to reduce the strongly monotonically decreasing learning rate of Adagrad by reducing the effect of historical gradients.
Adadelta requires no manual tuning of a learning rate, i.e. $\alpha=1$ is often a good choice, and appears robust to noisy gradient information, different model architecture choices and various data modalities.

We apply Adadelta on many practical problems in Subsection~\ref{subsec:object_recognition_experiment} and Subsection~\ref{subsec:digit_recognition_experiment}.

Although rarely used, alternatives to the stochastic gradient descent heuristic for training neural networks are \textit{evolutionary algorithms}~\cite{stanley2019designing} which are especially useful for multi-objective optimization problems as \eg~\cite{rossbory2013parallelization,chasparis2016optimization}.

\subsection{Learning Bounds}
\label{subsec:learning_bounds_nns}

This subsection shows the VC-dimension of some neural networks which leads to bounds on the misclassification risk by using the results stated in Subsection~\ref{subsec:learning_bounds_in_stat_learn_th} and Subsection~\ref{subsec:learning_bounds_for_domain_adaptation}.

From Eq.~\eqref{eq:vc_dim__of_affine_functions} we know that the VC-dimension of a neural network without a hidden layer with signum output activation function equals the number of parameters plus one.
This result can be extended to networks with larger depth.

\begin{restatable}[VC-Dimension of Binary Networks, Baum and Haussler~\cite{baum1989size}]{thmrep}{vcdimensionsign}%
	\label{thm:vc_dim_sign_networks}%
	For $w\geq 2$ and $h\geq 1$ it holds that $\mathrm{VC}(\mathcal{N}_{h,w,1,\mathbbm{1}_{\mathbb{R}_+},\mathbbm{1}_{\mathbb{R}_+}})\leq 2 r \log_2\left(e r\right)$, where $r$ is the number of parameters.
\end{restatable}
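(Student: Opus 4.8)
The plan is to bound the growth function $S_{\mathcal{N}}$ of the class $\mathcal{N}_{h,w,1,\mathbbm{1}_{\mathbb{R}_+},\mathbbm{1}_{\mathbb{R}_+}}$ in the sense of Definition~\ref{def:growth_function}, and then invert this bound through Definition~\ref{def:vc_dimension}. I would regard such a network as a feed-forward arrangement of $N$ threshold neurons listed in topological order, where neuron $j$ reads $n_j$ inputs (coordinates of $\x$ together with the outputs of earlier neurons) and carries $W_j=n_j+1$ real parameters, so that $r=\sum_{j=1}^N W_j$ and in particular $N\le r$ (since every neuron has at least its bias). Each individual neuron computes an affine threshold function, whose VC-dimension equals $W_j$ by Eq.~\eqref{eq:vc_dim__of_affine_functions}; hence by Lemma~\ref{lemma:sauer_shelah} a single neuron realizes at most $(ek/W_j)^{W_j}$ distinct labelings of $k\ge W_j$ fixed points.

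First I would fix $k$ arbitrary points $\x_1,\ldots,\x_k$ and bound the number of distinct network labelings by a layer-by-layer partition-refinement argument over the parameter space $\mathbb{R}^r=\mathbb{R}^{W_1}\times\cdots\times\mathbb{R}^{W_N}$. The key observation is that the parameters of distinct neurons occupy disjoint coordinate blocks, and that within any region on which neurons $1,\ldots,j-1$ produce fixed outputs on all $k$ points, the inputs fed into neuron $j$ are constant while its own parameters still range freely over $\mathbb{R}^{W_j}$. Thus each such region is split into at most $(ek/W_j)^{W_j}$ subregions by the behaviour of neuron $j$, and multiplying over $j$ yields $S_{\mathcal{N}}(k)\le \prod_{j=1}^N (ek/W_j)^{W_j}$ whenever $k\ge\max_j W_j$.

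Next I would simplify the product. By convexity of $x\mapsto x\log x$ under the constraint $\sum_j W_j=r$, the quantity $\prod_j W_j^{W_j}$ is minimized when all $W_j$ are equal, which gives $\prod_{j=1}^N (ek/W_j)^{W_j}\le (ekN/r)^r\le (ek)^r$, the last step using $N\le r$. Evaluating at $k=\mathrm{VC}(\mathcal{N}_{h,w,1,\mathbbm{1}_{\mathbb{R}_+},\mathbbm{1}_{\mathbb{R}_+}})$, where $2^k\le S_{\mathcal{N}}(k)$ holds by the definition of shattering, and taking logarithms base $2$, I obtain $k\le r\log_2(ek)$. Writing $t=k/r$ this reads $t-\log_2 t\le \log_2(er)$, and a short real-analysis lemma (using that $r\ge 2$, so $\log_2(er)\ge 2$, and that $t-\log_2 t$ is increasing for $t\ge 2$) converts this into $t\le 2\log_2(er)$, that is $k\le 2r\log_2(er)$. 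The regime $k<\max_j W_j$ is harmless, since there $k\le r\le 2r\log_2(er)$ already.

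The main obstacle I expect is the composition step: making rigorous that the count of network dichotomies factorizes as a product over neurons requires careful use of the independence of the parameter blocks together with the feed-forward ordering, since the inputs of a neuron are themselves functions of earlier parameters, and one must phrase the refinement so that no double counting occurs. A secondary technical point is the inversion of the implicit inequality $k\le r\log_2(ek)$, where one must verify that the constant $2$ is admissible; this is precisely where the hypotheses $w\ge 2$ and $h\ge 1$ enter, guaranteeing that $r$ is large enough to ensure $\log_2(er)\ge 2$.
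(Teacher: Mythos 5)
The paper does not prove this statement at all: it is quoted as background with only a citation to Baum and Haussler~\cite{baum1989size}, so there is no in-paper argument to compare against. Your proposal is a correct reconstruction of exactly that cited argument --- the layer-by-layer partition refinement bounding $S_{\mathcal{N}}(k)\leq\prod_j (ek/W_j)^{W_j}$ via Eq.~\eqref{eq:vc_dim__of_affine_functions} and Lemma~\ref{lemma:sauer_shelah}, the convexity step giving $(ekN/r)^r\leq(ek)^r$, and the inversion of $k\leq r\log_2(ek)$ into $k\leq 2r\log_2(er)$ (which indeed goes through since $r\geq 7$ under the hypotheses $w\geq 2$, $h\geq 1$, so $\log_2(er)\geq 2$) --- and both of the difficulties you flag are handled correctly.
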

The number of free parameters $r$ of a neural network equals the total dimension of its parameter vector $\vtheta$. In the case of $f\in \mathcal{N}_{h,w,1,\mathbbm{1}_{\mathbb{R}_+},\mathbbm{1}_{\mathbb{R}_+}}$ it is $r = w d+w+w c+c$ if $h=1$ and $r = w d+w+(h-1) (w^2+w) + w c+c$ if $h\geq 1$.

See Table~\ref{tab:activation_functions} for a summary of results similar to Theorem~\ref{thm:vc_dim_sign_networks} for various activation functions.

\begin{table}[ht]
	% \footnotesize
	%	\renewcommand{\arraystretch}{1.3}
	%	\scriptsize
	\centering
	\begin{tabular}{llr}
		\hline
		Activation Function $\sigma$ & VC-dimension & Reference\\
		\hline
		$\mathrm{sign}, \mathbbm{1}_{\mathbb{R}_+}$ & $O(r\log r)$ & \cite{baum1989size}\\
		& $\Omega(r\log r)$ & \cite{maass1994neural}\\
		\hline
		piecewise linear, incl.~$\mathrm{relu}$ & $O(r h\log r)$ & \cite{harvey2017nearly}\\
		& $\Omega(r h\log(r/h))$ & \cite{harvey2017nearly}\\
		\hline
		Pfaffian, incl.~$\mathrm{sigm}$ & $O(r^2 h^2 w^2)$ & \cite{karpinski1997polynomial}\\
		\hline
	\end{tabular}
	\caption[Complexity of VC-dimension for different classes of neural networks.]{Complexity of VC-dimension for different classes $\mathcal{N}_{h,w,1,\sigma,\mathbbm{1}_{\mathbb{R}_+}}$ of neural networks with $r$ parameters and different activation functions $\sigma$.}
	\label{tab:activation_functions}
\end{table}

\subsection{Domain Adaptation}
\label{subsec:domain_adaptation_by_nns}

In this subsection we show how to extend the stochastic gradient descent algorithm to solve problems of unsupervised domain adaptation.
The described approach follows the principle of learning new data representations by minimizing empirical estimations of integral probability metrics as described in Subsection~\ref{subsec:da_algorithms}.
We also review some recent related works.

Given a source sample $X_p=\{\x_1,\ldots,\x_k\}$ drawn from some unknown $p\in\MR$ with labels $Y=\{l(\x_1),\ldots,l(\x_k)\}$ labeled by some unknown labeling function $l$ and a target sample $X_q=\{\x_1',\ldots,\x_s'\}$ drawn from some unknown $q\in\MR$, the goal of unsupervised domain adaptation is to find some function $f$ from a model class $\mathcal{F}$ which is a good approximation of the unknown dependency $l$.

The representation learning principle aims at minimizing Eq.~\eqref{eq:objective_of_principle_of_representation_learning_for_da}.
Unfortunately, as discussed in Subsection~\ref{subsec:optimization}, this is not a good choice as minimization objective for stochastic gradient descent.
In the case of multi-class classification and single-layer neural networks, an appropriate choice is
\begin{align}
	\label{eq:da_loss_function}
	\frac{1}{k} \sum_{i=1}^k L(f(\x_i),l(\x_i)) + \lambda\cdot \hat{d}\left(h_0(X_p),h_0(X_q)\right),
\end{align}
where $L$ is the cross-entropy loss as defined in Eq.~\eqref{eq:empirical_cross_entropy_objective}, $\hat{d}$ is a distance function between two samples, $\lambda>0$ is a weighting factor and $h_0(X)=\{h_0(\x_1),\ldots,h_0(\x_k)\}$ are the activations of the sample $X_p$ with $h_0(\x)=\rho({\vec W}_0\cdot\x + {\vec b}_0)$ being the output of the hidden layer of the neural network defined in Eq.~\eqref{eq:single_layer_softmax}.
Algorithm~\ref{alg:sgd} can now be used to solve domain adaptation problems by finding, in addition to a random submultisample $X_i$ from $X_p$, a random submultisample $X_i'$ from $X_q$, and, by replacing the gradient in Step~2 with the gradient
\begin{align}
	\label{eq:sgd_da_gradient}
	{\vec w}_i=\nabla_{\vtheta} \left(\frac{1}{|X_i|}\sum_{\x\in X_i} L(f(\x),l(\x)) + \lambda\cdot \hat{d}\left(h_0(X_i),h_0(X_i')\right) \right).
\end{align}
Note that the gradient $\nabla_{\vtheta} \frac{1}{|X_i|}\sum_{\x\in X_i} L(f(\x),l(\x))$ is given in Eq.~\eqref{eq:gradient_for_sgd_algo} and the only missing part is the gradient $\nabla_{\vtheta} \hat{d}\left(h_0(X_i),h_0(X_i')\right)$ with distance $\hat{d}$.
The procedure is summarized in Algorithm~\ref{alg:sgd_for_da} and Figure~\ref{fig:da_by_sgd}.

\begin{algorithm}
	% 	\small
	\SetAlgoLined
	\KwIn{Source sample $X_p=\{\x_1,\ldots,\x_k\}$ with labels $Y=\{l(\x_1),\ldots,l(\x_k)\}$, target sample $X_q=\{\x_1',\ldots,\x_s'\}$, learning rate $\alpha$, regularization parameter $\lambda$ and learning rate weighting $\boldsymbol{\nu}_1,\boldsymbol{\nu}_2,\ldots$
	}
	
	\KwOut{Parameter vector $\vtheta=(({\vec W}_0,{\vec b}_0),({\vec W}_1,{\vec b}_1))\in \left( (\mathbb{R}^{w\times d}\times\mathbb{R}^{w})\times (\mathbb{R}^{c\times w}\times\mathbb{R}^{c})\right)$ such that $f(\x, (({\vec W}_0,{\vec b}_0),({\vec W}_1,{\vec b}_1)))=\mathrm{softmax}\left({\vec W}_1 \cdot\rho({\vec W}_0\cdot \x+{\vec b}_0) + {\vec b}_1)\right)$ with $\rho(\x)=(\mathrm{sigm}(x_1),\ldots,\mathrm{sigm}(x_w))^\text{T}$}~\\
	
	\Init{Initialize parameter vector $\vtheta_0$ randomly and set $i=0$}
	\While{\upshape stopping criteria is not met}{
		\Stepone{Find random submultisets $X_i$ from $X_p$ and $X_i'$ from $X_q$}
		\Steptwo{Calculate the gradient ${\vec w}_i=\nabla_{\vtheta} \left(\frac{1}{|X_i|}\sum_{\x\in X_i} L(f(\x),l(\x)) + \lambda\cdot \hat{d}\left(h_0(X_i),h_0(X_i')\right) \right)$ where $h_0(\x)=\rho({\vec W}_0\cdot\x + {\vec b}_0)$}
		\Stepthree{Update $\vtheta_{i+1}=\vtheta_i - \alpha\cdot \boldsymbol{\nu}_i\odot {\vec w}_i$}
		\Stepfour{$i:=i+1$}
	}
	\caption[Unsupervised domain adaptation for single-layer neural networks via stochastic gradient descent.]{
		Unsupervised domain adaptation for finding a single-layer multi-class neural network ${f\in\mathcal{N}_{1,w,c,\mathrm{sigm},\mathrm{softmax}}}$ via stochastic gradient descent
		% 	Stochastic gradient descent for finding a single-layer neural network ${f\in\mathcal{N}_{1,w,c,\mathrm{sigm},\mathrm{softmax}}}$ for unsupervised domain adaptation for multi-class classification.
	}
	\label{alg:sgd_for_da}
\end{algorithm}

One good choice for the distance $\hat{d}$ in Algorithm~\ref{alg:sgd_for_da} is the Frobenius norm between the sample covariance matrices of the neural network activations~\cite{sun2016deep}.
This distance function is parameter-free and the resulting algorithm is relatively robust to changes of the regularization parameter $\lambda$.

Similarly, the differences between mean and sample variances in each direction is minimized in~\cite{li2016revisiting,li2018adaptive,wang2019transferable}.
Therefore, a neural network specific method called \textit{batch normalization} is extended for domain adaptation problems.

The Wasserstein distance is applied in~\cite{lee2019sliced} and sampled via a variational formulation.

Another approach is to minimize the empirical $\mathcal{F}$-divergence as described in Eq.~\eqref{eq:h_divergence} where $\mathcal{F}$ is some class of neural networks.
The works proposed in~\cite{ganin2016domain,tzeng2017adversarial,bousmalis2017unsupervised} are based on training a classifier which aims at discriminating source samples from target samples.
For minimizing the distance between source and target data representations, the gradient of the new classifier is reversed during backpropagation.

\begin{figure}[t]
	\includegraphics[width=\linewidth]{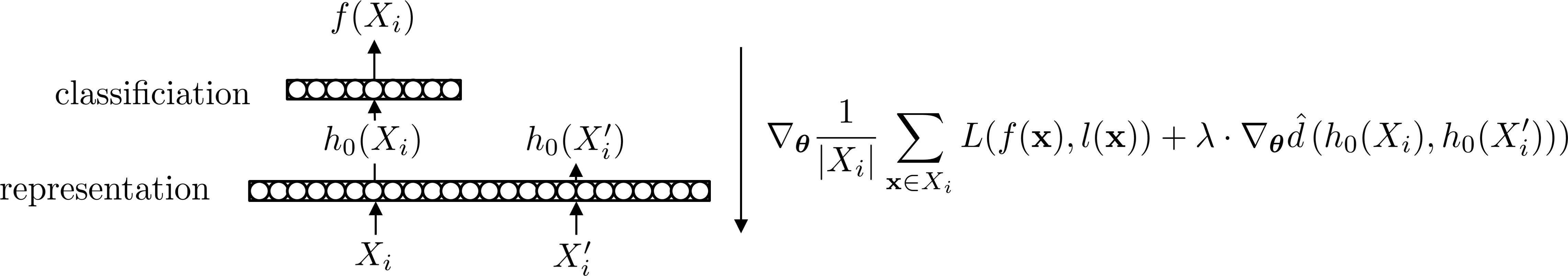}
	\caption{Forward pass and backpropagation in Algorithm~\ref{alg:sgd_for_da}.
	}
	\label{fig:da_by_sgd}
\end{figure}

The maximum mean discrepancy as described in Subsection~\ref{subsec:ten_probability_metrics} is also a good choice.
Different kernel functions lead to different versions of the maximum mean discrepancy and consequently to different behaviours of Algorithm~\ref{alg:sgd_for_da}.
There exist approaches that are based on linear kernels~\cite{tzeng2014deep, csurka2016unsupervised} that can be interpreted as mean feature matching.
A combination of Gaussian kernels is used in~\cite{long2015learning} to tackle the sensitivity of the maximum mean discrepancy \wrt~changes of the Gaussian kernel parameter by means of a combination of different kernels with heuristically selected parameters.
This approach is underpinned by theoretical knowledge from studies about reproducing kernel Hilbert spaces~\cite{fukumizu2009kernel} and a linear-time implementation is proposed.
It is shown that further improvements are possible based on more sophisticated neural network architectures~\cite{bousmalis2016domain,long2016unsupervised,long2016joint}.

In Chapter~\ref{chap:cmd_algorithm} we propose a new moment distance for Algorithm~\ref{alg:sgd_for_da}.

It is important to note that the problem of selecting the parameter $\lambda$ in Eq.~\eqref{eq:sgd_da_gradient} is sophisticated since no target labels are given.
Consequently, classical cross-validation cannot be used as it would suffer from an unbounded bias in the generalization error estimate~\cite{zhong2010cross}.
Consequently, finding good algorithms for selecting the parameter $\lambda$ is an active research area~\cite{you2019towards}.
Many methods rely on a small set of data from the target scenarios~\cite{courty2017joint,long2015learning} or fix their parameters to some default values~\cite{sun2016deep}.

The classical cross-validation algorithm is extended in~\cite{zhong2010cross} and~\cite{ganin2016domain} for problems of domain adaptation.
The approach for calculating an empirical estimate of the true risk is as follows:
The source sample $X_p=\{\x_1,\ldots,\x_k\}$ and the target sample $X_q=\{\x_1',\ldots,\x_s'\}$ are split into training samples $S$ and $T$, respectively, containing $90\%$ of the original samples, and, validation samples $S_\mathrm{val}$ and $T_\mathrm{val}$, respectively, containing $10\%$ of the original samples.
Then, the training sample $S$ with corresponding labels and the training sample $T$ are used to find a classifier $f$ for the unsupervised domain adaptation problem.
Using the same algorithm, an additional \textit{reverse} classifier is trained on the sample $T$ with labels $\{f(\x)\mid \x\in T\}$ as 'source sample' and the sample $S$ as 'target sample'.
For this reverse classifier, the empirical risk is calculated based on the validation sample $S_\mathrm{val}$ and the corresponding given labels.
Finally, the empirical risk is used as an estimate for the target risk.

The above procedure is used in Subsection~\ref{subsec:sentiment_analysis_experiment} to select an appropriate value of the parameter $\lambda$ in benchmark experiments.

\newpage

\chapter{Learning Bounds for Moment-Based Domain Adaptation}
\label{chap:learning_bounds}

Standard approaches for domain adaptation measure the adaptation discrepancy based on empirical estimations of probability metrics.
In this chapter, we derive a theoretical framework for domain adaptation which is based on weak assumptions on the similarity of distributions.
Our weak assumptions are formulated by moment distances.
As a main result, we derive learning bounds under practice-oriented general conditions on the underlying probability distributions.

This chapter is structured as follows:
Section~\ref{sec:motivation_of_learning_bounds} motivates the work done in this chapter.
Section~\ref{sec:learning_bounds_relation_to_state_of_the_art} describes some relations to recent works in domain adaptation, moment-based bounds on distances between distributions and exponential families.
Section~\ref{sec:problem_formulation_binary_classification} formulates the problem considered in this chapter.
Section~\ref{sec:approach_by_convergence_analysis} discusses our approach based on convergence rate analysis.
Section~\ref{sec:main_result_on_learning_bounds} proposes our main result on moment-based learning bounds and Section~\ref{sec:proofs_learning_bounds} gives all proofs.

\section{Motivation and General Idea}
\label{sec:motivation_of_learning_bounds}

Domain adaptation problems are encountered in everyday life of engineering machine learning applications whenever there is a discrepancy between assumptions on the learning and the application setting.
As discussed in Section~\ref{sec:stat_learn_th}, most theoretical and practical results in statistical learning are based on the assumption that the training and test sample are drawn from the same distribution.
However, as outlined in Section~\ref{sec:domain_adaptation}, this assumption may be violated in typical applications such as natural language processing~\cite{blitzer2007biographies,jiang2007instance} and computer vision~\cite{sun2014virtual,ganin2016domain}. 

We relax the classical assumption of identical distributions under training and the application setting by postulating that only a finite number of moments of these distributions are aligned.

This postulate is motivated two-fold.
The first motivation is the current scientific discussion about the choice of an appropriate distance function for domain adaptation~\cite{ben2007analysis,courty2017optimal,long2015learning,long2016unsupervised,zhuang2015supervised,ganin2016domain}.
Standard approaches study domain adaptation based on empirical estimations of strong probability metrics.
The convergence in most common probability metrics of compactly supported distributions implies the convergence of finitely many moments.
In particular, many common probability metrics admit upper bounds on moment distances.
For example consider Figure~\ref{fig:metrics_with_moments} which is based on the following Lemma~\ref{lemma:moment_distance_bound_by_levy}. See Subsection~\ref{subsec:bound_on_moment_distance_by_levy_metric} for its proof.
\begin{restatable}{lemmarep}{momentdistanceboundbylevy}%
	\label{lemma:moment_distance_bound_by_levy}%
	Let $m\in\mathbb{N}, m\geq 2$, $\boldsymbol{\phi}\in\left(\mathbb{R}_m[x]\right)^n$ be a vector of polynomials with maximum total degree $m$ and let $p,q\in\mathcal{M}([0,1])$ with moments denoted by $\boldsymbol{\mu}_p=\int \boldsymbol{\phi} p$ and $\boldsymbol{\mu}_q=\int \boldsymbol{\phi} q$. The there exist some constants $C_\text{L},M_\text{L}\in\mathbb{R}_+$ such that
	\begin{align}
		\label{eq:rachev_inequality}
		d_\text{L}(p,q)\leq M_\text{L}\quad\implies\quad\norm{\boldsymbol{\mu}_p - \boldsymbol{\mu}_q}_1\leq C_\text{L}\cdot d_\text{L}(p,q)^{\frac{1}{2 m+2}}.
	\end{align}
\end{restatable}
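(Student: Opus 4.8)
The plan is to deduce the bound from the single-moment estimate of Theorem~\ref{thm:rachev}, whose only input is a uniform bound $\varepsilon$ on the characteristic functions over an interval $[-T_0,T_0]$; the whole difficulty is to manufacture such an $\varepsilon$ out of the L\'evy metric and to couple the free parameter $T_0$ to $d_\mathrm{L}(p,q)$. First I would reduce to monomials. Writing each component $\phi_i=\sum_{j=0}^{m}a_{i,j}x^{j}$ in the monomial basis, the degree-zero terms cancel because $\int 1\diff\mu=\int 1\diff\nu=1$, so
\begin{align*}
	\norm{\boldsymbol{\mu}_p-\boldsymbol{\mu}_q}_1\leq \sum_{i=1}^{n}\sum_{j=1}^{m}\left|a_{i,j}\right|\left|\int x^{j}\diff\mu-\int x^{j}\diff\nu\right|.
\end{align*}
Hence it suffices to bound $\left|\int x^{j}\diff\mu-\int x^{j}\diff\nu\right|$ for each $1\leq j\leq m$ and to absorb the finitely many coefficients into the final constant $C_\text{L}$.

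Next I would convert the L\'evy distance into an admissible input for Theorem~\ref{thm:rachev}. Denote by $\widehat{\mu},\widehat{\nu}$ the characteristic functions of $\mu,\nu$ and by $P,Q$ their distribution functions. Since both measures are supported on $[0,1]$, integration by parts gives $\widehat{\mu}(t)-\widehat{\nu}(t)=-it\int_{0}^{1}e^{itx}\left(P(x)-Q(x)\right)\diff x$, the boundary terms vanishing since $P(1)=Q(1)=1$ and $P(0^-)=Q(0^-)=0$, whence $\left|\widehat{\mu}(t)-\widehat{\nu}(t)\right|\leq \left|t\right|\int_{0}^{1}\left|P-Q\right|\diff x$. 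The box characterization $P(x-\epsilon)-\epsilon\leq Q(x)\leq P(x+\epsilon)+\epsilon$ with $\epsilon=d_\mathrm{L}(p,q)$ yields $\left|P(x)-Q(x)\right|\leq P(x+\epsilon)-P(x-\epsilon)+\epsilon$, and integrating over $[0,1]$, using $\int_{\mathbb{R}}\left(P(x+\epsilon)-P(x-\epsilon)\right)\diff x=2\epsilon$, gives $\int_0^1\left|P-Q\right|\diff x\leq 3\,d_\mathrm{L}(p,q)$. Consequently $\sup_{\left|t\right|\leq T_0}\left|\widehat{\mu}(t)-\widehat{\nu}(t)\right|\leq 3\,T_0\,d_\mathrm{L}(p,q)=:\varepsilon$ for every $T_0>0$.

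Now I would fix the scale. Assuming $d_\mathrm{L}(p,q)>0$ (otherwise $\mu=\nu$ and the claim is trivial), I set $T_0=d_\mathrm{L}(p,q)^{-1/2}$, so that $\varepsilon\leq 3\,d_\mathrm{L}(p,q)^{1/2}$. For every fixed $j\in\{1,\dots,m\}$ the quantity $j^3C_\text{Z}^{1/(j+1)}\varepsilon^{1/(j+1)}$ tends to $0$ while $T_0/2\to\infty$ as $d_\mathrm{L}(p,q)\to 0$; since there are only finitely many degrees $j\leq m$, there is a threshold $M_\text{L}\in(0,1]$ such that the hypothesis $j^3C_\text{Z}^{1/(j+1)}\varepsilon^{1/(j+1)}\leq T_0/2$ of Theorem~\ref{thm:rachev} holds simultaneously for all $j\leq m$ whenever $d_\mathrm{L}(p,q)\leq M_\text{L}$. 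Theorem~\ref{thm:rachev} then gives
\begin{align*}
	\left|\int x^{j}\diff\mu-\int x^{j}\diff\nu\right|\leq C_\text{Z}\,j^3\,\varepsilon^{1/(j+1)}\leq C_\text{Z}\,j^3\,3^{1/(j+1)}\,d_\mathrm{L}(p,q)^{\frac{1}{2(j+1)}}.
\end{align*}
Because $M_\text{L}\leq 1$ and $\tfrac{1}{2(j+1)}\geq\tfrac{1}{2(m+1)}$ for $j\leq m$, each factor $d_\mathrm{L}(p,q)^{1/(2(j+1))}$ is at most $d_\mathrm{L}(p,q)^{1/(2m+2)}$; summing over $i$ and $j$ against the coefficients $a_{i,j}$ produces the constant $C_\text{L}$ and the claimed exponent $\tfrac{1}{2m+2}$.

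The main obstacle is the second step: transferring the weak information carried by the L\'evy metric into a quantitative uniform bound on the characteristic functions, and then tying the free parameter $T_0$ to $d_\mathrm{L}(p,q)$ so that Rachev's growth condition is met for every degree up to $m$ at once. The choice $T_0=d_\mathrm{L}(p,q)^{-1/2}$ is precisely what forces the square root inside $\varepsilon$ and hence the exponent $\tfrac{1}{2m+2}$ rather than the sharper $\tfrac{1}{m+1}$ one could extract by optimizing $T_0$; the stated bound nonetheless suffices for the comparison drawn in Figure~\ref{fig:metrics_with_moments}. The remaining work, namely the integration-by-parts estimate, the elementary inequality $\int_0^1\left|P-Q\right|\diff x\leq 3\,d_\mathrm{L}(p,q)$, and the bookkeeping of the constants $C_\text{Z}$, the coefficients $a_{i,j}$ and the degree $m$ into $C_\text{L}$ and $M_\text{L}$, is routine.
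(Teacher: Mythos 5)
Your proof is correct, and while it shares the paper's overall skeleton — feed a characteristic-function bound on a window $[-T_0,T_0]$ into Theorem~\ref{thm:rachev}, then reduce $\norm{\boldsymbol{\mu}_p-\boldsymbol{\mu}_q}_1$ to finitely many monomial differences and absorb coefficients into $C_\mathrm{L}$ — it obtains the crucial characteristic-function estimate by a genuinely different and more elementary route. The paper passes through the Zolotarev metric: it invokes Lemma~\ref{lemma:zolotarev_bound} to get $d_\mathrm{Z}(p,q)\leq\sqrt{\left(2K+24\sqrt{d_\mathrm{L}(p,q)}+1/2\right)d_\mathrm{L}(p,q)}$, takes $T_0$ to be the optimal truncation point in Definition~\ref{def:Zolotarev_metric}, and so arrives at $\sup_{|t|\leq T_0}|f(t)-g(t)|\leq\sqrt{102\,d_\mathrm{L}(p,q)}=\varepsilon$ together with $2/T_0\leq\varepsilon$. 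You instead prove the needed estimate from scratch: Stieltjes integration by parts (with boundary terms cancelling on $[0,1]$), the box characterization of the L\'evy metric, and Fubini yield $\sup_{|t|\leq T_0}\left|\widehat\mu(t)-\widehat\nu(t)\right|\leq 3\,T_0\,d_\mathrm{L}(p,q)$ for \emph{every} $T_0$, after which you set $T_0=d_\mathrm{L}(p,q)^{-1/2}$ by hand. Both routes then coincide: $\varepsilon\asymp\sqrt{d_\mathrm{L}}$, $T_0\asymp d_\mathrm{L}^{-1/2}$, the same finiteness argument producing a uniform threshold $M_\mathrm{L}$ depending only on $m$ and $C_\mathrm{Z}$, and the same exponent $\tfrac{1}{2m+2}$. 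Your version buys self-containedness (no Zolotarev machinery at all), a cleaner constant ($3\sqrt{d_\mathrm{L}}$ versus $\sqrt{102\,d_\mathrm{L}}$), and — because $T_0$ stays a free parameter rather than being locked to the Zolotarev-optimal truncation — the visible possibility of sharpening the result; in fact your closing remark understates this, since coupling $T_0\asymp d_\mathrm{L}^{1/m}$ satisfies Rachev's growth condition for all $d_\mathrm{L}\leq 1$ and yields exponent $1/m$, better than the $1/(m+1)$ you mention. What the paper's route buys is brevity: the only analytic work is outsourced to a quotable result of Zolotarev, with the coupling between $T_0$ and $\varepsilon$ delivered pre-packaged.
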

The considered postulate of a finite number of aligned moments is therefore weak compared to the assumption of distributions which are similar in typical probability metrics.
One implication is that results under the proposed setting can also give theoretical insights for approaches based on stronger concepts of similarity like the Wasserstein distance~\cite{courty2017optimal,lee2019sliced}, the maximum mean discrepancy~\cite{long2016unsupervised} or $f$-divergences~\cite{zhuang2015supervised}.

The second motivation of our postulate is the methodology to overcome a present difference in distributions by mapping the samples into a latent model space where the resulting corresponding distributions are aligned.
See Subsection~\ref{subsec:da_algorithms} and Figure~\ref{fig:grafical_abstract} for illustration.
Moment-based algorithms perform particularly well in many practical tasks~\cite{duan2012domain,baktashmotlagh2013unsupervised,sun2016deep,li2018adaptive,zhao2017joint,wang2019transferable}.
A domain adaptation algorithm considering moments of higher orders is proposed in Chapter~\ref{chap:cmd_algorithm} of this thesis and further extended in~\cite{peng2018cross,ke2018identity,xing2018adaptive,peng2019weighted,Wei2018GenerativeAG}.

However, distributions with only finitely many moments in common can be very different, see e.g.~\cite{lindsay2000moments}, which implies that classical bounds on the target risk are very loose for general distributions under the proposed setting.
This brings us to our motivating question \textit{under which further conditions can we expect a discriminative model to perform well on a future test sample given that only finitely many moments are aligned with those of a prior training sample.}

We approach this problem by also considering the information encoded in the distributions in addition to the moments.
Following Section~\ref{sec:maximum_entropy_distribution}, this information can be modeled by the deviation of the differential entropy to the entropy of the maximum entropy distribution~\cite{cover2012elements,milev2012moment}, or equivalently, by the error in KL-divergence of approximation by exponential families~\cite{csiszar1975}.
Note that exponential families are the only parametric distributions with fixed compact support having the property that a finite pre-defined vector of moments can serve as sufficient statistic~\cite{koopman1936distributions} and therefore carries all the information about the distribution.
In addition, exponential families are particularly suitable for our analysis as they include truncated Normal distributions arising in many applications.

We analyze the convergence of sequences of probability densities in terms of finite moment convergence by taking the smoothness and the differential entropy of the densities into account.
Based on results about the approximation by maximum entropy distributions and polynomials~\cite{barron1991approximation,cox1988approximation} we provide bounds of the form
\begin{align}
	\label{eq:form_of_bounds}
	\norm{p-q}_{L^1}\leq C\cdot\norm{\boldsymbol{\mu}_p-\boldsymbol{\mu}_q}_1+\varepsilon,
\end{align}
where $\norm{p-q}_{L^1}$ is the $L^1$-difference between the probability densities $p$ and $q$ with respective pre-defined vectors of (sample) moments $\boldsymbol{\mu}_p$ and $\boldsymbol{\mu}_q$,
$C$ is a constant depending on the smoothness of $p$ and $q$ and
$\epsilon$ is the error of approximating $p$ and $q$ by (estimators of) maximum entropy distributions measured in terms of differential entropy (and sample size).
The value $\epsilon^2/2$ can be interpreted as upper bound on the amount of information lost when representing $p$ and $q$ by its moments $\boldsymbol{\mu}_p$ and $\boldsymbol{\mu}_q$, respectively.

To obtain bounds on the expected misclassification risk of a discriminative model tested on a sample with only finitely many moments similar to those of the training sample, we extend the theoretical bounds described in Subsection~\ref{subsec:learning_bounds_for_domain_adaptation} by means of Eq.~\eqref{eq:form_of_bounds}.
The resulting learning bounds do not make assumptions on the structure of the underlying unknown labeling functions.
In the case of two underlying labeling functions, we obtain error bounds that are relative to the performance of some optimal discriminative function and in the case of one underlying labeling function, i.e. in the covariate-shift setting~\cite{sugiyama2012machine,ben2014domain}, we obtain absolute error bounds.

Our bounds show that a small misclassification risk of the discriminative model can be expected if the misclassification risk of the model on the training sample is small, if the samples are large enough and their densities have high entropy in the respective classes of densities sharing the same finite collection of moments.
Our bounds are uniform for a class of smooth distributions and multivariate moments with solely univariate terms.

\section{Related Work}
\label{sec:learning_bounds_relation_to_state_of_the_art}

Our work is partly motivated by the high performance of moment-based unsupervised domain adaptation methods for representation learning models as discussed in Subsection~\ref{subsec:da_algorithms} and Subsection~\ref{subsec:domain_adaptation_by_nns}.
Recent examples can be found in the areas of deep learning~\cite{sun2016deep,koniusz2017domain,li2018adaptive,peng2018cross,ke2018identity,Wei2018GenerativeAG,xing2018adaptive}, kernel methods~\cite{duan2012domain,baktashmotlagh2013unsupervised} and linear regression as described in Chapter~\ref{chap:applications}.
However, none of these works provide theoretical guarantees for a small misclassification risk with exception of~\cite{peng2018moment} who consider general distributions resulting in possibly loose bounds.
Another motivation of our work is that many common probability metrics admit upper bounds on moment-based distance measures as \eg~discussed in~\cite{rachev2013methods}.
Gibbs and Su~\cite{gibbs2002choosing} review different useful relations between probability metrics without considering moment distances.

Our work is based on the observation that bounds on the $L^1$-norm of the difference between densities lead to bounds on the misclassification probability of a discriminative model according to Ben-David et al.~\cite{ben2010theory}.
We refer to Subsection~\ref{subsec:learning_bounds_for_domain_adaptation} for more details of this approach.

Following ideas from Tagliani et al.~\cite{tagliani2003note,tagliani2002entropy,tagliani2001numerical} and properties of maximum entropy distributions~\cite{cover2012elements}, we obtain such bounds for multivariate distributions based on the differential entropy.
We refer to Subsection~\ref{subsec:moment_distances} and Section~\ref{sec:maximum_entropy_distribution} for details on these and related approaches.

Following Barron and Sheu~\cite{barron1991approximation} and Cox~\cite{cox1988approximation}, we present appropriate regularity assumptions on the distributions under which the KL-divergence based bounds are further upper bounded in terms of (sample) moment differences leading to the form of Eq.~\eqref{eq:form_of_bounds}.

Our results supplement the picture of probability metrics proposed by Gibbs and Su~\cite{gibbs2002choosing} by moment distances as shown in Figure~\ref{fig:metrics_with_moments}.
See Section~\ref{sec:probability_metrics} for more details on probability metrics.
In contrast to other works, our main result is a learning bound for domain adaptation that does not depend on the knowledge of a full test sample but only on the knowledge of finitely many of its sample moments.

\section{Problem Formulation}
\label{sec:problem_formulation_binary_classification}

Our formalization is based on Problem~\ref{problem:da_for_binary_classification} of domain adaptation for binary classification.
That is, we assume source and target densities $p, q\in\mathcal{M}\left([0,1]^d\right)$ with corresponding labeling functions $l_p,l_q :[0,1]^d\to[0,1]$.
In addition, we postulate the alignment of finitely many moments, i.e. $\int\boldsymbol{\phi} p\approx\int\boldsymbol{\phi} q$ for some $\boldsymbol{\phi}\in\mathbb{R}_m[x_1,\ldots,x_d]^n$.
As a result, we end up with Problem~\ref{problem:moment_based_da_for_binary_classification} of moment-based domain adaptation for binary classification.

\begin{restatable}[Moment-Based Domain Adaptation for Binary Classification]{problemrep}{momentbaseddaforninaryclassification}%
	\label{problem:moment_based_da_for_binary_classification}%
	Consider two domains, a \textit{source} domain $\left(p,l_p\right)$ and a \textit{target} domain $\left(q,l_q\right)$, such that $\int\boldsymbol{\phi} p\approx\int\boldsymbol{\phi} q$ for some $\boldsymbol{\phi}\in\mathbb{R}_m[x_1,\ldots,x_d]^n$.
	
	Given a source sample $X_p=\{\x_1,\ldots,\x_k\}$ drawn from $p$ with corresponding labels $Y_p=\{l_p(\x_1),\ldots,l_p(\x_k)\}$ and a target sample $X_q=\{\x_1',\ldots,\x_s'\}$ drawn from $q$ with corresponding labels $Y_q\subseteq\{l_q(\x_1'),\ldots,l_q(\x_s')\}$, find some function $f:\mathbb{R}^d\to\{0,1\}$ with a small target misclassification risk
	\begin{align}
		\label{eq:misclassification_risk_moment_based_domain_adaptation}
		\int_{\mathbb{R}^d} \left|f(\x)-l_q(\x)\right| q(\x)\diff\x.
	\end{align}%
\end{restatable}%
Without further conditions on the densities, a solution to Problem~\ref{problem:moment_based_da_for_binary_classification} might not exist.
One of our goals is therefore to determine and describe conditions on the densities $p$ and $q$ such that a solution exists.
In particular, we aim at conditions such that a small target risk in Eq.~\eqref{eq:misclassification_risk_moment_based_domain_adaptation} is induced by a small (sampled) source risk $\int\left|f-l_p\right| p$, a small difference $\norm{\boldsymbol{\mu}_p-\boldsymbol{\mu}_q}_1$ between the (sampled) moments $\boldsymbol{\mu}_p=\int\boldsymbol{\phi} p$ and $\boldsymbol{\mu}_q=\int\boldsymbol{\phi} q$ and a small distance $\lambda^*$ between the labeling functions $l_p$ and $l_q$ as defined in Eq.~\eqref{eq:minimal_combined_error}.

\section{Approach by Convergence Rate Analysis}
\label{sec:approach_by_convergence_analysis}

It will turn out that the assumption of high-entropy distributions satisfying additional smoothness conditions allows us to provide appropriate learning bounds.
Our approach is based on the analysis of the $L^1$-convergence rate of sequences of densities based on the convergence of finitely many of its corresponding moments.

This section is structured as follows:
Subsection~\ref{subsec:from_moment_convergence_to_l1} motivates our approach of bounding the $L^1$-difference between probability densities.
Subsection~\ref{subsec:high_entropy_distr} discusses the convergence of probability densities with high entropy while satisfying certain moment constraints.
Subsection~\ref{subsec:smooth_high_entropy_distr} discusses smoothness constraints for convergence rates that are uniform in certain classes of probability density functions.

\subsection{From Moment Similarity to \texorpdfstring{$L^1$}--Similarity}
\label{subsec:from_moment_convergence_to_l1}

The postulated similarity of finitely many moments as stated in Problem~\ref{problem:moment_based_da_for_binary_classification} does not directly lead to the required error guarantees.
The following Lemma, see Subsection~\ref{subsec:proof_from_moment_similarity_to_l1} for its proof, motivates the consideration of the stronger concept of similarity in $L^1$-difference.

\begin{restatable}{lemmarep}{totalvariation}%
	\label{lemma:motivation}%
	Let $f:[0,1]^d\to\{0,1\}$ be integrable and $p,q\in\mathcal{M}\left([0,1]^d\right)$. Then the following holds:
	\begin{align}
		\max_{l:[0,1]^d\to [0,1]} \left| \int \left|f-l\right| q - \int \left|f-l\right| p \right| = \frac{1}{2} \norm{p-q}_{L^1}.
	\end{align}
\end{restatable}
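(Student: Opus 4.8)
The plan is to collapse the maximization over $l$ into a maximization over the single auxiliary function $m := \left|f-l\right|$, and then to use crucially that $p$ and $q$ are \emph{probability} densities, so $\int (q-p)=0$; this is exactly what produces the factor $\tfrac12$.

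First I would rewrite the objective. Since $f$ takes values in $\{0,1\}$, write $A=\{\x\in[0,1]^d\mid f(\x)=1\}$. On $A$ one has $\left|f-l\right|=1-l$ and on $A^c=\{f=0\}$ one has $\left|f-l\right|=l$ (both using $l\in[0,1]$). Because $p,q$ are probability densities the linearity of the integral gives
\[
\int\left|f-l\right|q-\int\left|f-l\right|p=\int\left|f-l\right|(q-p).
\]
The key step is a reparametrization. As $l$ ranges over all measurable functions $[0,1]^d\to[0,1]$, the function $m=\left|f-l\right|$ ranges over \emph{all} measurable functions $[0,1]^d\to[0,1]$, and this correspondence is a bijection: given any such $m$ one recovers $l=1-m$ on $A$ and $l=m$ on $A^c$, which is again $[0,1]$-valued, measurable, and satisfies $\left|f-l\right|=m$. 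Hence
\[
\max_{l:[0,1]^d\to[0,1]}\left|\int\left|f-l\right|(q-p)\right|=\max_{m:[0,1]^d\to[0,1]}\left|\int m\,(q-p)\right|.
\]

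Then I would solve this reduced problem directly. To maximize $\int m\,(q-p)$ over $m\in[0,1]$, take $m=\mathbbm{1}_{\{q>p\}}$, yielding $\int(q-p)^+$; to minimize it, take $m=\mathbbm{1}_{\{q<p\}}$, yielding $-\int(q-p)^-$. Since $\int(q-p)=0$, we get $\int(q-p)^+=\int(q-p)^-=\tfrac12\int\left|q-p\right|=\tfrac12\norm{p-q}_{L^1}$, so the maximal absolute value equals $\tfrac12\norm{p-q}_{L^1}$. The maximizing $m$ is $\{0,1\}$-valued and measurable, so the maximum is attained and the statement follows. (Equivalently, the substitution $h=2m-1$ maps the reduced problem onto the middle expression of Theorem~\ref{thm:TV_distance}, giving the same value.)

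The main obstacle is not any computation but spotting where the factor $\tfrac12$ comes from: the naive estimate $\left|\int\left|f-l\right|(q-p)\right|\le\int\left|f-l\right|\left|q-p\right|\le\norm{p-q}_{L^1}$ only yields the bound $\norm{p-q}_{L^1}$, which is off by a factor of two. The sharp value requires genuinely using $\int(q-p)=0$, which is where the reparametrization pays off. The only other point to check carefully is that the correspondence $l\leftrightarrow m$ preserves measurability and the $[0,1]$-range, which it does by construction.
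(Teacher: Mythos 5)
Your proof is correct, and it reaches the same maximizer as the paper but organizes the argument differently. The paper's proof constructs the optimal labeling function $l^*$ explicitly (defined casewise according to the values of $f$ and the sign of $p-q$), verifies the identity $|f-l^*|=\mathbbm{1}_{\{p\geq q\}}$, and then argues directly that no other $l$ can achieve a larger value before computing the value itself from $\int p=\int q=1$. Your reparametrization — the observation that $l\mapsto|f-l|$ is a bijection of the measurable $[0,1]$-valued functions onto themselves — replaces that comparison step entirely: the problem becomes $\max_{m:[0,1]^d\to[0,1]}\left|\int m\,(q-p)\right|$, a linear functional over the convex set of $[0,1]$-valued functions, whose extremizers are indicators, and the factor $\tfrac12$ then falls out of $\int(q-p)=0$ exactly as you say (equivalently, $h=2m-1$ maps the problem onto the middle expression of Theorem~\ref{thm:TV_distance}). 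What your route buys is that the optimality verification becomes trivial; this is worth having, because that verification is the delicate part of the paper's write-up — its displayed chain of intermediate inequalities passes through terms proportional to $\int(p-q)$, which is zero, so the chain is not valid as printed and must be repaired by splitting the integral over $A=\{p\geq q\}$ and its complement. Your $l^*$, recovered from the optimal $m$ through the bijection, coincides with the paper's casewise construction, so the two proofs agree on substance while yours is the more economical and self-contained presentation.
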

Lemma~\ref{lemma:motivation} shows that the $L^1$-difference between the densities $p$ and $q$ has to be small to obtain absolute non-probabilistic bounds on the misclassification risk.
Assume the $L^1$-difference is not small, then there exists a labeling function $l_p=l_q=l$ such that the source risk $\int \left|f-l_p\right| p$ is not a good indicator for the target risk $\int \left|f-l_q\right| q$.
Consequently, to achieve our goal, a small difference between the moments has to imply a small $L^1$-difference.

However, two densities with only finitely many moments in common can be far \wrt~the Kolmogorov metric~\cite{lindsay2000moments}, and consequently can have a large $L^1$-difference.

\subsection{Convergence of High-Entropy Distributions}
\label{subsec:high_entropy_distr}

According to Subsection~\ref{subsec:from_moment_convergence_to_l1} additional assumptions on the densities are required for the existence of a solution to Problem~\ref{problem:moment_based_da_for_binary_classification}.
Therefore, we introduce a notion of $\epsilon$-close maximum entropy densities.
\needspace{10\baselineskip}
\begin{restatable}[$\epsilon$-Close Maximum Entropy Density]{defrep}{epsilonclosemaxentdensity}%
	\label{def:epsilon_close_max_ent_density}%
	Let $\epsilon\geq 0$ and $\phim=(\phi_1,\ldots,\phi_{\psi(m,d)})^\text{T}$ be some vector such that $1,\phi_1,\ldots,\phi_{\psi(m,d)}$ is a basis of the space $\mathbb{R}_m[x_1,\ldots,x_d]$ of polynomials with maximum total degree $m$. Then we call $p\in\Ms$ an {\it $\epsilon$-close maximum entropy density} iff
	\begin{align}
		\label{eq:high_entropy_density}
		h_{\phim}(p)-h(p)\leq\epsilon.
	\end{align}
\end{restatable}%
Recall from Section~\ref{sec:maximum_entropy_distribution} that $h_{\phim}(p)=h(p^*)$ with unique maximum entropy density $p^*$ satisfying the moment constraint $\int\phim p^*=\int\phim p$.
Let us also recall the definition of $\psi(m,d)$ of being the number of monomials of maximum total degree $m$ in $d$ variables, excluding the monomial $1$ of degree $0$.
It is given by $\psi(m,d)=\sum_{i=1}^m \zeta(i,d)=\binom{d+m}{m}-1$, where $\zeta(m,d)$ denotes the number of monomials of total degree $m$ in $d$ variables which is equal to the number of weak compositions and therefore $\zeta(m,d)=\binom{d+m-1}{m}$.

For some small $\epsilon$, by Lemma~\ref{lemma:properties_of_maxent} and Eq.~\eqref{eq:relationship:Pinsker}, an $\epsilon$-close maximum entropy density $p$ fulfills $\norm{p-p^*}_{L^1}\leq \sqrt{2\epsilon}$ and can therefore be interpreted as being well approximable by its corresponding maximum entropy density $p^*$.

In the language of Bayesian inference the term $d_\mathrm{KL}(p,p^*)=h_{\phim}(p)-h(p)$ measures the information gained when one revises one's beliefs from the prior probability density $p^*$ to the posterior probability density $p$.
In this sense, the amount of information lost when using the moments $\int\phim p$ instead of the density $p$ is at most $\epsilon$ for $\epsilon$-close maximum entropy densities.

Note that we allow $\epsilon$ to be zero to include maximum entropy densities $p=p^*$ in our discussions.
The following Lemma~\ref{lemma:L1_convergence_in_M}, see Subsection~\ref{subsec:class} for its proof, motivates to consider $\epsilon$-close maximum entropy densities for tackling Problem~\ref{problem:moment_based_da_for_binary_classification}.

\begin{restatable}{lemmarep}{Lone}%
	\label{lemma:L1_convergence_in_M}%
	Let $\epsilon\geq 0$, let $\phim=(\phi_1,\ldots,\phi_{\psi(m,d)})^\text{T}$ be some vector such that $1,\phi_1,\ldots,\phi_{\psi(m,d)}$ is a basis of $\mathbb{R}_m[x_1,\ldots,x_d]$ and let $p_n\in\Ms$ for $n\in\{1,\ldots,\infty\}$ be $\epsilon$-close maximum entropy densities with moments denoted by ${\boldsymbol\mu_n=\int \phim p_n}$.
	Then the following holds: 
	\begin{gather}
		\label{eq:convergence_in_M}
		\lim_{n\to\infty}\norm{\boldsymbol{\mu}_n-\boldsymbol{\mu}_\infty}_1=0
		\quad
		\implies
		\quad
		\limsup_{n\to\infty} \norm{p_n-p_\infty}_{L^1}\leq \sqrt{8 \epsilon}.
	\end{gather}
\end{restatable}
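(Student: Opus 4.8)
The plan is to compare each $p_n$ with its maximum entropy projection $p_n^*$ (the unique element of $\mathcal{E}_{\phim}$ sharing the moments $\boldsymbol{\mu}_n$, guaranteed by Theorem~\ref{thm:uniqueness_and_existence_of_maxent}) and to insert these projections into a triangle inequality. First I would convert the $\epsilon$-closeness into an $L^1$ bound: Property~2 of Lemma~\ref{lemma:properties_of_maxent} gives $d_{\mathrm{KL}}(p_n,p_n^*)=h_{\phim}(p_n)-h(p_n)\leq\epsilon$ for every $n\in\{1,\dots,\infty\}$, and combining Pinsker's inequality~\eqref{eq:relationship:Pinsker} with the identity $d_\mathrm{TV}=\tfrac12\norm{\cdot}_{L^1}$ from Theorem~\ref{thm:TV_distance} yields $\norm{p_n-p_n^*}_{L^1}\leq\sqrt{2\epsilon}$, uniformly in $n$.

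Then the triangle inequality gives, for every finite $n$,
\begin{align*}
	\norm{p_n-p_\infty}_{L^1}&\leq \norm{p_n-p_n^*}_{L^1}+\norm{p_n^*-p_\infty^*}_{L^1}+\norm{p_\infty^*-p_\infty}_{L^1}\\
	&\leq 2\sqrt{2\epsilon}+\norm{p_n^*-p_\infty^*}_{L^1}.
\end{align*}
Since $2\sqrt{2\epsilon}=\sqrt{8\epsilon}$, the whole claim reduces to showing $\norm{p_n^*-p_\infty^*}_{L^1}\to 0$, i.e.\ that convergence of the moments forces the associated maximum entropy densities to converge in $L^1$.

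For this last step I would exploit the exponential-family form $p_n^*(\x)=c(\boldsymbol{\lambda}_n)\exp(-\langle\boldsymbol{\lambda}_n,\phim(\x)\rangle)$ from Definition~\ref{def:poly_exp_family}. The moment map $\boldsymbol{\lambda}\mapsto\int\phim p^*_{\boldsymbol{\lambda}}$ equals $-\nabla A$, where $A(\boldsymbol{\lambda})=-\log c(\boldsymbol{\lambda})$ is the cumulant function; since the Hessian of $A$ is the covariance matrix of $\phim$ under $p^*_{\boldsymbol{\lambda}}$, which is positive definite because $1,\phi_1,\dots,\phi_{\psi(m,d)}$ are linearly independent on $[0,1]^d$, the function $A$ is strictly convex and the moment map is a diffeomorphism onto the interior of the moment space. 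As the $p_n$ are Lebesgue densities, their moment vectors $\boldsymbol{\mu}_n,\boldsymbol{\mu}_\infty$ lie in that interior (cf.\ the proof of Theorem~\ref{thm:uniqueness_and_existence_of_maxent}), so $\boldsymbol{\mu}_n\to\boldsymbol{\mu}_\infty$ implies $\boldsymbol{\lambda}_n\to\boldsymbol{\lambda}_\infty$. This gives pointwise convergence $p_n^*\to p_\infty^*$, and because all $p_n^*$ are probability densities on the unit cube, Scheff\'e's lemma upgrades pointwise to $L^1$ convergence; passing to the $\limsup$ then yields the asserted bound $\sqrt{8\epsilon}$.

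I expect the continuity of the inverse moment map to be the main obstacle: one must rule out that the natural parameters $\boldsymbol{\lambda}_n$ escape to infinity while the moments stay convergent. A self-contained route is by contradiction: extract a subsequence with $\norm{p_{n_k}^*-p_\infty^*}_{L^1}\geq\eta>0$ and show the corresponding $\boldsymbol{\lambda}_{n_k}$ are bounded; a convergent sub-subsequence then produces a limiting density in $\mathcal{E}_{\phim}$ with moments $\boldsymbol{\mu}_\infty$, which by the uniqueness in Lemma~\ref{lemma:uniqueness_of_maximum_entropy} must equal $p_\infty^*$, contradicting $\eta>0$. The boundedness of $\boldsymbol{\lambda}_{n_k}$ is exactly the point where the hypothesis that $\boldsymbol{\mu}_\infty$ lies in the interior of the moment space is indispensable, since a blow-up of the parameters corresponds to the moments drifting to the boundary.
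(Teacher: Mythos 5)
Your proposal is correct and follows essentially the same route as the paper: the paper's Lemma~\ref{lemma:basic_inequality} is exactly your triangle-inequality-plus-Pinsker decomposition producing the $\sqrt{8\epsilon}$ term, and its Lemma~\ref{lemma:convergence_in_M} establishes $d_\mathrm{KL}(p_n^*,p_\infty^*)\to 0$ from moment convergence using the same positive-definite covariance Hessian of the cumulant function together with the Implicit Function Theorem, which is precisely your argument that the moment map is a local diffeomorphism so that $\boldsymbol{\lambda}_n\to\boldsymbol{\lambda}_\infty$. The only cosmetic difference is the last conversion step: the paper bounds $d_\mathrm{KL}(p_n^*,p_\infty^*)$ explicitly by $\left|\log c(\boldsymbol{\lambda}_n)-\log c(\boldsymbol{\lambda}_\infty)\right| + \norm{\boldsymbol{\lambda}_n-\boldsymbol{\lambda}_\infty}_1$ and applies Pinsker once more, whereas you pass through pointwise convergence of $p_n^*$ and Scheff\'e's lemma.
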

According to Theorem~\ref{thm:lone_domain_adaptation_bound} a small misclassification risk in Eq.~\eqref{eq:misclassification_risk_moment_based_domain_adaptation} is implied by a small source risk $\int\left|f-l_p\right| p$, a small $L^1$-difference between the densities and a small $\lambda^*$.
According to Lemma~\ref{lemma:L1_convergence_in_M} this is the case if $p,q\in\Ms$ are $\epsilon$-close maximum entropy densities and if the moment vectors $\int \phim p$ and $\int\phim q$ are similar.
Unfortunately, the convergence in Eq.~\eqref{eq:convergence_in_M} can be very slow for sequences in $\Ms$ which is shown by the following example.
\\
\begin{example}
	\label{ex:truncated_normal}
	Consider the vector $\boldsymbol{\phi}_2=(x,x^2)^\text{T}\in\mathbb{R}_2[x]$ and two one-dimensional truncated Normal distributions with densities $p,q\in\mathcal{M}([0,1])$ with equal variance but different means.
	These distributions are maximum entropy distributions constrained at the moments $\int \boldsymbol{\phi}_2 p$ and $\int\boldsymbol{\phi}_2 q$ and therefore satisfy Eq.~\eqref{eq:high_entropy_density} with $\epsilon=0$.
	It holds that for every moment difference $\norm{\int \phi_2 p-\int\phi_2 q}_1$ one can always find a small enough variance such that $\norm{p-q}_{L^1}$ is large.
\end{example}

Example~\ref{ex:truncated_normal} shows that additional properties besides Eq.~\eqref{eq:high_entropy_density} are required to obtain fast convergence rates for sequences in $\Ms$.

\subsection{Convergence of Smooth High-Entropy Distributions}
\label{subsec:smooth_high_entropy_distr}

In this subsection we introduce additional smoothness conditions
motivated by approximation results of exponential families~\cite{barron1991approximation} and Legendre polynomials~\cite{cox1988approximation}.
More precisely, we consider the following set of densities.
\Needspace{10\baselineskip}
\begin{restatable}[Smooth High-Entropy Densities]{defrep}{Hmeps}%
	\label{def:H}%
	Let $\epsilon\geq 0$, $m\in\mathbb{N}$, $m\geq 2$ and $\phim=(\phi_1,\ldots,\phi_{m d})^\text{T}$ be a vector of polynomials such that $1,\phi_1,\ldots,\phi_{m d}$ is an orthonormal basis of $\mathrm{Span}(\mathbb{R}_m[x_1]\cup\ldots\cup\mathbb{R}_m[x_d])$.
	We call ${p\in\Ms}$ a \textit{smooth high-entropy density} iff the following conditions are satisfied:
	\begin{enumerate}
		\item[(A1)] $h_{\phim}(p)-h(p)\leq \epsilon$
		\item[(A2)] $\norm{\log p}_\infty \leq \frac{3m-6}{2}$
		\item[(A3)] $\log p_i\in W_2^m\quad\forall i\in\{1,\ldots,d\}$
		\item[(A4)] $\norm{\partial^m_{x_i} \log p_i}_{L^2}\leq 5^{m-4}\quad\forall i\in\{1,\ldots,d\}$
	\end{enumerate}
	where $p_i=\int_0^1\cdots\int_0^1 p(x_1,\ldots,x_d)\diff x_1\cdots \diff x_{i-1} \diff x_{i+1}\cdots \diff x_d$ denote the marginal densities of $p$.
	We denote the set of all smooth high-entropy densities by $\mathcal{H}_{m,\epsilon}$.
\end{restatable}%
The set $\H_{m,\epsilon}$ in Definition~\ref{def:H} contains multivariate probability densities $p$ with loosely coupled marginals.
The reason is the specification of the polynomial vector $\phim$ resulting in maximum entropy densities $p^*$ of densities $p\in\Ms$ with independent marginals as shown by Lemma~\ref{lemma:independence}.
One advantage of this simplification is that no combinatorial explosion has to be taken into account.
We will show in Chapter~\ref{chap:cmd_algorithm} that such moment vectors are sufficient in many practical tasks.
Distributions with loosely coupled marginals are created by many learning algorithms~\cite{comon1994independent,hyvarinen2001topographic,bach2002kernel}.

Note that the present analysis can be extended to general multi-dimensional polynomial vectors by the usual product basis functions for polynomials.
However, the use of such expansions is precluded by an exponential growth of the number of moments with the dimension $d$ and the consideration of additional smoothness constraints, see also~\cite{barron1991approximation}.

The definition of the set $\mathcal{H}_{m,\epsilon}$ is independent of the choice of the orthonormal basis $1,\phi_1,\ldots,\phi_{m d}$.
This follows from properties of the information projection~\cite{barron1991approximation}.

Assumptions (A3) and (A4) restrict the smoothness of the densities.
The upper bound on the $L^2$-norm, and also the one in (A2), can be enlarged at the cost of more complicated dependencies on the shape of the log-density functions as shown in Subsection~\ref{subsec:proof_convergence_of_smooth_high_entropy_densities}.
It is interesting to observe that, when a density is bounded away from zero, assumptions on the log-densities are not too different from the assumptions on derivatives of the densities itself, see \eg~Remark~2 in~\cite{barron1991approximation}.

The set $\H_{m,\epsilon}$ contains densities that are well approximable in KL-divergence by exponential families:
For each $\epsilon>0$ and each density $p\in\Ms$ satisfying (A2) and (A3), there exists a number of moments $m$ such that $\min_{q\in\mathcal{E}_{\phim}} d_\mathrm{KL}(p, q)\leq \epsilon$ for the exponential family $\mathcal{E}_{\phim}$.
This follows from the fact that $h_{\phim}(p)\to h(p)$ for $m\to\infty$ as shown in Lemma~\ref{lemma:properties_of_maxent}.

The following Theorem~\ref{thm:bound_for_smooth_functions} gives an uniform bound for the $L^1$-norm of the difference of densities in $\mathcal{H}_{m,\epsilon}$ in terms of differences of moments.
See Subsection~\ref{subsec:proof_convergence_of_smooth_high_entropy_densities} for its proof.

\begin{restatable}{thmrep}{convergence}%
	\label{thm:bound_for_smooth_functions}%
	Consider some $m$, $ \epsilon$, $\phim$ and $\mathcal{H}_{m,\epsilon}$ as in Definition~\ref{def:H} and let $p,q\in\mathcal{H}_{m,\epsilon}$ with moments denoted by $\boldsymbol{\mu}_p=\int\phim p$ and $\boldsymbol{\mu}_q=\int \phim q$. Then the following holds:
	\begin{gather*}
		\norm{\boldsymbol{\mu}_p-\boldsymbol{\mu}_q}_1 \leq \frac{1}{2 C \left(m+1\right)}
		\quad\implies\quad
		\norm{p-q}_{L^1} \leq \sqrt{2 C}\cdot \norm{\boldsymbol{\mu}_p-\boldsymbol{\mu}_q}_1 + \sqrt{8 \epsilon}
	\end{gather*}
	with the constant $C=2 e^{(3m-1)/2}$.
\end{restatable}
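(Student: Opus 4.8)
The plan is to split the target bound into two contributions: a term $\sqrt{8\epsilon}$ coming from the closeness of $p$ and $q$ to their maximum entropy projections $p^*$ and $q^*$, and a term $\sqrt{2C}\,\norm{\boldsymbol{\mu}_p-\boldsymbol{\mu}_q}_1$ coming from the distance between $p^*$ and $q^*$ themselves. For the first contribution, assumption~(A1) together with property~2 of Lemma~\ref{lemma:properties_of_maxent} gives $d_\mathrm{KL}(p,p^*)\leq\epsilon$ and $d_\mathrm{KL}(q,q^*)\leq\epsilon$; applying Pinsker's inequality in the form of Eq.~\eqref{eq:relationship:Pinsker} and the identity of Theorem~\ref{thm:TV_distance} yields $\norm{p-p^*}_{L^1}\leq\sqrt{2\epsilon}$ and $\norm{q-q^*}_{L^1}\leq\sqrt{2\epsilon}$. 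The triangle inequality then reduces the task to proving $\norm{p^*-q^*}_{L^1}\leq\sqrt{2C}\,\norm{\boldsymbol{\mu}_p-\boldsymbol{\mu}_q}_1$, the two $\sqrt{2\epsilon}$ terms summing to $\sqrt{8\epsilon}$.

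For the second contribution I would again pass through the KL-divergence: by Pinsker and Theorem~\ref{thm:TV_distance} it suffices to show $d_\mathrm{KL}(q^*,p^*)\leq C\,\norm{\boldsymbol{\mu}_p-\boldsymbol{\mu}_q}_1^2$. Because $\phim$ spans only polynomials with univariate terms, the maximum entropy densities $p^*$ and $q^*$ factor into independent marginals (Lemma~\ref{lemma:independence}), so both the KL-divergence and the moment vector decompose coordinatewise; it is then enough to bound each one-dimensional $d_\mathrm{KL}(q_i^*,p_i^*)$ and sum, using that the marginal $\ell^1$-norms add and $\sum_i a_i^2\leq(\sum_i a_i)^2$. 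On each marginal I would invoke the Barron--Sheu estimate of Lemma~\ref{lemma:barron_and_sheu_lemma5} with the uniform density as reference, so that the basis is orthonormal as required by Definition~\ref{def:H}, $\norm{\log q}_\infty=0$, and $A_q=m+1$ by Lemma~\ref{lemma:barron_A_bound}, and taking $p_0$ to be the marginal of $q$ so that $p_0^*=q_i^*$. With $t=1$ this gives $d_\mathrm{KL}(q_i^*,p_i^*)\leq 2e\,b_i\,\norm{\boldsymbol{\mu}_{p,i}-\boldsymbol{\mu}_{q,i}}_2^2$, where $b_i=e^{\norm{\log q_i^*}_\infty}$.

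The crux is to show $b_i\leq e^{(3m-3)/2}$ uniformly, since then $2e\,b_i\leq 2e^{(3m-1)/2}=C$ and summing the marginal bounds produces exactly the claimed constant. I would estimate $\norm{\log q_i^*}_\infty\leq\norm{\log q_i}_\infty+\norm{\log(q_i/q_i^*)}_\infty$. The first summand is at most $(3m-6)/2$, because integrating the joint bound~(A2) over the remaining coordinates of the unit cube preserves the pointwise bounds $e^{-(3m-6)/2}\leq q_i\leq e^{(3m-6)/2}$. The second summand I would control by Corollary~\ref{lemma:barron_proof_thm3}, whose approximation errors $\gamma$ and $\xi$ for $\log q_i\in W_2^m$ (assumption~(A3)) are made small through Cox's bounds of Lemma~\ref{lemma:cox} with $r=m$, fed by the derivative bound $\norm{\partial^m_{x_i}\log q_i}_{L^2}\leq 5^{m-4}$ of~(A4) and $A_p=m+1$ from Lemma~\ref{lemma:barron_A_bound}; the numerical constants in~(A2) and~(A4) are calibrated precisely so that this residual is at most $3/2$, yielding the exponent $(3m-3)/2$. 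Finally I would check that the hypothesis $\norm{\boldsymbol{\mu}_p-\boldsymbol{\mu}_q}_1\leq\frac{1}{2C(m+1)}$ forces each marginal increment below $\frac{1}{4A_q e b_i}$, which both guarantees existence of the maximum entropy densities and legitimizes the choice $t=1$ in Lemma~\ref{lemma:barron_and_sheu_lemma5}. The main obstacle is exactly this constant bookkeeping --- tracking $\gamma$, $\xi$, $A_p$ and the marginalized $L^\infty$ bound through Corollary~\ref{lemma:barron_proof_thm3} so that $b_i$ lands below $e^{(3m-3)/2}$ --- rather than any conceptual difficulty.
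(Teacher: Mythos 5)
Your proposal follows essentially the same route as the paper's proof: the split into a $\sqrt{8\epsilon}$ term plus a maximum-entropy term is Lemma~\ref{lemma:basic_inequality}, the coordinatewise factorization is Lemma~\ref{lemma:independence}, and the per-marginal Barron--Sheu application with uniform reference density, the triangle-inequality bound on $\norm{\log q_i^*}_\infty$, and the calibration showing the residual is at most $3/2$ (hence $b_i\leq e^{(3m-3)/2}$ and $2e\,b_i\leq C$) are exactly the content of Lemma~\ref{lemma:convergence_in_H} and Lemma~\ref{lemma:simplification}; swapping the roles of $p$ and $q$ is immaterial by symmetry. Your explicit observation that the marginal bound $\norm{\log q_i}_\infty\leq (3m-6)/2$ follows from (A2) on the joint density by integrating over the remaining coordinates of the unit cube is a detail the paper leaves implicit, and it is correct.

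One constant is misquoted, and as literally written that step would not go through: Corollary~\ref{lemma:barron_proof_thm3} requires a constant $A_p$ satisfying the \emph{density-weighted} inequality $\norm{f_m}_\infty\leq A_p\norm{f_m}_{L^2(q_i)}$, so $A_p=m+1$ from Lemma~\ref{lemma:barron_A_bound} (which concerns the Lebesgue $L^2$-norm) is not admissible here; you need $A_p=(m+1)e^{\norm{\log q_i}_\infty/2}$ from Lemma~\ref{lemma:Ap_bound}, and correspondingly the approximation error $\xi$ entering the corollary must be measured in $L^2(q_i)$, which exceeds Cox's Lebesgue-norm bound (Lemma~\ref{lemma:cox}) by a further factor $e^{\norm{\log q_i}_\infty/2}$. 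This is precisely how the paper sets up $A_{\tilde p}$ and $\xi$ in Lemma~\ref{lemma:convergence_in_H}, and its Lemma~\ref{lemma:simplification} verifies that with these weighted constants the calibration still gives $4e^{4\gamma+1}e^{c_\infty/2}(m+1)\xi\leq 1$ and residual $2\gamma+4e^{4\gamma+1}\xi e^{c_\infty/2}(m+1)\leq 3/2$ under (A2)--(A4). So the endpoint you claim is right, but only after replacing $m+1$ by the weighted constant; with that repair your proposal is complete and coincides with the paper's argument.
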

Theorem~\ref{thm:bound_for_smooth_functions} relates the $\ell_1$-distance between moments to other probability metrics as illustrated in Figure~\ref{fig:metrics_with_l1_bond}.
It can be seen that the $\ell_1$-distance implements a weaker convergence than most other commonly applied probability metrics.
However, under the assumptions (A1)--(A4) stated in Definition~\ref{def:H} and small $\epsilon$, stronger convergence properties are implemented.

\begin{figure}[t]
	\centering
	\includegraphics[width=\linewidth]{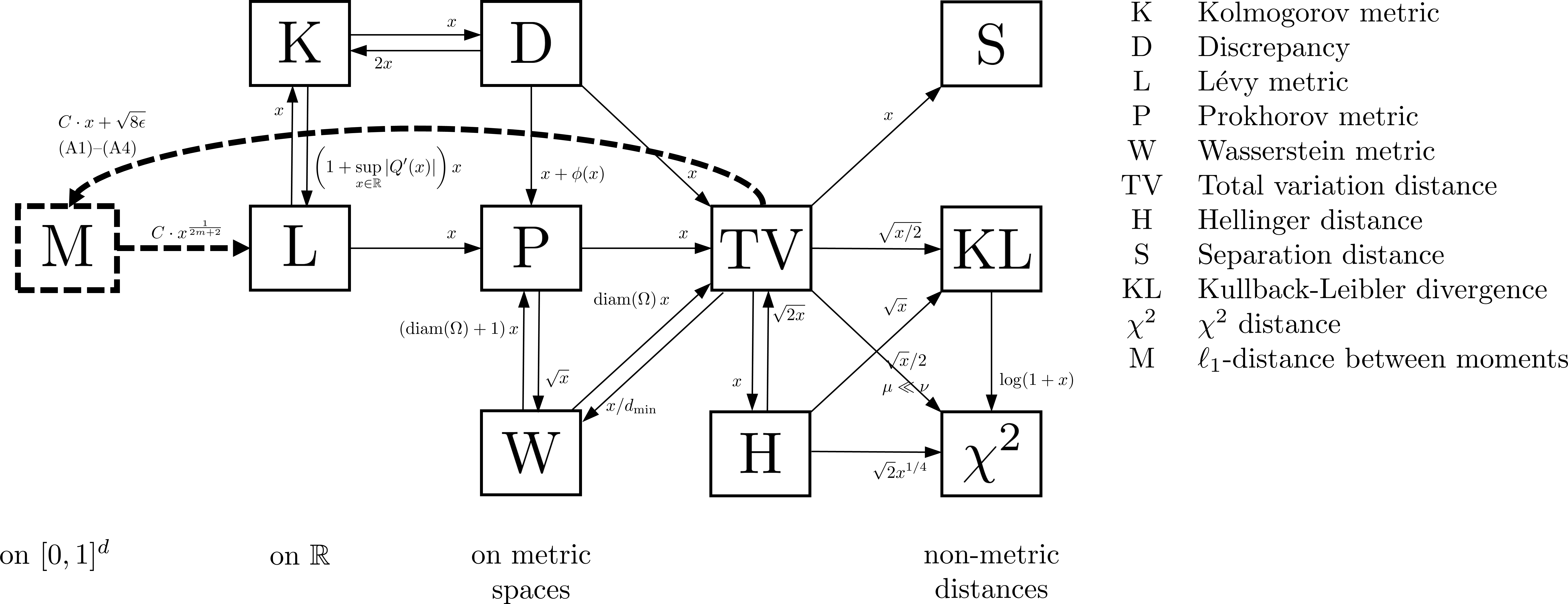}
	\caption[Relationships among probability metrics as illustrated in~\cite{gibbs2002choosing} and supplemented by Lemma~\ref{lemma:moment_distance_bound_by_levy} and Theorem~\ref{thm:bound_for_smooth_functions}.]{Relationships among probability metrics as illustrated in~\cite{gibbs2002choosing} and supplemented by Lemma~\ref{lemma:moment_distance_bound_by_levy} and Theorem~\ref{thm:bound_for_smooth_functions} (dashed).
		A directed arrow from $\text{A}$ to $\text{B}$ annotated by a function $h(x)$ means that $d_\text{A}\leq h(d_\text{B})$.
		For notations, restrictions and applicability see Section~\ref{sec:probability_metrics}.
	}
	\label{fig:metrics_with_l1_bond}
\end{figure}

The more moments we consider in Theorem~\ref{thm:bound_for_smooth_functions}, i.e. the higher $m$ is, the richer is the class $\mathcal{H}_{m,\epsilon}$.
However, with increasing $m$ the constant $C$ also increases.
This constant depends exponentially on $m$ which is induced by the definition of the upper bounds on the norms of the derivatives in the Definition~\ref{def:H}.

However, it is interesting to consider more general upper bounds $c_\infty\geq\norm{\log p}_\infty$ and $c_r\geq \norm{\partial^m_{x_i}\log p_i}_{L^2}$ instead.
This leads to the constant $C$ as in Lemma~\ref{lemma:convergence_in_H}, used to prove Theorem~\ref{thm:bound_for_smooth_functions}, which depends double exponentially on the upper bounds $c_\infty$ and $c_r$.
However, the double exponential dependency weakens when considering higher numbers $r$ of derivatives or numbers $m$ of moments as we discuss in Remark~\ref{remark:dependency}.
Thus, the main influence is an exponential dependency on the upper log-density bound $c_\infty$.

The considered dimension $d$ of the unit cube effects the number of moment differences considered in the $\ell^1$-norms in Theorem~\ref{thm:bound_for_smooth_functions}.
By the specification of the vector $\phim$, this number increases only linearly with the dimension.

Theorem~\ref{thm:bound_for_smooth_functions} together with Theorem~\ref{thm:lone_domain_adaptation_bound} give a first result towards identifying a solution of Problem~\ref{problem:moment_based_da_for_binary_classification}: An upper bound on the misclassification risk of the discriminative model based on differences of moments:
\begin{restatable}{correp}{corabsbound}%
	\label{cor:bound_for_smooth_functions}%
	Let $m$, $\epsilon$ and $\phim$ be as in Definition~\ref{def:H}. Let further $p,q\in\mathcal{H}_{m,\epsilon}$ with moments denoted by ${\boldsymbol{\mu}_p=\int\phim p}, {\boldsymbol{\mu}_q=\int \phim q}$, respectively, let ${l_p,l_q:[0,1]^d\to [0,1]}$ be two integrable labeling functions and $\mathcal{F}\subset\left\{f:[0,1]^d\to\{0,1\}\mid f~\text{integrable}\right\}$ be a set of binary classifiers.
	Then the following holds for all $f\in\mathcal{F}$:
	\begin{gather*}
		\norm{\boldsymbol{\mu}_p-\boldsymbol{\mu}_q}_1 \leq \frac{1}{2 C \left(m+1\right)}\\
		\quad\implies\quad\\
		\int \left|l-l_q\right| q \leq \int \left|l-l_p\right| p + \sqrt{2 C}\cdot \norm{\boldsymbol{\mu}_p-\boldsymbol{\mu}_q}_1 + \sqrt{8 \epsilon} + \lambda^*
	\end{gather*}
	with $C=2 e^{(3m-1)/2}$ and $\lambda^* = \inf_{h\in\mathcal{F}}\big(\int \left|h-l_p\right| p+\int \left|h-l_q\right| q\big)$.
\end{restatable}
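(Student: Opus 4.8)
The plan is to recognize that the claimed inequality is nothing more than the composition of two results already at our disposal: the domain-adaptation bound of Theorem~\ref{thm:lone_domain_adaptation_bound} and the moment-based control of the $L^1$-distance furnished by Theorem~\ref{thm:bound_for_smooth_functions}. Since $p,q\in\mathcal{H}_{m,\epsilon}\subseteq\mathcal{M}\!\left([0,1]^d\right)$, both densities are admissible inputs for each theorem, and the integrable labeling functions $l_p,l_q$ together with the classifier class $\mathcal{F}$ satisfy exactly the hypotheses of Theorem~\ref{thm:lone_domain_adaptation_bound}. Hence no new regularity or measurability facts need to be checked; the argument is a two-step chaining.

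First I would invoke Theorem~\ref{thm:lone_domain_adaptation_bound} for the source domain $(p,l_p)$ and the target domain $(q,l_q)$, which gives, for every $f\in\mathcal{F}$,
\begin{align*}
	\int \left|f-l_q\right| q \leq \int \left|f-l_p\right| p + \norm{p-q}_{L^1} + \lambda^*,
\end{align*}
where $\lambda^*=\inf_{h\in\mathcal{F}}\big(\int\left|h-l_p\right| p+\int\left|h-l_q\right| q\big)$ is precisely the quantity appearing in the corollary.

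Next, under the standing hypothesis $\norm{\boldsymbol{\mu}_p-\boldsymbol{\mu}_q}_1\leq\frac{1}{2C(m+1)}$, I would apply Theorem~\ref{thm:bound_for_smooth_functions} to the pair $p,q\in\mathcal{H}_{m,\epsilon}$, whose precondition is verbatim this same smallness assumption, to obtain
\begin{align*}
	\norm{p-q}_{L^1}\leq\sqrt{2 C}\cdot\norm{\boldsymbol{\mu}_p-\boldsymbol{\mu}_q}_1+\sqrt{8 \epsilon}
\end{align*}
with the identical constant $C=2 e^{(3m-1)/2}$. Substituting this upper bound for $\norm{p-q}_{L^1}$ into the previous display yields exactly the asserted inequality.

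There is essentially no hard step: the statement is a genuine corollary, and the entire content is the substitution above. The only point deserving a moment of care is the bookkeeping—verifying that the moment-smallness condition required by Theorem~\ref{thm:bound_for_smooth_functions} coincides with the hypothesis of the corollary, that the constant $C$ is the same in both places, and that the $\lambda^*$ produced by Theorem~\ref{thm:lone_domain_adaptation_bound} matches the one in the statement. All three coincide, so the chaining is immediate and the proof concludes.
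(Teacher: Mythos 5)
Your proposal is correct and is exactly the paper's argument: the paper presents this corollary as the immediate combination of Theorem~\ref{thm:lone_domain_adaptation_bound} (to bound the target risk by the source risk plus $\norm{p-q}_{L^1}$ plus $\lambda^*$) with Theorem~\ref{thm:bound_for_smooth_functions} (to replace $\norm{p-q}_{L^1}$ by $\sqrt{2C}\,\norm{\boldsymbol{\mu}_p-\boldsymbol{\mu}_q}_1+\sqrt{8\epsilon}$ under the same moment-smallness hypothesis and the same constant $C$). Your bookkeeping checks match what the paper leaves implicit, so nothing is missing.
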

Corollary~\ref{cor:bound_for_smooth_functions} gives an error bound on the target error that is relative to the error $\lambda^*$ of some optimal discriminative function.
This is similar to the assumption in \textit{probably approximately correct learning theory} that there exists a perfect discriminative model in the underlying model class~\cite{shalev2014understanding}.
The error $\lambda^*$ can be eliminated in the case of equal labeling functions, i.e. $l_p=l_q$, by using the bound of Theorem~1 in~\cite{ben2010theory} instead of Theorem~\ref{thm:lone_domain_adaptation_bound}.

Further implications of Corollary~\ref{cor:bound_for_smooth_functions} are discussed in more detail in Section~\ref{sec:main_result_on_learning_bounds} together with the sample case.

\needspace{10\baselineskip}
\section{A Learning Bound for Moment-Based Domain Adaptation}
\label{sec:main_result_on_learning_bounds}
\needspace{15\baselineskip}
\begin{restatable}{thmrep}{mainresult}%
	\label{thm:problem_solution}%
	Consider some $m$, $\epsilon$, $\phim$ and $\mathcal{H}_{m,\epsilon}$ as in Definition~\ref{def:H} and a function class $\mathcal{F}$ with finite VC-dimension $\vc$.
	Consider two probability densities $p,q\in\mathcal{H}_{m,\epsilon}$ and two integrable labeling functions ${l_p,l_q:[0,1]^d\to [0,1]}$.
	
	Let $X_p$ and $X_q$ be two $k$-sized samples drawn from $p$ and $q$, respectively, and denote by $\widehat{\boldsymbol{\mu}}_p=\frac{1}{k}\sum_{\x\in X_p}\boldsymbol{\phi}_m(\x)$ and $\widehat{\boldsymbol{\mu}}_q=\frac{1}{k}\sum_{\x\in X_q}\boldsymbol{\phi}_m(\x)$ corresponding sample moments.
	
	Then, for every $\delta\in (0,1)$ and all $f\in\mathcal{F}$, the following holds with probability at least $1-\delta$ over the choice of samples:
	If
	\begin{flalign}
		%		\label{eq:large_enough_sample_size}
		4 C^2(m+1)^2 m \delta^{-1} \leq k
	\end{flalign}
	and
	\begin{flalign}
		%		\label{eq:small_enough_moment_diff}
		\norm{\widehat{\boldsymbol{\mu}}_p-\widehat{\boldsymbol{\mu}}_q}_1 \leq \left(2 (m+1) e C\right)^{-1}
	\end{flalign}
	then
	\begin{align}
		\label{eq:moment_adapt_result_bound}
		\begin{split}
			\int\left|f-l_q\right| q\leq\, &\frac{1}{k}\sum_{\x\in X_p}\left|f(\x)-l_p(\x)\right| + 
			\sqrt{\frac{4}{k} \left( \vc\log \frac{2 e k}{\vc} + \log\frac{4}{\delta} \right)}+ \lambda^*\\
			&+ \sqrt{2 e C} \norm{\widehat{\boldsymbol{\mu}}_p-\widehat{\boldsymbol{\mu}}_q}_1 + \sqrt{8 C} \sqrt{\frac{d m}{k\delta}} + \sqrt{8\epsilon}
		\end{split}
		%		\label{eq:vapnik_bound}
	\end{align}
	where $C=2 e^{(3m-1)/2}$ and $\lambda^* = \inf_{h\in\mathcal{F}}\big(\int\left|f-l_p\right| p+\int\left|f-l_q\right| q\big)$.
\end{restatable}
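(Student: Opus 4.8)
The plan is to chain three deterministic estimates with one concentration bound, allocating confidence $\delta/2$ to the statistical-learning part and $\delta/2$ to the moment-concentration part. First I would apply Theorem~\ref{thm:lone_domain_adaptation_bound} to obtain, for every $f\in\mathcal{F}$,
\begin{align*}
	\int\left|f-l_q\right| q \le \int\left|f-l_p\right| p + \norm{p-q}_{L^1} + \lambda^*.
\end{align*}
The empirical-source term is then produced by the learning bound of Theorem~\ref{thm:vc_bound}, invoked at confidence $1-\delta/2$, which replaces $\int\left|f-l_p\right| p$ by the empirical source risk plus the stated VC term in $\sqrt{\tfrac{1}{k}(\vc\log\tfrac{2ek}{\vc}+\log\tfrac{4}{\delta})}$. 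All of the remaining work sits in bounding $\norm{p-q}_{L^1}$ by the \emph{sample} moment gap $\norm{\widehat{\boldsymbol{\mu}}_p-\widehat{\boldsymbol{\mu}}_q}_1$ rather than by the unknown true moments.

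For this I would introduce the maximum-entropy densities $\widehat p,\widehat q\in\mathcal{E}_{\phim}$ matching the sample moments $\widehat{\boldsymbol{\mu}}_p,\widehat{\boldsymbol{\mu}}_q$ — whose existence under the two side hypotheses is exactly what Corollary~\ref{cor:barron_sample} guarantees — and split
\begin{align*}
	\norm{p-q}_{L^1}\le \norm{p-\widehat p}_{L^1}+\norm{\widehat p-\widehat q}_{L^1}+\norm{\widehat q-q}_{L^1}.
\end{align*}
The middle term is handled deterministically: since $\widehat p,\widehat q$ are themselves maximum-entropy densities (so they satisfy assumption~(A1) of Definition~\ref{def:H} with $\epsilon=0$), the estimate of Theorem~\ref{thm:bound_for_smooth_functions} applies to them, and because Corollary~\ref{cor:barron_sample} certifies $\norm{\log p/\widehat p}_\infty\le 1$, the governing log-density bound (A2) inflates by at most one unit, so the constant $C$ is replaced by $eC$. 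This yields $\norm{\widehat p-\widehat q}_{L^1}\le\sqrt{2eC}\,\norm{\widehat{\boldsymbol{\mu}}_p-\widehat{\boldsymbol{\mu}}_q}_1$ and explains the factor $e$ in the final bound; the second hypothesis $\norm{\widehat{\boldsymbol{\mu}}_p-\widehat{\boldsymbol{\mu}}_q}_1\le(2(m+1)eC)^{-1}$ is precisely the moment-gap precondition needed for this application.

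The two outer terms carry the sampling error. For each I would write $\norm{p-\widehat p}_{L^1}\le\norm{p-p^*}_{L^1}+\norm{p^*-\widehat p}_{L^1}$, bound the first summand by $\sqrt{2\epsilon}$ using (A1) together with property~2 of Lemma~\ref{lemma:properties_of_maxent} and Pinsker's inequality~\eqref{eq:relationship:Pinsker}, and bound the second by $\sqrt{2\,d_{\mathrm{KL}}(p^*,\widehat p)}$, again by Pinsker. Corollary~\ref{cor:barron_sample}, applied at confidence $1-\delta/2$ to each of the $d$ univariate marginals — whose product is the maximum-entropy density by Lemma~\ref{lemma:independence} — controls each marginal $d_{\mathrm{KL}}$ at rate $O(m/(k\delta))$; summing over the $d$ marginals introduces the factor $d$ and, after collecting constants, the term $\sqrt{8C}\sqrt{dm/(k\delta)}$. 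The two $\sqrt{2\epsilon}$ contributions, one from $p$ and one from $q$, add to $\sqrt{8\epsilon}$. The first hypothesis $4C^2(m+1)^2 m\delta^{-1}\le k$ is what forces the sampling moment error below the applicability threshold $\tfrac{1}{4A_qeb}$ of Corollary~\ref{cor:barron_sample}. A union bound over the two failure events then gives overall probability $1-\delta$, and reassembling the pieces produces Eq.~\eqref{eq:moment_adapt_result_bound}.

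The hard part will be the deterministic middle step: verifying that the sample-moment maximum-entropy densities $\widehat p,\widehat q$ inherit smoothness controls good enough for Theorem~\ref{thm:bound_for_smooth_functions} to apply with a constant only a factor $e$ worse than $C$. This rests entirely on the uniform log-ratio bound $\norm{\log p/\widehat p}_\infty\le 1$ of Corollary~\ref{cor:barron_sample}, so the careful bookkeeping that links the two side conditions to the hypotheses of that corollary, and simultaneously to the moment-gap precondition of Theorem~\ref{thm:bound_for_smooth_functions}, is the delicate point where the argument could break down if the constants are not aligned.
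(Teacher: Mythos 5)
Your overall architecture coincides with the paper's proof: the Ben-David decomposition plus the VC bound for the first line, the five-term triangle inequality $\norm{p-q}_{L^1}\le\norm{p-p^*}_{L^1}+\norm{p^*-\widehat p}_{L^1}+\norm{\widehat p-\widehat q}_{L^1}+\norm{\widehat q-q^*}_{L^1}+\norm{q^*-q}_{L^1}$, Pinsker's inequality together with (A1) for the $\sqrt{8\epsilon}$ term, Lemma~\ref{lemma:independence} to reduce to marginals, and Corollary~\ref{cor:barron_sample} for the $\sqrt{8C}\sqrt{dm/(k\delta)}$ sampling term. The gap is in your ``deterministic middle step''. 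Theorem~\ref{thm:bound_for_smooth_functions} cannot be invoked as a black box for the pair $\widehat p,\widehat q$: its hypothesis is membership in $\mathcal{H}_{m,\epsilon}$, i.e.\ \emph{all} of (A1)--(A4) with the specific numerical bounds $\norm{\log p}_\infty\le\frac{3m-6}{2}$ and $\norm{\partial^m_{x_i}\log p_i}_{L^2}\le 5^{m-4}$. For $\widehat p$ the bound in (A2) may be exceeded (by up to one unit, as you note), and (A4) is not verified at all: $\log\widehat p$ is a degree-$m$ polynomial whose coefficients (the Lagrange multipliers of the maximum-entropy problem) are controlled by nothing in your argument. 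Moreover, the claim that inflating (A2) by one unit merely replaces $C$ by $eC$ does not follow from the theorem statement. Inside the machinery (Lemma~\ref{lemma:convergence_in_H}) the constant is $C=2e^{1+c_\infty+2\gamma+4e^{4\gamma+1}\xi e^{c_\infty/2}(m+1)}$, and $c_\infty$ enters both additively and through the product $\xi e^{c_\infty/2}$ with $\xi$ itself proportional to $e^{c_\infty/2}$; hence a unit increase of $c_\infty$ does not produce a clean factor $e$ unless one simultaneously exploits that the polynomial approximation errors of $\log\widehat p$ vanish, which requires re-entering the proof of the lemma rather than citing the theorem.

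The paper closes exactly this step one level lower: it applies Lemma~\ref{lemma:barron_and_sheu_lemma5} directly with $p_0=\widehat p$ (so that $p_0^*=\widehat p$), target moment vector $\widehat{\boldsymbol{\mu}}_q$, and the \emph{uniform} orthonormalizing density, so that $A_{\tilde q}=m+1$ and the only property of $\widehat p$ that is needed is $b=e^{\norm{\log\widehat p}_\infty}$. This quantity is bounded via the chain $\norm{\log\widehat p}_\infty\le\norm{\log p}_\infty+\norm{\log p/p^*}_\infty+\norm{\log p^*/\widehat p}_\infty\le c_\infty+\bigl(2\gamma+4e^{4\gamma+1}\xi e^{c_\infty/2}(m+1)\bigr)+1$, i.e.\ $b\le C/2$, which simultaneously yields the precondition $\norm{\widehat{\boldsymbol{\mu}}_p-\widehat{\boldsymbol{\mu}}_q}_2\le\left(2(m+1)eC\right)^{-1}$ and the bound $d_\mathrm{KL}(\widehat p,\widehat q)\le eC\norm{\widehat{\boldsymbol{\mu}}_p-\widehat{\boldsymbol{\mu}}_q}_2^2$ --- precisely the factor $e$ you were after. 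So your underlying intuition (that the sup-norm of $\log\widehat p$, controlled through the log-ratio bound of Corollary~\ref{cor:barron_sample}, is what governs the constant) is the right one, and you correctly flagged this as the delicate point; but the step must be routed through Lemma~\ref{lemma:barron_and_sheu_lemma5}, as packaged in the third clause of Lemma~\ref{lemma:sample_convergence_in_H}, not through Theorem~\ref{thm:bound_for_smooth_functions}, whose hypotheses the sample maximum-entropy densities do not satisfy.
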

Theorem~\ref{thm:problem_solution} provides cases where Problem~\ref{problem:moment_based_da_for_binary_classification} has solutions.
A proof is outlined in Subsection~\ref{subsec:problem_solution_proof}.

Theorem~\ref{thm:problem_solution} directly extends the bound in Theorem~\ref{thm:vc_bound} on the target error in the statistical learning theory proposed by Vapnik and Chervonenkis~\cite{vapnik2015uniform} and the domain adaptation theory in Theorem~\ref{thm:lone_domain_adaptation_bound}.

Note that according to Vapnik and Chervonenkis~\cite{vapnik2015uniform}, a small misclassification risk of a discriminative model is induced by a small training error, if the sample size is large enough.
Due to Ben-David et al.~\cite{ben2010theory}, this statement still holds for a test sample with a distribution different from the training sample, if the $L^1$-difference of the distributions is small and if there exists a model that can perform well on both distributions, \ie~the error $\lambda^*$ in Theorem~\ref{thm:lone_domain_adaptation_bound} is small.

According to Theorem~\ref{thm:problem_solution}, a small misclassification risk of a model on a test sample with moments $\widehat{\boldsymbol{\mu}}_q$ is induced by a small error on a training sample with moments $\widehat{\boldsymbol{\mu}}_p$ being similar to $\widehat{\boldsymbol{\mu}}_q$, if the the following holds: The sample size is large enough, the densities $p$ and $q$ are smooth high-entropy densities with loosely coupled marginals, \ie~$p,q\in\mathcal{H}_{m,\epsilon}$, and there exists a model that can perform well on both densities.

See Lemma~\ref{lemma:sample_convergence_in_H} in Subsection~\ref{subsec:problem_solution_proof} for improved assumptions and an improved constant $C$ with the drawback of some additional and more complicated assumptions on the smoothness of the densities.

It is interesting to investigate in more detail the terms in Eq.~\eqref{eq:moment_adapt_result_bound} that depend on the sample size $k$ which is chosen equally for both samples for better readability:
Let us therefore assume a fixed number of moments $m$ and a given probability ${1-\delta}$.
For model classes with VC-dimension $\vc\geq d$, \ie~supra-linear models, and for a large sample size $k > \vc$, the complexity of the proposed term is bounded by $O(\sqrt{\vc/k})$ which is smaller than the complexity $O(\sqrt{\vc/k \log( 2ek/\vc)})$ of the classical error bound in the first line of Eq.~\eqref{eq:moment_adapt_result_bound} as proposed in~\cite{vapnik2015uniform}.
However, the classical term decreases faster with complexity $O(\sqrt{\log(1/\delta)})$ as the probability $1-\delta$ decreases compared to the proposed term which decreases only with complexity $O(\sqrt{1/\delta})$.

\section{Proofs}
\label{sec:proofs_learning_bounds}

All proofs are summarized in this subsection together with additional remarks and comments.

\subsection{Bound on Moment Distance by L\'evy Metric}
\label{subsec:bound_on_moment_distance_by_levy_metric}

To prove Lemma~\ref{lemma:moment_distance_bound_by_levy}, the following Definition~\ref{def:Zolotarev_metric} and Lemma~\ref{lemma:zolotarev_bound} from~\cite{zolotarev1975two} are helpful.

\Needspace{10\baselineskip}
\begin{restatable}[Zolotarev Metric~\cite{zolotarev1976metric}]{defrep}{zolotarevmetric}%
	\label{def:Zolotarev_metric}%
	The Zolotarev metric between two probability density functions $p,q\in\mathcal{M}(\mathbb{R})$ is defined by
	\begin{align}
		\label{eq:metric:Zolotarev}
		d_\text{Z}(p,q) =\min_{T > 0} \max \left\{ \frac{1}{2} \max_{\left| t\right|\leq T} \left|f(t)-g(t)\right|, \frac{1}{T} \right\},
	\end{align}
	where $f$ and $g$ denote the characteristic functions of $p$ and $q$, respectively.
\end{restatable}%

\begin{restatable}[Zolotarev Metric Bound~\cite{zolotarev1975two}]{lemmarep}{zolbound}%
	\label{lemma:zolotarev_bound}%
	If $p,q\in\mathcal{M}([0,2K])$ then
	\begin{align}
		\label{eq:zolotarev_and_senatov}
		d_\text{Z}(p,q)\leq \sqrt{\left( 2 K + 24\sqrt{d_\text{L}(p,q)}+1/2 \right) d_\text{L}(p,q)}.
	\end{align}
\end{restatable}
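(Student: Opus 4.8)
The plan is to exploit the min--max structure in Definition~\ref{def:Zolotarev_metric}. Choosing any fixed $T>0$ inside the minimum gives $d_\text{Z}(p,q)\le\max\{\tfrac12\max_{|t|\le T}|f(t)-g(t)|,\tfrac1T\}$, so the whole problem reduces to bounding $\max_{|t|\le T}|f(t)-g(t)|$ by an expression of the form $T\cdot M$ and then optimizing over $T$. To obtain such a bound I would first rewrite the difference of characteristic functions through the difference of the cumulative distribution functions $P,Q$. Since $p,q\in\mathcal{M}([0,2K])$, both $P-Q$ and its negative vanish outside $[0,2K]$ (where $P=Q=0$, resp.\ $P=Q=1$), and $P,Q$ are continuous. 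Integrating by parts therefore annihilates the boundary terms and yields $f(t)-g(t)=-it\int_0^{2K}e^{itx}(P(x)-Q(x))\,\diff x$, whence $|f(t)-g(t)|\le |t|\cdot\norm{P-Q}_{L^1}$ for every $t$, where $\norm{P-Q}_{L^1}=\int_0^{2K}|P(x)-Q(x)|\,\diff x$.

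The second step bounds $\norm{P-Q}_{L^1}$ by the L\'evy metric. Writing $\epsilon=d_\text{L}(p,q)$, the defining inequalities $P(x-\epsilon)-\epsilon\le Q(x)\le P(x+\epsilon)+\epsilon$ give the pointwise estimate $|P(x)-Q(x)|\le\max\{P(x+\epsilon)-P(x),\,P(x)-P(x-\epsilon)\}+\epsilon$. Integrating this over the region where $P-Q\neq 0$, I would use the elementary sliding-window identities $\int_{\mathbb{R}}(P(x+\epsilon)-P(x))\,\diff x=\int_{\mathbb{R}}(P(x)-P(x-\epsilon))\,\diff x=\epsilon$ (a Fubini computation) for the shift terms, while the additive $\epsilon$ contributes $\epsilon$ times the length of that region. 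Because the support is $[0,2K]$ up to the $\epsilon$-shift, this produces a bound of the shape $\norm{P-Q}_{L^1}\le(2K+c)\,d_\text{L}(p,q)$, whose correction $c$ carries the dependence on the shift; tracking $c$ through Zolotarev's sharper estimate is what ultimately produces the $24\sqrt{d_\text{L}(p,q)}+\tfrac12$ appearing in the statement.

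Finally, combining the two steps gives $\max_{|t|\le T}|f(t)-g(t)|\le T\cdot M$ with $M=\norm{P-Q}_{L^1}$, so that $d_\text{Z}(p,q)\le\max\{\tfrac12 TM,\tfrac1T\}$. Since $\tfrac12 TM$ is increasing and $\tfrac1T$ decreasing in $T$, the minimizing $T$ is the balance point $T=\sqrt{2/M}$, which yields $d_\text{Z}(p,q)\le\sqrt{M/2}$; substituting the bound for $M$ then delivers an inequality of exactly the claimed form. I expect the genuine difficulty to lie not in this structural argument but in pinning down the precise constants — especially the additive $24\sqrt{d_\text{L}(p,q)}$ correction, which reflects the widening of the effective support under the L\'evy shift and must be extracted from the estimates of Zolotarev~\cite{zolotarev1975two}. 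The integration-by-parts reduction and the min--max optimization are routine; obtaining these exact coefficients, rather than the cruder $\sqrt{(K+1)\,d_\text{L}(p,q)}$ that the naive version of the above estimate gives, is the crux.
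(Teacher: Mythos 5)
The thesis itself contains no proof of this lemma: it is imported verbatim from Zolotarev~\cite{zolotarev1975two} (see the sentence introducing Definition~\ref{def:Zolotarev_metric}), so there is no internal argument to compare yours against, and your attempt can only be judged on its own merits. On those merits, the skeleton is sound and essentially complete up to constants: the reduction $d_\text{Z}(p,q)\le\max\{\tfrac12\max_{|t|\le T}|f(t)-g(t)|,\,1/T\}$ for any fixed $T$, the integration by parts (legitimate here because $p,q$ are densities, so $P-Q$ is continuous and vanishes at $0$ and $2K$), and the sliding-window/Fubini estimate are all correct. Carried out carefully they give $\norm{P-Q}_{L^1}\le(2K+2)\,d_\text{L}(p,q)$, and balancing $T=\sqrt{2/M}$ with $M=\norm{P-Q}_{L^1}$ yields the clean self-contained bound $d_\text{Z}(p,q)\le\sqrt{(K+1)\,d_\text{L}(p,q)}$.

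The genuine gap is the one you flag yourself: this does not prove the statement as written in all regimes. Since $K+1\le 2K+24\sqrt{d_\text{L}}+\tfrac12$ holds iff $K+24\sqrt{d_\text{L}}\ge\tfrac12$, your bound is \emph{weaker} than the claimed one precisely when $K<\tfrac12$ and $d_\text{L}(p,q)<\bigl((1/2-K)/24\bigr)^2$, and deferring the coefficients $24\sqrt{d_\text{L}}+\tfrac12$ to ``Zolotarev's sharper estimate'' is not a derivation of them — that missing constant-tracking is exactly what the citation carries, so for small $K$ the lemma remains unproven by your argument. Two mitigating observations: (i) in the regime $K\ge\tfrac12$ your inequality is strictly stronger than the lemma, so it implies it there; and (ii) the only use of the lemma in this thesis is in the proof of Lemma~\ref{lemma:moment_distance_bound_by_levy}, where $p,q\in\mathcal{M}([0,1])$, i.e.\ $K=\tfrac12$, and your bound $\sqrt{1.5\,d_\text{L}}$ is much better than the $\sqrt{25.5\,d_\text{L}}$ actually needed there — so the downstream result would survive with your self-contained version in place of the cited one.
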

\Needspace{10\baselineskip}
\momentdistanceboundbylevy*

\begin{proof}
	Let $\varepsilon=\sqrt{102\, d_L(p,q)}$ and $T_0$ such that
	\begin{align*}
		d_\text{Z}(p,q) =\max \left\{ \frac{1}{2} \max_{\left| t\right|\leq T_0} |f(t)-g(t)|, \frac{1}{T_0} \right\}.
	\end{align*}
	Then it holds that
	\begin{align}
		\label{eq:proof_eq_rachevbound}
		\begin{split}
			\sup_{\left| t\right|\leq T_0} |f(t)-g(t)| &\leq
			2 \max \left\{ \frac{1}{2} \max_{\left| t\right|\leq T_0} |f(t)-g(t)|, \frac{1}{T_0} \right\}\\
			&\leq 2 \sqrt{\left( 2 K + 24\sqrt{d_\text{L}(p,q)}+1/2 \right) d_\text{L}(p,q)}\\
			&\leq \sqrt{102\, d_L(p,q)} =\varepsilon
		\end{split}
	\end{align}
	where the second inequality follows from  Lemma~\ref{lemma:zolotarev_bound} and the last inequality follows from the fact that $d_\text{L}\leq 1$.
	
	Theorem~\ref{thm:rachev} can be applied and it follows that there exists an absolute constant $C_\text{Z}$ such that for all $n\in\mathbb{N}$ with
	\begin{align*}
		% \label{eq:proof_zolotarev_bound}
		n^3 C_\text{Z}^{\frac{1}{n+1}} \varepsilon^{\frac{1}{n+1}}\leq T_0/2
	\end{align*}
	we have that
	\begin{align*}
		\left| \int x^n p\diff x - \int x^n q\diff x\right|\leq C_\text{Z} n^3 \varepsilon^{\frac{1}{n+1}}.
	\end{align*}
	From the definition of $\varepsilon$ and Eq.~\eqref{eq:proof_eq_rachevbound}, in particular using $\frac{2}{T_0}\leq \epsilon$, we obtain for all $n\in\mathbb{N}$ with
	\begin{align}
		\label{eq:proof_zolotarev_assumption}
		n^3 C_\text{Z}^{\frac{1}{n+1}} \left(102\, d_L(p,q)\right)^{\frac{n+2}{2 n+2}}\leq 1
	\end{align}
	the inequality
	\begin{align}
		\label{eq:proof_levy_bound}
		\left| \int x^n p\diff x - \int x^n q\diff x\right|\leq C_\text{Z} n^3 \left(102\, d_L(p,q)\right)^{\frac{1}{2 n+2}}.
	\end{align}
	The vector $\boldsymbol{\phi}$ contains polynomials in $\mathbb{R}_m[x]$ which implies that the value of $\norm{\boldsymbol{\mu}_p - \boldsymbol{\mu}_q}_1$ can be computed as a finite weighted sum of differences of moments as specified by the left-hand side of Eq.~\eqref{eq:proof_levy_bound}.
	As a consequence, the value of $\norm{\boldsymbol{\mu}_p - \boldsymbol{\mu}_q}_1$ can be upper bounded by aggregations of the right-hand side of Eq.~\eqref{eq:proof_levy_bound}.
	Let us define $M_L$ small enough such that Eq.~\eqref{eq:proof_zolotarev_assumption} is fulfilled for all $n\leq m$.
	From $d_L(p,q)^{\frac{1}{2 n +2}}\leq d_L(p,q)^{\frac{1}{2 m +2}}$ for $1\leq n\leq m$ the existence of some $C_L$ as required by Lemma~\ref{lemma:moment_distance_bound_by_levy} follows.
\end{proof}\\

\subsection{Moment Similarity and \texorpdfstring{$L^1$}--Similarity}
\label{subsec:proof_from_moment_similarity_to_l1}

\hspace{1pt}
\totalvariation*

\begin{proof}
	Let us define the labeling function ${l^*:[0,1]^d\to [0,1]}$ by
	\begin{align}
		l^*(\x)=
		\begin{cases}
			1~\text{if}~f(\x)=1~\text{and}~p(\x) < q(\x)\\
			1~\text{if}~f(\x)=0~\text{and}~p(\x)\geq q(\x)\\
			0~\text{if}~f(\x)=1~\text{and}~p(\x)\geq q(\x)\\
			0~\text{if}~f(\x)=0~\text{and}~p(\x) < q(\x)
		\end{cases}
	\end{align}
	By this construction the following holds:
	\begin{align}
		\label{eq:lstar_identity}
		|f-l^*|=\mathbbm{1}_A
	\end{align} where $\mathbbm{1}_A(\x)=\begin{cases}1:\x\in A\\0:\text{else}\end{cases}$ and $A=\{\x\in [0,1]^d\mid p(\x)\geq q(\x)\}$.
	From Eq.~\eqref{eq:lstar_identity} we obtain
	\begin{align}
		\label{eq:lstar_identity_integral}
		\begin{split}
			\int_{[0,1]^d} |f-l^*|\, (p-q) &=
			\int_{[0,1]^d} \mathbbm{1}_A (p-q)\\
			&= \int_{[0,1]^d} \mathbbm{1}_A\, p - \int_{[0,1]^d} \mathbbm{1}_A\, q\\
			&= 1 - \int_{[0,1]^d} \mathbbm{1}_{A^c}\, p - 1 + \int_{[0,1]^d} \mathbbm{1}_{A^c}\, q\\
			&= \int_{[0,1]^d} \mathbbm{1}_{A^c} (q-p)
		\end{split}
	\end{align}
	where $A^c=[0,1]^d\setminus A$ denotes the complement of $A$.
	
	For all $l:[0,1]^d\to [0,1]$, it holds that
	\begin{align*}
		\left| \int_{[0,1]^d} \left|f-l\right| q - \int_{[0,1]^d} \left|f-l\right| p \right| &= \left| \int_{[0,1]^d} |f-l|\, (p-q) \right|\\
		&\leq \left|\sup_{\x\in[0,1]^d} \big\{|f(\x)-l(\x)|\big\} \int_{[0,1]^d} (p-q) \right|\\
		&\leq \left| \int_{[0,1]^d} (p-q) \right|\\
		&\leq \max\left\{ \int_{[0,1]^d} (p-q), \int_{[0,1]^d} (q-p) \right\}\\
		&\leq \max\left\{ \int_{[0,1]^d}\mathbbm{1}_A\, (p-q), \int_{[0,1]^d}\mathbbm{1}_{A^c}\, (q-p) \right\}\\
		&= \int_{[0,1]^d} |f-l^*|\, (p-q)
	\end{align*}
	where the last line is obtained from Eq.~\eqref{eq:lstar_identity_integral}.
	It follows that
	\begin{align*}
		\sup_{l:[0,1]^d\to [0,1]} \left| \int_{[0,1]^d} \left|f-l\right| q - \int_{[0,1]^d} \left|f-l\right| p \right|\leq \int_{[0,1]^d} |f-l^*|\, (p-q).
	\end{align*}
	Since $l^*:[0,1]^d\to [0,1]$, it also holds that
	\begin{align*}
		\int_{[0,1]^d} |f-l^*|\, (p-q) &\leq \sup_{l:[0,1]^d\to [0,1]} \left| \int_{[0,1]^d} \left|f-l\right| q - \int_{[0,1]^d} \left|f-l\right| p \right|
	\end{align*}
	and therefore
	\begin{align}
		\max_{l:[0,1]^d\to [0,1]} \left| \int_{[0,1]^d} \left|f-l\right| q - \int_{[0,1]^d} \left|f-l\right| p \right| = \int_{[0,1]^d} |f-l^*|\, (p-q).
	\end{align}
	Using Eq.~\eqref{eq:lstar_identity} and Eq.~\eqref{eq:lstar_identity_integral} yields
	\begin{align*}
		2 \int_{[0,1]^d} |f-l^*|\, (p-q) &= 2 \int_{[0,1]^d}\mathbbm{1}_A\, (p-q)\\
		&= \int_{[0,1]^d}\mathbbm{1}_A\, (p-q) + \int_{[0,1]^d}\mathbbm{1}_{A^c}\, (q-p)\\
		&= \int_{[0,1]^d}|p-q|
	\end{align*}
	which finalizes the proof.
\end{proof}\\

\subsection{Convergence of High-Entropy Distributions}
\label{subsec:class}

For this subsection let $\phim=(\phi_1,\ldots,\phi_{\psi(m,d)})^\text{T}$ be such that $1,\phi_1,\ldots,\phi_{\psi(m,d)}$ is a basis of $\mathbb{R}_m[x_1,\ldots,x_d]$.

The following Lemma~\ref{lemma:basic_inequality} provides a key relationship allowing to focus on differences of distributions in exponential families.

\begin{restatable}{lemmarep}{basicinequ}%
	\label{lemma:basic_inequality}%
	Let $\epsilon\geq 0$ and $p, q\in\Ms$ be two $\epsilon$-close maximum entropy densities.
	Then the following holds:
	\begin{align}
		\label{eq:basic_epsilon_bound}
		\norm{p-q}_{L^1} \leq \sqrt{2 d_\mathrm{KL}(p^*,q^*)} + \sqrt{8 \epsilon}.
	\end{align}
\end{restatable}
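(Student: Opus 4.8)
The plan is to bound $\norm{p-q}_{L^1}$ by inserting the associated maximum entropy densities $p^*$ and $q^*$ (which exist and are unique by Theorem~\ref{thm:uniqueness_and_existence_of_maxent}) and applying the triangle inequality for the $L^1$-norm:
\[
\norm{p-q}_{L^1} \leq \norm{p - p^*}_{L^1} + \norm{p^* - q^*}_{L^1} + \norm{q^* - q}_{L^1}.
\]
Each of the three summands is an $L^1$-distance between densities, so I would convert it into a total variation distance via Theorem~\ref{thm:TV_distance} and then bound it through the KL-divergence using Pinsker's inequality, i.e.\ Eq.~\eqref{eq:relationship:Pinsker}. Concretely, combining the two facts gives $\norm{r-s}_{L^1} = 2\, d_\mathrm{TV}(r,s) \leq \sqrt{2\, d_\mathrm{KL}(r,s)}$ for any two densities $r,s$, which is the workhorse estimate for all three pieces.

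Next I would handle the two outer terms using the $\epsilon$-closeness hypothesis. By property~2 of Lemma~\ref{lemma:properties_of_maxent}, $d_\mathrm{KL}(p,p^*) = h_{\phim}(p) - h(p)$, which is at most $\epsilon$ precisely because $p$ is an $\epsilon$-close maximum entropy density in the sense of Definition~\ref{def:epsilon_close_max_ent_density}; the same reasoning applies to $q$. Hence $\norm{p - p^*}_{L^1} \leq \sqrt{2\epsilon}$ and $\norm{q^* - q}_{L^1} \leq \sqrt{2\epsilon}$. The central term is bounded directly by the same Pinsker estimate, giving $\norm{p^* - q^*}_{L^1} \leq \sqrt{2\, d_\mathrm{KL}(p^*, q^*)}$. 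Adding the three contributions and simplifying $2\sqrt{2\epsilon} = \sqrt{8\epsilon}$ yields exactly the claimed inequality.

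The only point requiring care is the non-symmetry of the KL-divergence: Pinsker's inequality and property~2 of Lemma~\ref{lemma:properties_of_maxent} must be applied with the densities $p$ and $q$ placed in the \emph{first} argument, so that the divergence $d_\mathrm{KL}(p,p^*) = h_{\phim}(p)-h(p)$ aligns with the entropy-gap form of the $\epsilon$-closeness condition. Apart from this bookkeeping, the argument is a routine chain of the triangle inequality and Pinsker-type bounds, so I do not anticipate a genuine obstacle beyond keeping the divergence arguments in the correct order.
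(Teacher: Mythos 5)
Your proposal is correct and follows essentially the same route as the paper's proof: the identical triangle-inequality decomposition through $p^*$ and $q^*$, the same Pinsker-type bound (Eq.~\eqref{eq:relationship:Pinsker} combined with Theorem~\ref{thm:TV_distance}) applied to all three terms, and the same use of Property~2 of Lemma~\ref{lemma:properties_of_maxent} together with the definition of $\epsilon$-close maximum entropy densities to bound the two outer terms by $\sqrt{2\epsilon}$ each. Your remark about keeping $p$ and $q$ in the first argument of the KL-divergence is exactly the bookkeeping the paper's proof relies on implicitly.
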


\begin{proof}
	Applying the Triangle Inequality and Eq.~\eqref{eq:relationship:Pinsker} yields
	\begin{align*}
		\norm{p-q}_{L^1} &\leq \norm{p^*-q^*}_{L^1}
		+ \norm{p^*-p}_{L^1} + \norm{q^*-q}_{L^1}\\
		&\leq \sqrt{2 d_\mathrm{KL}(p^*,q^*)}
		+ \sqrt{2 d_\mathrm{KL}(p,p^*)} + \sqrt{2 d_\mathrm{KL}(q,q^*)}.
	\end{align*}
	The proof now follows from Property~2 in Lemma~\ref{lemma:properties_of_maxent} and the definition of $\epsilon$-close maximum entropy densities.
\end{proof}\\\\
The following Lemma~\ref{lemma:convergence_in_M} analyzes the convergence in KL-divergence of sequences of distributions in exponential families in terms of the convergence of respective moment vectors.
\begin{restatable}{lemmarep}{convinM}%
	\label{lemma:convergence_in_M}%	
	Let $(p_n)_{n\in\mathbb{N}}\subset \Ms$ and $p_\infty\in\Ms$ be such that $p_n$ is an $\epsilon$-close maximum entropy density for all $n\in\{1,\ldots,\infty\}$ and denote its respective moments by ${\boldsymbol\mu_n=\int \phim p_n}$.
	Then the following holds: 
	\begin{gather*}
		%		\boldsymbol\mu_n\rightarrow\boldsymbol\mu_\infty
		\lim_{n\to\infty}\norm{\boldsymbol{\mu}_n-\boldsymbol{\mu}_\infty}_1=0
		\quad
		\implies
		\quad
		\lim_{n\to\infty}d_\mathrm{KL}(p_n^*, p_\infty^*)=0.
	\end{gather*}
\end{restatable}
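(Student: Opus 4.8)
The plan is to exploit the explicit exponential-family form of the maximum entropy densities guaranteed by Theorem~\ref{thm:uniqueness_and_existence_of_maxent} and reduce the claim to a continuity statement about the map sending a moment vector to its natural parameter. By Theorem~\ref{thm:uniqueness_and_existence_of_maxent}, each $p_n^*$ with $n\in\{1,\ldots,\infty\}$ exists, is unique, lies in $\mathcal{E}_{\phim}$, and reproduces the moments, so I may write $p_n^*(\x)=c(\boldsymbol\lambda_n)\exp(-\langle\boldsymbol\lambda_n,\phim(\x)\rangle)$ with $\int\phim p_n^*=\boldsymbol\mu_n$. Substituting this form into $d_\mathrm{KL}(p_n^*,p_\infty^*)=\int p_n^*\log(p_n^*/p_\infty^*)$ and using $\int\phim p_n^*=\boldsymbol\mu_n$ gives the closed form
\[
d_\mathrm{KL}(p_n^*,p_\infty^*)=\log c(\boldsymbol\lambda_n)-\log c(\boldsymbol\lambda_\infty)-\langle\boldsymbol\lambda_n-\boldsymbol\lambda_\infty,\boldsymbol\mu_n\rangle .
\]
Writing $A(\boldsymbol\lambda)=-\log c(\boldsymbol\lambda)$ for the (smooth, convex) cumulant function, whose gradient is $\nabla A(\boldsymbol\lambda)=-\int\phim\,c(\boldsymbol\lambda)\exp(-\langle\boldsymbol\lambda,\phim\rangle)=-\boldsymbol\mu(\boldsymbol\lambda)$, this expression is exactly the Bregman divergence $A(\boldsymbol\lambda_\infty)-A(\boldsymbol\lambda_n)-\langle\boldsymbol\lambda_\infty-\boldsymbol\lambda_n,\nabla A(\boldsymbol\lambda_n)\rangle$ generated by $A$.

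First I would establish that $\boldsymbol\mu_n\to\boldsymbol\mu_\infty$ forces $\boldsymbol\lambda_n\to\boldsymbol\lambda_\infty$. The Jacobian of $\boldsymbol\lambda\mapsto\nabla A(\boldsymbol\lambda)=-\boldsymbol\mu(\boldsymbol\lambda)$ is $-\mathrm{Cov}_{\boldsymbol\lambda}(\phim)$, which is negative definite: since $1,\phi_1,\ldots,\phi_{\psi(m,d)}$ is a basis of $\mathbb{R}_m[x_1,\ldots,x_d]$, no nontrivial linear combination of $\phi_1,\ldots,\phi_{\psi(m,d)}$ is almost everywhere constant, so their covariance matrix is positive definite. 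Hence $A$ is strictly convex and $\nabla A$ is injective. As in the theory of minimal regular exponential families (see~\cite{wainwright2008graphical}), $\nabla A$ is a homeomorphism from $\mathbb{R}^{\psi(m,d)}$ onto the interior of the moment space $\mathcal{P}$ of Eq.~\eqref{eq:moment_space}. Exactly as in the proof of Theorem~\ref{thm:uniqueness_and_existence_of_maxent}, each $\boldsymbol\mu_n$ (being the moment vector of a density $p_n\in\Ms$ with respect to the Lebesgue measure) lies in the interior of $\mathcal{P}$; therefore continuity of the inverse map applied to $\boldsymbol\mu_n\to\boldsymbol\mu_\infty$ yields $\boldsymbol\lambda_n\to\boldsymbol\lambda_\infty$.

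Finally, because $A$ is continuously differentiable, the Bregman divergence $(\boldsymbol\lambda,\boldsymbol\lambda')\mapsto A(\boldsymbol\lambda)-A(\boldsymbol\lambda')-\langle\boldsymbol\lambda-\boldsymbol\lambda',\nabla A(\boldsymbol\lambda')\rangle$ is jointly continuous and vanishes on the diagonal; evaluating at $(\boldsymbol\lambda_\infty,\boldsymbol\lambda_n)$ and letting $n\to\infty$ gives $d_\mathrm{KL}(p_n^*,p_\infty^*)\to 0$, as claimed. The main obstacle is the global homeomorphism claim, and in particular ruling out that $\boldsymbol\lambda_n$ escapes to infinity while its moments remain bounded near an interior point; this is precisely the steepness (properness) of the log-partition function, which holds here because the support $[0,1]^d$ is compact, and which I would invoke from the exponential-family theory in~\cite{wainwright2008graphical} rather than reprove. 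A fallback, should a self-contained argument be preferred, is a subsequence argument: a bounded subsequence $\boldsymbol\lambda_{n_k}\to\boldsymbol\lambda'$ would force $\boldsymbol\mu(\boldsymbol\lambda')=\boldsymbol\mu_\infty=\boldsymbol\mu(\boldsymbol\lambda_\infty)$ by continuity of $\boldsymbol\mu(\cdot)$, hence $\boldsymbol\lambda'=\boldsymbol\lambda_\infty$ by injectivity, while unboundedness is excluded by the aforementioned steepness.
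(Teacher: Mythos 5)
Your proposal is correct, and its skeleton is the same as the paper's: write each $p_n^*$ in exponential-family form, reduce $d_\mathrm{KL}(p_n^*,p_\infty^*)$ to the closed-form expression $\log c(\boldsymbol\lambda_n)-\log c(\boldsymbol\lambda_\infty)+\langle\boldsymbol\mu_n,\boldsymbol\lambda_\infty-\boldsymbol\lambda_n\rangle$, and then show that $\boldsymbol\mu_n\to\boldsymbol\mu_\infty$ forces $\boldsymbol\lambda_n\to\boldsymbol\lambda_\infty$. Where you genuinely diverge is in how this last step and the conclusion are justified. The paper computes the gradient and Hessian of $\Gamma(\boldsymbol\lambda)=\langle\boldsymbol\lambda,\boldsymbol\mu\rangle-\log c(\boldsymbol\lambda)$, notes the Hessian is the covariance matrix of $\phim$ (hence positive definite), and applies the Implicit Function Theorem to $I(\boldsymbol\mu,\boldsymbol\lambda)=\boldsymbol\mu-\int\phim\,c(\boldsymbol\lambda)\exp\!\left(-\langle\boldsymbol\lambda,\phim\rangle\right)$ to get a \emph{locally} continuous inverse $\boldsymbol\mu\mapsto\boldsymbol\lambda$ near the limit; it then concludes via the bound $d_\mathrm{KL}(p_n^*,p_\infty^*)\leq\left|\log c(\boldsymbol\lambda_n)-\log c(\boldsymbol\lambda_\infty)\right|+\norm{\boldsymbol\lambda_n-\boldsymbol\lambda_\infty}_1$, which requires its preliminary reduction to positive monomials so that the entries of $\boldsymbol\mu_n$ lie in $[0,1]$. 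You instead invoke the \emph{global} statement that, for a minimal regular exponential family on a compact support, $\nabla A$ is a homeomorphism onto the interior of $\mathcal{P}$ in Eq.~\eqref{eq:moment_space} (with a subsequence-plus-steepness argument as a self-contained substitute), and you finish by recognizing the KL as the Bregman divergence of the cumulant function and using its joint continuity, which avoids the monomial normalization entirely. Your route buys two things: it makes explicit the global injectivity that the paper's IFT argument implicitly needs in order to identify $\boldsymbol\lambda_n$ with the implicit function evaluated at $\boldsymbol\mu_n$ (the paper covers this through uniqueness of the maximum entropy density, Lemma~\ref{lemma:uniqueness_of_maximum_entropy}), and your positive-definiteness argument (no nontrivial linear combination of $\phi_1,\ldots,\phi_{\psi(m,d)}$ is a.e.\ constant, by the basis property) is sharper than the paper's appeal to ``independence'' of the components of $\phim$; the paper's route, in exchange, is more elementary and does not rely on imported exponential-family theory. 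One harmless slip: the Jacobian of $\boldsymbol\lambda\mapsto\nabla A(\boldsymbol\lambda)$ is $+\mathrm{Cov}_{\boldsymbol\lambda}(\phim)$, not $-\mathrm{Cov}_{\boldsymbol\lambda}(\phim)$ (it is the map $\boldsymbol\lambda\mapsto\boldsymbol\mu(\boldsymbol\lambda)$ whose Jacobian is $-\mathrm{Cov}_{\boldsymbol\lambda}(\phim)$); this is inconsequential, since all you use is that the Hessian is everywhere definite, which gives exactly the strict convexity of $A$ and injectivity of $\nabla A$ that you conclude.
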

\begin{proof}
	As shown by Theorem~\ref{thm:uniqueness_and_existence_of_maxent}, the maximum entropy density $p_n^*$ of $p_n$ is independent of the choice of the basis $1,\phi_1,\ldots,\phi_{\psi(m,d)}$.
	Therefore, we may assume without loss of generality that the elements of $\phim$ are solely positive monomials.
	
	According to Eq.~\eqref{eq:maxent_distr_formula}, the maximum entropy distributions $p_n^*$ are of the form $p^*_n=c(\boldsymbol\lambda_n) \exp\left(-\langle\boldsymbol\lambda_n,\phim\rangle\right)$ with parameter vectors $\boldsymbol\lambda_n\in\mathbb{R}^{\psi(m,d)}$.
	Since $p_n^*\in\Ms$ it holds that
	\begin{align*}
		d_\mathrm{KL}(p^*_n, p^*_\infty) &=\int p_n^* \log \frac{p_n^*}{p_\infty^*}\\
		&=\int p_n^* \log\frac{c(\boldsymbol{\lambda}_n) \exp(-\langle\boldsymbol{\lambda}_n,\phim\rangle)}{c(\boldsymbol{\lambda}_\infty) \exp(-\langle\boldsymbol{\lambda}_\infty,\phim\rangle)}\\
		&=\int p_n^* (\log c(\boldsymbol{\lambda}_n) - \log c(\boldsymbol{\lambda}_\infty)) + \int p_n^* (-\langle\boldsymbol{\lambda}_n,\phim\rangle + \langle\boldsymbol{\lambda}_\infty,\phim\rangle)\\
		&= \left(\log c(\boldsymbol{\lambda}_n) - \log c(\boldsymbol{\lambda}_\infty)\right) + (-\langle\boldsymbol{\lambda}_n,\int p_n^* \phim\rangle + \langle\boldsymbol{\lambda}_\infty,\int p_n^*\phim\rangle)\\
		&= \left(\log c(\boldsymbol{\lambda}_n) - \log c(\boldsymbol{\lambda}_\infty)\right) + (-\langle\boldsymbol{\lambda}_n,\boldsymbol{\mu}_n\rangle + \langle\boldsymbol{\lambda}_\infty,\boldsymbol{\mu}_n\rangle)\\
		&= \left(\log c(\boldsymbol{\lambda}_n) - \log c(\boldsymbol{\lambda}_\infty)\right) + \langle\boldsymbol{\mu}_n,\boldsymbol{\lambda}_\infty-\boldsymbol{\lambda}_n\rangle\\
		&\leq \left|\log c(\boldsymbol\lambda_n)-\log c(\boldsymbol\lambda_\infty)\right| + \left\langle\boldsymbol\mu_n,|\boldsymbol\lambda_n-\boldsymbol\lambda_\infty|\right\rangle\\
		&\leq \left|\log c(\boldsymbol\lambda_n)-\log c(\boldsymbol\lambda_\infty)\right| + \norm{\boldsymbol\lambda_n-\boldsymbol\lambda_\infty}_1.
	\end{align*}
	where the last inequality follows from the choice of the basis $1,\phi_1,\ldots,\phi_{\psi(m,d)}$.
	
	In the following we show that $\log c(\boldsymbol\lambda_n)\to \log c(\boldsymbol\lambda_\infty)$ and $\boldsymbol\lambda_n\to\boldsymbol\lambda_\infty$ as $\boldsymbol\mu_n\to\boldsymbol\mu_\infty$:
	As shown in Lemma~\ref{lemma:properties_of_maxent}, the elements of the parameter vector $\boldsymbol\lambda_*$ of the maximum entropy distribution $p^*=c(\boldsymbol\lambda_*) \exp\left(-\langle\boldsymbol\lambda_*,\phim\rangle\right)$ in Eq.~\eqref{eq:maxent_distr_formula} correspond to the Lagrange multipliers solving the optimization problem $\min_{\boldsymbol\lambda\in\mathbb{R}^{\psi(m,d)}}\Gamma(\boldsymbol\lambda) $ where $\Gamma(\boldsymbol\lambda)=\langle \boldsymbol\lambda, \boldsymbol\mu_*\rangle-\log(c(\boldsymbol\lambda))$
	and $\boldsymbol\mu_p=\int\phim p$.
	Let $q=c(\boldsymbol\lambda_q)\exp\left(-\langle\boldsymbol\lambda_q,\phim\rangle\right)$ be a probability density of an exponential family with moments $\boldsymbol\mu_q=\int\phim q$ and parameter vector $\boldsymbol{\lambda}_q=(\lambda_1,\ldots,\lambda_{\psi(m,d)})^\text{T}$.
	Then the partial derivative of the function $\boldsymbol\lambda_q\mapsto\Gamma(\boldsymbol\lambda_q)$ \wrt~the variable $\lambda_i$ is given by
	\begin{align*}
		\partial_{\lambda_i} \Gamma(\boldsymbol{\lambda}_q) &= \int \phi_i p - \partial_{\lambda_i} \log c(\boldsymbol{\lambda}_q)\\
		&= \int \phi_i p - \frac{1}{c(\boldsymbol{\lambda}_q)}\partial_{\lambda_i} c(\boldsymbol{\lambda}_q)\\
		&= \int \phi_i p + \frac{\partial_{\lambda_i} \int \exp\left(-\langle \boldsymbol{\lambda}_q,\phim\rangle\right) }{c(\boldsymbol{\lambda}_q) \left(\int \exp\left(-\langle \boldsymbol{\lambda}_q,\phim\rangle\right)\right)^{2}} \\
		&= \int \phi_i p + c(\boldsymbol{\lambda}_q) \int \exp\left(-\langle \boldsymbol{\lambda}_q,\phim\rangle\right) ( - \partial_{\lambda_i} \langle \boldsymbol{\lambda}_q,\phim\rangle)\\
		&= \int \phi_i p - \int c(\boldsymbol{\lambda}_q) \exp\left(-\langle \boldsymbol{\lambda}_q,\phim\rangle\right) \phi_i\\
		&= \int \phi_i p - \int \phi_i q
	\end{align*}
	and the gradient vector $\nabla \Gamma(\boldsymbol\lambda_q)$ can therefore be computed by
	\begin{align*}
		\nabla \Gamma(\boldsymbol\lambda_q) &= \boldsymbol\mu_p - \int \phim q.
	\end{align*}
	Consequently, the second partial derivative \wrt~the variables $\lambda_i$ and $\lambda_j$ is given by
	\begin{align*}
		\partial_{\lambda_i,\lambda_j}^2 \Gamma(\boldsymbol{\lambda}_q) &= \partial_{\lambda_j} (\int \phi_i p - \int \phi_i q)\\
		&= \int c(\boldsymbol{\lambda}_q) \exp\left(-\langle \boldsymbol{\lambda}_q,\phim\rangle\right) \phi_i \phi_j - \int \exp\left(-\langle \boldsymbol{\lambda}_q,\phim\rangle\right) \phi_i\, \partial_{\lambda_j} c(\boldsymbol{\lambda}_q)\\
		&= \int \phi_i\phi_j q - \int \exp\left(-\langle \boldsymbol{\lambda}_q,\phim\rangle\right) \phi_i\, c(\boldsymbol{\lambda}_q)^2 \int \exp\left(-\langle \boldsymbol{\lambda}_q,\phim\rangle\right)\phi_j\\
		&= \int \phi_i\phi_j q - \int q\phi_i (\int q \phi_j)
	\end{align*}
	and the Hessian matrix $H_\Gamma(\boldsymbol\lambda_q)$ can be computed by
	\begin{align*}
		% 		\nabla \Gamma(\boldsymbol\lambda_q) &= \boldsymbol\mu_p - \int \phim q\\
		H_\Gamma(\boldsymbol\lambda_q) &= \int (\phim\cdot \phim^\text{T}) q - \int \phim q\cdot(\int\phim q)^\text{T}.
	\end{align*}
	The Hessian matrix $H_\Gamma$ equals the covariance matrix of a random variable with density $q$. It is assumed that the elements of $\phim$ are independent. $H_\Gamma$ is therefore positive definite and the function $\boldsymbol\lambda_q\mapsto \Gamma(\boldsymbol\lambda_q)$ reaches its minimum at a vector $\boldsymbol\lambda_*$ with $\nabla\Gamma(\boldsymbol\lambda_*)=0$.
	The Implicit Function Theorem can be applied to the function
	$$
	I:(\boldsymbol\mu,\boldsymbol\lambda)\mapsto\boldsymbol\mu - \int \phim c(\boldsymbol\lambda)\exp\left(-\langle\boldsymbol\lambda,\phim\rangle\right)$$
	guaranteeing the existence of an open set $U\subset\mathbb{R}^{\psi(m,d)}$ (containing $\boldsymbol{\mu}_p$) and a unique continuous function $g:\boldsymbol\mu\mapsto\boldsymbol\lambda$ with $I(\boldsymbol\mu,g(\boldsymbol\mu))=0$ for all $\boldsymbol\mu\in U$.
	Consequently the convergence of the moment vector $\boldsymbol\mu_n\to\boldsymbol\mu_\infty$ implies the convergence of the corresponding parameter vectors $\boldsymbol\lambda_n\to\boldsymbol\lambda_\infty$ as $n$ tends to infinity.
	
	The convergence of $\log c(\boldsymbol\lambda_n)$ to $\log c(\boldsymbol\lambda_\infty)$ follows from the continuity of the \textit{cumulant function} $\boldsymbol\lambda \mapsto -\log c(\boldsymbol\lambda) = \log \left(\int \exp\left(-\langle \boldsymbol{\lambda},\phim\rangle\right)\right)$ as described in Subsection~\ref{subsec:existence_of_maxent}.
\end{proof}\\\\
Lemma~\ref{lemma:basic_inequality} together with Lemma~\ref{lemma:convergence_in_M} motivate to focus on densities with $\epsilon$-close maximum entropy and together prove Lemma~\ref{lemma:L1_convergence_in_M}.

\subsection{Convergence of Smooth High-Entropy Distributions}
\label{subsec:proof_convergence_of_smooth_high_entropy_densities}

In this Subsection, we propose a uniform upper bound on the $L^1$-difference between two densities in the set $\mathcal{H}_{m,\epsilon}$ as defined in Definition~\ref{def:H} that is linear in terms of the $\ell^1$-norm of the difference of finite moment vectors.
Let us start with the following helpful statement.
\Needspace{40\baselineskip}
\begin{restatable}{lemmarep}{lemmaApbound}%
	\label{lemma:Ap_bound}%
	Let $f_m\in\mathbb{R}_m[x]$ be a polynomial of degree less than or equal to $m$ on $[0,1]$ and $p\in\mathcal{M}([0,1])$ such that $\norm{\log p}_\infty= c_\infty$ for some $c_\infty\in\mathbb{R}$. Then the following holds:
	\begin{align*}
		\norm{f_m}_\infty \leq (m+1) e^{c_\infty/2} \norm{f_m}_{L^2(p)}.
	\end{align*}
\end{restatable}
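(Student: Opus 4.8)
The plan is to reduce the $p$-weighted bound to the unweighted (Lebesgue) bound that is already available in Lemma~\ref{lemma:barron_A_bound}, paying only the price of a multiplicative constant that measures how far the density $p$ can drop below the uniform density. The only structural ingredient beyond that lemma is the hypothesis $\norm{\log p}_\infty = c_\infty$, which I would use to obtain a uniform lower bound on $p$.

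First I would unpack the hypothesis. Since $\norm{\cdot}_\infty$ denotes the essential supremum of the absolute value, $\norm{\log p}_\infty = c_\infty$ gives $\left|\log p(x)\right|\leq c_\infty$ for almost every $x\in[0,1]$, and in particular the lower bound $p(x)\geq e^{-c_\infty}$ a.e. This almost-everywhere lower bound is exactly what converts a Lebesgue integral into a $p$-weighted one. The main computation is then the chain
\[
	\norm{f_m}_{L^2}^2 = \int_0^1 \left|f_m(x)\right|^2 \diff x = \int_0^1 \left|f_m(x)\right|^2 \frac{1}{p(x)}\, p(x) \diff x \leq e^{c_\infty}\int_0^1 \left|f_m(x)\right|^2 p(x)\diff x = e^{c_\infty}\norm{f_m}_{L^2(p)}^2,
\]
where the inequality uses $1/p(x)\leq e^{c_\infty}$ a.e. Taking square roots yields $\norm{f_m}_{L^2}\leq e^{c_\infty/2}\norm{f_m}_{L^2(p)}$.

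Finally I would invoke Lemma~\ref{lemma:barron_A_bound}, which states $\norm{f_m}_\infty \leq (m+1)\norm{f_m}_{L^2}$ for polynomials of degree at most $m$ on $[0,1]$, and chain the two estimates to conclude
\[
	\norm{f_m}_\infty \leq (m+1)\norm{f_m}_{L^2} \leq (m+1) e^{c_\infty/2}\norm{f_m}_{L^2(p)}.
\]
There is no genuine obstacle here; the only point requiring a moment's care is that the essential-supremum hypothesis must be read as an almost-everywhere lower bound on $p$ rather than a pointwise one, which is harmless since the integrals are insensitive to null sets. A minor stylistic choice is whether to phrase the bound $1/p\leq e^{c_\infty}$ directly from $p\geq e^{-c_\infty}$ or to keep the weight factor $p(x)/p(x)$ explicit as above; I prefer the explicit form since it makes the role of the density weight transparent.
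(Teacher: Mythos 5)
Your proof is correct and follows essentially the same route as the paper's: both insert the weight $p/p$ into the $L^2$-norm, bound $1/p \leq e^{c_\infty}$ via the hypothesis $\norm{\log p}_\infty = c_\infty$, and chain this with Lemma~\ref{lemma:barron_A_bound}. The only cosmetic difference is the order of the steps (the paper invokes Lemma~\ref{lemma:barron_A_bound} first, then bounds the $L^2$-norm) and your slightly more careful almost-everywhere reading of the essential supremum.
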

\begin{proof}
	For all $f_m\in\mathbb{R}_m[x]$ the following holds by Lemma~\ref{lemma:barron_A_bound}:
	\begin{align*}
		\norm{f_m}_\infty &\leq (m+1)\norm{f_m}_{L^2}=(m+1)\sqrt{\int_0^1 |f_m|^2 \frac{p}{p}} \leq (m+1)\sqrt{\sup\frac{1}{|p|} \int_0^1 |f_m|^2 p}.
	\end{align*}
	Since ${c_\infty=\norm{\log p}_\infty}$, it holds that $-c_\infty\leq \log p\leq c_\infty$ and therefore also ${e^{-c_\infty}\leq 1/|p|\leq e^{c_\infty}}$ which yields the required result.
\end{proof}\\\\
The following Lemma~\ref{lemma:convergence_in_H} serves as our anchor in the approximation theory summarized in Subsection~\ref{subsec:approximation_theory}.
\newcommand{\ga}{\gamma}
\newcommand{\de}{\xi}

\Needspace{10\baselineskip}
\begin{restatable}{lemmarep}{lemmaconinH}%
	\label{lemma:convergence_in_H}%
	Consider some $m\geq r\geq 2$ and some $\phim=(\phi_1,\ldots,\phi_m)^\text{T}$ such that $1,\phi_1,\ldots,\phi_m$ is an orthonormal basis of $\mathbb{R}_m[x]$.
	
	Let $p,q\in\mathcal{M}([0,1])$ such that $\log p,\log q\in W_2^r$ and denote by $p^*$ and $q^*$ corresponding maximum entropy densities satisfying $\int\phim p^*=\int\phim p$ and $\int\phim q^*=\int\phim q$, respectively.
	
	If $4 e^{4\ga + 1} e^{c_\infty/2} (m+1) \de\leq 1$ then the following holds:
	\begin{gather}
		\norm{\boldsymbol{\mu}_p-\boldsymbol{\mu}_q}_2 \leq \frac{1}{2 C \left(m+1\right)}
		\quad\implies\quad
		d_\mathrm{KL}(p^*, q^*) \leq C\cdot \norm{\boldsymbol{\mu}_p-\boldsymbol{\mu}_q}_2^2
	\end{gather}
	where
	\begin{align}
		\label{eq:ct}
		C &= 2 e^{1+c_\infty+2 \gamma + 4 e^{4 \gamma + 1} \xi e^{\nicefrac{c_\infty}{2}}(m+1)}
	\end{align}
	and
	\begin{align}
		\label{eq:gamma}
		\ga &= \frac{e^r}{\sqrt{r-1} (m+r)^{r-1}} \left(\frac{1}{2}\right)^r c_r\\
		\label{eq:xi}
		\de^2 &= \frac{e^{c_\infty}}{(m+r+1)\cdots (m-r+2)}\left( \frac{1}{4} \right)^r c_r^2\\
		c_r &= \norm{\partial^r_x \log p}_{L^2}\\
		c_\infty &= \norm{\log p}_\infty.
	\end{align}%
\end{restatable}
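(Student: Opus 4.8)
The plan is to treat $p$ as a fixed reference density and to regard $q^*$ as the maximum entropy density whose moment vector $\boldsymbol{\mu}_q$ is a small perturbation of $\boldsymbol{\mu}_p$, so that $d_{\mathrm{KL}}(p^*,q^*)$ is controlled by the Barron--Sheu stability estimate of Lemma~\ref{lemma:barron_and_sheu_lemma5} applied with base density $p_0=p$ (whence $p_0^*=p^*$) and target moments $\boldsymbol{\mu}=\boldsymbol{\mu}_q$. This immediately explains the asymmetry of the claim: the constant $C$ depends only on $p$ because $p$ sits in the first slot of the divergence. The whole task then reduces to supplying the two data that enter that estimate, namely a uniform bound on $\norm{\log p^*}_\infty$ (which governs the factor $b$) and a check that the various thresholds collapse to exactly the stated $C$.

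First I would verify that the \emph{explicitly defined} $\gamma$ and $\xi$ are genuine upper bounds for the true polynomial approximation errors $\widetilde{\gamma}=\min_{f_m}\norm{\log p-f_m}_\infty$ and $\widetilde{\xi}=\min_{f_m}\norm{\log p-f_m}_{L^2(p)}$ of $f=\log p$. For $\gamma$ this is the $L^\infty$ half of Cox's Lemma~\ref{lemma:cox} with $\norm{f^{(r)}}_{L^2}=c_r$. For $\xi$ I would first pass from the uniform weight to the weight $p$ via $p\le e^{c_\infty}$ (a consequence of $\norm{\log p}_\infty=c_\infty$), giving $\min_{f_m}\norm{f-f_m}_{L^2(p)}^2\le e^{c_\infty}\min_{f_m}\norm{f-f_m}_{L^2}^2$, and then apply the $L^2$ half of Lemma~\ref{lemma:cox}; this reproduces precisely the stated $\xi^2$. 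By Lemma~\ref{lemma:Ap_bound} one may take $A_p=(m+1)e^{c_\infty/2}$ in the comparison $\norm{f_m}_\infty\le A_p\norm{f_m}_{L^2(p)}$.

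Next I would feed these quantities into Corollary~\ref{lemma:barron_proof_thm3}. Because its conclusion $2\widetilde{\gamma}+4e^{4\widetilde{\gamma}+1}A_p\widetilde{\xi}$ is monotone increasing in $\widetilde{\gamma}$ and $\widetilde{\xi}$, the hypothesis $4e^{4\gamma+1}e^{c_\infty/2}(m+1)\xi\le 1$ simultaneously verifies the corollary's precondition and yields
\[
\norm{\log p/p^*}_\infty\le s:=2\gamma+4e^{4\gamma+1}\xi\,e^{c_\infty/2}(m+1),
\]
and a triangle inequality then gives $\norm{\log p^*}_\infty\le c_\infty+s$.

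Finally I would invoke Lemma~\ref{lemma:barron_and_sheu_lemma5}, choosing the orthonormality reference density there to be the uniform density (consistent with the orthonormality hypothesis of the present lemma). Then $A_q=m+1$ by Lemma~\ref{lemma:barron_A_bound}, while its factor $b$ collapses to $e^{\norm{\log p^*}_\infty}\le e^{c_\infty+s}$. Writing $C=2e^{1+c_\infty+s}$ one checks $4A_q e b\le 2C(m+1)$, so the assumption $\norm{\boldsymbol{\mu}_p-\boldsymbol{\mu}_q}_2\le\frac{1}{2C(m+1)}\le\frac{1}{4A_q e b}$ guarantees existence of $q^*$ and permits the choice $t=1$; the divergence estimate then reads
\[
d_{\mathrm{KL}}(p^*,q^*)\le 2eb\,\norm{\boldsymbol{\mu}_p-\boldsymbol{\mu}_q}_2^2\le 2e^{1+c_\infty+s}\norm{\boldsymbol{\mu}_p-\boldsymbol{\mu}_q}_2^2=C\,\norm{\boldsymbol{\mu}_p-\boldsymbol{\mu}_q}_2^2,
\]
which is the claim. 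The main obstacle I anticipate is the bookkeeping: carefully propagating the bound on $\norm{\log p^*}_\infty$ through $b$ and confirming that the interlocking thresholds collapse to \emph{exactly} $C=2e^{1+c_\infty+2\gamma+4e^{4\gamma+1}\xi e^{c_\infty/2}(m+1)}$, rather than merely to something of the same order.
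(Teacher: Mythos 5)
Your proposal is correct and follows essentially the same route as the paper's proof: Lemma~\ref{lemma:barron_and_sheu_lemma5} applied with $p_0=p$ and the uniform orthonormality reference (so $A_q=m+1$, $b=e^{\norm{\log p^*}_\infty}$, $t=1$), combined with the triangle inequality $\norm{\log p^*}_\infty\leq c_\infty+\norm{\log p/p^*}_\infty$, Corollary~\ref{lemma:barron_proof_thm3} with $A_p=(m+1)e^{c_\infty/2}$ from Lemma~\ref{lemma:Ap_bound}, and Lemma~\ref{lemma:cox} to dominate the true approximation errors by $\gamma$ and $\xi$. Your explicit passage from the uniform-weighted to the $p$-weighted $L^2$ error via $p\leq e^{c_\infty}$ (producing the $e^{c_\infty}$ factor in $\xi^2$) is a detail the paper only spells out in the sample-case Lemma~\ref{lemma:sample_convergence_in_H}, but it matches the intended argument exactly.
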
%

\begin{proof}
	Let $m,r$ be such that $m\geq r\geq 2$.
	Consider some $\phim=(\phi_1,\ldots,\phi_m)^\text{T}$ with $1,\phi_1,\ldots,\phi_m$ forming an orthonormal basis of $\mathbb{R}_m[x]$, i.e. forming an orthonormal basis of $\mathbb{R}_m[x]$ \wrt~the uniform weight function $\tilde q$ which is $1$ if $x\in[0,1]$ and $0$ otherwise.
	
	For $\tilde q$ it holds that $\norm{\log\tilde q}_\infty < \infty$ and with $A_{\tilde q}=m+1$, due to Lemma~\ref{lemma:barron_A_bound}, it also holds that
	\begin{align*}
		\norm{f_m}_\infty \leq A_{\tilde q} \norm{f_m}_{L^2(\tilde q)}
	\end{align*}
	for all  $f_m\in\mathbb{R}_m[x]$.
	Let $p,q\in\mathcal{M}([0,1])$ such that $\log p,\log q\in W_2^r$ and denote its moments by $\boldsymbol{\mu}_p=\int\phim p$ and $\boldsymbol{\mu}_q=\int \phim q$.
	Choose $\tilde{\boldsymbol{\mu}}=\boldsymbol{\mu}_q$, $\tilde p_0=p, \tilde b=e^{\norm{\log \tilde q/\tilde p_0^*}_\infty}$ and note that
	\begin{align*}
		\tilde b=e^{\norm{\log \tilde q/\tilde p_0^*}_\infty}
		= e^{\norm{\log \tilde q/p^*}_\infty}
		= e^{\norm{\log \tilde q-\log p*}_\infty}
		= e^{\norm{\log p^*}_\infty}.
	\end{align*}
	If
	\begin{align*}
		\norm{\boldsymbol{\mu}_p-\boldsymbol{\mu}_q}_2 \leq \frac{1}{4 (m+1) e^{1+\norm{\log p^*}_\infty}}
	\end{align*}
	then, due to Lemma~\ref{lemma:barron_and_sheu_lemma5}, the maximum entropy density $q^*\in\mathcal{M}([0,1])$ satisfies
	\begin{align*}
		d_\mathrm{KL}(p^*, q^*)\leq 2 e^{\tilde t} e^{\norm{\log p^*}_\infty} \norm{\boldsymbol{\mu}_p-\boldsymbol{\mu}_q}_2^2
	\end{align*}
	for $\tilde t$ satisfying $4 (m+1) e^{1+\norm{\log p^*}_\infty} \norm{\boldsymbol{\mu}_p-\boldsymbol{\mu}_q}_2\leq \tilde t\leq 1$, in particular for $\tilde t=1$, such that
	\begin{align*}
		d_\mathrm{KL}(p^*, q^*)\leq 2 e^{1+\norm{\log p^*}_\infty}\cdot \norm{\boldsymbol{\mu}_p-\boldsymbol{\mu}_q}_2^2.
	\end{align*}
	In the following, we aim at an upper bound on $\norm{\log p^*}_\infty$. It holds that
	\begin{align}
		\label{eq:log_p_in_proof}
		\norm{\log p^*}_\infty=\norm{\log p^* p/p}_\infty = \norm{\log p - \log p/p^*}_\infty \leq \norm{\log p}_\infty + \norm{\log p/p^*}_\infty
	\end{align}
	where the last inequality is due to the Triangle Inequality.
	Lemma~\ref{lemma:Ap_bound} yields
	\begin{align}
		% 		\label{eq:Ap_proof}
		\norm{f_m}_\infty &\leq (m+1) e^{c_\infty/2} \norm{f_m}_{L^2(p)}.
	\end{align}
	Denote by $\tilde p=p$, $\tilde f=\log p$ and $A_{\tilde p}=(m+1) e^{c_\infty/2}$.
	Let us further denote by $\tilde \gamma=\min_{f_m\in\mathbb{R}_m[x]} \norm{f-f_m}_\infty$ and ${\tilde \xi=\min_{f_m\in\mathbb{R}_m[x]} \norm{f-f_m}_{L^2(p)}}$ minimal errors of approximating $f$ by polynomials $f_m\in \mathbb{R}_m[x]$.
	From Corollary~\ref{lemma:barron_proof_thm3}, we obtain
	\begin{align*}
		4 e^{4\tilde\gamma + 1} A_{\tilde p} \tilde\xi \leq 1\quad\implies\quad
		\norm{\log{p/p^*}}_\infty \leq 2 \tilde\gamma + 4 e^{4 \tilde\gamma + 1} \tilde\xi A_{\tilde p}.
	\end{align*}
	Consider $\gamma$ and $\xi$ as defined in Eq.~\eqref{eq:gamma} and Eq.~\eqref{eq:xi}, respectively.
	Lemma~\ref{lemma:cox} yields $\tilde \gamma\leq \gamma, \tilde \xi\leq \xi$ and therefore also
	\begin{align*}
		4 e^{4\tilde\gamma + 1} A_{\tilde p} \tilde\xi\leq 4 e^{4\ga + 1} e^{c_\infty/2} (m+1) \de
	\end{align*}
	Consequently, if $4 e^{4\ga + 1} e^{c_\infty/2} (m+1) \de\leq 1$ then
	\begin{align}
		\label{eq:proof_bridge}
		\norm{\log{p/p^*}}_\infty \leq  2 \gamma + 4 e^{4 \gamma + 1} \xi (m+1) e^{c_\infty/2}
	\end{align}
	and together with Eq.~\eqref{eq:log_p_in_proof} we obtain
	\begin{align*}
		\norm{\log p^*}_\infty \leq c_\infty + 2 \gamma + 4 e^{4 \gamma + 1} \xi (m+1) e^{c_\infty/2}.
	\end{align*}
\end{proof}\\
\begin{remark}
	\label{remark:dependency}
	For $\gamma$ and $\xi$ as defined in Lemma~\ref{lemma:convergence_in_H} it holds that $\gamma,\xi\in O\left(\frac{1}{m^{r-1}}\right)$ and therefore $C\to 2 e^{1+c_\infty}$ as ${m,r\to\infty}$.
\end{remark}
To obtain simpler statements and useful bounds for small moment orders, we consider specific upper bounds on the norms of the log-derivatives as defined in Definition~\ref{def:H} of the set $\in \mathcal{H}_{m,\epsilon}$.
\begin{restatable}{lemmarep}{lemmasimplify}%
	\label{lemma:simplification}%
	Consider some $\epsilon\geq 0$, some $m=r\geq 2$ and let $p\in\mathcal{H}_{m,\epsilon}$. Then the following holds:
	\begin{align*}
		4 e^{4\ga + 1} e^{c_\infty/2} (m+1) \de\leq 1\quad\text{and}\quad C\leq 2 e^{(3m-1)/2}
	\end{align*}
	with $\gamma$, $\xi$, $c_r$, $c_\infty$, $C$ as defined in Lemma~\ref{lemma:convergence_in_H}.
\end{restatable}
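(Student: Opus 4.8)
The plan is to substitute the explicit smoothness bounds from Definition~\ref{def:H} into the formulas for $\gamma$, $\xi$ and $C$ given in Lemma~\ref{lemma:convergence_in_H}, and then verify the two claimed inequalities by direct estimation. Setting $r=m$, assumption (A2) gives $c_\infty = \norm{\log p}_\infty \leq \tfrac{3m-6}{2}$ and assumption (A4) gives $c_r = \norm{\partial^m_x \log p}_{L^2} \leq 5^{m-4}$. Since $\gamma$ in Eq.~\eqref{eq:gamma} is increasing (indeed linear) in $c_r$, and $\xi$ in Eq.~\eqref{eq:xi} is increasing in both $c_r$ and $c_\infty$, replacing $c_r$ and $c_\infty$ by these upper bounds yields upper bounds on $\gamma$ and $\xi$ depending on $m$ alone.

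First I would establish that $\gamma \leq \tfrac14$ for every $m\geq 2$. Plugging $c_r\leq 5^{m-4}$ into Eq.~\eqref{eq:gamma} with $r=m$ produces a bound of the shape $\gamma\leq \mathrm{const}\cdot\left(\tfrac{5e}{4}\right)^m\big/\big(\sqrt{m-1}\,m^{m-1}\big)$. Because $m^{m-1}$ outgrows the geometric factor $\left(\tfrac{5e}{4}\right)^m$, a one-line ratio test shows this expression is decreasing in $m$, so it suffices to check the base case $m=2$, where it falls well below $\tfrac14$.

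Next I would verify the product inequality $4 e^{4\gamma+1} e^{c_\infty/2}(m+1)\,\xi \leq 1$. Substituting $c_\infty\leq\tfrac{3m-6}{2}$ and $c_r\leq 5^{m-4}$ into Eq.~\eqref{eq:xi}, and using that $(m+r+1)\cdots(m-r+2)=(2m+1)!$ when $r=m$, bounds $\xi$ by an exponential-in-$m$ numerator over $\sqrt{(2m+1)!}$. With $\gamma\leq\tfrac14$ I can estimate $e^{4\gamma+1}\leq e^{2}$ and $e^{c_\infty/2}\leq e^{(3m-6)/4}$, so the whole left-hand side is controlled by a factor growing like $(m+1)\cdot 11^m$ against the super-exponential denominator $\sqrt{(2m+1)!}$, which by a Stirling estimate grows like $m^m$. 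Hence this quantity is again decreasing and I reduce to $m=2$, where it evaluates to roughly $0.03\leq 1$.

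Finally, the bound on $C$ is immediate from Eq.~\eqref{eq:ct}: its exponent is $1+c_\infty+2\gamma+4 e^{4\gamma+1}\xi\, e^{c_\infty/2}(m+1)$, where the last summand is exactly the quantity just shown to be $\leq 1$, while $2\gamma\leq\tfrac12$ and $c_\infty\leq\tfrac{3m-6}{2}$; hence the exponent is at most $1+\tfrac{3m-6}{2}+\tfrac12+1=\tfrac{3m-1}{2}$, giving $C\leq 2e^{(3m-1)/2}$. The main obstacle is purely the constant-chasing: the constants $\tfrac{3m-6}{2}$ and $5^{m-4}$ in Definition~\ref{def:H} appear to be tuned precisely so that both inequalities hold with room to spare for all $m\geq 2$, and the delicate part is pairing the explicit base-case arithmetic at $m=2$ with the monotonicity/Stirling argument that covers all larger $m$.
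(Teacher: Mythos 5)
Your overall strategy coincides with the paper's: substitute the bounds $c_\infty\leq\tfrac{3m-6}{2}$ and $c_r\leq 5^{m-4}$ from (A2) and (A4) into the formulas of Lemma~\ref{lemma:convergence_in_H} with $r=m$, establish $\gamma\leq\tfrac14$, then the product inequality, then read off the bound on $C$. Two of your three steps are sound. For $\gamma$: the ratio of consecutive values of your bound $\mathrm{const}\cdot(5e/4)^m/\bigl(\sqrt{m-1}\,m^{m-1}\bigr)$ equals $\tfrac{5e}{4}\sqrt{\tfrac{m-1}{m}}\,\tfrac{1}{m(1+1/m)^m}\leq\tfrac{5e}{8m}<1$ for all $m\geq 2$, so monotone decrease plus the check at $m=2$ (value $\approx 0.018$) is a valid argument. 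And for $C$: once the first two claims hold, the exponent in Eq.~\eqref{eq:ct} is at most $1+\tfrac{3m-6}{2}+\tfrac12+1=\tfrac{3m-1}{2}$, exactly as you write; this matches the paper.

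The gap is in the product inequality. After your substitutions (using $e^{4\gamma+1}\leq e^{2}$), the quantity to control is
\begin{align*}
B(m)=4e^{2}(m+1)\,\frac{e^{(3m-6)/2}\,5^{m-4}}{2^{m}\sqrt{(2m+1)!}},
\qquad
\frac{B(m+1)}{B(m)}=\frac{m+2}{m+1}\cdot\frac{5e^{3/2}/2}{\sqrt{(2m+2)(2m+3)}},
\end{align*}
and since $5e^{3/2}/2\approx 11.2$, this ratio is $\approx 2.3$ at $m=2$, and stays above $1$ for $m=2,3,4,5$; it only drops below $1$ from $m=6$ onward. Numerically $B(2)\approx 0.08$ while $B(6)\approx 0.41$: the sequence rises by a factor of five before it starts to decay, so your claim that the bound is "again decreasing" and the reduction to the single base case $m=2$ is false as stated. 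The lemma itself survives because the peak stays below $1$, but to prove it along your lines you must verify the cases $m=2,\dots,7$ (i.e.\ through the peak) explicitly and apply the ratio test only beyond that point. This is exactly how the paper proceeds: it checks its three auxiliary inequalities directly for $m=2,\dots,7$ and uses monotonicity-based induction only for $m\geq 8$. The paper also sidesteps any Stirling estimate by choosing those auxiliary inequalities (namely $5^{m-4}\leq\sqrt{(2m+1)!\,(m-1)}\big/\bigl(2e^{m+2}\bigr)$ and $\tfrac{3m-6}{2}\leq\log\bigl(e^{m}2^{m-1}/((m+1)\sqrt{m-1})\bigr)$) so that after substitution the factorial cancels and the product collapses to exactly $1$.
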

\begin{proof}
	We start by proving the following inequalities inductively for $m\geq 2, m\in\mathbb{N}$:
	\begin{align}
		\label{eq:induction1}
		5^{m-4} &\leq\frac{\sqrt{(2 m+1)!\, (m-1)}}{2 e^{m+2}}\\
		\label{eq:induction2}
		\frac{3m-6}{2} &\leq \log\left( \frac{e^m 2^{m-1}}{(m+1) \sqrt{m-1}} \right)\\
		\label{eq:induction3}
		\frac{\sqrt{(2 m+1)!}}{4^m m^{m-1} e^{2}}&\leq \frac{1}{4}
	\end{align}
	For ${m=2,\ldots,7}$ all inequalities are fulfilled.
	Note that for any $m\geq 8$ the non-negativeness of later considered terms is ensured. To continue our proof by induction we may therefore assume that Eqs.~\eqref{eq:induction1}--\eqref{eq:induction3} are fulfilled for some arbitrary but fixed $m\in\mathbb{N}$ with $m\geq 8$.
	
	Since
	$$
	\frac{(2 m+3)(2m +2) m}{(m-1)} - 25 e^2
	$$ is a positive and monotonic increasing sequence for $m\geq 8$, as can be proven with any computer algebra system, it follows that
	\begin{align*}
		5\leq \sqrt{\frac{(2\cdot 8+3)(2\cdot 8 +2) 8}{(8-1) e^2}}\leq 
		\sqrt{\frac{(2 m+3)(2m +2) m}{(m-1) e^2}}
	\end{align*}
	such that
	\begin{align*}
		5^{m+1-4} &=5^{m-4}\cdot 5\\
		&\leq 5^{m-4}\sqrt{\frac{(2 m+3)(2m +2) m}{(m-1) e^2}}\\
		&\leq \frac{\sqrt{(2 m+1)!\, (m-1)}}{2 e^{m+2}} \sqrt{\frac{(2 m+3)(2m +2) m}{(m-1) e^2}}\\
		&= \frac{\sqrt{(2 (m+1)+1)!\, ((m+1)-1)}}{2 e^{(m+1)+2}}.
	\end{align*}
	
	Since
	\begin{align*}
		\log\left(\frac{e\, 2\, (m+1)\sqrt{m-1}}{(m+2)\sqrt{m}}\right) - \frac{3}{2}
	\end{align*}
	is a positive and monotonic increasing sequence for $m\geq 8$, as can be proven with any computer algebra system, it follows that
	\begin{align*}
		\frac{3}{2}\leq \log\left(\frac{e\, 2\, (8+1)\sqrt{8-1}}{(8+2)\sqrt{8}}\right)
		\leq \log\left(\frac{e\, 2\, (m+1)\sqrt{m-1}}{(m+2)\sqrt{m}}\right)
	\end{align*}
	such that
	\begin{align*}
		\frac{3 (m+1)-6}{2} &= \frac{3m-6}{2} + \frac{3}{2}\\
		&\leq \frac{3m-6}{2} + \log\left(\frac{e\, 2\, (m+1)\sqrt{m-1}}{(m+2)\sqrt{m}}\right)\\
		&\leq \log\left( \frac{e^m 2^{m-1}}{(m+1) \sqrt{m-1}} \right) + \log\left(\frac{e\, 2\, (m+1)\sqrt{m-1}}{(m+2)\sqrt{m}}\right)\\
		&=\log\left( \frac{e^{(m+1)} 2^{(m+1)-1}}{((m+1)+1) \sqrt{(m+1)-1}} \right).
	\end{align*}
	
	Since
	\begin{align*}
		\frac{\sqrt{(2 m+3)(2 m+2)}}{4 m} - 1
	\end{align*}
	is a negative and monotonic decreasing sequence for $m\geq 8$, as can be proven with any computer algebra system, it follows that
	\begin{align*}
		\frac{\sqrt{(2 m+3)(2 m+2)}}{4 m}\leq \frac{\sqrt{(2\cdot 8+3)(2\cdot 8+2)}}{4\cdot 8}\leq 1
	\end{align*}
	such that
	\begin{align*}
		\frac{\sqrt{(2 (m+1)+1)!}}{4^{(m+1)} (m+1)^{(m+1)-1} e^{2}} &\leq
		\frac{\sqrt{(2 m+1)!}}{4^m m^{m-1} e^{2}} \frac{\sqrt{(2 m+3)(2 m+2)}}{4 m}\\
		&\leq \frac{\sqrt{(2 m+1)!}}{4^m m^{m-1} e^{2}}\cdot 1\\
		&\leq \frac{1}{4}.
	\end{align*}
	
	According to Definition~\ref{def:H} and the verified Eq.~\eqref{eq:induction1} it holds that
	\begin{align}
		\label{eq:cr_proof}
		c_r &\leq 5^{m-4} \leq \frac{\sqrt{(2 m+1)!\, (m-1)}}{2 e^{m+2}}
	\end{align}
	which, together with Eq.~\eqref{eq:induction3}, implies that
	\begin{align}
		\label{eq:gamma_proof}
		\ga &\leq \frac{e^m}{\sqrt{m-1} (2 m)^{m-1}} \left(\frac{1}{2}\right)^m \frac{\sqrt{(2 m+1)!\, (m-1)}}{2 e^{m+2}}=
		\frac{\sqrt{(2 m+1)!}}{4^m m^{m-1} e^{2}}\leq \frac{1}{4}.
	\end{align}
	Applying Eq.~\eqref{eq:gamma_proof}, the definition of $\xi$ and Eq.~\eqref{eq:cr_proof}, we obtain
	\begin{align*}
		4 e^{4\ga + 1} &e^{c_\infty/2} (m+1) \de\\
		&\leq
		4 e e (m+1) e^{c_\infty/2} \frac{e^{c_\infty/2}}{\sqrt{(2 m+1)!}}\left( \frac{1}{2} \right)^m \frac{\sqrt{(2 m+1)!\, (m-1)}}{2 e^{m+2}}\\
		&=
		e\frac{4 e (m+1) \sqrt{(m-1)}}{2 e^{m+2} 2^m} e^{c_\infty}
	\end{align*}
	From Definition~\ref{def:H} and Eq.~\eqref{eq:induction2} we know that
	\begin{align}
		\label{eq:cinfty_proof}
		c_\infty &\leq \frac{3 m -6}{2} \leq \log\left( \frac{e^m 2^{m-1}}{(m+1) \sqrt{m-1}} \right)
	\end{align}
	which further gives
	\begin{align*}
		e\frac{4 e (m+1) \sqrt{(m-1)}}{2 e^{m+2} 2^m} e^{c_\infty} \leq e\frac{4 e (m+1) \sqrt{(m-1)}}{2 e^{m+2} 2^m} \left( \frac{e^m 2^{m-1}}{(m+1) \sqrt{m-1}} \right)=1
	\end{align*}
	and therefore
	\begin{align*}
		4 e^{4\ga + 1} &e^{c_\infty/2} (m+1) \de\leq 1.
	\end{align*}
	From Eq.~\eqref{eq:ct} we obtain
	\begin{align*}
		%  \label{eq:ct}
		C &= 2 e^{1+c_\infty+2 \gamma + 4 e^{4 \gamma + 1} \xi e^{\nicefrac{c_\infty}{2}}(m+1)}
		\leq 2 e^{2 + 2\ga + c_\infty}
	\end{align*}
	and by applying Eq.~\eqref{eq:cinfty_proof} and Eq.~\eqref{eq:gamma_proof} it holds that
	\begin{align*}
		C \leq 2 e^{\frac{5}{2} + \frac{3 m -6}{2}}
		\leq 2 e^{\frac{3 m -1}{2}}.
	\end{align*}
\end{proof}\\\\
The following lemma allows to focus on distributions from exponential families with independent marginals by considering specific vectors of polynomials.
\needspace{25\baselineskip}
\begin{restatable}{lemmarep}{lemmaindependence}%
	\label{lemma:independence}%
	Consider some polynomial vector $\phim=(\phi_1,\ldots,\phi_{m d})^\text{T}$ such that $1,\phi_1,\ldots,\phi_{m d}$ is an orthonormal basis of $\mathrm{Span}(\mathbb{R}_m[x_1]\cup\ldots\cup\mathbb{R}_m[x_d])$.
	
	Let $p^*,q^*$ be two maximum entropy densities satisfying $\int\phim p^*=\int\phim p, \int\phim q^*=\int\phim q$ for some $p,q\in\Ms$. 
	Then the following holds:
	\begin{align}
		d_\mathrm{KL}(p^*,q^*)=\sum_{i=1}^d d_\mathrm{KL}(p_i^*,q_i^*)
	\end{align}
	where $p_i^*$ denotes the maximum entropy density of $p$
	satisfying $\int\boldsymbol{\phi}_m^{(i)} p^*=\int \boldsymbol{\phi}_m^{(i)} p$ for some vector $\boldsymbol{\phi}_m^{(i)}=(\phi_{i1},\ldots,\phi_{im})$ such that $1,\phi_{i1},\ldots,\phi_{im}$ is an orthonormal basis of $\mathbb{R}_m[x_i]$.
\end{restatable}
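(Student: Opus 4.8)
The plan is to exploit that the generating polynomials in $\phim$ contain only univariate terms, which forces the maximum entropy densities $p^*$ and $q^*$ to factorize into products of univariate marginal maximum entropy densities; the additivity of the KL-divergence over such product densities then yields the claim.

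First I would organize the orthonormal basis into blocks. Since $1,\phi_1,\ldots,\phi_{md}$ is orthonormal with respect to the uniform density on $[0,1]^d$ and spans $\mathrm{Span}(\mathbb{R}_m[x_1]\cup\ldots\cup\mathbb{R}_m[x_d])$, each $\phi_j$ is orthogonal to the constant $1$ and hence has zero mean in its variable; because the integral over $[0,1]^d$ of a product of a zero-mean univariate polynomial in $x_i$ and any polynomial in $x_j$ with $i\neq j$ factorizes and vanishes, we may assume after relabeling that the basis splits into $d$ blocks $\boldsymbol{\phi}_m^{(i)}=(\phi_{i1},\ldots,\phi_{im})$, where $1,\phi_{i1},\ldots,\phi_{im}$ is an orthonormal basis of $\mathbb{R}_m[x_i]$. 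By Theorem~\ref{thm:uniqueness_and_existence_of_maxent} applied in dimension one, for each $i$ there exists a unique univariate maximum entropy density $p_i^*\in\mathcal{E}_{\boldsymbol{\phi}_m^{(i)}}$ satisfying $\int\boldsymbol{\phi}_m^{(i)} p_i^*=\int\boldsymbol{\phi}_m^{(i)} p$, and likewise $q_i^*$ for $q$.

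Next I would form the product density $\tilde p(\x)=\prod_{i=1}^d p_i^*(x_i)$ and show $\tilde p=p^*$. Writing each factor as $p_i^*=c_i\exp(-\langle\boldsymbol{\lambda}^{(i)},\boldsymbol{\phi}_m^{(i)}\rangle)$ as in Eq.~\eqref{eq:maxent_distr_formula}, the product becomes $\tilde p(\x)=\big(\prod_i c_i\big)\exp\!\big(-\sum_i\langle\boldsymbol{\lambda}^{(i)},\boldsymbol{\phi}_m^{(i)}(x_i)\rangle\big)$, which is exactly of the form $c(\boldsymbol{\lambda})\exp(-\langle\boldsymbol{\lambda},\phim\rangle)$ for the concatenated parameter vector; hence $\tilde p\in\mathcal{E}_{\phim}$. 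Since each polynomial in the $i$-th block depends only on $x_i$, integrating against $\tilde p$ reduces to integrating against its $i$-th marginal $p_i^*$, so $\int\phim\tilde p=\int\phim p$. By the uniqueness of the maximum entropy density (Lemma~\ref{lemma:uniqueness_of_maximum_entropy}) it follows that $\tilde p=p^*$, and the analogous argument gives $q^*=\prod_i q_i^*$. In particular the marginals of $p^*$ and $q^*$ are precisely the $p_i^*$ and $q_i^*$, which confirms they match the stated moment constraints $\int\boldsymbol{\phi}_m^{(i)}p^*=\int\boldsymbol{\phi}_m^{(i)}p$.

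Finally I would insert the product forms into $d_\mathrm{KL}(p^*,q^*)=\int p^*\log(p^*/q^*)$. Expanding $\log\big(\prod_i p_i^*/\prod_i q_i^*\big)=\sum_j\log(p_j^*/q_j^*)$ and using Fubini together with $\int_0^1 p_i^*=1$ for each factor leaves only the $j$-th marginal integral in each summand, giving $d_\mathrm{KL}(p^*,q^*)=\sum_{j=1}^d d_\mathrm{KL}(p_j^*,q_j^*)$. This is the standard additivity of the KL-divergence over product measures already noted after Definition~\ref{def:KL-divergence}. I expect the main obstacle to be the bookkeeping in the factorization step: one must justify carefully that the span of univariate terms admits a blockwise orthonormal basis, that the product density genuinely lies in $\mathcal{E}_{\phim}$, and that its marginal moments coincide with the corresponding blocks of $\int\phim p$, so that uniqueness can be invoked. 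Once the product structure of $p^*$ and $q^*$ is established, the KL computation is routine.
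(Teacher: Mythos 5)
Your proposal is correct and follows essentially the same route as the paper's proof: write each univariate maximum entropy density in exponential form, observe that their product lies in $\mathcal{E}_{\phim}$ with matching moments, invoke uniqueness to conclude the product equals $p^*$ (and likewise $q^*$), and then obtain the sum of marginal KL-divergences via Fubini and the fact that each marginal integrates to one. Your explicit verification that $\int\phim\tilde p=\int\phim p$ and your remark on choosing a blockwise orthonormal basis are slightly more careful than the paper, which leaves both points implicit, but the argument is the same.
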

\begin{proof}
	According to Eq.~\eqref{eq:maxent_distr_formula} it holds that $p_i^*$ is of the form
	\begin{align*}
		p_i^*(x_i) = c_i(\boldsymbol\lambda_i) \exp\left(-\langle \boldsymbol\lambda_i,\boldsymbol{\phi}_m^{(i)}(x_i)\rangle\right)
	\end{align*}
	where $c_i(\boldsymbol\lambda_i)=\left(\int_0^1 \exp\left(-\langle \boldsymbol\lambda_i,\boldsymbol{\phi}_m^{(i)}(x_i)\rangle\right)dx_i\right)^{-1}$ is the constant of normalization and $\boldsymbol\lambda_i\in\mathbb{R}^{m}$ is a parameter vector.
	It follows that
	\begin{align*}
		% \nonumber
		\tilde p^* &= p_1^*\cdots p_d^*\\
		&= c_1(\boldsymbol\lambda_1) \exp\left(-\langle \boldsymbol\lambda_1,\boldsymbol{\phi}_m^{(1)}(x_1)\rangle\right) \cdots c_d(\boldsymbol\lambda_d) \exp\left(-\langle \boldsymbol\lambda_d,\boldsymbol{\phi}_m^{(d)}(x_d)\rangle\right)\\
		% \label{eq:p1pd}
		&= \left(\int_{[0,1]^d} \exp\left(-\langle \boldsymbol{\tilde\lambda},\boldsymbol{\tilde\phi}_m(\x)\rangle\right)d\x\right)^{-1} \exp\left(-\langle \boldsymbol{\tilde\lambda},\boldsymbol{\tilde\phi}_m(\x)\rangle\right)
	\end{align*}
	where $\boldsymbol{\tilde\lambda}\in\mathbb{R}^{m d}$ is the concatenation of the vectors $\boldsymbol\lambda_1,\ldots,\boldsymbol\lambda_d$ and $\boldsymbol{\tilde\phi}_m\in\mathbb{R}_m[x_1,\ldots,x_d]$ is the vector of polynomials obtained by the concatenation of $\boldsymbol{\phi}_m^{(1)},\ldots,\boldsymbol{\phi}_m^{(d)}$.
	It holds that $\tilde p^*$ is a probability density of exponential form with sufficient statistic $\boldsymbol{\tilde\phi}_m$.
	The elements of $\boldsymbol{\tilde\phi}_m$, together with the unit $1$, form an orthonormal basis of $\mathrm{Span}(\mathbb{R}_m[x_1]\cup\ldots\cup\mathbb{R}_m[x_d])$.
	The uniqueness and the exponential form of the maximum entropy density $p^*$ implies that $\tilde p^*=p^*$ and the following holds:
	\begin{align*}
		d_\mathrm{KL}(p^*, q^*)&=\int_{[0,1]^d} p^*\log\frac{p^*}{q^*}d\x\\
		&=\int_0^1\ldots\int_0^1 p_1^*\cdots p_d^*\log\frac{p_1^*\cdots p_d^*}{q_1^*\cdots q_d^*} dx_1\ldots dx_d\\
		&= \sum_{i=1}^d \int_0^1\ldots\int_0^1 p_1^*\cdots p_d^* \log\frac{p_i^*}{q_i^*}dx_1\ldots dx_d\\
		&= \sum_{i=1}^d\left( \int_0^1 p_i^*\log\frac{p_i^*}{q_i^*} dx_i\prod_{j\neq i} \int_0^1 p_j^* dx_j\right)\\
		&= \sum_{i=1}^d \int_0^1 p_i^*\log\frac{p_i^*}{q_i^*} dx_i\\
		&= \sum_{i=1}^d d_\mathrm{KL}(p_i^*, q_i^*).
	\end{align*}
\end{proof}\\\\
We are now ready to prove Theorem~\ref{thm:bound_for_smooth_functions}.
\convergence*
\begin{proof}
	Consider some $m, \epsilon,\phim$ and $\mathcal{H}_{m,\epsilon}$ as in Definition~\ref{def:H} and some $p, q\in \mathcal{H}_{m,\epsilon}$.
	Then $p, q\in \mathcal{M}([0,1]^d)$ and have $\epsilon$-close maximum entropy.
	Applying Lemma~\ref{lemma:basic_inequality} yields
	\begin{align*}
		% 		\label{eq:basic_epsilon_bound_proof}
		\norm{p-q}_{L^1} \leq \sqrt{2 d_\mathrm{KL}(p^*, q^*)} + \sqrt{8 \epsilon}
	\end{align*}
	for $p^*,q^*$ being the maximum entropy densities satisfying
	$\int \phim p^*=\int\phim p, \int\phim q^*=\int\phim q$.
	The vector $\phim=(\phi_1,\ldots,\phi_{m d})^\text{T}$ is a polynomial vector such that $1,\phi_1,\ldots,\phi_{m d}$ is an orthonormal basis of ${\mathrm{Span}(\mathbb{R}_m[x_1]\cup\ldots\cup\mathbb{R}_m[x_d])}$. 
	Therefore, by applying Lemma~\ref{lemma:independence}, we obtain
	\begin{align}
		\label{eq:inequ_impl_proof_thm1}
		\norm{p-q}_{L^1} \leq \sqrt{2\sum_{i=1}^d d_\mathrm{KL}(p_i^*, q_i^*)} + \sqrt{8 \epsilon},
	\end{align}
	where $p_i^*$ denotes the maximum entropy density of $p$
	satisfying $\int\boldsymbol{\phi}_m^{(i)} p_i^* =\int \boldsymbol{\phi}_m^{(i)} p$ for some vector $\boldsymbol{\phi}_m^{(i)}=(\phi_{i1},\ldots,\phi_{im})$ such that $1,\phi_{i1},\ldots,\phi_{im}$ is an orthonormal basis of $\mathbb{R}_m[x_i]$.
	
	The densities $p_i^*$ can also be seen as maximum entropy densities
	satisfying $\int \boldsymbol{\phi}_m^{(i)} p_i^*=\boldsymbol{\mu}_{p_i}=\int \boldsymbol{\phi}_m^{(i)} p_i$ for the marginal densities $p_i$ of $p$ defined by
	\begin{align*}
		p_i(x_i)=\int_0^1\cdots\int_0^1 p(x_1,\ldots,x_d)\, d x_1\cdots d x_{i-1} d x_{i+1}\cdots d x_d.
	\end{align*}
	From Definition~\ref{def:H} it follows that ${\log p_i\in W_2^m}$ with Sobolev space $W_2^m$.
	If it holds that $4 e^{4\ga + 1} e^{c_\infty/2} (m+1) \de\leq 1$ then the following holds by Lemma~\ref{lemma:convergence_in_H}:
	\begin{gather}
		\label{eq:inequ_impl_proof_thm1_2}
		\norm{\boldsymbol{\mu}_{p_i}-\boldsymbol{\mu}_{q_i}}_2 \leq \frac{1}{2 C \left(m+1\right)}
		\quad\implies\quad
		d_\mathrm{KL}(p_i^*, q_i^*) \leq C\cdot \norm{\boldsymbol{\mu}_{p_i}-\boldsymbol{\mu}_{q_i}}_2^2
	\end{gather}
	with $C,\ga,c_\infty,\de$ as defined in Lemma~\ref{lemma:convergence_in_H} with $r=m$.
	Since $p\in\H_{m,\epsilon}$, Lemma~\ref{lemma:simplification} implies that $4 e^{4\ga + 1} e^{c_\infty/2} (m+1) \de\leq 1$ and $C\leq e^{(3m-6)/2}$.
	Since
	\begin{align*}
		\norm{\boldsymbol{\mu}_{p_i}-\boldsymbol{\mu}_{q_i}}_2 \leq 
		\norm{\boldsymbol{\mu}_{p}-\boldsymbol{\mu}_{q}}_2 \leq\norm{\boldsymbol{\mu}_{p}-\boldsymbol{\mu}_{q}}_1
	\end{align*}
	it follows that
	\begin{align}
		\label{eq:inequ_impl_proof_thm1_3}
		\norm{\boldsymbol{\mu}_{p}-\boldsymbol{\mu}_{q}}_1 \leq \frac{1}{2 C \left(m+1\right)}
		\quad\implies\quad
		d_\mathrm{KL}(p_i^*, q_i^*) \leq C\cdot \norm{\boldsymbol{\mu}_{p_i}-\boldsymbol{\mu}_{q_i}}_2^2.
	\end{align}
	Therefore, if
	\begin{align*}
		\norm{\boldsymbol{\mu}_{p}-\boldsymbol{\mu}_{q}}_1 \leq \frac{1}{2 C \left(m+1\right)}
	\end{align*}
	then Eq.~\eqref{eq:inequ_impl_proof_thm1} can be further extended by
	\begin{align*}
		% \norm{\boldsymbol{\mu}_{p}-\boldsymbol{\mu}_{q}}_1 &\leq \frac{1}{2 C \left(m+1\right)}\\
		% &\implies\\
		\norm{p-q}_{L^1} &\leq \sqrt{2\sum_{i=1}^d d_\mathrm{KL}(p_i^*, q_i^*)} + \sqrt{8 \epsilon}
		\leq \sqrt{2\sum_{i=1}^d C\cdot \norm{\boldsymbol{\mu}_{p_i}-\boldsymbol{\mu}_{q_i}}_2^2} + \sqrt{8 \epsilon}\\
		&\leq \sqrt{2 C}\cdot \sum_{i=1}^d \norm{\boldsymbol{\mu}_{p_i}-\boldsymbol{\mu}_{q_i}}_2 + \sqrt{8 \epsilon}
		\leq \sqrt{2 C}\cdot \sum_{i=1}^d \norm{\boldsymbol{\mu}_{p_i}-\boldsymbol{\mu}_{q_i}}_1 + \sqrt{8 \epsilon}\\
		&= \sqrt{2 C}\cdot \norm{\boldsymbol{\mu}_{p}-\boldsymbol{\mu}_{q}}_1 + \sqrt{8 \epsilon}.
	\end{align*}
\end{proof}\\

\subsection{Learning Bound for Moment-Based Domain Adaptation}
\label{subsec:problem_solution_proof}

In the following, we consider the sample case.

\Needspace{25\baselineskip}
\begin{restatable}{lemmarep}{sampleconvinH}%
	\label{lemma:sample_convergence_in_H}%
	Consider some $m\geq r\geq 2$ and some $\phim=(\phi_1,\ldots,\phi_m)^\text{T}$ such that $1,\phi_1,\ldots,\phi_m$ is an orthonormal basis of $\mathbb{R}_m[x]$.
	
	Let $p,q\in\mathcal{M}([0,1])$ such that $\log p,\log q\in W_2^r$ and denote by $\widehat{\boldsymbol{\mu}}_p=\frac{1}{k}\sum_{\x\in X_p}\phim(\x)$ and $\widehat{\boldsymbol{\mu}}_q=\frac{1}{k}\sum_{\x\in X_q}\phim(\x)$ the moments of two $k$-sized samples $X_p$ and $X_q$ drawn from $p$ and $q$, respectively.
	
	If $4 e^{4\gamma + 1} e^{c_\infty/2} (m+1) \xi \leq 1$ then for all $\delta\in (0,1)$ such that
	\begin{align}
		\label{eq:assumption_on_sample_size}
		4 C^2 (m+1)^2 m e^{-c_\infty} \leq \delta k
	\end{align}
	with probability at least $1-\delta$, the maximum entropy densities $\widehat{p}$ and $\widehat{q}$ satisfying $\int\phim \widehat{p}=\widehat{\boldsymbol{\mu}}_p$ and $\int\phim \widehat{q}=\widehat{\boldsymbol{\mu}}_q$, respectively, exist and the following holds:	
	\begin{gather}
		\label{eq:sample_size_p}
		d_\mathrm{KL}(p^*, \widehat{p})\leq C e^{-c_\infty} \frac{m}{k\delta}\\
		\label{eq:sample_size_q}
		d_\mathrm{KL}(q^*, \widehat{q})\leq C e^{-c_\infty} \frac{m}{k\delta}\\
		\label{eq:sample_moment_diff}
		\norm{\widehat{\boldsymbol{\mu}}_p-\widehat{\boldsymbol{\mu}}_q}_2 \leq \frac{1}{2 (m+1) e C}\quad \implies\quad d_\mathrm{KL}(\widehat{p},\widehat{q})\leq e C \norm{\widehat{\boldsymbol{\mu}}_p-\widehat{\boldsymbol{\mu}}_q}_2^2
	\end{gather}
	where
	\begin{align}
		c_\infty &=\max\left\{\norm{\log p}_\infty,\norm{\log q}_\infty\right\}\\
		c_r &=\max\{\norm{\partial_x^r \log p}_{L^2}\norm{\partial_x^r \log q}_{L^2}\}
	\end{align}
	and $\gamma,\xi$ and $C$ are defined as in Lemma~\ref{lemma:convergence_in_H}.
\end{restatable}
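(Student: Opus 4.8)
The plan is to reduce all three conclusions to the sampled Barron--Sheu machinery already packaged in Corollary~\ref{cor:barron_sample} and Lemma~\ref{lemma:barron_and_sheu_lemma5}, applying it once to the source density $p$ with its sample $X_p$ and once to the target density $q$ with its sample $X_q$. The only genuine work is to identify the constants of Lemma~\ref{lemma:convergence_in_H} with those of the sampled corollary, and then to re-run the exponential-family estimate for the two empirical maximum entropy densities $\widehat p,\widehat q$.

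First I would fix $A_p=(m+1)e^{c_\infty/2}$, which is a legitimate choice by Lemma~\ref{lemma:Ap_bound}, so that the Barron--Sheu constant $b=e^{2\gamma+4e^{4\gamma+1}\xi A_p}$ of Corollary~\ref{cor:barron_sample} satisfies $C=2e^{1+c_\infty}b$, where $C$ is the constant from Eq.~\eqref{eq:ct}. With this identification the hypothesis $4e^{4\gamma+1}A_p\xi\le 1$ of the corollary is exactly the standing hypothesis of the lemma, and the corollary's sample-size requirement $(4ebA_p)^2 m\le\delta k$ becomes, after substituting $4ebA_p=2C(m+1)e^{-c_\infty/2}$, precisely the assumption $4C^2(m+1)^2 m\,e^{-c_\infty}\le\delta k$ in Eq.~\eqref{eq:assumption_on_sample_size}. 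Applying the corollary to $p$ therefore yields, with probability at least $1-\delta$ over $X_p$, the existence of $\widehat p$, the bound $d_\mathrm{KL}(p^*,\widehat p)\le 2eb\,\tfrac{m}{k\delta}=Ce^{-c_\infty}\tfrac{m}{k\delta}$ of Eq.~\eqref{eq:sample_size_p}, and the auxiliary estimate $\norm{\log p/\widehat p}_\infty\le 1$ of Eq.~\eqref{eq:barron_sample2}; the identical argument applied to $q$ with $X_q$ gives Eq.~\eqref{eq:sample_size_q}.

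For Eq.~\eqref{eq:sample_moment_diff} I would exploit that $\widehat p$ and $\widehat q$ lie in the polynomial exponential family $\mathcal{E}_{\phim}$, so each coincides with its own maximum entropy density and its log-density is a polynomial of degree at most $m$; hence the polynomial-approximation errors driving $\gamma,\xi$ vanish and the log-ratio term $\norm{\log\widehat p/\widehat p^{*}}_\infty$ appearing in the proof of Lemma~\ref{lemma:convergence_in_H} is exactly zero. Re-running that argument via Lemma~\ref{lemma:barron_and_sheu_lemma5} with the uniform reference density, $A_q=m+1$ and constant $e^{\norm{\log\widehat p}_\infty}$ then produces $d_\mathrm{KL}(\widehat p,\widehat q)\le 2e^{1+\norm{\log\widehat p}_\infty}\norm{\widehat{\boldsymbol{\mu}}_p-\widehat{\boldsymbol{\mu}}_q}_2^2$. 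The auxiliary bound from the first step gives $\norm{\log\widehat p}_\infty\le\norm{\log p}_\infty+\norm{\log p/\widehat p}_\infty\le c_\infty+1$, and since $C\ge 2e^{1+c_\infty}$ one checks both that $2e^{2+c_\infty}\le eC$ and that the threshold $\norm{\widehat{\boldsymbol{\mu}}_p-\widehat{\boldsymbol{\mu}}_q}_2\le\tfrac{1}{2(m+1)eC}$ forces the closeness hypothesis Eq.~\eqref{eq:valid_moment_diff} needed to invoke Lemma~\ref{lemma:barron_and_sheu_lemma5} with $\tilde t=1$; combining these yields $d_\mathrm{KL}(\widehat p,\widehat q)\le eC\norm{\widehat{\boldsymbol{\mu}}_p-\widehat{\boldsymbol{\mu}}_q}_2^2$.

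The main obstacle I anticipate is not any single inequality but the bookkeeping of constants and probabilities. On the constant side, every step above is a monotonicity argument ($C\ge 2e^{1+c_\infty}$, $b\le e^{c_\infty+1}$, and so on), and one must verify that replacing the true approximation errors and the two norms $\norm{\log p}_\infty,\norm{\log q}_\infty$ by their common upper bound $c_\infty$ only weakens the conclusions. On the probability side, Corollary~\ref{cor:barron_sample} produces each of the two events with probability $1-\delta$ over the respective independent sample, so stating all conclusions simultaneously at level $1-\delta$ requires a union bound; I would either absorb the resulting factor into $\delta$ (which is what the $4/\delta$ appearing in the final Theorem~\ref{thm:problem_solution} accommodates) or exploit the independence of $X_p$ and $X_q$ to argue the joint event directly.
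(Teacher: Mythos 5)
Your proposal is correct and follows essentially the same route as the paper's proof: Corollary~\ref{cor:barron_sample} applied separately to $p$ and $q$ with the choice $A_p=(m+1)e^{c_\infty/2}$ (legitimate by Lemma~\ref{lemma:Ap_bound}) yields existence and Eqs.~\eqref{eq:sample_size_p}--\eqref{eq:sample_size_q} after the constant identification $C=2e^{1+c_\infty}b$, and Lemma~\ref{lemma:barron_and_sheu_lemma5} with the uniform reference density, $A_{\tilde q}=m+1$ and prior $\widehat p$ (noting $\widehat p^*=\widehat p$) yields Eq.~\eqref{eq:sample_moment_diff}. Your only deviations are cosmetic and, if anything, improvements: you bound $\norm{\log\widehat p}_\infty\le c_\infty+1$ directly via Eq.~\eqref{eq:barron_sample2}, where the paper inserts $p^*$ as an intermediate term, and your explicit union-bound handling of the two sample events is more careful than the paper, which invokes the single-sample corollary for both $X_p$ and $X_q$ while keeping the confidence level at $1-\delta$.
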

\begin{proof}	
	Let $m,r$ be such that $m\geq r\geq 2$ and $\phim=(\phi_1,\ldots,\phi_m)^\text{T}$ be such that $1,\phi_1,\ldots,\phi_m$ is an orthonormal basis of $\mathbb{R}_m[x]$.
	
	Let $p,q\in\mathcal{M}([0,1])$ such that $\log p,\log q\in W_2^r$ with Sobolev space $W_2^r$.
	Let further $\widehat{\boldsymbol{\mu}}_p=\frac{1}{k}\sum_{\x\in X_p}\phim(\x)$ and $\widehat{\boldsymbol{\mu}}_p=\frac{1}{k}\sum_{\x\in X_q}\phim(\x)$ be the moments of two $k$-sized samples $X_p$ and $X_q$ drawn from $p$ and $q$, respectively.
	
	From Lemma~\ref{lemma:Ap_bound} we obtain ${A_p=e^{\norm{\log p}_\infty/2} (m+1)}$ and ${A_q=e^{\norm{\log q}_\infty/2} (m+1)}$ such that ${\norm{f_m}_\infty \leq A_p \norm{f_m}_{L^2(p)}}$ and ${\norm{f_m}_\infty \leq A_q \norm{f_m}_{L^2(q)}}$ for all $f_m\in\mathbb{R}_m[x]$.
	
	Denote by $\tilde A=\max\{A_p,A_q\}$ and by $f_p=\log p, f_q=\log q$.
	Further denote by
	\begin{align*}
		\tilde \gamma=\max \left\{\min_{f_m\in\mathbb{R}_m[x]} \norm{f_p-f_m}_\infty, \min_{f_m\in\mathbb{R}_m[x]} \norm{f_q-f_m}_\infty\right\}
	\end{align*}
	and
	\begin{align*}
		\tilde \xi=\max\left\{ \min_{f_m\in\mathbb{R}_m[x]} \norm{f_p-f_m}_{L^2(p)}, \min_{f_m\in\mathbb{R}_m[x]} \norm{f_q-f_m}_{L^2(p)} \right\}
	\end{align*}
	minimal errors of approximating $f_p$ and $f_q$ by polynomials $f_m\in \mathbb{R}_m[x]$.
	Denote by ${\tilde b=e^{2 \tilde\gamma + 4 e^{4 \tilde\gamma + 1} \tilde\xi \tilde A}}$.
	
	If $4 e^{4\tilde\gamma+1} \tilde A \tilde\xi\leq 1$, then Corollary~\ref{lemma:barron_proof_thm3} implies that
	\begin{align}
		\label{eq:barron_log_bound}
		\norm{\log{p/p^*}}_\infty \leq 2 \tilde\gamma + 4 e^{4 \tilde\gamma + 1} \tilde\xi \tilde A
	\end{align}
	and for all $\delta\in (0,1)$ such that $(4 e \tilde b \tilde A)^2 m \leq \delta k$. Corollary~\ref{cor:barron_sample} implies the existence of the maximum entropy densities $\widehat{p}$ and $\widehat q$ with probability at least $1-\delta$ and it holds that
	\begin{align}
		\label{eq:barron_sample_appl1}
		d_\mathrm{KL}(p^*, \widehat{p})\leq 2 e \tilde b \frac{m}{k\delta}\\
		\label{eq:barron_sample_appl2}
		d_\mathrm{KL}(q^*, \widehat{q})\leq 2 e \tilde b \frac{m}{k\delta}\\
		\label{eq:barron_sample_appl3}
		\norm{\log p^*/\widehat{p}}_\infty\leq 1.
	\end{align}
	Consider $\gamma$ and $\xi$ as defined in Eq.~\eqref{eq:gamma} and Eq.~\eqref{eq:xi}, respectively.
	Note that
	\begin{align*}
		\min_{f_m\in\mathbb{R}_m[x]} \norm{f_p-f_m}_{L^2(p)}^2 &= \min_{f_m\in\mathbb{R}_m[x]} \int |f_p-f_m|^2 p\\
		&\leq \sup \left|p\right| \min_{f_m\in\mathbb{R}_m[x]} \int |f_p-f_m|^2\\
		&\leq e^{c_\infty} \min_{f_m\in\mathbb{R}_m[x]} \norm{f_p-f_m}_{2}^2.
	\end{align*}
	Lemma~\ref{lemma:cox} yields $\tilde \gamma\leq \gamma, \tilde \xi\leq \xi$.
	It also holds that $\tilde A=\max\{A_p,A_q\}\leq e^{c_\infty/2} (m+1)$ which implies
	\begin{align*}
		(4 e \tilde b \tilde A)^2 m &\leq (4 e e^{2 \gamma + 4 e^{4 \gamma + 1} \xi \tilde A} \tilde A)^2 m\\
		&\leq (4 e e^{2 \gamma + 4 e^{4 \gamma + 1} \xi e^{c_\infty/2} (m+1)} e^{c_\infty/2} (m+1))^2 m\\
		&= 4 C^2 (m+1)^2 m e^{-c_\infty}.
	\end{align*}
	Therefore, if $4 e^{4\gamma + 1} e^{c_\infty/2} (m+1) \xi \leq 1$ then for all $\delta\in (0,1)$ such that
	$$
	4 C^2 (m+1)^2 m e^{-c_\infty} \leq \delta k
	$$
	with probability at least $1-\delta$ the maximum entropy densities $\widehat{p}$ and $\widehat{q}$ satisfying $\int \phim \widehat{p}= \widehat{\boldsymbol{\mu}}_p$ and $\int \phim \widehat{q}= \widehat{\boldsymbol{\mu}}_q$, respectively, exist and the following inequalities hold:
	\begin{align}
		d_\mathrm{KL}(p^*, \widehat{p}) &\leq 2 e \tilde b \frac{m}{k\delta}\leq 2 e e^{2 \tilde\gamma + 4 e^{4 \tilde\gamma + 1} \tilde\xi \tilde A} \frac{m}{k\delta}\leq C e^{-c_\infty} \frac{m}{k\delta}\\
		d_\mathrm{KL}(q^*, \widehat{q}) &\leq C e^{-c_\infty} \frac{m}{k\delta}\\
		\label{eq:proof_sample1}
		\norm{\log p^*/\widehat{p}}_\infty &\leq 1\\
		\label{eq:proof_sample2}
		\norm{\log{p/p^*}}_\infty &\leq 2 \tilde\gamma + 4 e^{4 \tilde\gamma + 1} \tilde\xi \tilde A
		\leq 2 \gamma + 4 e^{4 \gamma + 1} \xi e^{c_\infty/2} (m+1)
	\end{align}
	where the last inequality follows from Eq.~\eqref{eq:barron_log_bound}.
	
	Let us now prove the upper bound on $d_\mathrm{KL}(\widehat{p},\widehat{q})$.
	To do this, note that $1,\phi_1,\ldots,\phi_m$ form an orthonormal basis of $\mathbb{R}_m[x]$, i.e. they form an orthonormal basis of $\mathbb{R}_m[x]$ \wrt~the uniform weight function $\tilde q$ on $[0,1]$.
	For $\tilde q$ it holds that $\norm{\log\tilde q}_\infty < \infty$ and with $A_{\tilde q}=m+1$, due to Lemma~\ref{lemma:barron_A_bound}, it also holds that
	\begin{align*}
		\norm{f_m}_\infty \leq A_{\tilde q} \norm{f_m}_{L^2(\tilde q)}
	\end{align*}
	for all  $f_m\in\mathbb{R}_m[x]$.
	Consider the vector of moments $\boldsymbol{\tilde\mu}=\widehat{\boldsymbol{\mu}}_q\in [0,1]^m$.
	Let $\tilde p_0=\widehat p\in\mathcal{M}([0,1])$ and note that its moments are given by $\int \phim \tilde p_0=\widehat{\boldsymbol{\mu}}_p$.
	Let $\tilde b =e^{\norm{\log \tilde q/\widehat{p}}_\infty}$.
	If the maximum entropy densities $\widehat{p}$ and $\widehat{q}$ satisfying $\int\phim \widehat{p}=\widehat{\boldsymbol{\mu}}_p$ and $\int\phim \widehat{q}=\widehat{\boldsymbol{\mu}}_q$, respectively, exist, then by Lemma~\ref{lemma:barron_and_sheu_lemma5} it holds that
	\begin{align*}
		\norm{\widehat{\boldsymbol{\mu}}_p-\widehat{\boldsymbol{\mu}}_q}_2\leq\frac{1}{4 A_{\tilde q} e \tilde b}\quad\implies\quad d_\mathrm{KL}(\widehat p,\widehat q)\leq 2 e^t \tilde b \norm{\widehat{\boldsymbol{\mu}}_p-\widehat{\boldsymbol{\mu}}_q}_2^2
	\end{align*}
	especially for $t$ such that $4 e \tilde b A_{\tilde q} \norm{\widehat{\boldsymbol{\mu}}_p-\widehat{\boldsymbol{\mu}}_q}_2\leq t\leq 1$.
	If Eq.~\eqref{eq:proof_sample1} and Eq.~\eqref{eq:proof_sample2} hold, then
	\begin{align*}
		\tilde b &\leq e^{\norm{\log \tilde (\tilde q p p^*) / (p p^* \widehat{p})}_\infty}\\
		&\leq e^{\norm{\log p}_\infty +\norm{\log p/p^*}_\infty + \norm{\log p^*/\widehat{p}}_\infty}\\
		&\leq e^{c_\infty + 2 \gamma + 4 e^{4 \gamma + 1} \xi e^{c_\infty/2} (m+1) + 1}\\
		&= \frac{1}{2} C
	\end{align*}
	for $C$ as defined in Lemma~\ref{lemma:convergence_in_H}.
	Therefore, if $4 e^{4\gamma + 1} e^{c_\infty/2} (m+1) \xi \leq 1$ then for all $\delta\in [1,0)$ such that
	$$
	4 C^2 (m+1)^2 m e^{-c_\infty} \leq \delta k
	$$
	with probability at least $1-\delta$ the maximum entropy densities $\widehat{p}$ and $\widehat{q}$ satisfying $\int\phim \widehat{p}=\widehat{\boldsymbol{\mu}}_p$ and $\int\phim \widehat{q}=\widehat{\boldsymbol{\mu}}_q$, respectively, exist, and, since Eq.~\eqref{eq:proof_sample1} and Eq.~\eqref{eq:proof_sample2} hold, the following also holds:
	\begin{align*}
		\norm{\widehat{\boldsymbol{\mu}}_p-\widehat{\boldsymbol{\mu}}_q}_2\leq\frac{1}{2 (m+1) e C}\quad\implies\quad d_\mathrm{KL}(\widehat p,\widehat q)\leq e C \norm{\widehat{\boldsymbol{\mu}}_p-\widehat{\boldsymbol{\mu}}_q}_2^2.
	\end{align*}
\end{proof}\\
\begin{remark}
	\label{remark:simplification_min_sample_size}
	If the densities $p,q\in\mathcal{H}_{m,\epsilon}$ then Lemma~\ref{lemma:simplification} allows to replace the assumption in Eq.~\eqref{eq:assumption_on_sample_size} of Lemma~\ref{lemma:sample_convergence_in_H} by the assumption
	\begin{align}
		\frac{1}{\delta} 16 e^{3m-1} (m+1)^2 m\leq k.
	\end{align}
	However, smaller lower bounds on the sample size are obtained by using the definition of $C$ as in Lemma~\ref{lemma:convergence_in_H}.
\end{remark}
We are now able to prove our main result.
\Needspace{15\baselineskip}
\mainresult*

\begin{proof}
	Consider some $m, \epsilon,\phim$ and $\mathcal{H}_{m,\epsilon}$ as in Definition~\ref{def:H} and a function class $\mathcal{F}$ with finite VC-dimension $\vc$.
	Let $p,q\in\mathcal{H}_{m,\epsilon}$ and ${l_p,l_q:[0,1]^d\to [0,1]}$.
	Let $X_p$ and $X_q$ be two arbitrary $k$-sized samples drawn from $p$ and $q$, respectively.
	
	Eq.~\eqref{eq:simple_da_equation} (proven by Ben-David et al.~\cite{ben2010theory}) implies that
	\begin{align}
		% 	\label{eq:simple_da_equation}
		\int \left|f-l_q\right| q\leq \int \left|f-l_p\right| p + \norm{p-q}_{L^1} + \lambda^*
	\end{align}
	where $\lambda^* = \inf_{h\in\mathcal{F}}\big(\int \left|f-l_p\right| p+\int \left|f-l_q\right| q\big)$.
	Combining Theorem~\ref{thm:vc_bound} with Theorem~\ref{thm:lone_domain_adaptation_bound} the following holds with probability at least $1-\delta$ (over the choice of $k$-sized samples $X_q$ drawn from $q$):
	\begin{align}
		% 	\label{eq:vapnik_bound}
		\begin{split}
			\int \left|f-l_q\right| q &\leq \frac{1}{k}\sum_{\x\in X_p}|f(\x)-l(\x)|
			+ \sqrt{\frac{4}{k} \left( \vc\log \frac{2 e k}{\vc} + \log\frac{4}{\delta} \right)}\\
			&\phantom{\leq} + \lambda^* + \norm{p-q}_{L^1}
		\end{split}
	\end{align}
	
	In the following, we bound the term $\norm{p-q}_{L^1}$ from above to obtain the second line of Eq.~\eqref{eq:moment_adapt_result_bound}:
	If the maximum entropy densities $\widehat{p}$ and $\widehat{q}$
	satisfying $\int \phim \widehat{p}=\widehat{\boldsymbol{\mu}}_p=\frac{1}{k}\sum_{\x\in X_p}\boldsymbol{\phi}_m(\x)$ and $\int \phim \widehat{q}=\widehat{\boldsymbol{\mu}}_q=\frac{1}{k}\sum_{\x\in X_q}\boldsymbol{\phi}_m(\x)$, respectively, exist, then the Triangle inequality and Eq.~\eqref{eq:relationship:Pinsker} imply
	\begin{align*}
		\norm{p-q}_{L^1} &\leq \norm{\widehat{p}-\widehat{q}}_{L^1} +\norm{\widehat{p}-p^*}_{L^1} + \norm{\widehat{q}-q^*}_{L^1} + \norm{p^*-p}_{L^1} + \norm{q^*-q}_{L^1}\\
		&\leq \norm{\widehat{p}-\widehat{q}}_{L^1} +\norm{\widehat{p}-p^*}_{L^1} + \norm{\widehat{q}-q^*}_{L^1} + \sqrt{2 d_\mathrm{KL}(p, p^*)} + \sqrt{2 d_\mathrm{KL}(q, q^*)}
	\end{align*}
	which, by the $\epsilon$-closeness of $p,q\in\mathcal{H}_{m,\epsilon}$, further implies that
	\begin{align}
		\label{eq:sample_proof_l1_inequ}
		% \nonumber
		\norm{p-q}_{L^1} 
		&\leq \norm{\widehat{p}-\widehat{q}}_{L^1} +\norm{\widehat{p}-p^*}_{L^1} + \norm{\widehat{q}-q^*}_{L^1} + \sqrt{8\epsilon}\\
		\nonumber
		&\leq \sqrt{d_\mathrm{KL}(\widehat{p}, \widehat{q})} +\sqrt{d_\mathrm{KL}(\widehat{p}, p^*)} + \sqrt{d_\mathrm{KL}(\widehat{q}, q^*)} + \sqrt{8\epsilon}.
	\end{align}
	The vector $\phim=(\phi_1,\ldots,\phi_{m d})^\text{T}$ is a polynomial vector such that $1,\phi_1,\ldots,\phi_{m d}$ is an orthonormal basis of ${\mathrm{Span}(\mathbb{R}_m[x_1]\cup\ldots\cup\mathbb{R}_m[x_d])}$. 
	Therefore, by applying Lemma~\ref{lemma:independence}, we obtain
	\begin{align}
		\label{eq:inequ_impl_proof_main_sample_thm}
		\norm{p-q}_{L^1} \leq \sqrt{2\sum_{i=1}^d d_\mathrm{KL}(\widehat{p}_i, \widehat{q}_i)} + \sqrt{2\sum_{i=1}^d d_\mathrm{KL}(\widehat{p}_i, p^*_i)} + \sqrt{2\sum_{i=1}^d d_\mathrm{KL}(\widehat{q}_i, q^*_i)} + \sqrt{8 \epsilon}
	\end{align}
	where $p_i^*$ and $\widehat{p}_i$ denote the maximum entropy densities of $p$ and $\widehat{p}$ satisfying $\int \boldsymbol{\phi}_m^{(i)} p_i^*=\int \boldsymbol{\phi}_m^{(i)} p$ and $\int \boldsymbol{\phi}_m^{(i)} \widehat{p}_i=\int \boldsymbol{\phi}_m^{(i)} \widehat{p}$, respectively, for some vector $\boldsymbol{\phi}_m^{(i)}=(\phi_{i1},\ldots,\phi_{im})$ such that $1,\phi_{i1},\ldots,\phi_{im}$ is an orthonormal basis of $\mathbb{R}_m[x_i]$.
	
	The density $p_i^*$ is the maximum entropy density satisfying $\int \boldsymbol{\phi}_m^{(i)} p_i^*=\boldsymbol{\mu}_{p_i}=\int \boldsymbol{\phi}_m^{(i)} p_i$ for the marginal density $p_i$ of $p$ defined by
	\begin{align*}
		p_i(x_i)=\int_0^1\cdots\int_0^1 p(x_1,\ldots,x_d)\, d x_1\cdots d x_{i-1} d x_{i+1}\cdots d x_d.
	\end{align*}
	Denote by $X_{p_i}$ the $k$-sized sample (multiset) consisting of the $i$-th coordinates of the vectors stored in the sample $X$.
	It holds that the sample $X_{p_i}$ is drawn from the probability density $p_i$ and the density $\widehat{p}_i$ can be seen to be the maximum entropy density satisfying $\int \boldsymbol{\phi}_m^{(i)} \widehat{p}_i=\widehat{\boldsymbol{\mu}}_{p_i}=\frac{1}{k}\sum_{\x\in X_{p_i}}\boldsymbol{\phi}_m^{(i)}(\x)$.
	From Definition~\ref{def:H} it follows that $\norm{\partial_{x_i}^m \log p_i}_{L^2}\leq 5^{m-4}$ and therefore ${\log p_i\in W_2^r}$ with Sobolev space $W_2^r$.
	All assumptions from Lemma~\ref{lemma:sample_convergence_in_H} are fulfilled and therefore the following holds:
	If $4 e^{4\gamma + 1} e^{c_\infty/2} (m+1) \xi \leq 1$ then for all $\delta\in (0,1)$ such that
	\begin{align*}
		4 C^2 (m+1)^2 m e^{-c_\infty} \leq \delta k
	\end{align*}
	with probability at least $1-\delta$ the maximum entropy densities $\widehat{p}_i$ and $\widehat{q}_i$ exist and the following holds:
	\begin{gather}
		\label{eq:proof_sample_size_p}
		d_\mathrm{KL}(p_i^*, \widehat{p}_i)\leq C e^{-c_\infty} \frac{m}{k\delta}\\
		\label{eq:proof_sample_size_q}
		d_\mathrm{KL}(q_i^*, \widehat{q}_i)\leq C e^{-c_\infty} \frac{m}{k\delta}\\
		\label{eq:proof_sample_moment_diff}
		\norm{\widehat{\boldsymbol{\mu}}_{p_i}-\widehat{\boldsymbol{\mu}}_{q_i}}_2 \leq \frac{1}{2 (m+1) e C}\quad \implies\quad d_\mathrm{KL}(\widehat{p}_i,\widehat{q}_i)\leq e C \norm{\widehat{\boldsymbol{\mu}}_{p_i}-\widehat{\boldsymbol{\mu}}_{q_i}}_2^2
	\end{gather}
	with
	\begin{align*}
		c_\infty &=\max\left\{\norm{\log p_i}_\infty,\norm{\log q_i}_\infty\right\}\\
		c_r &=\max\{\norm{\partial_x^r \log p_i}_{L^2}\norm{\partial_x^r \log q_i}_{L^2}\}
	\end{align*}
	and $\gamma,\xi$ and $C$ are defined as in Lemma~\ref{lemma:convergence_in_H}.
	Since $p,q\in\mathcal{H}_{m,\epsilon}$, Lemma~\ref{lemma:simplification} implies that
	\begin{align*}
		4 e^{4\ga + 1} e^{c_\infty/2} (m+1) \de\leq 1\quad\text{and}\quad C\leq 2 e^{(3m-1)/2}
	\end{align*}
	and by Remark~\ref{remark:simplification_min_sample_size} we may simplify the assumption in Eq.~\eqref{eq:assumption_on_sample_size} and obtain
	\begin{align*}
		4 C^2(m+1)^2 m \delta^{-1} \leq k
	\end{align*}
	as alternative.
	
	Combining the bounds in Eq.~\eqref{eq:proof_sample_size_p}, Eq.~\eqref{eq:proof_sample_size_q} and Eq.~\eqref{eq:proof_sample_moment_diff} with the bound on the $L^1$-difference in Eq.~\eqref{eq:inequ_impl_proof_main_sample_thm}, yields the following statement.
	For every $\delta\in (0,1)$ and all $f\in\mathcal{F}$ the following holds with probability at least $1-\delta$ (over the choice of samples):
	If
	\begin{flalign*}
		%		\label{eq:large_enough_sample_size}
		4 C^2(m+1)^2 m \delta^{-1} \leq k\\
		%		\label{eq:small_enough_moment_diff}
		\norm{\widehat{\boldsymbol{\mu}}_p-\widehat{\boldsymbol{\mu}}_q}_1 \leq \left(2 (m+1) e C\right)^{-1}
	\end{flalign*}
	then the maximum entropy densities $\widehat{p}$ and $\widehat{q}$ exist and it holds that
	\begin{align}
		\norm{p-q}_{L^1} &\leq \sqrt{2\sum_{i=1}^d d_\mathrm{KL}(\widehat{p}_i, \widehat{q}_i)} + \sqrt{2\sum_{i=1}^d d_\mathrm{KL}(\widehat{p}_i, p^*_i)} + \sqrt{2\sum_{i=1}^d d_\mathrm{KL}(\widehat{q}_i, q^*_i)} + \sqrt{8 \epsilon}\nonumber\\
		&\leq \sqrt{2\sum_{i=1}^d e C \norm{\widehat{\boldsymbol{\mu}}_{p_i}-\widehat{\boldsymbol{\mu}}_{q_i}}_2^2} + 2 \sqrt{2\sum_{i=1}^d C e^{-c_\infty} \frac{m}{k\delta}} + \sqrt{8 \epsilon}\nonumber\\
		&\leq \sqrt{2 e C} \sum_{i=1}^d { \norm{\widehat{\boldsymbol{\mu}}_{p_i}-\widehat{\boldsymbol{\mu}}_{q_i}}_2} + \sqrt{8 C \frac{d m}{\delta k}} e^{-c_\infty/2} + \sqrt{8 \epsilon}\nonumber\\
		\label{eq:sample_bound_improved}
		&\leq \sqrt{2 e C} \norm{\widehat{\boldsymbol{\mu}}_p-\widehat{\boldsymbol{\mu}}_q}_1 + \sqrt{8 C \frac{d m}{k\delta}} + \sqrt{8\epsilon}
	\end{align}
	where the last inequality is due to the fact that $e^{-c_\infty/2}\leq 1$ and the inequality $\norm{\x}_2\leq\norm{\x}_1$.
\end{proof}\\

\section{Discussion}
\label{sec:conclusion}

In this chapter, we formalize the problem of domain adaptation for binary classification under the assumption that finitely many moments of the source and the target distribution are similar.
We show that additional conditions are needed to guarantee a small misclassification risk of discriminative models trained only on source data.
Appropriate conditions on the underlying distributions are presented based on the sample size, the number of moments, the smoothness of the underlying probability densities and the entropy of the densities.
For smooth densities with weakly coupled marginals, our conditions can be made as precise as required by increasing the number of moments or the smoothness of the distributions.
Explicit upper bounds on the misclassification risk are provided.

Our analysis formalizes the following intuition: The more information the similar moments store about the source and the target distribution, the higher is the expected success of training a model only on data from the source distribution.
Moreover, the smoother the distributions are, the less moments are needed.

Although additional conditions on the distributions are needed, the weakness of our moment-based assumptions on the similarity between distributions implies that our results give immediate consequences for most other concepts of similarity.

\newpage

\chapter{Moment-Based Regularization for Domain Adaptation}
\label{chap:cmd_algorithm}

In this thesis, we study domain adaptation problems under weak assumptions on the similarity of distributions.
In Chapter~\ref{chap:learning_bounds}, we formalize this problem based on finitely many differences of moments and propose conditions for the existence of solutions.
In this chapter of the thesis, we study domain adaptation problems beyond these conditions.
We propose a new metric-based regularization strategy which aims at learning new domain-specific data representations that have finitely many moments in common.

As discussed in Section~\ref{sec:original_contribution}, parts of this chapter have already been published.
As a result, extensions of our approach from independent research groups have been developed \eg~for semi-supervised text classification~\cite{peng2018cross}, person re-identification~\cite{ke2018identity}, word segmentation~\cite{xing2018adaptive}, more general problems strongly violating the covariate shift assumption~\cite{peng2019weighted} and it has been combined with other distance measures for higher performance~\cite{Wei2018GenerativeAG}.

This chapter is structured as follows:
Section~\ref{sec:motivation_cmd_algo} motivates our approach and Section~\ref{sec:relation_to_related_works_cmd_algorithm} discusses some relations to the state-of-the-art.
Section~\ref{sec:moment_distance_for_da} proposes our new moment distance which is appropriate for domain adaptation.
Section~\ref{sec:cmd_regularization} shows how to use our metric for regularization of neural networks.
Section~\ref{sec:experimental_evaluations} gives empirical results on large scale datasets together with its discussion.
Section~\ref{sec:proofs_cmd_algo} gives all the proofs of the stated claims and Section~\ref{sec:conclusion_cmd_algo} concludes this chapter.

\section{Motivation and General Idea}
\label{sec:motivation_cmd_algo}

Domain adaptation problems arise in many practical fields.
One important example is sentiment analysis of product reviews~\cite{glorot2011domain} where a model is trained on data of a source product category, \eg~kitchen appliances, and it is tested on data of a related category, \eg~books.
A second example is the training of image classifiers on unlabeled real images by means of nearly-synthetic images that are fully labeled but have a distribution different from the one of the real images~\cite{ganin2016domain}.

In this chapter of the thesis, we approach these problems by following the principle of learning new data transformations as described in Subsection~\ref{subsec:da_algorithms}.
That is, we transform the data in a new space where the domain-specific distributions are similar and learn a classifier on the source transformations.
See Eq.~\eqref{eq:objective_of_principle_of_representation_learning_for_da} and Figure~\ref{fig:grafical_abstract} for illustration.
As motivated in Chapter~\ref{chap:introduction} and Chapter~\ref{chap:learning_bounds}, we model weak assumptions on the similarity of distributions by focusing on moment distances as described in Subsection~\ref{subsec:moment_distances}.
For the model class, we rely on neural networks as described in Section~\ref{sec:neural_networks}.

In addition to the general goal of finding a model with a low misclassification risk, we aim at a robust learning behaviour.
That is, the final models' performance should be insensitive to changes of the regularization parameter needed in the objective in Eq.~\eqref{eq:objective_of_principle_of_representation_learning_for_da} of the principle of learning new feature representations.
The robustness is especially important as the selection of the regularization parameter has to be performed without target labels.

Our idea is to approach both properties, \ie~high performance and robustness, by applying a combination of integral probability metrics~\cite{muller1997integral} on polynomial function spaces as regularizer in the objective of stochastic gradient descent.
See Figure~\ref{fig:da_by_sgd} and Algorithm~\ref{alg:sgd_for_da} for illustration.
Although, the alignment of first and second order polynomial statistics performs well in domain adaptation~\mbox{\cite{tzeng2014deep,sun2016deep}} and generative modeling~\cite{mroueh2017mcgan}, higher order polynomials have not been considered before.
Possible reasons are instability issues that arise in the application of higher order polynomials.
We approach these issues by modifying an integral probability metric such that it becomes less translation-sensitive on a polynomial function space.
We call the new probability metric the \textit{central moment discrepancy} (CMD).

The CMD is a moment distance as described in Subsection~\ref{subsec:moment_distances}.
It has an intuitive representation in the dual space as the sum of differences of higher order central moments of the corresponding distributions.
We provide a strictly decreasing upper bound for its moment terms.
We give upper and lower bounds of the CMD in terms of other probability metrics.
The upper bounds are in terms of the L\'evy metric and the lower bounds in terms of the total variation distance.
The relation of the CMD to various other probability metrics is derived by supplementing Figure~\ref{fig:metrics} of the relations between probability metrics.
From the theory proposed in Chapter~\ref{chap:learning_bounds}, we derive a bound on the misclassification risk of our approach.

In addition, the classification performance is analyzed on artificial data as well as on benchmark datasets for sentiment analysis of product reviews~\cite{chen2012marginalized}, object recognition~\cite{saenko2010adapting} and digit recognition~\cite{lecun1998mnist,netzer2011reading,ganin2016domain}.
In order to increase the visibility of the effects of the proposed method we refrain from excessive parameter tuning but carry out our experiments with fixed regularization weighting parameter, fixed parameters of the metric, and without tuning the learning rate.
A post-hoc analysis is used to test the sensitivity of our approach to changes of the number-of-moments parameter and changes of the number of hidden nodes.

The experiments indicate that (a) our approach often outperforms related approaches which are based on stronger concepts of similarity and (b) it is not very sensitive to parameter changes.

\section{Related Work}
\label{sec:relation_to_related_works_cmd_algorithm}

The metric-based regularization of neural networks for domain adaptation has been approached by many methods.
Subsection~\ref{subsec:domain_adaptation_by_nns} outlines three main principles which are based on applying the $\mathcal{F}$-divergence, the maximum mean discrepancy based on Gaussian kernels or combining specific neural network architectures with the maximum mean discrepancy.

In contrast to our approach, methods which apply the $\mathcal{F}$-divergence normally train an additional classifier which includes the need for new parameters, additional computation times and validation procedures.
In addition, the reversal of the gradient can cause several theoretical problems~\cite{arjovsky2017towards,eghbal2019mixture} that contribute to instability and saturation during training.
Our approach achieves higher or comparable accuracy on several domain adaptation tasks on benchmark datasets.

Compared to approaches applying the maximum mean discrepancy with Gaussian kernel as regularizer, our approach is sometimes less sensitive to changes of the regularization parameter as discussed in Subsection~\ref{subsec:parameter_sensitivity_experiment}.

Compared to approaches which combine specific neural network architectures with the application of the maximum mean discrepancy with Gaussian kernel, our approach is not restricted to multiple layers or network architectures.
Actually, it can be combined with these ideas.

\section{A Moment Distance for Domain Adaptation}
\label{sec:moment_distance_for_da}

In this section, we describe a new moment distance with a low translation sensitivity that is appropriate as a regularizer for the principle of learning new data representations.

\subsection{Integral Probability Metrics on Polynomial Function Spaces}
\label{subsec:integral_prob_metric_on_polynomial_function_space}

As discussed in Subsection~\ref{subsec:ten_probability_metrics}, one important class of probability metrics are integral probability metrics defined by Eq.~\eqref{eq:integral_probability_metrics}.
Depending on the choice of the function set $\mathcal{F}$ in Eq.~\eqref{eq:integral_probability_metrics}, one might obtain the Wasserstein distance, the total variation distance, or the maximum mean discrepancy.
In our approach, we focus on polynomial function spaces.
The expectations of polynomials are sums of moments.
The resulting metrics are therefore moment distances.

In the following let us denote the vector
\begin{align}
	\label{eq:monomial_vector_nuk_for_cmd}
	\boldsymbol{\nu}_j(\x)=(\nu_1(\x),\ldots,\nu_{\zeta(j,d)}(\x))^\text{T}
\end{align}
of all $\zeta(j,d)=\binom{d+j-1}{j}$ monomials $\nu_1,\ldots,\nu_{\zeta(j,d)}\in\mathbb{R}[x_1,\ldots,x_d]$ of total degree $j$ in $d$ variables, \eg
\begin{align}
	\boldsymbol{\nu}_3((x_1,x_2)^\text{T})=(x_1^3, x_1^2 x_2, x_1 x_2^2, x_2^3)^\text{T}.
\end{align}

Further, let us denote by $\mathcal{P}^m$ the class of homogeneous polynomials ${g:\mathbb{R}^d\rightarrow\mathbb{R}}$ of degree $m$ with normalized coefficient vector, \ie
\begin{align}
	\label{eq:polyspace}
	g(\vec x) = \langle \vec{w}, \boldsymbol{\nu}_j(\vec x)\rangle
\end{align}
with $\norm{\vec w}_2 \leq 1$ for a real vector $\vec{w}\in\mathbb{R}^{\zeta(m,d)}$.
For example, the expectations of polynomials $g\in\mathcal{P}^3$ \wrt~a probability density function $p\in\MR$ are linear combinations of the third raw moments of $p$, \ie
\begin{align}
	\begin{split}
		\int_{\mathbb{R}^d} g(\x) p(\x) \diff\x = w_1 \int_{\mathbb{R}^d} x_1^3\, &p(\x) \diff\x+ w_2 \int_{\mathbb{R}^d} x_1^2 x_2\, p(\x) \diff\x\\
		&+ w_3 \int_{\mathbb{R}^d} x_1 x_2^2\, p(\x) \diff\x+ w_4 \int_{\mathbb{R}^d} x_2^3\, p(\x) \diff\x,
	\end{split}
\end{align}
with $\sqrt{w_1^2+w_2^2+w_3^2+w_4^2}\leq 1$.

It is interesting to point out that the space of polynomials $\mathcal{P}^m$ in Eq.~\eqref{eq:polyspace} is the unit ball of a reproducing kernel Hilbert space as described in Subsection~\ref{subsec:ten_probability_metrics}.

\subsection{Problem of Mean Over-Penalization}
\label{subsec:mean_over_penalization_problem}

{Unfortunately, an integral probability metric in Eq.~\eqref{eq:integral_probability_metrics} based on the function space $\Pk^m$ in Eq.~\eqref{eq:polyspace} and different other metrics~\mbox{\cite{mroueh2017mcgan,li2017mmd}} suffer from the drawback of mean over-penalization which becomes worse with increasing polynomial order.}

For the sake of illustration, let us consider two probability density functions $p,q\in\mathcal{M}(\mathbb{R})$.
For $m=1$ we obtain
\begin{align}
	\label{eq:muDiff}
	d_{\mathcal{P}^1}(p,q)  = \sup_{|w|\leq 1} \left|\int w x\,p\diff x - \int w x\, q\diff x\right| = |\mu_p - \mu_q|,
\end{align}
where $\mu_p = \int xp$ and $\mu_q = \int x q$. 
Now, let us consider higher orders $m \in \mathbb{N}$.
Assume that the densities $p$ and $q$ have identical central moments
$c_j(p) := \int (x - \mu_p)^j p$ for $j\geq 2$ but different means $\mu_p  \neq \mu_q$.
By expressing the raw moment $\int x^m\,p$ by central moments 
$c_j(p)$, we obtain, by means of the Binomial theorem,
\begin{align}
	\label{eq:meansensitive}
	d_{\mathcal{P}^m}(p,q) =  \left|\int x^m\,p(x)\diff x-\int x^m\,q(x)\diff x\right|= \left|\sum_{j=0}^m\binom{m}{j} c_j(p) \left(\mu_p^{m-j} - {\mu_q}^{m-j}\right)\right|.
\end{align}
Since the mean values contribute to the sum of Eq.~\eqref{eq:meansensitive} by its powers, the metric in Eq.~\eqref{eq:integral_probability_metrics} with polynomials as function set is not translation invariant.
Much worse, consider for example $\mu_p = 1 +\varepsilon/2$ and $\mu_q = 1 -\varepsilon/2$.
Then small changes of the mean values can lead to large deviations in the resulting metric, i.e.~can cause instability in the learning process.

For another example consider Figure~\ref{fig:problem}.
Different raw moment based metrics consider the source Beta distribution (dashed) to be more similar to the Normal distribution on the left (solid) than to the slightly shifted Beta distribution on the right (solid).
This is especially the case for the integral probability metrics in Eq.~\eqref{eq:integral_probability_metrics} with the polynomial spaces $\Pk^1$, $\Pk^2$ and $\Pk^4$, the maximum mean discrepancy with the standard polynomial kernel ${\kappa(x,y):=(1+\langle x,y\rangle)^2}$ and the quartic kernel ${\kappa(x,y):=(1+\langle x,y\rangle)^4}$~\cite{gretton2006kernel,li2017mmd}, and the integral probability metrics in~\cite{mroueh2017mcgan}.
See Subsection~\ref{subsec:proof_mean_overpenalization_problem} for the proof.

In this work, we propose a metric that considers the distributions on the right to be more similar.

\begin{figure}[ht]
	\makebox[\linewidth][c]{%
		\begin{subfigure}[b]{.40\textwidth}
			\centering
			\includegraphics[width=.95\textwidth]{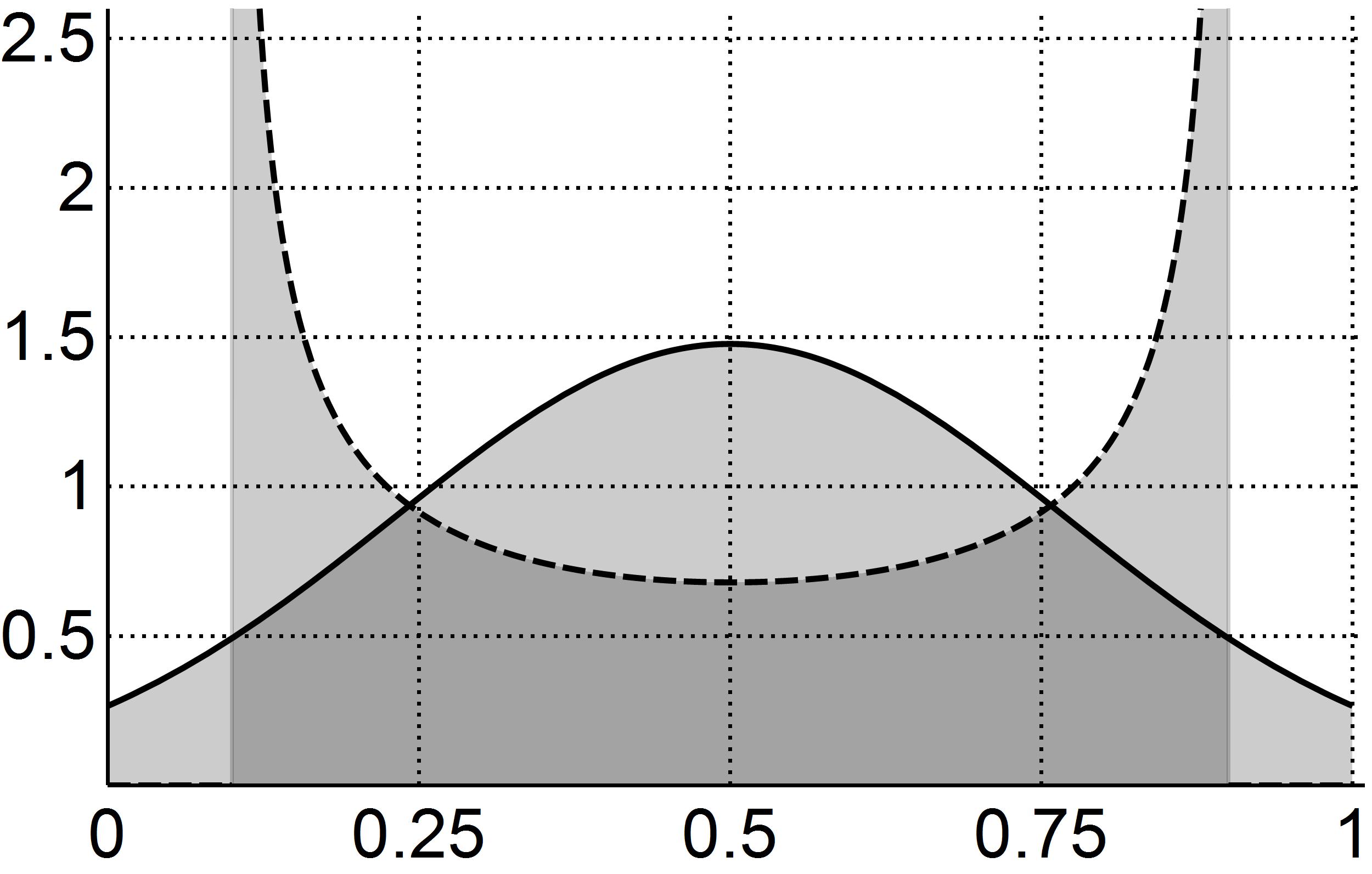}
		\end{subfigure}%
		~~~~~~~~
		\begin{subfigure}[b]{.40\textwidth}
			\centering
			\includegraphics[width=0.95\textwidth]{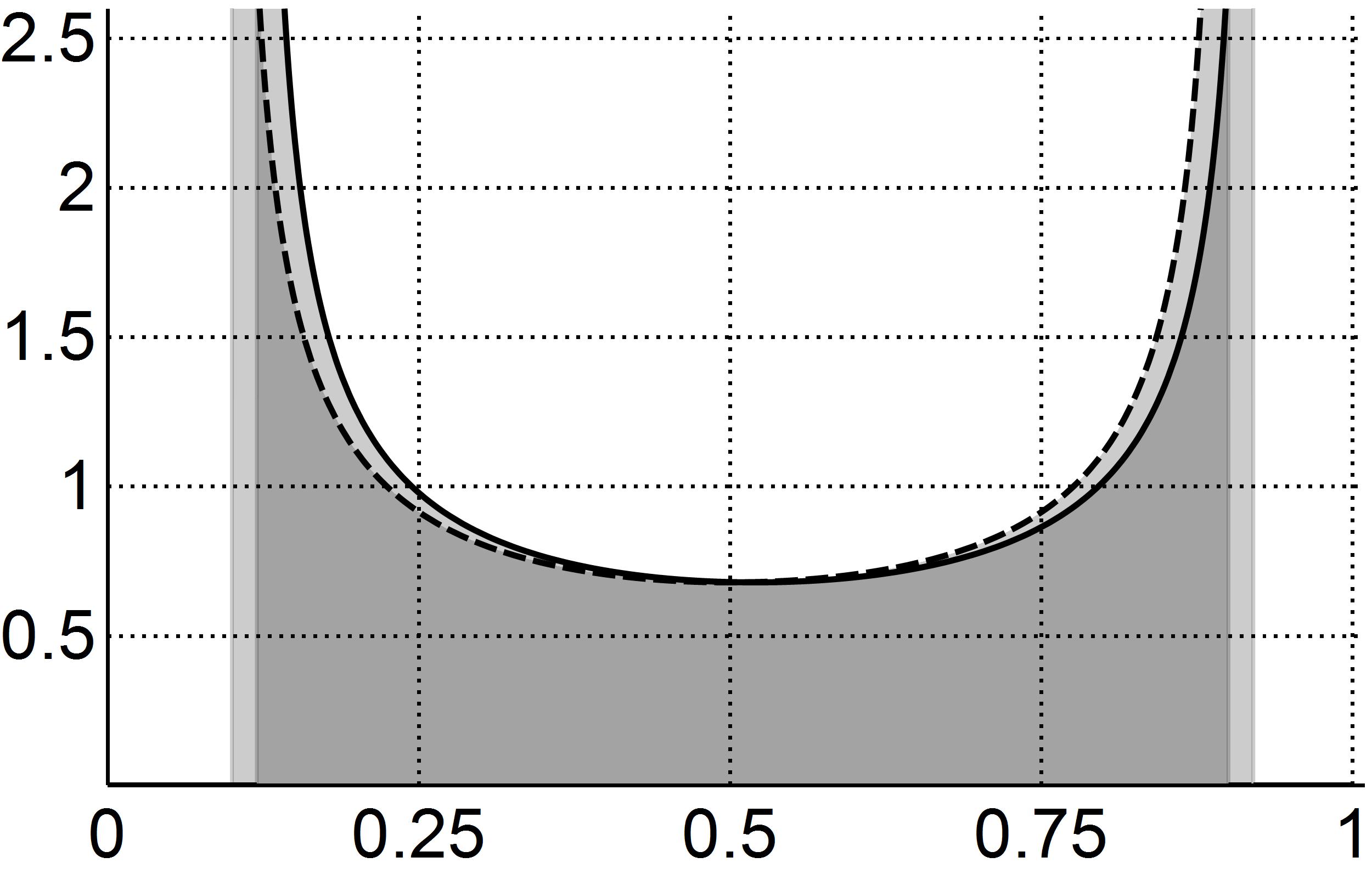}
		\end{subfigure}%
	}
	\caption[Illustrative example of the problem of mean over-penalization.]{
		Illustrative example of the problem of mean over-penalization.
		The maximum mean discrepancy with standard polynomial kernel~\cite{gretton2006kernel} and different other raw moment based metrics~\cite{mroueh2017mcgan,li2017mmd} lead to counter-intuitive distance measurement as they consider the source Beta distribution (dashed) to be more similar to the Normal distribution on the left (solid) than to the slightly shifted Beta distribution on the right (solid).
		The proposed metric considers the distributions on the right to be more similar.
	}
	\label{fig:problem}
\end{figure}

\subsection{The Central Moment Discrepancy}
\label{subsec:cmd}

Eq.~\eqref{eq:meansensitive} motivates us to look for a modified version of integral probability metrics based on polynomial function spaces that are less sensitive to translation.
Therefore, we  propose the following centralized and translation-invariant versions of integral probability metrics on polynomial function spaces.
\Needspace{10\baselineskip}
\begin{restatable}[Centralized Integral Probability Metric]{defrep}{centralizedipm}%
	\label{def:centralized_integral_probability_metric_ipm}%
	We define the polynomial centralized integral probability metric of order $m$ between two probability density functions $p,q\in\MR$ with finite central moments of order $m$ by
	{\small
		\begin{align}
			\label{eq:centralized_integral_probability_metric_ipm}
			d^\text{c}_{\Pk^m}(p,q)=\sup_{g\in\Pk^m}\left| \int_{\mathbb{R}^d} g\!\left(\x-\int_{\mathbb{R}^d} \x p(\x)\diff\x\right) p(\x)\diff\x - \int_{\mathbb{R}^d} g\!\left(\x-\int_{\mathbb{R}^d} \x q(\x)\diff\x\right) q(\x)\diff\x \right|.
		\end{align}}
	\end{restatable}
	
	We now introduce a ``refined'' metric as the weighted sum of polynomial centralized integral probability metrics in Eq.~\eqref{eq:centralized_integral_probability_metric_ipm}.
	\Needspace{20\baselineskip}
	\begin{restatable}[Central Moment Discrepancy]{defrep}{cmd}%
		\label{def:cmd}%
		We define the central moment discrepancy (CMD) of order $m$ between two probability density functions $p,q\in\MR$ with finite central moments up to order $m$ by
		\begin{equation}
			\label{eq:cmd}
			\mathrm{cmd}_m(p,q) = a_1\, d_{\mathcal{P}^1}(p,q) + \sum_{j=2}^m  a_j\, d_{\Pk^j}^\text{c}(p,q),
		\end{equation}
		where $a_j \geq 0$ are weighting factors.
	\end{restatable}
	Note that in Eq.~\eqref{eq:cmd} for $m=1$ we take $d_{\mathcal{P}^1}(p,q) = |\mu_p - \mu_q|$ which still behaves smoothly \wrt~changes of the mean values and is more informative than $d_{\mathcal{P}^1}^\text{c}(p,q) = 0$.
	The lower the value of $a_1$, the less translation sensitive is the CMD.
	
	Probability density functions with compact support are completely determined by their infinite sequence of moments.
	The CMD is therefore a metric on the set of compactly supported distributions for $m=\infty$.
	However, as a moment distance, the CMD is only a pseudo-metric for $m<\infty$.
	
	The questions of how to compute the metric efficiently, how to appropriately set the weighting values $a_j$ and how the CMD relates to other probability metrics, are discussed in the next Subsection~\ref{subsec:properties_of_cmd}.
	
	\subsection{Properties of The Central Moment Discrepancy}
	\label{subsec:properties_of_cmd}
	
	So far, our approach of defining an appropriate metric, \ie~Eq.~\eqref{eq:cmd}, has been motivated by theoretical considerations starting from Eq.~\eqref{eq:integral_probability_metrics} and our analysis in Subsection~\ref{subsec:mean_over_penalization_problem}.
	However, for practical applications we need to compute our metric in a computationally efficient way.
	The following theorem provides a key.
	See Subsection~\ref{subsec:proof_dual_cmd} for its proof.
	\begin{restatable}[Dual Representation of Central Moment Discrepancy]{thmrep}{thmdualcmd}%
		\label{thm:dual_cmd}%
		By setting $c_1(p)=\int \x p(\x)\diff\x$ and $c_j(p)=\int \boldsymbol{\nu}_j(\x-c_1(\x))\,p(\x)\diff\x$ for $j\geq2$ with the vector $\boldsymbol{\nu}_j$ of monomials as in Eq.~\eqref{eq:monomial_vector_nuk_for_cmd}, 
		we obtain as equivalent representation for the central moment discrepancy:
		\begin{equation}
			\label{eq:dual_cmd}
			\mathrm{cmd}_m(p,q) = \sum_{j=1}^m a_j \norm{c_j(p)-c_j(q)}_2.
		\end{equation}
	\end{restatable}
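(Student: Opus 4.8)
The plan is to evaluate each summand of $\mathrm{cmd}_m(p,q)$ in Eq.~\eqref{eq:cmd} separately and to show that every polynomial (centralized) integral probability metric collapses to the Euclidean norm of a difference of central-moment vectors. The central observation is that the supremum in Definition~\ref{def:centralized_integral_probability_metric_ipm} runs over functions $g$ that depend \emph{linearly} on their normalized coefficient vector $\w$, so the supremum can be converted into a dual-norm computation rather than a genuine variational problem.

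First I would fix $j\geq 2$ and rewrite $d^\text{c}_{\Pk^j}(p,q)$. Every $g\in\Pk^j$ has the form $g(\x)=\langle\w,\boldsymbol{\nu}_j(\x)\rangle$ with $\norm{\w}_2\leq 1$ by Eq.~\eqref{eq:polyspace}. Substituting this into Eq.~\eqref{eq:centralized_integral_probability_metric_ipm} and using linearity of the integral together with linearity of the inner product in its first argument, the constant vector $\w$ can be pulled out of both integrals, giving
\begin{align*}
	d^\text{c}_{\Pk^j}(p,q) = \sup_{\norm{\w}_2\leq 1}\left| \left\langle \w,\, \int_{\mathbb{R}^d}\boldsymbol{\nu}_j\!\left(\x - c_1(p)\right)p(\x)\diff\x - \int_{\mathbb{R}^d}\boldsymbol{\nu}_j\!\left(\x - c_1(q)\right)q(\x)\diff\x \right\rangle\right|.
\end{align*}
By the definitions $c_1(p)=\int\boldsymbol{x}\,p$ and $c_j(p)=\int\boldsymbol{\nu}_j(\x-c_1(p))\,p$, the two integrals are exactly the central-moment vectors $c_j(p)$ and $c_j(q)$, which are finite by the standing hypothesis of finite central moments up to order $m$. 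Hence the bracket equals $\langle\w,\,c_j(p)-c_j(q)\rangle$.

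Next I would invoke the standard dual-norm identity $\sup_{\norm{\w}_2\leq 1}|\langle\w,\vec v\rangle|=\norm{\vec v}_2$: the inequality ``$\leq$'' is Cauchy--Schwarz, and equality is attained at $\w=\vec v/\norm{\vec v}_2$ when $\vec v\neq\vec 0$ (and trivially when $\vec v=\vec 0$). Applying this with $\vec v=c_j(p)-c_j(q)$ yields $d^\text{c}_{\Pk^j}(p,q)=\norm{c_j(p)-c_j(q)}_2$. The identical computation handles the non-centralized term $d_{\Pk^1}(p,q)$, using $\boldsymbol{\nu}_1(\x)=\x$ and $c_1(p)=\int\x\,p$, and produces $d_{\Pk^1}(p,q)=\norm{c_1(p)-c_1(q)}_2$ (which reduces to $|\mu_p-\mu_q|$ in dimension $d=1$, consistent with Eq.~\eqref{eq:muDiff}). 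Substituting these identities back into Eq.~\eqref{eq:cmd} and absorbing the $j=1$ term into the sum yields Eq.~\eqref{eq:dual_cmd}.

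I do not expect a genuine obstacle, since the argument is at heart a single dual-norm identity; the work is essentially in the linearity/substitution bookkeeping. The only points requiring care are to confirm that the monomial ordering used in $\boldsymbol{\nu}_j$ is held fixed so that the coefficient vector $\w$ and the moment vector $c_j$ are paired in the correct inner product, and to verify that the finite-central-moment hypothesis guarantees convergence of every integral defining $c_j(p)$ and $c_j(q)$, so that the pulled-out inner products are well defined.
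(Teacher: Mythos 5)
Your proposal is correct and takes essentially the same route as the paper's own proof: both substitute the linear-in-coefficients form $g(\x)=\langle\w,\boldsymbol{\nu}_j(\x)\rangle$ with $\norm{\w}_2\leq 1$, pull $\w$ out of the integrals by linearity of expectation, identify the resulting vectors as $c_j(p)$ and $c_j(q)$, and finish with the self-duality of the Euclidean norm $\sup_{\norm{\w}_2\leq 1}|\langle\w,\vec v\rangle|=\norm{\vec v}_2$. The only difference is presentational, in that you treat each summand separately and spell out the Cauchy--Schwarz attainment argument, while the paper carries all terms through one chain of equalities.
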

	In the special case of $m = 2$, the CMD is the weighted sum between the maximum mean discrepancy with linear kernel and the Frobenius norm of the difference between the covariance matrices which allows to interpret the CMD as an extension to correlation alignment approaches~\cite{sun2016deep,sun2016return} and linear kernel based maximum mean discrepancy approaches~\mbox{\cite{tzeng2014deep,csurka2016unsupervised}}.
	
	So far, our analysis has been mainly theoretically motivated.
	In practice, not all cross-moments are always needed.
	Our experiments in Section~\ref{sec:experimental_evaluations} show that reducing the monomial vector in Eq.~\eqref{eq:monomial_vector_nuk_for_cmd} to
	\begin{equation}
		\label{eq:monomial_vector_marginal}
		\boldsymbol{\nu}_j(\vec x) = \left(x_1^j,\ldots,x_d^j\right)^\text{T}
	\end{equation}
	can lead already to better results compared to related approaches while computational efficiency is improved.
	Focusing on monomial vectors as in Eq.~\eqref{eq:monomial_vector_marginal} is consistent with the theoretical results proposed in Chapter~\ref{chap:learning_bounds} which are based on similar assumptions to overcome a number of polynomial terms which increases exponentially with dimension.
	
	The next practical aspect we must address is how to set the weighting factors $a_j$ in Eq.~\eqref{eq:dual_cmd} such that the terms of the sum do not increase too much.
	For distributions with compact support $[a,b]^d$, the following Lemma~\ref{lemma:decreasing_cmd_terms} provides us with suitable weighting factors, namely $a_j:=1/{|b-a|^j}$.
	See Subsection~\ref{subsec:proof_decreasing_terms} for a proof.
	\Needspace{10\baselineskip}
	\begin{restatable}[Decreasing Upper Bound]{lemmarep}{lemmadecreasingterms}%
		\label{lemma:decreasing_cmd_terms}%
		Let $p,q\in\mathcal{M}\left([a,b]^d\right)$ with finite mean vector ${c_1(p)=\int \x p(\x)\diff\x}$, central moment vector $c_j(p)=\int \boldsymbol{\nu}_j(\x-c_1(\x))\,p(\x)\diff\x$ for $j\geq2$ and the vector $\boldsymbol{\nu}_j$ of monomials as in Eq.~\eqref{eq:monomial_vector_marginal}.
		Then the following holds:
		\begin{align}
			\begin{split}
				\frac{1}{|b-a|^j}&\norm{c_j(p)-c_j(q)}_2\leq 2 \sqrt{d} \left(\frac{1}{j+1}\left(\frac{j}{j+1}\right)^j+\frac{1}{2^{1+j}}\right).
			\end{split}
		\end{align}
	\end{restatable}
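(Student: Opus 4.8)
The plan is to reduce the multivariate statement to a one-dimensional estimate for a single central moment and then reassemble. Because the monomial vector in Eq.~\eqref{eq:monomial_vector_marginal} is purely marginal, the $i$-th component of $c_j(p)$ equals the $j$-th central moment $c_{j,i}(p)=\int_a^b (x_i-\mu_{p,i})^j\,p_i(x_i)\diff x_i$ of the $i$-th marginal density $p_i$ of $p$, where $\mu_{p,i}$ is its mean. First I would split coordinatewise by the triangle inequality, $|c_{j,i}(p)-c_{j,i}(q)|\le |c_{j,i}(p)|+|c_{j,i}(q)|$, and use $\norm{\vec v}_2\le\sqrt{d}\,\max_i|v_i|$, so that it suffices to bound a single normalized central moment. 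After the affine change of variables $y=(x-a)/(b-a)$, which maps the support to $[0,1]$ and scales the $j$-th central moment by $(b-a)^j$, the whole claim follows once I show, for every $P\in\mathcal{M}([0,1])$ with mean $\mu$ and $j$-th central moment $c_j$, that
\begin{align}
	|c_j|\le \frac{1}{j+1}\left(\frac{j}{j+1}\right)^j+\frac{1}{2^{\,j+1}}.
\end{align}

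To prove this one-dimensional bound I would split the integral at the mean, writing $c_j=I_+ +(-1)^j I_-$ with $I_+=\int_{x>\mu}(x-\mu)^j\diff P\ge 0$ and $I_-=\int_{x<\mu}(\mu-x)^j\diff P\ge 0$, so that $|c_j|\le I_++I_-$ (for even $j$ by the triangle inequality, for odd $j$ because $|c_j|\le\max(I_+,I_-)$). The two summands in the target bound arise by estimating $I_+$ and $I_-$ separately. The decisive ingredient is the linear-envelope inequality $(x-\mu)^+\le (1-\mu)\,x$ on $[0,1]$ (checked separately for $x\le\mu$ and for $x>\mu$), which upon integration yields the sharp first-absolute-moment bound $m:=\E[(X-\mu)^+]=\E[(\mu-X)^+]\le \mu(1-\mu)$. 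Combining this with the factor estimates $(x-\mu)^{j-1}\le(1-\mu)^{j-1}$ on $\{x>\mu\}$ and $(\mu-x)^{j-1}\le\mu^{j-1}$ on $\{x<\mu\}$ gives $I_+\le \mu(1-\mu)^j$ and $I_-\le \mu^j(1-\mu)$.

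Finally I would optimize these two elementary expressions. Assuming $\mu\le 1/2$ (the case $\mu>1/2$ follows from the reflection $x\mapsto 1-x$, which preserves $|c_j|$), the first term obeys $\mu(1-\mu)^j\le\max_{t\in[0,1]}t(1-t)^j=\frac{1}{j+1}\left(\frac{j}{j+1}\right)^j$, attained at $t=1/(j+1)$; for the second term, $\mu^j(1-\mu)$ is increasing on $[0,1/2]$ (its derivative has the sign of $j-(j+1)\mu>0$ there), hence $\mu^j(1-\mu)\le 2^{-(j+1)}$. Adding these yields the displayed one-dimensional bound, and re-tracing the coordinatewise reduction multiplies it by $2\sqrt{d}$, which is exactly the claimed estimate. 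The main obstacle is obtaining the \emph{sharp} constant rather than a loose one: a naive estimate $m\le\mu$ would lose a factor $(1-\mu)$ and overshoot the two-point extremal distribution, so the linear-envelope bound $m\le\mu(1-\mu)$ is the crux, together with pairing it correctly with the monotonicity argument that produces the $2^{-(j+1)}$ term.
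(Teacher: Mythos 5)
Your proof is correct, and its skeleton matches the paper's: reduce to one dimension via the triangle inequality and $\norm{\cdot}_2\le\sqrt{d}\,\norm{\cdot}_\infty$ (yielding the factor $2\sqrt d$), rescale to $[0,1]$, bound the $j$-th absolute central moment by $\mu(1-\mu)^j+\mu^j(1-\mu)$, and then maximize the two terms separately (increasing term $\Rightarrow 2^{-(j+1)}$ at $\mu=\tfrac12$; interior maximum at $\mu=\tfrac{1}{j+1}$ $\Rightarrow \tfrac{1}{j+1}\bigl(\tfrac{j}{j+1}\bigr)^j$). Where you genuinely diverge is in how that intermediate bound is obtained. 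The paper invokes the Edmundson--Madansky inequality (cited from Madansky): for convex $f$ on $[a,b]$, $\int f\,dP$ is dominated by the endpoint interpolation $\tfrac{b-\mu}{b-a}f(a)+\tfrac{\mu-a}{b-a}f(b)$, applied to $f(x)=|x-\mu|^j/(b-a)^j$, which immediately yields $(1-v)v^j+v(1-v)^j$. You instead split the integral at the mean, use the identity $\E[(X-\mu)^+]=\E[(\mu-X)^+]$ together with the linear envelope $(x-\mu)^+\le(1-\mu)x$ to get the sharp bound $\E[(X-\mu)^+]\le\mu(1-\mu)$, and pair it with the crude factor bounds $(x-\mu)^{j-1}\le(1-\mu)^{j-1}$, $(\mu-x)^{j-1}\le\mu^{j-1}$; this reproduces exactly the same expression $\mu(1-\mu)^j+\mu^j(1-\mu)$. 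In effect you give an elementary, self-contained proof of the special case of Edmundson--Madansky that is needed, so your route needs no external citation and additionally isolates separate bounds on the upper and lower half-moments $I_+$ and $I_-$ (and, via $|c_j|\le\max(I_+,I_-)$ for odd $j$, even slightly finer information that neither proof exploits); the paper's route is shorter and generalizes to arbitrary convex integrands at the cost of appealing to the cited inequality. One cosmetic remark: your reflection argument for the case $\mu>\tfrac12$ is valid, but it is not strictly needed for the first term, since $\mu(1-\mu)^j\le\max_{t\in[0,1]}t(1-t)^j$ already holds on all of $[0,1]$; it is only the bound $\mu^j(1-\mu)\le 2^{-(j+1)}$ that requires $\mu\le\tfrac12$ (or, equivalently, the symmetry of the two-term expression under $\mu\mapsto 1-\mu$, which is how the paper phrases it).
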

	
	\subsection{Relation to Other Probability Metrics}
	\label{subsec:relation_of_cmd_to_other_probability_metrics}
	
	In the one dimensional case, the CMD can be upper bounded by the L\'evy metric.
	\needspace{10\baselineskip}
	\begin{restatable}[Upper Bound by L\'evy Metric]{correp}{cmdlevybound}%
		\label{cor:cmd_levy_bound}%
		Let $p,q\in\mathcal{M}([0,1])$, $m\in\mathbb{N}$ and the CMD as in Definition~\ref{def:cmd} with $a_i=\ldots=a_m=1$. Then there exist constants $C_\mathrm{L},M_\mathrm{L}\in\mathbb{R}_+$ such that
		\begin{align}
			\label{eq:cmd_levy_inequality}
			d_\text{L}(p,q)\leq M_L\quad\implies\quad\mathrm{cmd}_m(p,q)\leq C_\text{L}\cdot d_\text{L}(p,q)^{\frac{1}{2 m+2}}.
		\end{align}
	\end{restatable}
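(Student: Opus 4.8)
The plan is to bound the CMD from above by the $\ell_1$-distance between the raw moments, and then invoke Lemma~\ref{lemma:moment_distance_bound_by_levy} (the bound on the moment distance by the L\'evy metric) to finish. The key observation is that the dual representation in Theorem~\ref{thm:dual_cmd} expresses $\mathrm{cmd}_m(p,q)$ as a sum of $\ell_2$-norms of differences of \emph{central} moment vectors, while Lemma~\ref{lemma:moment_distance_bound_by_levy} is stated for \emph{raw} moments. So the crux is to show that each central-moment difference $\norm{c_j(p)-c_j(q)}_2$ can be controlled by differences of raw moments of order at most $m$, uniformly over $p,q\in\mathcal{M}([0,1])$.

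First I would express each central moment in terms of raw moments via the binomial expansion, exactly as in Eq.~\eqref{eq:meansensitive}: for a one-dimensional density with raw moments $\mu_p^{(i)}=\int x^i p$, the $j$-th central moment is $c_j(p)=\sum_{i=0}^j \binom{j}{i} \mu_p^{(i)} (-\mu_p^{(1)})^{j-i}$. Since $p,q$ are supported on $[0,1]$, all raw moments lie in $[0,1]$, so each term and each factor is bounded by $1$. A difference $c_j(p)-c_j(q)$ is then a polynomial expression in the raw moments of $p$ and $q$ whose arguments differ by $\left|\mu_p^{(i)}-\mu_q^{(i)}\right|$; using the elementary telescoping estimate that a product of factors in $[0,1]$ changes by at most the sum of the changes of its factors, one gets $\left|c_j(p)-c_j(q)\right|\leq K_j \max_{1\leq i\leq j}\left|\mu_p^{(i)}-\mu_q^{(i)}\right|$ for an absolute constant $K_j$ depending only on $j$ (through the binomial coefficients). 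Summing over $j=1,\ldots,m$ with the weights $a_1=\cdots=a_m=1$ yields
\begin{align*}
	\mathrm{cmd}_m(p,q)\leq K\cdot\norm{\boldsymbol{\mu}_p-\boldsymbol{\mu}_q}_1
\end{align*}
where $\boldsymbol{\phi}=(x,x^2,\ldots,x^m)^\text{T}\in\left(\mathbb{R}_m[x]\right)^m$ collects the monomials up to degree $m$, $\boldsymbol{\mu}_p=\int\boldsymbol{\phi}\,p$, and $K$ is an absolute constant depending only on $m$.

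Next I would apply Lemma~\ref{lemma:moment_distance_bound_by_levy} to this very $\boldsymbol{\phi}$: it provides constants $C_\text{L}',M_\text{L}\in\mathbb{R}_+$ such that $d_\text{L}(p,q)\leq M_\text{L}$ implies $\norm{\boldsymbol{\mu}_p-\boldsymbol{\mu}_q}_1\leq C_\text{L}'\cdot d_\text{L}(p,q)^{1/(2m+2)}$. Chaining the two inequalities gives $\mathrm{cmd}_m(p,q)\leq K\,C_\text{L}'\cdot d_\text{L}(p,q)^{1/(2m+2)}$, so the claim holds with $C_\text{L}=K\,C_\text{L}'$ and the same threshold $M_\text{L}$. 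The main obstacle I anticipate is the bookkeeping in the first step: tracking the constant $K_j$ through the binomial expansion and confirming that the telescoping bound on products genuinely gives a bound by the \emph{maximum} (hence by the $\ell_1$-sum) of raw-moment differences, uniformly in $p,q$. This is routine but must be done carefully, since the central moment $c_j$ also involves powers of the mean $\mu^{(1)}$, so the mean-difference term appears with multiplicity and one must verify it is still absorbed into $\norm{\boldsymbol{\mu}_p-\boldsymbol{\mu}_q}_1$. Everything else is a direct invocation of results already established earlier in the excerpt.
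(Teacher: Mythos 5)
Your proposal is correct and follows essentially the same route as the paper's own proof: the paper likewise expands the central moments of Theorem~\ref{thm:dual_cmd} into raw moments via the Binomial theorem, controls the resulting products with the elementary estimates $|x_1 y_1-x_2 y_2|\leq |x_1-x_2|+|y_1-y_2|$ and $|x_1^k-x_2^k|\leq k|x_1-x_2|$ for arguments in $[-1,1]$ (your telescoping bound, noting the factors $(-\mu_p^{(1)})^{j-i}$ lie in $[-1,1]$ rather than $[0,1]$), obtains $\mathrm{cmd}_m(p,q)\leq K\norm{\boldsymbol{\mu}_p-\boldsymbol{\mu}_q}_1$ with $K$ depending only on $m$ through the binomial coefficients, and then invokes Lemma~\ref{lemma:moment_distance_bound_by_levy}.
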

	Under the assumptions of Chapter~\ref{chap:learning_bounds}, the total variation distance can be upper bounded in terms of the CMD as follows.
	\needspace{10\baselineskip}
	\begin{restatable}[Lower Bound by Total Variation Distance]{correp}{cmdTVbound}%
		\label{cor:cmd_total_variation_bound}%
		Let $m\geq 2$, $\epsilon\geq 0$ and $\mathcal{H}_{m,\epsilon}$ as in Definition~\ref{def:H}. Let further the CMD be as in Defintion~\ref{def:cmd} with $a_1=\ldots=a_m=1$ and central moment vectors as defined in Lemma~\ref{lemma:decreasing_cmd_terms}.
		Then there exists some constant $C_\mathrm{cmd}\in\mathbb{R}_+$ such that for all $p,q\in\mathcal{H}_{m,\epsilon}$ the following holds:
		\begin{gather*}
			\mathrm{cmd}_m(p,q) \leq \frac{1}{4 C_\mathrm{cmd} e^{(3m-1)/2} \left(m+1\right)}\\
			\quad\implies\quad\\
			d_\mathrm{TV}(p,q) \leq e^{(3m-1)/4}\cdot C_\mathrm{cmd}\cdot \mathrm{cmd}_m(p,q) + \sqrt{8 \epsilon}.
		\end{gather*}
	\end{restatable}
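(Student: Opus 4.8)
The plan is to chain three facts together and then verify that all constants line up. First, by Theorem~\ref{thm:TV_distance} we have $d_\mathrm{TV}(p,q)=\tfrac{1}{2}\norm{p-q}_{L^1}$, so it suffices to bound $\norm{p-q}_{L^1}$. Second, since $p,q\in\mathcal{H}_{m,\epsilon}$, Theorem~\ref{thm:bound_for_smooth_functions} already controls $\norm{p-q}_{L^1}$ by the $\ell^1$-distance $\norm{\boldsymbol{\mu}_p-\boldsymbol{\mu}_q}_1$ between the moment vectors taken with respect to the orthonormal basis $\phim$, provided this distance lies below the threshold $\tfrac{1}{2C(m+1)}$ with $C=2e^{(3m-1)/2}$. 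The only genuinely new ingredient is therefore a bound of the form $\norm{\boldsymbol{\mu}_p-\boldsymbol{\mu}_q}_1\le C_\mathrm{cmd}\cdot\mathrm{cmd}_m(p,q)$, relating the raw orthonormal-basis moments appearing in Theorem~\ref{thm:bound_for_smooth_functions} to the central moments appearing in the dual representation of the CMD (Theorem~\ref{thm:dual_cmd} with $a_1=\ldots=a_m=1$). Once this is established, substitution gives $\norm{p-q}_{L^1}\le\sqrt{2C}\,C_\mathrm{cmd}\,\mathrm{cmd}_m(p,q)+\sqrt{8\epsilon}$, and halving yields the claim.

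The central step splits into a linear and a nonlinear part. For the linear part, each $\phi_i$ lies in $\mathrm{Span}(\mathbb{R}_m[x_1]\cup\ldots\cup\mathbb{R}_m[x_d])$, hence is a fixed linear combination $\phi_i=\alpha_i+\sum_{l,j}t_{i,l,j}\,x_l^j$ of $1$ and the marginal monomials $x_l^j$ for $1\le l\le d$, $1\le j\le m$. Because $p$ and $q$ integrate to $1$, the constant $\alpha_i$ cancels in $\int\phi_i\,p-\int\phi_i\,q$, so $\boldsymbol{\mu}_p-\boldsymbol{\mu}_q$ is the image of the vector of raw marginal-moment differences $r_j^{(l)}(p)-r_j^{(l)}(q)$ under a fixed $md\times md$ matrix whose induced $\ell^1$-operator norm is a constant depending only on $m$, $d$ and the chosen basis. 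For the nonlinear part, I would expand each raw moment through the binomial theorem, $r_j^{(l)}(p)=\sum_{i=0}^j\binom{j}{i}c_i^{(l)}(p)\,\mu_l(p)^{j-i}$, where $\mu_l(p)$ is the $l$-th mean and $c_i^{(l)}(p)$ the $i$-th marginal central moment, and then telescope the difference of two such products. On $[0,1]^d$ all means and central moments lie in $[0,1]$, so each product difference is bounded by $|c_i^{(l)}(p)-c_i^{(l)}(q)|$ plus $j\,|\mu_l(p)-\mu_l(q)|$; summing over $i,j,l$ and passing from $\ell^1$ to $\ell^2$ (at the cost of factors $\sqrt{d}$) bounds everything by a fixed multiple of $\sum_{j=1}^m\norm{c_j(p)-c_j(q)}_2=\mathrm{cmd}_m(p,q)$. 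Composing with the linear map produces the constant $C_\mathrm{cmd}$.

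With $C_\mathrm{cmd}$ fixed, the arithmetic matches the statement exactly. The hypothesis $\mathrm{cmd}_m(p,q)\le\bigl(4C_\mathrm{cmd}e^{(3m-1)/2}(m+1)\bigr)^{-1}$ gives $\norm{\boldsymbol{\mu}_p-\boldsymbol{\mu}_q}_1\le\bigl(4e^{(3m-1)/2}(m+1)\bigr)^{-1}=\tfrac{1}{2C(m+1)}$, which is precisely the threshold required to invoke Theorem~\ref{thm:bound_for_smooth_functions}. The leading coefficient becomes $\tfrac{1}{2}\sqrt{2C}=\tfrac{1}{2}\sqrt{4e^{(3m-1)/2}}=e^{(3m-1)/4}$, matching the $e^{(3m-1)/4}C_\mathrm{cmd}$ in the conclusion, while the additive term $\tfrac{1}{2}\sqrt{8\epsilon}=\sqrt{2\epsilon}\le\sqrt{8\epsilon}$ is absorbed into the (weaker) stated bound.

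I expect the delicate point to be the nonlinear raw-to-central moment comparison: one must keep $C_\mathrm{cmd}$ independent of $p,q$ (which compactness of $[0,1]^d$ guarantees, since the means and central moments are uniformly bounded) and verify that the $j=1$ term of the CMD, the mean difference $\norm{c_1(p)-c_1(q)}_2$, is available to absorb the $|\mu_l(p)-\mu_l(q)|$ contributions generated by the binomial expansion of the higher raw moments. Everything else is the bookkeeping of combinatorial coefficients and norm conversions, which affects only the numerical value of $C_\mathrm{cmd}$, not its existence.
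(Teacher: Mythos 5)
Your proposal is correct and follows essentially the same route as the paper's proof: reduce $d_\mathrm{TV}$ to $\tfrac{1}{2}\norm{p-q}_{L^1}$ via Theorem~\ref{thm:TV_distance}, invoke Theorem~\ref{thm:bound_for_smooth_functions}, and establish $\norm{\boldsymbol{\mu}_p-\boldsymbol{\mu}_q}_1\le C_\mathrm{cmd}\,\mathrm{cmd}_m(p,q)$ by expanding the orthonormal (Legendre) basis into raw marginal monomials, applying the binomial theorem to pass from raw to central moments, and using the product-difference inequalities on $[-1,1]$ so that mean differences are absorbed by the $j=1$ term of the CMD, with the same threshold and constant arithmetic at the end. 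The only cosmetic differences are that the paper makes the linear bookkeeping explicit through the Legendre coefficient constant $C_m$ where you argue via a generic fixed linear map, and that central moments lie in $[-1,1]$ rather than $[0,1]$, which changes nothing since the inequalities used hold on $[-1,1]$.
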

	Figure~\ref{fig:metrics_with_cmd} illustrates Corollary~\ref{cor:cmd_levy_bound} and Corollary~\ref{cor:cmd_total_variation_bound} showing some relations between the CMD and other probability metrics.
	It can be seen that the CMD implements a weaker convergence than most other commonly applied probability metrics.
	However, under the assumptions (A1)--(A4) stated in Definition~\ref{def:H}, and sufficiently small $\epsilon$, stronger convergence properties are implemented.
	
	\begin{figure}[t]
		\centering
		\includegraphics[width=\linewidth]{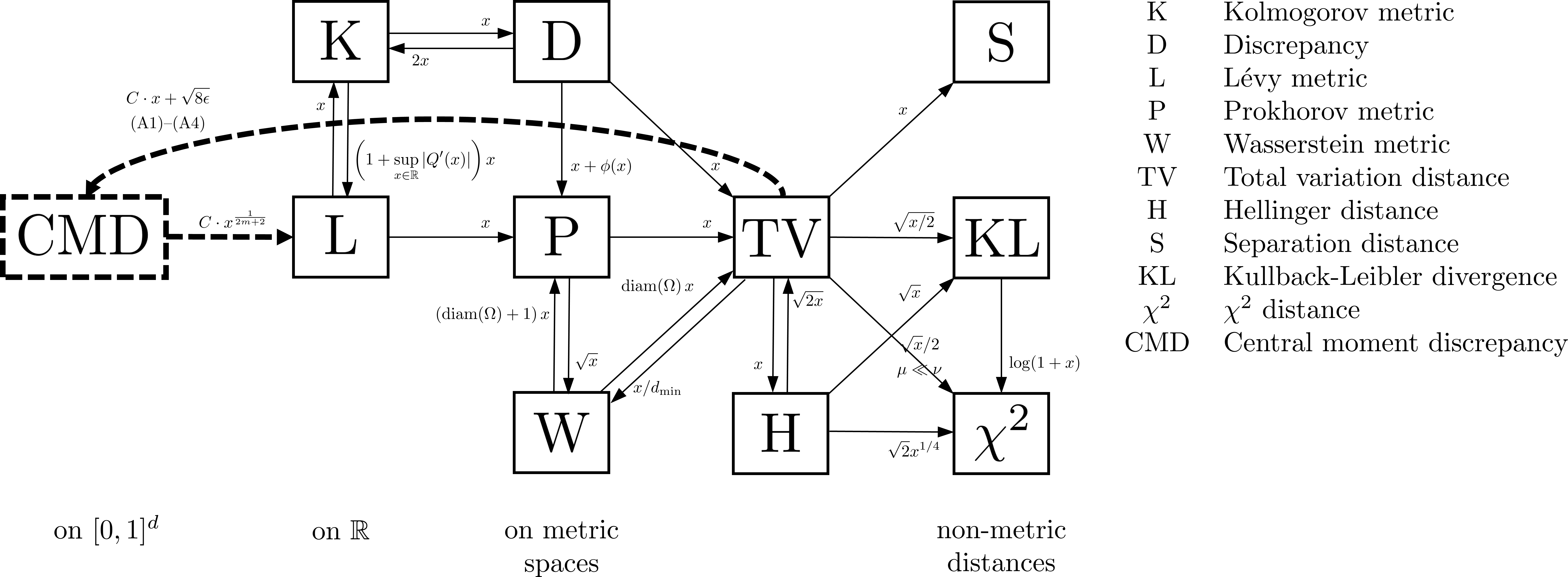}
		\caption[Relationships among probability metrics as illustrated in~\cite{gibbs2002choosing} and supplemented by the Central Moment Discrepancy.]{Relationships among probability metrics as illustrated in~\cite{gibbs2002choosing} and supplemented by Corollary~\ref{cor:cmd_levy_bound} and Corollary~\ref{cor:cmd_total_variation_bound} (dashed).
			A directed arrow from $\text{A}$ to $\text{B}$ annotated by a function $h(x)$ means that $d_\text{A}\leq h(d_\text{B})$.
			For notations, restrictions and applicability see Section~\ref{sec:probability_metrics}.
		}
		\label{fig:metrics_with_cmd}
	\end{figure}
	
	\section{Regularization for Neural Networks}
	\label{sec:cmd_regularization}
	
	In the following we show how to implement the principle of learning new data representations based on an empirical estimation of the CMD.
	This principle is described in more detail in Subsection~\ref{subsec:da_algorithms} and illustrated in Figure~\ref{fig:grafical_abstract}.
	
	In the following setting of unsupervised domain adaptation under covariate-shift we consider two domains $(p,l)$ and $(q,l)$ with $p,q\in\MR$ and $l:\Omega\to [0,1]$ with $\Omega$ being an open subset of $\mathbb{R}^d$.
	Given a \mbox{$k$-sized} source sample $X_p$ drawn from $p$ with labels $l(X_p)$ and a \mbox{$k$-sized} target sample $X_q$ drawn from $q$, the goal is to find two functions $g:\Omega\to [0,1]^s$ and $f:[0,1]^s\to\{0,1\}$ from two classes $\mathcal{G}$ and $\mathcal{F}$, respectively, such that the target risk
	\begin{align}
		\int_{[0,1]^s}\left|f(g(\x))-l(\x)\right| q(\x)\diff\x 
	\end{align}
	is small.
	In the principle of learning domain-invariant data representations this is often done by minimizing an approximation of the following objective function:
	\begin{align}
		\label{eq:objective_of_principle_of_representation_learning_for_da2}
		\frac{1}{k}\sum_{\x\in X_p}\left|f(g(\x))-l(\x)\right| + \lambda\cdot \hat{d}\left(g(X_p),g(X_q)\right)
	\end{align}
	where $\lambda>0$ is a parameter and $\hat d$ is a distance between the source and target sample representation $g(X_p)$ and $g(X_q)$.
	
	We propose to use the following estimation of the CMD for the distance $\hat d$:
	\begin{align}
		\label{eq:cmd_estimate}
		\mathrm{cmd}_m(X_p,X_q)=\sum_{j=1}^m a_j \norm{\widehat{c}_j(p)-\widehat{c}_j(q)}_2,
	\end{align}
	where $\widehat{c}_1(p)=\frac{1}{k}\sum_{\x\in X_p} \boldsymbol{\nu}_j(\x)$ is the mean of $X_p$ and $\widehat{c}_j(p)=\frac{1}{k}\sum_{\x\in X_p} \boldsymbol{\nu}_j(\x-c_1(p))$ for $j\in\{1,\ldots,m\}$ are the sampled central moments of $p$ with $\boldsymbol{\nu}_j$ as in Eq.~\eqref{eq:monomial_vector_marginal}.
	
	Note that the sampled moment $\widehat{c}_j(p)$ converges to the true central moment $c_j(p)$ as defined in Theorem~\ref{thm:dual_cmd} for $k\to\infty$.
	It follows from the continuous mapping theorem~\cite{mann1943stochastic} that the CMD estimate in Eq.~\eqref{eq:cmd_estimate} is a consistent estimator of the CMD.
	However, it is a biased estimate.
	To obtain an unbiased estimate of a moment distance with similar properties as the CMD, one can apply the sample central moments as unbiased estimates of the central moments and use the squared Euclidean norm instead of the Euclidean norm in Eq.~\eqref{eq:dual_cmd} as similarly proposed for the maximum mean discrepancy in~\cite{gretton2012kernel}.
	
	\subsection{Learning Bound}
	\label{subsec:learning_bound_cmd_application}
	
	In the following we give an example application of the learning bound proposed in Theorem~\ref{thm:problem_solution} to the method described above.
	Our example is based on a function class $\mathcal{F}$ with finite VC-dimension $\vc$ and the function class
	\begin{align}
		\label{eq:G}
		\mathcal{G} &=\{g\in C^r(\Omega,[0,1]^s)\mid r\geq d-s+1, \mathrm{rank}\, \mathbf{J}_g=d\,\text{a.e.}\}
	\end{align}
	where $\Omega\subseteq\mathbb{R}^d$ is an open set, $C^r(\Omega,[0,1]^s)$ refers to the set of functions $g:\Omega\to[0,1]^s$ with continuous derivatives up to order $r$, $\mathrm{rank}\, \mathbf{J}_g$ refers to the rank of the Jacobian matrix $\vec J_g$ of the function $g$ and $a.\,e.$~abbreviates {\it almost everywhere}.
	This definition of $\mathcal{G}$ together with the openness of $\Omega$ ensures that the pushforward measures $\mu\circ g^{-1}$ and $\nu\circ g^{-1}$ of two Borel probability measures $\mu$ and $\nu$ with densities $p$ and $q$, respectively, have probability densities $\tilde{p}$ and $\tilde{q}$, respectively~\cite{ponomarev1987submersions}.
	
	Consider some $\epsilon\geq 0$ and the maximum order of moments be $m=5$.
	The moment order $m=5$ is appropriate in many practical tasks as shown in~\cite{peng2018cross,ke2018identity,xing2018adaptive,peng2019weighted,Wei2018GenerativeAG} and Section~\ref{sec:experimental_evaluations}.
	Let us further denote by
	\begin{align*}
		\phim=\left(\eta_1(x_1),\ldots,\eta_5(x_1),\eta_1(x_2),\ldots,\eta_5(x_2),\ldots,\eta_1(x_s),\ldots,\eta_1(x_s),\ldots,\eta_5(x_s)\right)^\text{T}
	\end{align*}
	the vector of polynomials such that
	\begin{align*}
		&\eta_1(x)=\sqrt{3} (2 x-1)\\
		&\eta_2(x)=\sqrt{5} \left(6 x^2-6 x+1\right)\\
		&\eta_3(x)=\sqrt{7} \left(20 x^3-30 x^2+12 x-1\right)\\
		&\eta_4(x)=3 \left(70 x^4-140 x^3+90 x^2-20 x+1\right)\\ &\eta_5(x)=\sqrt{11} \left(252 x^5-630 x^4+560 x^3-210 x^2+30 x-1\right)
	\end{align*}
	are the orthonormal Legendre polynomials in the variable $x$ up to order $5$.
	Let $g\in\mathcal{G}$ be such that the latent densities fulfill
	\begin{align*}
		h_{\phim}(\tilde p)-h(\tilde p)\leq \epsilon\quad\text{and}\quad h_{\phim}(\tilde q)-h(\tilde q)\leq \epsilon
	\end{align*}
	and have log-density functions $\log \tilde p, \log \tilde q\in W_2^5$ such that
	\begin{align*}
		\norm{\log\tilde p}_\infty\leq 5, \norm{\log \tilde q}_\infty\leq 5\quad\text{and}\quad\norm{\partial^5_{x_i}\log \tilde p_i}\leq 10, \norm{\partial^5_{x_i}\log \tilde q}\leq 10
	\end{align*}
	for all $i\in\{1,\ldots,s\}$.
	
	From Eq.~\eqref{eq:lone_cmd_bound} in the proof of Corollary~\ref{cor:cmd_total_variation_bound} it follows that
	\begin{align*}
		\norm{\widehat{\boldsymbol{\mu}}_p-\widehat{\boldsymbol{\mu}}_q}_1 \leq C_\mathrm{cmd} \cdot \mathrm{cmd}_5(X_p,X_q)
	\end{align*}
	with $C_\mathrm{cmd}=C_5\cdot 5^2\cdot (5+1)\cdot \max_{t\in\{0,1,\ldots,5\}} \left\{\binom{5}{t}\right\} \cdot\sqrt{s}$ and $C_5=\max_{i\in\{1,\ldots,s\}} r_i$, where $r_i=\sum_{t=1}^5 |l_t|$ is the sum of the absolute values of the coefficients $l_t$ of all terms in the orthonormal Legendre polynomials $\eta_1(x_j),\ldots,\eta_5(x_j)$ which contain the monomial $x_j^i$, \ie~$C_5\leq 2331$ and $C_\mathrm{cmd}\leq 3.5\cdot 10^6\cdot \sqrt{s}$.
	
	Following~\cite{ben2007analysis}, we define the labeling functions ${l_p:\mathbb{R}^s\to [0,1]}$ by
	\begin{align*}
		l_p(\vec a)=\frac{\int_{\{\x\mid g(\x) =\vec a\}} l(\x) p(\x) \diff\x }{\int_{\{\x\mid g(\x) =\vec a\}} p(\x)\diff\x}
	\end{align*}
	and $l_q$ analogously.
	Let the sample size $k\geq 6.3\cdot 10^9$ and $\norm{\widehat{\boldsymbol{\mu}}_p-\widehat{\boldsymbol{\mu}}_q}_1\leq 2.3\cdot 10^{-5}$ (or $\mathrm{cmd}_m(X_p,X_q)\leq 6.7\cdot 10^{-12}$).
	Then, by applying Theorem~\ref{thm:problem_solution} on the domains $(\tilde p,l_p)$ and $(\tilde q,l_q)$ with the improved assumptions and constants of Lemma~\ref{lemma:sample_convergence_in_H}, the following holds with probability at least $0.8$:
	\begin{align}
		\label{eq:ex_wo_cmd}
		\begin{split}
			\int \left|f-l_q\right|\tilde q \leq\, &\frac{1}{k}\sum_{\x_\in X_p}\left|f(g(\x))-l(\x)\right| + 
			\sqrt{\frac{4}{k} \left( \vc\log \frac{2 e k}{\vc} + 3 \right)}+ \lambda^*\\
			&+ 84.6 \norm{\widehat{\boldsymbol{\mu}}_p-\widehat{\boldsymbol{\mu}}_q}_1 + 513 \sqrt{\frac{s}{k}} + \sqrt{8\epsilon}\\
			\leq\, &\frac{1}{k}\sum_{\x_\in X_p}\left|f(g(\x))-l(\x)\right| + 
			\sqrt{\frac{4}{k} \left( \vc\log \frac{2 e k}{\vc} + 3 \right)}+ \lambda^*\\
			&+ 2.96\cdot 10^8\cdot \mathrm{cmd}_5({X_p,X_q}) + 513 \sqrt{\frac{s}{k}} + \sqrt{8\epsilon}.
		\end{split}
	\end{align}
	From the ``change of variables'' Theorem~4.1.11 in~\cite{dudley2002real} we obtain
	\begin{align*}
		\int \left| f-l_q \right| \tilde q
		=\int \left| f-l_q \right| \diff(Q\circ g^{-1})
		= \int \left|f-l_q\right|\circ g\diff Q
		= \int \left| f\circ g-l\right| q.
	\end{align*}
	
	In particular, if the dimension of the latent space is taken to be $s=5$, the sample size $k= 6.3\cdot 10^9$ and if the function class $\mathcal{F}$ is the class of neural networks with one layer and activation functions $\mathbbm{1}_{\mathbb{R}_+}$, \ie~$\vc=6$, then the following holds with probability at least $0.8$:
	\begin{align*}
		\begin{split}
			\int \left|f\circ g-l\right| q \leq\, &\frac{1}{k}\sum_{\x_\in X_p}\left|f(g(\x))-l(\x)\right| + 2.96\cdot 10^8\cdot \mathrm{cmd}_5({X_p,X_q}) + 0.0148 + \sqrt{8\epsilon} + \lambda^*,
		\end{split}
	\end{align*}
	where the sampling error originating from the application of statistical learning theory is approximately $2.95\cdot 10^{-4}$ and the sampling error originating from our analysis is approximately $1.44\cdot 10^{-2}$.
	
	\subsection{Algorithm}
	\label{subsec:algo_for_nn_with_cmd}
	
	A concrete implementation of the principle of learning new data representations for unsupervised domain adaptation for multi-class classification based on the CMD and neural networks is given by Algorithm~\ref{alg:sgd_for_da_with_cmd}.
	See Subsection~\ref{subsec:domain_adaptation_by_nns} for descriptions of the neural network and further notations.
	
	\begin{algorithm}
		\SetAlgoLined
		\KwIn{Source sample $X_p=\{\x_1,\ldots,\x_k\}$ with labels $Y=\{l(\x_1),\ldots,l(\x_k)\}$, target sample $X_q=\{\x_1',\ldots,\x_s'\}$, learning rate $\alpha$, regularization parameter $\lambda$, learning rate weighting $\boldsymbol{\nu}_1,\boldsymbol{\nu}_2,\ldots$ and number of moments $m$
		}
		
		\KwOut{Parameter vector $\vtheta=(({\vec W}_0,{\vec b}_0),({\vec W}_1,{\vec b}_1))\in \left( (\mathbb{R}^{w\times d}\times\mathbb{R}^{w})\times (\mathbb{R}^{c\times w}\times\mathbb{R}^{c})\right)$ such that $f(\x, (({\vec W}_0,{\vec b}_0),({\vec W}_1,{\vec b}_1)))=\mathrm{softmax}\left({\vec W}_1 \cdot\rho({\vec W}_0\cdot \x+{\vec b}_0) + {\vec b}_1)\right)$ with $\rho(\x)=(\mathrm{sigm}(x_1),\ldots,\mathrm{sigm}(x_w))^\text{T}$}~\\
		
		\Init{Initialize parameter vector $\vtheta_0$ randomly and set $i=0$}
		\While{\upshape stopping criteria is not met}{
			\Stepone{Find random submultisets $X_i$ from $X_p$ and $X_i'$ from $X_q$}
			\Steptwo{Calculate the gradient ${\vec w}_i=\nabla_{\vtheta} \left(\frac{1}{|X_i|}\sum_{\x\in X_i} L(f(\x),l(\x)) + \lambda\cdot \mathrm{cmd}_m\left(h_0(X_i),h_0(X_i')\right) \right)$ where $h_0(\x)=\rho({\vec W}_0\cdot\x + {\vec b}_0)$}
			\Stepthree{Update $\vtheta_{i+1}=\vtheta_i - \alpha\cdot \boldsymbol{\nu}_i\odot {\vec w}_i$}
			\Stepfour{$i:=i+1$}
		}
		\caption[Moment-based unsupervised domain adaptation for single-layer neural networks via stochastic gradient descent.]{Moment-based unsupervised domain adaptation for finding a single-layer neural network ${f\in\mathcal{N}_{1,w,c,\mathrm{sigm},\mathrm{softmax}}}$ via stochastic gradient descent.
		}
		\label{alg:sgd_for_da_with_cmd}
	\end{algorithm}
	
	Note that the gradient of the term $\nabla_{\vtheta} \frac{1}{|X_i|} \sum_{\x\in X_i} L(f(\x),l(\x))$ needed in Step~2 of Algorithm~\ref{alg:sgd_for_da_with_cmd} is given in Eq.~\eqref{eq:gradient_for_sgd_algo}.
	The gradient $\nabla_{\vtheta} \mathrm{cmd}_m(h_0(X),h_0(X'))$ \wrt~the parameter vector
	\begin{align}
		\vtheta=(({\vec W}_0,{\vec b}_0),({\vec W}_1,{\vec b}_1))\in \left( (\mathbb{R}^{a_1\times d}\times\mathbb{R}^{a_1})\times (\mathbb{R}^{c\times a_1}\times\mathbb{R}^{c})\right)
	\end{align}
	is given by
	\begin{align}
		\label{eq:gradient_for_sgd_algo_cmd}
		\begin{split}
			\nabla_{\vtheta} &\mathrm{cmd}_m(h_0(X),h_0(X')) =\\
			&= \Big(\big(\nabla_{\vec W_0} \mathrm{cmd}_m(h_0(X),h_0(X')),\nabla_{\vec b_0} \mathrm{cmd}_m(h_0(X),h_0(X'))\big),\big(\boldsymbol{0},\boldsymbol{0}\big)\Big)
		\end{split}
	\end{align}
	with the matrix $\boldsymbol{0}$ having all elements zero which is assumed to have appropriate dimensions and the notation $h_0(X)=\{h_0(\x)\mid \x\in X\}$.
	Let us denote the mean of a sample $X$ by $\E[X] = \frac{1}{|X|}\sum_{\x\in X} \x$ and the sampled central moments by ${\E[\boldsymbol{\nu}_j(X-\E[X])]}$ with the set notations $X-\E[X] = \{\x-\E[X] \mid \x\in X\}$, $\boldsymbol{\nu}_j(X) = \{\boldsymbol{\nu}_j(\x)|\x\in X\}$ and the vector $\boldsymbol{\nu}_j(\x)$ as defined in Eq.~\eqref{eq:monomial_vector_marginal}.
	
	Let $\odot$ be the coordinate-wise multiplication. Then, by setting
	\begin{align*}
		\boldsymbol{\Gamma}_{j}(X) &=  \boldsymbol{\nu}^{(j)}(h_0(X)-\E[h_0(X)])\\
		\boldsymbol{\Delta}({X,X'}) &= h_0(X)- h_0(X') \\
		\boldsymbol{q}(X) &= h_0(X)\odot (\vec 1 - h_0(X)),
	\end{align*}
	the application of the chain rule gives
	\begin{align*}
		\nabla_{{\vec b}_0}&\mathrm{cmd}_m(h_0(X),h_0(X'))\\
		&= \nabla_{{\vec b}_0} \norm{\E[\boldsymbol{\Delta}(X,X')]}_2
		+\sum_{j=2}^{m} \nabla_{{\vec b}_0} \| \E[\boldsymbol{\Gamma}_{j}(X)] -  \E[\boldsymbol{\Gamma}_{j}(X')]\|_2 \\
		&= \frac{\E[\boldsymbol{\Delta}(X,X')]\odot 
			(\E[\boldsymbol{q}(X)] - \E[\boldsymbol{q}(X')])}{\norm{\E[\boldsymbol{\Delta}(X,X')]}_2}\\
		&\phantom{=} \quad+\sum_{j=2}^{k} \frac{{\E} [\boldsymbol{\Gamma}_{j}(X)] -{\E}[\boldsymbol{\Gamma}_{j}(X')]}
		{\norm{\E[\boldsymbol{\Gamma}_{j}(X)] -\E[\boldsymbol{\Gamma}_{j}(X')]}_2} \odot \left(\E[\nabla_{{\vec b}_0} 
		\boldsymbol{\Gamma}_{j}(X)]  - \E[\nabla_{{\vec b}_0}
		\boldsymbol{\Gamma}_{j}(X')]\right)
	\end{align*}
	and
	$
	\nabla_{{\vec b}_0} \boldsymbol{\Gamma}_{j}(X) = j\cdot \boldsymbol{\Gamma}_{j-1}(X) 
	\odot 
	(\boldsymbol{q}(X) - \E[\boldsymbol{q}(X)])
	$
	which follows from the form of the gradient of the sigmoid function ${\nabla_{\x}\mathrm{sigm}(\x)=\mathrm{sigm}(\x)\odot \left(1-\mathrm{sigm}(\x)\right)}$.
	Analogously, we obtain $\nabla_{{\vec W}_0}\mathrm{cmd}(X,X')$.
	
	\section{Empirical Evaluations}
	\label{sec:experimental_evaluations}
	
	Our experimental evaluations are based on seven datasets, one toy dataset, two benchmark datasets for domain adaptation, {\it Amazon reviews} and {\it Office} and four digit recognition datasets, {\it MNIST}, {\it SVHN}, {\it MNIST-M} and {\it SynthDigits}, described in Subsection~\ref{subsec:datasets}.
	
	Our experiments aim at providing evidence regarding the following aspects: Subsection~\ref{subsec:toy_dataset} on the usefulness of our algorithm for adapting neural networks to synthetically shifted and rotated data, Subsection~\ref{subsec:sentiment_analysis_experiment} on the classification accuracy of the proposed algorithm on the sentiment analysis of product reviews based on the learning of neural networks with a single hidden-layer, Subsection~\ref{subsec:object_recognition_experiment} on the classification accuracy on object recognition tasks based on the learning of pre-trained convolutional neural networks, Subsection~\ref{subsec:digit_recognition_experiment} on the classification accuracy of deep convolutional neural networks trained on raw image data, and, Subsection~\ref{subsec:parameter_sensitivity_experiment} on the accuracy sensitivity regarding changes in the number-of-moments parameter $m$ and changes in the number of hidden nodes.
	
	\subsection{Datasets}
	\label{subsec:datasets}
	
	The following datasets are summarized in Table~\ref{tab:datasets}.
	
	\begin{table}[t]
		\renewcommand{\arraystretch}{1.1}
		% 		\scriptsize
		\centering
		\begin{tabular}{|c|c||c|c|c|c|}
			\hline
			\bfseries Task & Domain/Dataset & \bfseries Samples & \bfseries Classes & \bfseries Features\\
			\hline\hline
			\multirow{2}{*}{\begin{tabular}{@{}c@{}}Artificial\\example\end{tabular}} & Source & $639$ & $3$ & $2$\\
			\cline{2-5}
			& Target & $639$ & $3$ & $2$\\
			\hline
			\hline
			\multirow{4}{*}{\begin{tabular}{@{}c@{}}Sentiment\\analysis\end{tabular}} & Books (B) & $6465$ & $2$ & $5000$\\
			\cline{2-5}
			& DVDs (D) & $5586$ & $2$ & $5000$\\
			\cline{2-5}
			& Electronics (E) & $7231$ & $2$ & $5000$\\
			\cline{2-5}
			& Kitchen appliances (K) & $7945$ & $2$ & $5000$\\
			\hline
			\hline
			\multirow{3}{*}{\begin{tabular}{@{}c@{}}Object\\recognition\end{tabular}} & Amazon (A) & $2817$ & $31$ & $227\times 227$\\
			\cline{2-5}
			& Webcam (W) & $795$ & $31$ & $227\times 227$\\
			\cline{2-5}
			& DSLR (D) & $498$ & $31$ & $227\times 227$\\
			\hline
			\hline
			\multirow{4}{*}{\begin{tabular}{@{}c@{}}Digit\\recognition\end{tabular}} & SVHN & $99289$ & $10$ & $32\times 32$\\
			\cline{2-5}
			& MNIST & $70000$ & $10$ & $32\times 32$\\
			\cline{2-5}
			& MNIST-M & $59001$ &  $10$ & $32\times 32$\\
			\cline{2-5}
			& SynthDigits & $500000$ &  $10$ & $32\times 32$\\
			\hline
		\end{tabular}
		\caption{Datasets}
		\label{tab:datasets}
	\end{table}
	
	{\bf Toy dataset:}
	In order to analyze the applicability of our algorithm for adapting neural networks to rotated and shifted data, we created a toy dataset illustrated in Figure~\ref{fig:artificial_problem}.
	The source data consists of three classes that are arranged in two-dimensional space.
	Different transformations such as shifts and rotations are applied on all classes to create unlabeled target data.
	
	{\bf Sentiment analysis:}
	To analyze the accuracy of the proposed approach on sentiment analysis of product reviews, we rely on the {\it Amazon reviews} benchmark dataset with the same preprocessing as used by others~\cite{chen2012marginalized,ganin2016domain,louizos2015variational}.
	The dataset contains product reviews of four categories: {\it books} (B), {\it DVDs} (D), {\it electronics} (E) and {\it kitchen appliances} (K). Reviews are encoded in 5000 dimensional feature vectors of bag-of-words unigrams and bigrams with binary labels: $0$ if the product is ranked by $1-3$ stars and $1$ if the product is ranked by $4$ or $5$ stars. From the four categories we obtain twelve domain adaptation tasks where each category serves once as source domain and once as target domain.
	
	{\bf Object recognition:}
	In order to analyze the accuracy of our algorithm on an object recognition task, we perform experiments based on the {\it Office} dataset~\cite{saenko2010adapting}, which contains images from three distinct domains: {\it amazon} (A), {\it webcam} (W) and {\it DSLR} (D).
	This dataset is a standard benchmark dataset for domain adaptation algorithms in computer vision.
	According to the standard protocol~\cite{ganin2016domain,long2015learning}, we downsample and crop the images such that all are of the same size $({227\times 227})$.
	We assess the performance of our method across all six possible transfer tasks.
	
	{\bf Digit recognition:}
	To analyze the accuracy of our algorithm on digit recognition tasks, we rely on domain adaptation between the three digit recognition datasets {\it MNIST}~\cite{lecun1998mnist}, {\it SVHN}~\cite{netzer2011reading}, {\it MNIST-M}~\cite{ganin2016domain} and {\it SynthDigits}~\cite{ganin2016domain}.
	MNIST contains $70000$ black and white digit images, SVHN contains $99289$ images of real world house numbers extracted from Google Street View and MNIST-M contains $59001$ digit images created by using the MNIST images as a binary mask and inverting the images with the colors of a background image. The background images are random crops uniformly sampled from the Berkeley Segmentation Data Set~\cite{arbelaez2011contour}.
	SynthDigits contains $500000$ digit images generated by varying the text, positioning, orientation, background, stroke colors and blur of Windows$^\text{TM}$ fonts.
	According to the standard protocol~\cite{tzeng2017adversarial}, we resize the images $({32\times 32})$.
	We compare our method based on the standard benchmark experiments SVHN$\rightarrow$MNIST and MNIST$\rightarrow$MNIST-M (source$\rightarrow$target).
	The datasets are summarized in Table~\ref{tab:datasets}.
	
	\subsection{Toy Example}
	\label{subsec:toy_dataset}
	
	The toy dataset is described in Section~\ref{subsec:datasets} and visualized in Figure~\ref{fig:artificial_problem}.
	We study the adaptation capability of our algorithm by comparing it to a standard neural network described in Subsection~\ref{subsec:optimization} with $15$ hidden neurons.
	That is, we apply Algorithm~\ref{alg:sgd_for_da_with_cmd} twice, once with $\lambda=0$ and once with $\lambda>0$.
	We refer to the two versions as shallow neural network (shallow NN) and moment alignment neural network (MANN) respectively.
	To start from a similar initial situation, we use the weights of the shallow NN after $\nicefrac{2}{3}$ of the training time as initial weights for the MANN and train the MANN for $\nicefrac{1}{3}$ of the training time of the shallow NN.
	
	The classification accuracy of the shallow NN in the target domain is $86.7\%$ and the accuracy of the MANN is $99.7\%$.
	The decision boundaries of the algorithms are shown in Figure~\ref{fig:artificial_problem}, shallow NN on the left and MANN on the right.
	The shallow NN misclassifies some data points of the "$+$"-class and of the star-class in the target domain (points).
	The MANN clearly adapts the decision boundaries to the target domain and only a small number $0.3\%$ of the points is misclassified.
	We recall that this is the founding idea of the principle of learning new data representations for domain adaptation.
	
	\begin{figure}[ht]
		\makebox[\linewidth][c]{%
			\begin{subfigure}[b]{.45\textwidth}
				\centering
				\includegraphics[width=.95\textwidth]{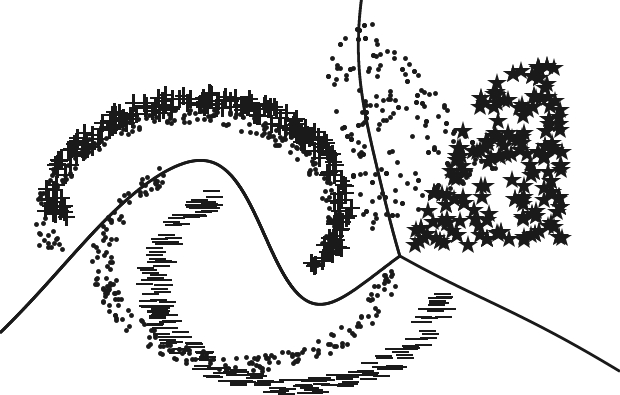}
				%			\caption{}
			\end{subfigure}%
			\begin{subfigure}[b]{.45\textwidth}
				\centering
				\includegraphics[width=0.95\textwidth]{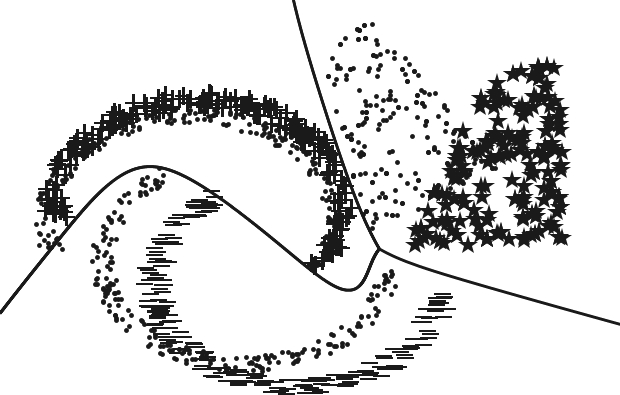}
				%			\caption{a test subfigure}
			\end{subfigure}%
		}
		\caption[Toy example for unsupervised domain adaptation with Algorithm~\ref{alg:sgd_for_da_with_cmd}.]{Toy example for classification with three classes \mbox{("$+$", "$-$" and stars)} in the source domain and unlabeled data in the target domain (points) solved by Algorithm~\ref{alg:sgd_for_da_with_cmd}. Left: without domain adaptation, i.e. without the central moment discrepancy in Step~2; Right: with the proposed approach.}
		\label{fig:artificial_problem}
	\end{figure}
	
	Let us now test the hypothesis that the CMD helps to align the activation distributions of the hidden nodes.
	We measure the significance of a distribution difference by means of the p-value of a two-sided Kolmogorov-Smirnov test for goodness of fit.
	For the shallow NN, $13$ out of $15$ hidden nodes show significantly different distributions, whereas for the MANN only five distribution pairs are considered as being significantly different with p-value lower than $10^{-2}$.
	Kernel density estimates~\cite{fan1994fast} of these five distribution pairs are visualized in Figure~\ref{fig:artificial_activations} (bottom).
	Figure~\ref{fig:artificial_activations} (top) shows kernel density estimates of the distribution pairs corresponding to the five smallest p-values of the shallow NN.
	As the only difference between the two algorithms is the CMD, we conclude that the CMD successfully helps to align the activation distributions in this example.
	
	\begin{figure}[ht]
		\makebox[\linewidth][c]{%
			\begin{subfigure}[b]{.9\textwidth}
				\centering
				\includegraphics[width=.95\textwidth]{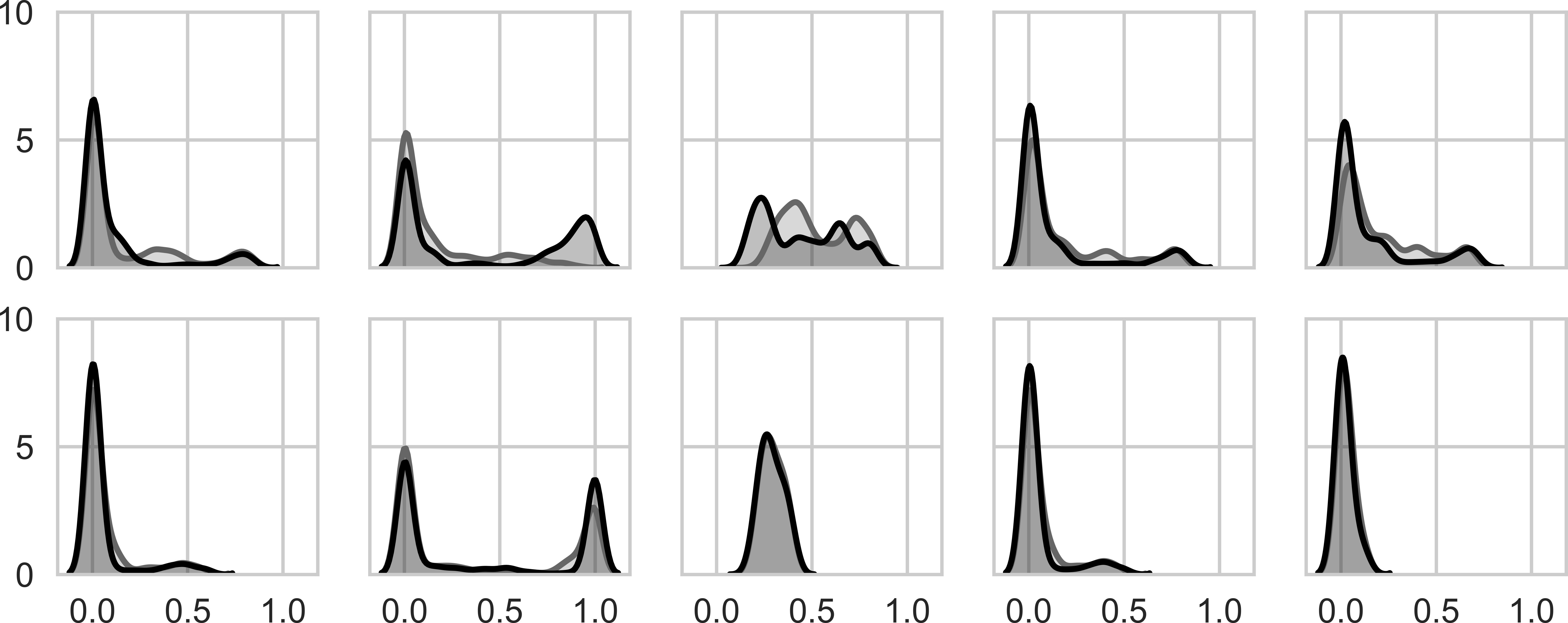}
				%			\caption{}
			\end{subfigure}
		}
		\caption[Illustration of activation distributions before and after the application of Algorithm~\ref{alg:sgd_for_da_with_cmd}.]{Five most different source (dark gray) and target (light gray) activation distributions of the hidden nodes of the neural networks trained by Algorithm~\ref{alg:sgd_for_da_with_cmd} on the toy dataset illustrated in Figure~\ref{fig:artificial_problem} without domain adaptation (top) and with the proposed approach (bottom).}
		\label{fig:artificial_activations}
	\end{figure}
	
	\subsection{Sentiment Analysis of Product Reviews}
	\label{subsec:sentiment_analysis_experiment}
	
	In the following experiment, we compare our method to related approaches based on the single-layer neural network architecture proposed in Subsection~\ref{subsec:optimization}.
	
	We use the Amazon reviews dataset with the same data splits as previous works for every task~\cite{chen2012marginalized,louizos2015variational,ganin2016domain}.
	Thus, we have a labeled source sample of size $2000$ and an unlabeled target sample of size $2000$ for training, and sample sizes between $3000$ and $6000$ for testing.
	
	Since no target labels are available in the unsupervised domain adaptation setting, we cannot select parameters via standard cross-validation procedures.
	Therefore, we apply a variant of the {\it reverse validation} approach~\cite{zhong2010cross} as refined for neural networks~\cite{ganin2016domain}.
	See Subsection~\ref{subsec:domain_adaptation_by_nns} for details.
	
	We report results for the following methods:
	\begin{itemize}
		\item {\it Shallow Neural Network (NN)}: Trained by Algorithm~\ref{alg:sgd_for_da_with_cmd} without domain adaptation, \ie~$\lambda=0$, based on a neural network with $50$ hidden nodes~\cite{ganin2016domain}.
		
		\item {\it Transfer Component Analysis (TCA)}~\cite{pan2011domain}: This kernel learning algorithm tries to learn some transfer components across domains in an reproducing kernel Hilbert space using the maximum mean discrepancy. For competitive classification accuracies, we report results~\cite{li2017prediction} that search the model architecture in a supervised manner by also considering target labels instead of using unsupervised parameter selection.
		The trade-off parameter of the TCA is set to $\mu=0.1$ and the optimal dimension of the subspace is searched for $k\in\{10,20,\ldots,100,500\}$.
		
		\item {\it Domain-Adversarial Neural Networks (DANN)}~\cite{ganin2016domain}: This algorithm is summarized in Subsection~\ref{subsec:domain_adaptation_by_nns}.
		We report the results of the original paper~\cite{ganin2016domain}, where the adaptation weighting parameter $\lambda$ is chosen among $9$ values between $10^{-2}$ and 1 on a logarithmic scale.
		The hidden layer size is either $50$ or $100$ and the learning rate is set to $10^{-3}$.
		
		\item {\it Deep Correlation Alignment (Coral)}~\cite{sun2016deep}: We apply Algorithm~\ref{alg:sgd_for_da} with the CORAL distance function as regularizer $\hat{d}$.
		We use the default parameter $\lambda=1$ as suggested the original paper~\cite{sun2016deep}.
		
		\item {\it Maximum Mean Discrepancy (MMD)}~\cite{gretton2006kernel}: We apply Algorithm~\ref{alg:sgd_for_da} with the maximum mean discrepancy with Gaussian kernel as regularizer $\hat{d}$.
		Parameter $\lambda$ is chosen among $10$ values between $0.1$ and $500$ on a logarithmic scale.
		The Gaussian kernel parameter is chosen among $10$ values between $0.01$ and $10$ on a logarithmic scale.
		
		\item {\it Central Moment Discrepancy (CMD)}:
		In order to increase the visibility of the effects of the proposed method we refrain from hyper parameter tuning but carry out our experiments with the same fixed parameter values of $\lambda$ and $m$ for all experiments.
		The number-of-moments parameter $m$ of the CMD in Eq.\eqref{eq:cmd_estimate} is heuristically set to five, as the first five moments capture rich geometric information about the shape of a distribution and $k=5$ is small enough to be computationally efficient.
		Note that the experiments in Section~\ref{subsec:parameter_sensitivity_experiment} show that similar results are obtained for all $k\in\{4,\ldots,7\}$.
		We use the default parameter $\lambda=1$ to articulates our preference that domain adaptation is equally important as the classification accuracy in the source domain.
	\end{itemize}
	Since we must deal with sparse data, we rely on Adagrad~\cite{duchi2011adaptive} optimization technique described in Subsection~\ref{subsec:optimization}.
	For all evaluations, the default parametrization is used as implemented in the software framework {\it Keras}~\cite{chollet2015keras}.
	We repeat our experiments ten times with different random initializations.
	
	The mean values and average ranks over all tasks are shown in Table~\ref{tab:am_rev_results}.
	Our method outperforms others in average accuracy as well as in average rank in all except one task.
	
	\begin{table}[ht]
		\small
		\renewcommand{\arraystretch}{1.1}
		\centering
		\begin{tabular}{|c||c|c|c|c|c||c|}
			\hline
			Method 	&			NN 	&			DANN~\cite{ganin2016domain} 	&			CORAL~\cite{sun2016deep} 	&			TCA~\cite{pan2011domain} 	&			MMD~\cite{gretton2006kernel} 	&				CMD (ours) 		\\
			\hline
			\hline
			B$\shortrightarrow$D 	&	 $78.7$ 	&	 $78.4$ 	&	 $79.2$ 	&	 $78.9$ 	&	 $\mathit{79.6}$ 	&		 $\mathbf{80.5}$ 		\\
			\hline
			B$\shortrightarrow$E 	&	 $71.4$ 	&	 $73.3$ 	&	 $73.1$ 	&	 $74.2$ 	&	 $75.8$ 	&		 $\mathbf{78.7}$ 		\\
			\hline
			B$\shortrightarrow$K 	&	 $74.5$ 	&	 $77.9$ 	&	 $75.0$ 	&	 $73.9$ 	&	 $\mathit{78.7}$ 	&		 $\mathbf{81.3}$ 		\\
			\hline
			D$\shortrightarrow$B	&	 $74.6$ 	&	 $72.3$ 	&	 $77.6$ 	&	 $77.5$ 	&	 $\mathit{78.0}$ 	&		 $\mathbf{79.5}$ 		\\
			\hline
			D$\shortrightarrow$E	&	 $72.4$ 	&	 $75.4$ 	&	 $74.9$ 	&	 $\mathit{77.5}$ 	&	 $76.6$ 	&		 $\mathbf{79.7}$ 		\\
			\hline
			D$\shortrightarrow$K 	&	 $76.5$ 	&	 $78.3$ 	&	 $79.2$ 	&	 $\mathit{79.6}$ 	&	 $\mathit{79.6}$ 	&		 $\mathbf{83.0}$ 		\\
			\hline
			E$\shortrightarrow$B	&	 $71.1$ 	&	 $71.3$ 	&	 $71.6$ 	&	 $72.7$ 	&	 $\mathit{73.3}$ 	&		 $\mathbf{74.4}$ 		\\
			\hline
			E$\shortrightarrow$D	&	 $71.9$ 	&	 $73.8$ 	&	 $72.4$ 	&	 $\mathit{75.7}$ 	&	 $74.8$ 	&		 $\mathbf{76.3}$ 		\\
			\hline
			E$\shortrightarrow$K 	&	 $84.4$ 	&	 $85.4$ 	&	 $84.5$ 	&	 $\mathbf{86.6}$ 	&	 $85.7$ 	&		 $\mathit{86.0}$ 		\\
			\hline
			K$\shortrightarrow$B	&	 $69.9$ 	&	 $70.9$ 	&	 $73.0$ 	&	 $71.7$ 	&	 $\mathit{74.0}$ 	&		 $\mathbf{75.6}$ 		\\
			\hline
			K$\shortrightarrow$D	&	 $73.4$ 	&	 $74.0$ 	&	 $75.3$ 	&	 $74.1$ 	&	 $\mathit{76.3}$ 	&		 $\mathbf{77.5}$ 		\\
			\hline
			K$\shortrightarrow$E 	&	 $83.3$ 	&	 $84.3$ 	&	 $84.0$ 	&	 $83.5$ 	&	 $\mathit{84.4}$ 	&		 $\mathbf{85.4}$ 		\\
			\hline
			\hline
			Average 	&	 $75.2$ 	&	 $76.3$ 	&	 $76.7$ 	&	 $77.2$ 	&	 $\mathit{78.1}$ 	&		 $\mathbf{79.8}$ 		\\
			\hline
			Average rank	&	 $5.8$	&	 $4.5$	&	 $4.0$	&	 $3.3$	&	 $\mathit{2.3}$	&		 $\mathbf{1.1}$\\
			\hline
		\end{tabular}
		\caption[Classification accuracy on the Amazon reviews dataset for twelve domain adaptation scenarios.]{Classification accuracy on Amazon reviews dataset for twelve domain adaptation scenarios (source$\shortrightarrow$target).}
		\label{tab:am_rev_results}
	\end{table}
	
	\subsection{Object Recognition}
	\label{subsec:object_recognition_experiment}
	
	In the following experiments we investigate our approach based on the learning of deep features which are created as an intermediate layer output of a convolutional neural network that is pre-trained on a larger related dataset.
	We aim at a robust approach, i.e. we try to find a balance between a low number of parameters and a high accuracy.
	
	Since the Office dataset is rather small (with only $2817$ images in its largest domain), we employ the pre-trained convolutional neural network {\it AlexNet}~\cite{krizhevsky2012imagenet}.
	We follow the standard training protocol for this dataset and use the fully labeled source sample and the unlabeled target sample for training~\cite{long2015learning,ganin2016domain,sun2016deep,long2016unsupervised,long2016joint} and the target labels for testing.
	Using this "fully-transductive" protocol, we compare the proposed approach to the most related distribution alignment methods as described in Section~\ref{subsec:sentiment_analysis_experiment}.
	For a fair comparison we report original results of works that only align the distributions of a single neural network layer of the AlexNet.
	
	We compare our algorithm to the following approaches:
	\begin{itemize}
		\item {\it Convolutional Neural Network (CNN)}~\cite{krizhevsky2012imagenet}: We apply Algorithm~\ref{alg:sgd} without domain adaptation to the network architecture of Subsection~\ref{subsec:optimization} on top of the output of the layer called {\it fc7} of AlexNet.
		We use a hidden layer size of $256$~\cite{tzeng2014deep,ganin2016domain}.
		Following~\cite{sun2016deep,ganin2016domain,long2015learning}, we randomly crop and mirror the images, ensure a balanced source batch and optimize via stochastic gradient descent with a momentum term of $0.9$ and learning rate decay.
		In order to increase the visibility of the effects of the proposed method we refrain from hyper parameter tuning but carry out our experiments with the Keras~\cite{chollet2015keras} default learning rate and default learning rate decay.
		
		\item {\it Transfer Component Analysis (TCA)}~\cite{pan2011domain}: We report results~\cite{long2016joint} that are based on the output of the {\it fc7} layer of AlexNet with parameters tuned via reverse validation as described in Subsection~\ref{subsec:domain_adaptation_by_nns}.
		
		\item {\it Domain-Adversarial Neural Networks (DANN)}~\cite{ganin2016domain}: The original paper~\cite{ganin2016domain} reports results for the adaptation tasks A$\shortrightarrow$W, D$\shortrightarrow$W and W$\shortrightarrow$D.
		For the rest of the scenarios, we report the results of~\cite{long2016joint}.
		The distribution alignment is based on a $256$-sized layer on top of the $fc7$ layer.
		The images are randomly cropped and mirrored and stochastic gradient descent is applied with a momentum term of $0.9$.
		The learning rate is decreased polynomially and divided by ten for the lower layers.
		It is proposed to decrease the regularization parameter $\lambda$ with exponential order according to a specifically designed $\lambda$-schedule~\cite{ganin2016domain}.
		
		\item {\it Deep Correlation Alignment (CORAL)}~\cite{sun2016deep}: We report the results and parameters of the original paper in which they perform domain adaptation on a $31$-sized layer on top of the $fc7$-layer.
		Stochastic gradient descent is applied with a learning rate of $10^{-3}$, weight decay of $5\cdot 10^{-4}$ and momentum of $0.9$.
		The domain adaptation weighting parameter $\lambda$ is chosen in such a way that "at the end of training the classification loss and the CORAL loss are roughly the same"~\cite{sun2016deep}.
		
		\item {\it Maximum Mean Discrepancy (MMD)}~\cite{gretton2006kernel}: We report the results of Long et al.~\cite{long2015learning} in which the maximum mean discrepancy is applied on top of the $31$-dimensional layer after the $fc7$-layer.
		The domain adaptation weighting parameter $\lambda$ is chosen based on assessing the error of a two-sample classifier according to~\cite{fukumizu2009kernel}.
		A multi-kernel version of the maximum mean discrepancy is used with varying bandwidth of the Gaussian kernel between $2^{-8}\gamma$ and $2^{8}\gamma$ with multiplicative step-size of $\sqrt{2}$.
		Parameter $\gamma$ is chosen as the median pairwise distance on the training data, i.e. the {\it median heuristic}~\cite{gretton2012optimal}.
		The network is trained via stochastic gradient descent with momentum of $0.9$ and polynomial learning rate decay and cross-validated initial learning rate between $10^{-5}$ and $10^{-2}$ with multiplicative step size of $\sqrt{10}$.
		The learning rate is set to zero for the first three layers and for the lower layers it is divided by $10$.
		The images are randomly cropped and mirrored in this approach to stabilize the learning process.
		
		\item {\it Central Moment Discrepancy (CMD)}: The approach of this paper with the same optimization strategy as for CNN, with the number-of-moments parameter $k=5$ and the domain adaptation weight $\lambda=1$ as described in Subsection~\ref{subsec:sentiment_analysis_experiment}.
		
		\item {\it Few Parameter Central Moment Discrepancy (FP-CMD)}:
		This approach aims at a low number of parameters.
		The Adadelta gradient weighting scheme as described in Subsection~\ref{subsec:optimization} is used instead of the momentum in the method above.
		In addition, no data augmentation is applied.
	\end{itemize}
	
	The parameter settings of the neural network based approaches are summarized in Table~\ref{tab:param_settings}.
	\begin{table}[ht]
		\renewcommand{\arraystretch}{1.1}
		\footnotesize
		\centering
		\begin{tabular}{|c||c|c|c||c|c|}
			\hline
			Method 		&			CORAL~\cite{sun2016deep} 	&			DANN~\cite{ganin2016domain} 	&			MMD~\cite{long2015learning} 		&			CMD (ours) 	&			FP-CMD (ours) 	\\
			\hline
			\hline
			\begin{tabular}{@{}c@{}}Adaptation \\ nodes\end{tabular} 		&	 $\mathbf{31}$ 	&	 $256$ 	&	 $\mathbf{31}$ 		&	 $256$ 	&	 $256$ 	\\
			\hline
			\begin{tabular}{@{}c@{}}Adaptation \\ weight $\lambda$\end{tabular} 		&	 \begin{tabular}{@{}c@{}}manually \\ tuned\end{tabular} 	&	 exp. decay 	&	 class. strategy 		&	 $\mathbf{1.0}$ 	&	 $\mathbf{1.0}$ 	\\
			\hline
			\begin{tabular}{@{}c@{}}Additional \\ hyper-parameters\end{tabular} 		&	 \textbf{no} 	&	 \begin{tabular}{@{}c@{}}additional \\ classifier\end{tabular}	&	 \begin{tabular}{@{}c@{}}range of \\ kernel params\end{tabular} 		&	 $k=5$ 	&	 $k=5$ 	\\
			\hline
			\begin{tabular}{@{}c@{}}Gradient \\ weighting $\eta$\end{tabular} 		&	 momentum 	&	 momentum 	&	 momentum 		&	 momentum 	&	 adadelta 	\\
			\hline
			\begin{tabular}{@{}c@{}}Learn.  rate\end{tabular} 		&	 $10^{-3}$ 	&	 $10^{-3}$ 	&	 cv 		&	 \textbf{default} 	&	 \textbf{no} 	\\
			\hline
			\begin{tabular}{@{}c@{}}Learn. rate \\ decay parameter\end{tabular} 		&	 \textbf{no} 	&	 yes 	&	 yes 		&	 \textbf{default} 	&	 \textbf{default} 	\\
			\hline
			\begin{tabular}{@{}c@{}}Data \\ augmentation\end{tabular} 		&	 yes 	&	 yes 	&	 yes 		&	 yes 	&	 \textbf{no} 	\\
			\hline
			\begin{tabular}{@{}c@{}}Weight  decay\end{tabular}		&	 yes 	&	 \textbf{no} 	&	 \textbf{no} 		&	 \textbf{no}  	&	 \textbf{no} 	\\
			\hline
		\end{tabular}
		\caption[Parameter settings of several state-of-the-art neural network approaches as applied on the Office dataset.]{Summary of parameter settings of state-of-the-art neural network approaches as applied on the Office dataset. Bold numbers indicate preferable settings.}
		\label{tab:param_settings}
	\end{table}
	We repeated all evaluation five times with different random initializations and report the average accuracies and average ranks over all tasks in Table~\ref{tab:office_results}.
	\begin{table}[!ht]
		\renewcommand{\arraystretch}{1.1}
		\small
		\centering
		\begin{tabular}{|c||c|c|c|c|c|c||c|c|}
			\hline
			Method & A$\shortrightarrow$W & D$\shortrightarrow$W & W$\shortrightarrow$D & A$\shortrightarrow$D& D$\shortrightarrow$A& W$\shortrightarrow$A & Average & Average rank\\
			\hline\hline
			CNN~\cite{krizhevsky2012imagenet} & $52.9$ & $94.7$ & $99.0$ & $62.5$ & $50.2$ & $48.1$ & $67.9$ & $6.3$\\
			\hline
			TCA~\cite{pan2011domain} & $61.0$ & $95.4$ & $95.2$ & $60.8$ & $51.6$ & $50.9$ & $69.2$ & $6.0$\\
			\hline
			MMD~\cite{gretton2006kernel,long2015learning} & $63.8$ & $94.6$ & $98.8$ & $65.8$ & $52.8$ & $\mathit{51.9}$ & $71.3$ & $4.7$\\
			\hline
			CORAL~\cite{sun2016deep} & $\mathit{66.4}$ & $95.7$ & $99.2$ & $66.8$ & $52.8$ & $51.5$ & $72.1$ & $3.2$\\
			\hline
			DANN~\cite{ganin2016domain} & $\mathbf{73.0}$ & $\mathit{96.4}$ & $99.2$ & $\mathbf{72.3}$ & $53.4$ & $51.2$ & $\mathbf{74.3}$ & $\mathit{2.5}$\\
			\hline
			\hline
			CMD (ours) & $62.8$ & $\mathbf{96.7}$ & $\mathit{99.3}$ & $66.0$ & $\mathit{53.6}$ & $\mathit{51.9}$ & $71.7$ & $2.7$\\
			\hline
			FP-CMD (ours) & $64.8$ & $95.4$ & $\mathbf{99.4}$ & $\mathit{67.0}$ & $\mathbf{55.1}$ & $\mathbf{53.5}$ & $\mathit{72.5}$ & $\mathbf{2.0}$\\
			\hline
		\end{tabular}
		\caption[Classification accuracy on the Office dataset for six domain adaptation scenarios.]{Classification accuracy on Office dataset for six domain adaptation scenarios (source$\shortrightarrow$target)}
		\label{tab:office_results}
	\end{table}
	
	Without considering the FP-CMD implementation, the CMD implementation shows the highest accuracy in four of six domain adaptation tasks on this dataset.
	In the last two tasks, the DANN algorithm shows the highest accuracy and also has the highest average accuracy due to these two scenarios.
	
	The FP-CMD implementation shows the highest accuracy in three of six tasks over all approaches and achieves the best average rank.
	In contrast to the other approaches, FP-CMD does so without data mirroring or rotation, no tuned, manually decreasing or cross-validated learning rates, no different learning rates for different layers and no tuning of the domain adaptation weighting parameter $\lambda$.
	
	\subsection{Digit Recognition}
	\label{subsec:digit_recognition_experiment}
	
	In the following domain adaptation experiments {SVHN$\rightarrow$MNIST}, {SynthDigits$\rightarrow$SVHN} and MNIST$\rightarrow$MNIST-M, we analyze the accuracy of our method based on the learning of deep convolutional neural networks on raw image data without using any additional knowledge.
	We use the provided training and test splits of the datasets described in Section~\ref{subsec:datasets}.
	
	In semi-supervised learning research it is often the case that the parameters of deep neural network architectures are specifically tuned for certain datasets~\cite{odena2018realistic} which can cause problems when applying these methods to real-world applications.
	Since our goal is to propose a robust method, we rely on the one architecture for all three digit recognition task.
	The architecture is not specifically developed for high performance of our method but rather independently developed in~\cite{haeusser2017associative}.
	In addition, we fix the learning rate, set the domain adaptation parameters to our default setting and change the activation function of the last layer to be the $\tanh$ function such that the output of the layer is bounded.
	
	We compare our algorithm to the following approaches:
	\begin{itemize}
		\item {\it Deep Convolutional Neural Network (CNN)}: The architecture of~\cite{haeusser2017associative} used by other methods~\cite{bousmalis2016domain,sener2016learning,tzeng2017adversarial,sun2016deep}. Data augmentation is applied.
		
		\item {\it Deep Correlation Alignment (CORAL)}~\cite{sun2016deep}: The same optimization procedure and architecture as of CNN is used.
		The domain adaptation weighting parameter $\lambda$ is chosen in such a way that "at the end of training the classification loss and the CORAL loss are roughly the same"~\cite{sun2016deep}, i.e. $\lambda=1$ as in the original work.
		
		\item {\it Maximum Mean Discrepancy (MMD)}~\cite{gretton2006kernel}: We report the results of Bousmalis et al.~\cite{bousmalis2016domain} in which two separate architectures for each of the two tasks are trained by the Adam optimizer.
		The parameters are tuned according to the procedure reported in~\cite{long2015learning}.
		
		\item {\it Adversarial Discriminative Domain Adaptation (ADDA)~\cite{tzeng2017adversarial}}: We report results of the original paper for the SVHN$\rightarrow$MNIST task.
		
		\item {\it Domain Adversarial Neural Networks (DANN)~\cite{ganin2016domain}}: The results of the original paper are reported. They used stochastic gradient descent with a polynomial decay rate, a momentum term and an exponential learning rate schedule.
		
		\item {\it Domain Separation Networks (DSN)~\cite{bousmalis2016domain}}: We report the results of the original work in which they used the adversarial approach as distance function for the similarity loss. Different architectures are used for both tasks. The hyper-parameters are tuned using a small labeled set from the target domain.
		
		\item {\it Central Moment Discrepancy (CMD)}: The approach of this paper with the same optimization strategy as of CNN, the number-of-moments parameter $k=5$ and the domain adaptation weight $\lambda=1$ as described in Subsection~\ref{subsec:sentiment_analysis_experiment}.
		
		\item {\it Cross-Variance Central Moment Discrepancy (CV-CMD)}: The approach of this paper including the alignment of all cross-variances, i.e. all monomials of order $2$ in Eq.~\eqref{eq:monomial_vector_nuk_for_cmd}.
		The alignment term in the sum of the CMD is divided by $\sqrt{2}$ to compensate for the higher number of second order terms. The parameters $k=5$ and $\lambda=1$ are used as in all other experiments.
	\end{itemize}
	
	\begin{table}[ht]
		\renewcommand{\arraystretch}{1.3}
		\small
		\centering
		\begin{tabular}{|c||c|c|c||c|c|}
			\hline
			Method & $\xrightarrow[\text{MNIST}]{\text{SVHN}}$ & $\xrightarrow[\text{MNIST-M}]{\text{MNIST}}$ & $\xrightarrow[\text{SVHN}]{\text{SynthDigits}}$ & Average & Average rank\\
			\hline\hline
			CNN &  $66.74$ &	$70.85$ &	$80.94$ &	$72.84$ &	$7.3$\\
			\hline
			CORAL~\cite{sun2016deep} & $69.39$ & $77.34$ & $83.58$ & $76.77$ & $5.3$\\
			\hline
			ADDA~\cite{tzeng2017adversarial} & $76.00$ & $-$ &	$-$ &  	$76.00$ & $5.0$\\
			\hline
			MMD~\cite{gretton2006kernel,long2015learning} & $76.90$ & $71.10$ & $88.00$ & $78.67$ & $4.7$\\
			\hline
			DANN~\cite{ganin2016domain} & $76.66$ & $73.85$ & $\it 91.09$ & $80.53$ & $4.3$\\
			\hline
			DSN~\cite{bousmalis2016domain} &  $83.20$ & $82.70$ & $\bf 91.20$ & $\it 85.70$ & $\bf 2.3$\\
			\hline
			\hline
			CMD (ours) &  $\it 84.52$ & $\it 85.04$ & $85.52$ & $85.03$ & $\it 2.7$\\
			\hline
			CV-CMD (ours) &  $\bf 86.34$ & $\bf 88.03$ & $85.42$ & $\bf 86.60$ & $\bf2.3$\\
			\hline
		\end{tabular}
		\caption[Classification accuracy for three domain adaptation scenarios based on four large scale digit datasets.]{Classification accuracy for three domain adaptation scenarios (source$\shortrightarrow$target) based on four large scale digit datasets~\cite{lecun1998mnist,netzer2011reading,ganin2016domain}.}
		\label{tab:digit_results}
	\end{table}
	
	The results are shown in Table~\ref{tab:digit_results}.
	Our method outperforms others in average accuracy as well as in average rank in the tasks SVHN$\rightarrow$MNIST and MNIST$\rightarrow$MNIST-M and performs worse on SynthDigits$\rightarrow$SVHN.
	
	At the SynthDigits$\rightarrow$SVHN task, the $\mathcal{F}$-divergence based approaches, \ie~DANN and DSN, perform better than distance based approaches without adversarial-based implementation, \ie~MMD, CORAL and CMD.
	Note that the performance gain, \ie~the percentage over the baseline, of the best method on the SynthDigits$\rightarrow$SVHN task is rather low with $12.68\%$ compared to the other tasks which show $29.37\%$ and $24.25\%$.
	That is, the methods perform more similar on this task than on the others.
	
	The next section analyzes the accuracy sensitivity \wrt~changes of the hidden layer size and the number-of-moments parameter.
	
	\subsection{Accuracy Sensitivity \texorpdfstring{w.\,r.\,t}.~Parameter Changes}
	\label{subsec:parameter_sensitivity_experiment}
	
	The first sensitivity experiment aims at providing evidence regarding the accuracy sensitivity of the CMD regularizer \wrt~parameter changes of the number-of-moments parameter $m$.
	That is, the contribution of higher terms in the CMD are analyzed.
	The claim is that the accuracy of CMD-based networks does not depend strongly on the choice of $m$ in a range around its default value $5$.
	
	In Figure~\ref{fig:psens} we analyze the classification accuracy of a CMD-based network trained on all tasks of the Amazon reviews experiment.
	We perform a grid search for the number-of-moments parameter $m$ and the regularization parameter $\lambda$.
	We empirically choose a representative stable region for each parameter, $[0.3,3]$ for $\lambda$ and $\{1,\ldots,7\}$ for $m$.
	Since we want to analyze the sensitivity \wrt~$m$, we averaged over the $\lambda$-dimension, resulting in one accuracy value per $m$ for each of the $12$ tasks.
	Each accuracy is transformed into an accuracy ratio value by dividing it by the accuracy of $m=5$.
	Thus, for each $m$ and each task, we get one value representing the ratio between the obtained accuracy and the accuracy of $m=5$.
	The results are shown in Figure~\ref{fig:psens} at the upper left.
	The accuracy ratios between $m=5$ and $m\in\{3,4,6,7\}$ are lower than $0.5\%$, which underpins the claim that the accuracy of CMD-based networks does not depend strongly on the choice of $m$ in a range around its default value $5$.
	For $m=1$ and $m=2$ higher ratio values are obtained. In addition, for these two values many tasks show worse accuracy than obtained by $m\in\{3,4,5,6,7\}$. From this we additionally conclude that higher values of $m$ are preferable to $m=1$ and $m=2$.
	
	The same experimental procedure is performed with maximum mean discrepancy regularization weighted by $\lambda\in[5,45]$ and Gaussian kernel parameter $\beta\in[0.3,1.7]$.
	We calculate the ratio values \wrt~the accuracy of $\beta=1.2$, since this value of $\beta$ shows the highest mean accuracy of all tasks.
	Figure~\ref{fig:psens} on the upper right shows the results.
	The accuracy of the maximum mean discrepancy network is more sensitive to parameter changes than the CMD optimized version.
	Note that the problem of finding the best settings for parameter $\beta$ of the Gaussian kernel is a well known problem~\cite{hsu2003practical}.
	
	The default number of hidden nodes in the sentiment analysis experiments in Subsection~\ref{subsec:sentiment_analysis_experiment} is $50$ to be comparable with other state-of-the-art approaches~\cite{ganin2016domain}.
	The question arises whether the accuracy improvement of the CMD-regularization is robust to changes of the number of hidden nodes.
	
	In order to answer this question we calculate the accuracy ratio between the CMD-based network and the non-regularized network for each task of the Amazon reviews dataset for different numbers of hidden nodes in $\{128,256,384,\ldots,1664\}$.
	For higher numbers of hidden nodes our NN models do not converge with the optimization settings under consideration.
	For the parameters $\lambda$ and $m$ we use our default setting $\lambda=1$ and $m=5$.
	Figure~\ref{fig:psens} on the lower left shows the ratio values on the vertical axis for every number of hidden nodes shown on the horizontal axis and every task represented by different colors.
	The accuracy improvement of the CMD domain regularizer varies between $4\%$ and $6\%$.
	However, no significant accuracy ratio decrease can be observed.
	
	Figure~\ref{fig:psens} shows that our default setting ($\lambda=1, m=5$) can be used independently of the number of hidden nodes for the sentiment analysis task.
	
	The same procedure is performed with the maximum mean discrepancy weighted by parameter $\lambda=9$ and $\beta=1.2$ as these values show the highest classification accuracy for $50$ hidden nodes.
	Figure~\ref{fig:psens} at the lower right shows that the accuracy improvement using the maximum mean discrepancy decreases with increasing number of hidden nodes for this parameter setting.
	That is, for accurate performance of the maximum mean discrepancy, additional parameter tuning procedures for $\lambda$ and $\beta$ need to be performed.
	
	\begin{figure}[ht]
		\centering
		\includegraphics[width=.75\textwidth]{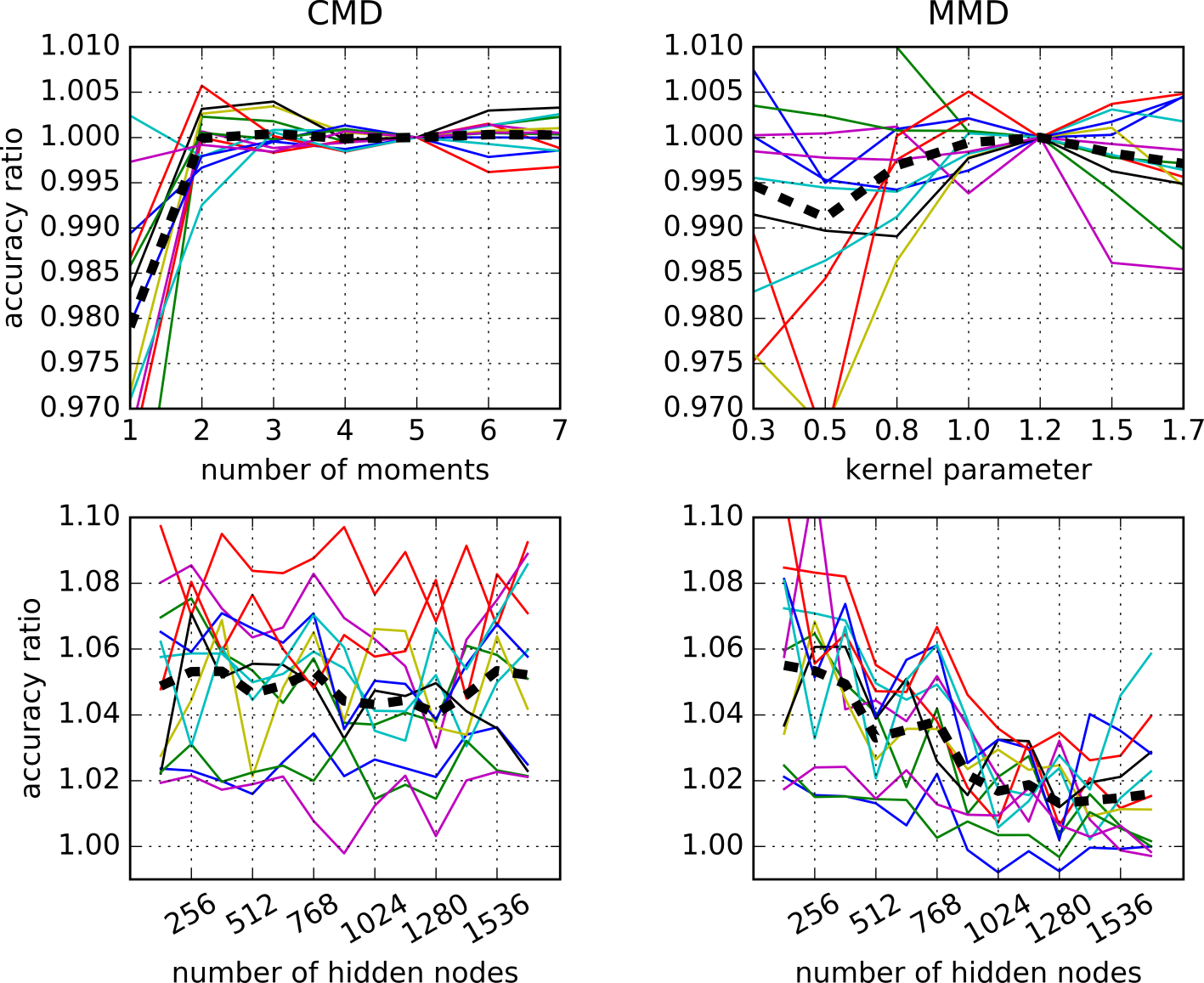}
		\caption[Accuracy sensitivity of Central Moment Discrepancy on the Amazon reviews dataset.]{Sensitivity of classification accuracy \wrt~different parameters of CMD (left) and maximum mean discrepancy (right) on the Amazon reviews dataset.
			The horizontal axes show parameter values and the vertical axes show accuracy ratio values.
			Each line in the plots represents accuracy ratio values for one specific task.
			The ratio values on the upper left are computed \wrt~the default accuracy for CMD ($m=5$) and on the right \wrt~the best obtainable accuracy for maximum mean discrepancy ($\beta=1.2$).
			The ratio values in the lower column are computed \wrt~the accuracies of the networks with the same hidden layer but without domain adaptation.}
		\label{fig:psens}
	\end{figure}
	
	\section{Proofs}
	\label{sec:proofs_cmd_algo}
	
	\subsection{Example of Mean Over-Penalization}
	\label{subsec:proof_mean_overpenalization_problem}
	
	Let the source probability density function $p$ be defined as the density of the random variable ${X_S=0.8\, Y+0.1}$ with $Y$ following a Beta distribution with shape parameters ${\alpha=\beta=0.4}$ (Figure~\ref{fig:problem} dashed). Let the left target distribution $q^{(L)}$ be a Normal distribution with mean $0.5$ and variance $0.27^2$ (Figure~\ref{fig:problem} left) and let the right target distribution $q^{(R)}$ be defined by the random variable $X_T=0.8\cdot Y+0.12$ (Figure~\ref{fig:problem} right). Then
	\begin{align*}
		d_{\Pk^1}(p,q^{(L)}) = \left|\int x\, p(x)\diff x-\int x\, q^{(L)}(x)\diff x\right| = 0 < 0.02 < d_{\Pk^1}(p,q^{(R)}),
	\end{align*}
	and for $\Pk^2$ and $\Pk^4$ it follows
	\begin{align*}
		&d_{\Pk^2}(p,q^{(L)})<0.016<0.02 < d_{\Pk^2}(p,q^{(R)})\\
		&d_{\Pk^4}(p,q^{(L)})<0.02<0.021 < d_{\Pk^4}(p,q^{(R)}).
	\end{align*}
	Let us now consider the maximum mean discrepancy~\cite{gretton2006kernel,li2017mmd} with standard polynomial kernel $\kappa_2(x,y)=(1+x y)^2$.
	According to Lemma~4 in~\cite{gretton2012kernel}, the squared population maximum mean discrepancy $\mathrm{MMD}^2$ is given by
	\begin{align*}
		\text{MMD}^2_{\kappa_2}(p,q^{(L)})
		&= \norm{\left(\int x^2 p, \sqrt{2}\int x p, 1\right)^\text{T}-\left(\int x^2 q, \sqrt{2}\int x q, 1\right)^\text{T}}^2_{\mathcal{H}}\\
		&= 2\,\left|\int x\,p(x)\diff x-\int x\,q^{(L)}(x)\diff x\right|^2+\left|\int x^2\,p(x)\diff x-\int x^2\,q^{(L)}(x)\diff x\right|^2\\
		&< 0.00025 < 0.0012 < \text{MMD}^2_{\kappa_2}(p,q^{(R)}).
	\end{align*}
	Similarly it follows for the quartic kernel $\kappa_4(x,y)=(1+x y)^4$ that
	\begin{align*}
		\text{MMD}^2_{\kappa_4}(p,q^{(L)})<0.004<0.006<\text{MMD}^2_{\kappa_4}(p,q^{(R)}).
	\end{align*}
	The mean and covariance feature matching integral probability metrics in~\cite{mroueh2017mcgan} coincide in our example with the integral probability metrics based on $\Pk^1$ and $\Pk^2$.
	
	Finally, for the CMD in Eq.~\eqref{eq:cmd} with $a_1=a_2=a_3=a_4=1$, we obtain
	\begin{align*}
		\mathrm{cmd}_4(p,q^{(L)})>0.0207>0.02>\mathrm{cmd}_4(p,q^{(R)}).
	\end{align*}
	
	\subsection{Dual Representation}
	\label{subsec:proof_dual_cmd}
	
	\thmdualcmd*
	
	\begin{proof}
		The proof follows from the linearity of the expectation for finite sums and the self-duality of the Euclidean norm.
		It holds that
		\begin{align*}
			&\mathrm{cmd}_k(p,q)\\
			&= a_1\, d_{\mathcal{P}^1}(p,q) + \sum_{j=2}^k  a_j\, d_{\Pk^j}^\text{c}(p,q)\\
			&= a_1\, \sup_{g\in\Pk^1} \left| \int_{\mathbb{R}^d} g(\x)\, p(\x)\diff\x - \int_{\mathbb{R}^d} g(\x)\, q(\x)\diff\x \right| +\\
			&\phantom{=} \quad+\sum_{j=2}^k  a_j\, \sup_{g\in\Pk^k}\left| \int_{\mathbb{R}^d} g\!\left(\x-c_1(p)\right) p(\x)\diff\x - \int_{\mathbb{R}^d} g\!\left(\x-c_1(q)\right) q(\x)\diff\x \right|\\
			&= a_1 \sup_{\norm{\w}\leq 1}\left|\int_{\mathbb{R}^d}\langle\w,\x\rangle\, p(\x)\diff\x -\int_{\mathbb{R}^d}\langle\w,\x\rangle\, q(\x)\diff\x\right|+\\
			&\phantom{=} \quad+ \sum_{j=2}^k a_j \sup_{\norm{\w}\leq 1}\left|\int_{\mathbb{R}^d} \langle\w,\boldsymbol{\nu}_j(\x-c_1(p))\rangle\, p(\x)\diff\x - \int_{\mathbb{R}^d} \langle\w,\boldsymbol{\nu}_j(\x-c_1(q))\rangle\, q(\x)\diff\x\right|\\
			&= a_1 \sup_{\norm{\w}\leq 1}\left|\langle\w,\int_{\mathbb{R}^d} \x p(\x)\diff\x -\int_{\mathbb{R}^d}\x q(\x)\diff\x \rangle\right|+\\
			&\phantom{=} \quad+ \sum_{j=2}^k a_j \sup_{\norm{\w}\leq 1}\left| \langle\w,\int_{\mathbb{R}^d} \boldsymbol{\nu}_j(\x-c_1(p))\, p(\x)\diff\x - \int_{\mathbb{R}^d} \boldsymbol{\nu}_j(\x-c_1(q))\, q(\x)\diff\x \rangle\right|
		\end{align*}
		and finally, by duality, $\mathrm{cmd}_k(p,q)  = \sum_{j=1}^k a_j \norm{c_j(p)-c_j(q)}$.
	\end{proof}\\\\
	
	\subsection{Decreasing Upper Bound}
	\label{subsec:proof_decreasing_terms}
	
	\lemmadecreasingterms*
	
	\begin{proof}
		Let $c_1(p)=\int_{\mathbb{R}^d}\x p(\x)\diff\x$ and $c_j(p)=\int_{\mathbb{R}^d} \boldsymbol{\nu}_j(\x-c_1(p)) p(\x)\diff\x$ for $j\geq 2$ be the central moment vectors of $p$ with $\boldsymbol{\nu}_j$ as defined in Eq.~\eqref{eq:monomial_vector_marginal}.
		Then
		\begin{align*}
			\frac{1}{|b-a|^j}&\|c_j(p)-c_j(q)\|_2 \leq 2 \sqrt{d} \sup_{p\in\mathcal{M}\left([a,b]\right)} \left|\frac{c_j(p)}{(b-a)^j}\right| \leq  2 \sqrt{d} \sup_{p\in\mathcal{M}\left([a,b]\right)} \int \left|\frac{x-\int x p(x)\diff x}{b-a}\right|^j\diff x.
		\end{align*}
		It is shown in~\cite{madansky1959bounds} that if $f:[a,b]\to\mathbb{R}$ is a convex function and $p\in\mathcal{M}\left([a,b]\right)$, then
		\begin{align}
			\label{eq:edmundson_mandasky}
			\int f(x) p(x)\diff x\leq \frac{b-\int x p(x)\diff x}{b-a} f(a) + \frac{\int x p(x)\diff x-a}{b-a} f(b).
		\end{align}
		For the rest of this proof we follow~\cite{egozcue2012smallest}.
		Therefore, we apply Eq.~\eqref{eq:edmundson_mandasky} to the convex function ${x\mapsto |(x-\int x p(x)\diff x)/(b-a)|^j}$ and obtain
		\begin{align*}
			&\int \left|\frac{x-\int x p(x)\diff x}{b-a}\right|^j p(x) \diff x\\
			&\quad\leq \frac{b-\int x p(x)\diff x}{b-a} \cdot\left|\frac{a-\int x p(x)\diff x}{b-a}\right|^j + \frac{\int x p(x)\diff x-a}{b-a} \cdot \left|\frac{b-\int x p(x)\diff x}{b-a}\right|^j
		\end{align*}
		Let us denote by $v=(\int x p(x)\diff x - a)/(b-a)$ and note that $p(x)\in\mathcal{M}([a,b])$ implies $v\in [0,1]$.
		It follows that
		\begin{align*}
			\frac{b-\int x p(x)\diff x}{b-a} &\cdot\left|\frac{a-\int x p(x)\diff x}{b-a}\right|^j + \frac{\int x p(x)\diff x-a}{b-a} \cdot \left|\frac{b-\int x p(x)\diff x}{b-a}\right|^j\\
			&\quad\leq \left( (1-v) v^j + (1-v)^j v\right)\\
			&\quad\leq \max_{x\in[0,1]}\left( (1-x) x^j + (1-x)^j x\right)\\
			&\quad= \max_{x\in[0,1/2]}\left( (1-x) x^j + (1-x)^j x\right)\\
			&\quad\leq \max_{x\in[0,1/2]} (1-x) x^j + \max_{x\in[0,1/2]} (1-x)^j x.
		\end{align*}
		Since $(1-x) x^j$ is increasing in $[0,\frac{1}{2}]$, it holds that
		\begin{align*}
			\max_{x\in[0,1/2]} (1-x) x^j \leq \frac{1}{2^{1+j}},
		\end{align*}
		and since the maximum of $(1-x)^j x$ in the interval $[0,\frac{1}{2}]$ is obtained at $\frac{1}{j+1}$, it follows that
		\begin{align*}
			\max_{x\in[0,1/2]} (1-x)^j x \leq \frac{1}{j+1}\left(\frac{j}{j+1}\right)^j.
		\end{align*}
		Finally, we obtain
		\begin{align*}
			\left|\frac{x-\int x p(x)\diff x}{b-a}\right|^j p(x) \diff x\leq \frac{1}{2^{1+j}} + \frac{1}{j+1}\left(\frac{j}{j+1}\right)^j.
		\end{align*}
	\end{proof}
	
	\needspace{10\baselineskip}
	\subsection{Relation to Other Probability Metrics}
	\label{subsec:proof_relation_to_other_probability metrics}
	
	\Needspace{10\baselineskip}
	\cmdlevybound*
	\begin{proof}
		Let the central moments $c_j(p)$ of $p$ for $j=2,\ldots,m$ as defined in Theorem~\ref{thm:dual_cmd} and denote by $\rho_j=\int x^j p$ and $\nu_j=\int x^j q$ the $j$-th raw moment of $p$ and $q$.
		It follows that
		\begin{align*}
			\mathrm{cmd}_m(p,q) &= \left|\rho_1-\nu_1\right| + \sum_{j=2}^m\left|c_j(p)-c_j(q)\right|\\
			&\leq \left|\rho_1-\nu_1\right| + \sum_{j=2}^m \sum_{i=0}^j \binom{j}{i} \left|\rho_j \rho_1^{j-i}-\nu_j\nu_1^{j-i}\right|\\
			&\leq \left|\rho_1-\nu_1\right| + \sum_{j=2}^m \sum_{i=0}^j \binom{j}{i} \left(\left|\rho_j-\nu_j\right|+\left|\rho_1^{j-i}-\nu_1^{j-i}\right|\right)\\
			&\leq \left|\rho_1-\nu_1\right| + \sum_{j=2}^m \sum_{i=0}^j \binom{j}{i} \left(\left|\rho_j-\nu_j\right|+(j-i) \left|\rho_1-\nu_1\right|\right)\\
			&\leq \left(1+\sum_{j=2}^m\sum_{i=0}^j \binom{j}{i} (j-i)\right) \norm{\int\phim p-\int\phim q}_1
		\end{align*}
		for $\phim=(1,x,x^2,\ldots,x^m)^\text{T}\in\left(\mathbb{R}_m\right)^{m+1}$,
		where the first inequality follows from the Binomial theorem, the second inequality follows from the fact that
		\begin{align*}
			|x_1 y_1-x_2 y_2|\leq |x_1-x_2|+|y_1-y_2|\quad\forall x_1,x_2,y_1,y_2\in[-1,1]
		\end{align*}
		and the third inequality follows from
		\begin{align*}
			|x_1^k-x_2^k|\leq k\cdot |x_1-x_2|\quad\forall x_1,x_2\in[-1,1], k\in\mathbb{N}.
		\end{align*}
		The statement now follows from Lemma~\ref{lemma:moment_distance_bound_by_levy}.
	\end{proof}\\
	\Needspace{15\baselineskip}
	\cmdTVbound*
	\begin{proof}
		Let us define the vector
		\begin{align*}
			\phim=\left(\eta_1(x_1),\ldots,\eta_m(x_1),\eta_1(x_2),\ldots,\eta_m(x_2),\ldots,\eta_1(x_d),\ldots,\eta_1(x_d),\ldots,\eta_m(x_d)\right)^\text{T}
		\end{align*}
		of polynomials such that $1,\eta_1(x_i),\ldots,\eta_m(x_i)$ are the orthonormal Legendre polynomials in the variable $x_i$ up to order $m$.
		
		Denote by $\rho_{i j}=\int x_j^i p$ and by $\nu_{i j}=\int x_j^i q$ the $i$-th raw moments of $p$ and $q$ in the variable $x_j$.
		It follows that
		\begin{align}
			\norm{\boldsymbol{\mu}_p-\boldsymbol{\mu}_q}_1 &= \sum_{j=1}^{d} \sum_{i=1}^{m} \left\lvert \int \eta_i(x_j) p-\int \eta_i(x_j) q\right\rvert\nonumber\\
			&\leq C_m \cdot \sum_{j=1}^{d} \sum_{i=1}^{m}\left\lvert \rho_{i j}-\nu_{i j}\right\rvert\nonumber\\
			&\leq C_m\cdot \sum_{j=1}^{d} \sum_{i=1}^{m} \sum_{t=0}^{i} \binom{i}{t}  \left\lvert \rho_{t j}' \rho_{1 j}^{i-t}-\nu_{t j}' \nu_{1 j}^{i-t}\right\rvert\nonumber\\
			&\leq C_m\cdot \sum_{j=1}^{d} \sum_{i=1}^{m} \sum_{t=0}^{i} \binom{i}{t}
			\left(\left\lvert \rho_{1 j}^{i-t}-\nu_{1 j}^{i-t}\right\rvert+\left\lvert \rho_{t j}'-\nu_{t j}' \right\rvert\right)\nonumber\\
			\nonumber
			&\leq C_m\cdot \sum_{j=1}^{d} \sum_{i=1}^{m} \sum_{t=0}^{i} \binom{i}{t}
			\left(\left(i-t\right) \left\lvert \rho_{1 j}-\nu_{1 j}\right\rvert+\left\lvert \rho_{t j}'-\nu_{t j}' \right\rvert\right)
		\end{align}
		where $C_m=\max_{i\in\{1,\ldots,d\}} r_i$ and $r_i=\sum_{t=1}^m |l_t|$ is the sum of the absolute values of the coefficients $l_t$ of all terms in the orthonormal Legendre polynomials $\eta_1(x_j),\ldots,\eta_m(x_j)$ which contain the monomial $x_j^i$, see \eg~Subsection~\ref{subsec:learning_bound_cmd_application}.
		The term $\rho_{i j}'=\int (x_j-\int x_j p\diff x_j)^i\diff x_j, i\in\mathbb{N}$ denotes the $i$-th central moment of the marginal density $p_j$, especially $\rho_{0 j}'=1$ and $\rho_{1 j}'=0$.
		The terms $\nu_{i j}'$ analogously denote the central moments of the marginals of $q$.
		The second inequality follows from the Binomial theorem, the third inequality follows from the fact that
		\begin{align*}
			|x_1 y_1-x_2 y_2|\leq |x_1-x_2|+|y_1-y_2|\quad\forall x_1,x_2,y_1,y_2\in[-1,1]
		\end{align*}
		and the fourth inequality follows from
		\begin{align*}
			|x_1^k-x_2^k|\leq k\cdot |x_1-x_2|\quad\forall x_1,x_2\in[-1,1], k\in\mathbb{N}.
		\end{align*}
		It further holds that
		\begin{align}
			\norm{\boldsymbol{\mu}_p-\boldsymbol{\mu}_q}_1 &\leq C_m\cdot \sum_{j=1}^{d} \sum_{i=1}^{m} \sum_{t=0}^{i} \binom{i}{t}
			\left(\left(i-t\right) \left\lvert \rho_{1 j}-\nu_{1 j}\right\rvert+\left\lvert \rho_{t j}'-\nu_{t j}' \right\rvert\right)\nonumber\\
			&\leq C_m\cdot \sum_{j=1}^{d} \sum_{i=1}^{m} \sum_{t=0}^{i} i \binom{i}{t}
			\left(\left\lvert \rho_{1 j}-\nu_{1 j}\right\rvert+\left\lvert \rho_{t j}'-\nu_{t j}' \right\rvert\right)\nonumber\\
			&\leq C_m\cdot \sum_{j=1}^{d} \sum_{i=1}^{m} \sum_{t=0}^{m} m \binom{m}{t}
			\left(\left\lvert \rho_{1 j}-\nu_{1 j}\right\rvert+\left\lvert \rho_{t j}'-\nu_{t j}' \right\rvert\right)\nonumber\\
			&\leq C_m\cdot m^2\cdot \max_{t\in\{0,1,\ldots,m\}} \left\{\binom{m}{t}\right\}\cdot \sum_{j=1}^{d} \sum_{t=0}^{m} \left(\left\lvert \rho_{1 j}-\nu_{1 j}\right\rvert+\left\lvert \rho_{t j}'-\nu_{t j}' \right\rvert\right)\nonumber\\
			&\leq C_m\cdot m^2\cdot (m+1)\cdot \max_{t\in\{0,1,\ldots,m\}} \left\{\binom{m}{t}\right\}\cdot \sum_{t=2}^{m} \norm{c_t(p)-c_t(q)}_1\nonumber\\
			\label{eq:lone_cmd_bound}
			&\leq C_m\cdot m^2\cdot (m+1)\cdot \max_{t\in\{0,1,\ldots,m\}} \left\{\binom{m}{t}\right\} \cdot\sqrt{d}\cdot \mathrm{cmd}_m(p,q)
		\end{align}
		where $c_t(p)$ and $c_t(q)$ for $t\in\{1,\ldots,m\}$ are defined as in Theorem~\ref{thm:dual_cmd}.
		
		Note that the elements of $\phim$, together with one, form an orthonormal basis of the linear space $\mathrm{Span}(\mathbb{R}_m[x_1]\cup\ldots\cup\mathbb{R}_m[x_d])$.
		
		Denote $C_\mathrm{cmd}=C_m\cdot m^2\cdot (m+1)\cdot \max_{t\in\{0,1,\ldots,m\}} \left\{\binom{m}{t}\right\} \cdot\sqrt{d}$ and $\tilde{C}=2 e^{(3m-1)/2}$.
		If
		\begin{align*}
			\mathrm{cmd}_m(p,q)\leq \frac{1}{C_\mathrm{cmd} 2 \tilde{C} (m+1)}
		\end{align*}
		then, by applying the inequality proven above, we obtain
		\begin{align*}
			\norm{\boldsymbol{\mu}_p-\boldsymbol{\mu}_q}_1 \leq \frac{1}{2 \tilde{C} (m+1)}.
		\end{align*}
		Theorem~\ref{thm:bound_for_smooth_functions} can be applied and it follows that
		\begin{align*}
			\norm{p-q}_{L^1} &\leq \sqrt{2\tilde{C}}\cdot \norm{\boldsymbol{\mu}_p-\boldsymbol{\mu}_q}_1 + \sqrt{8 \epsilon}\\
			&\leq \sqrt{2\tilde{C}}\cdot C_\mathrm{cmd} \cdot \mathrm{cmd}_m(p,q) + \sqrt{8 \epsilon}.
		\end{align*}
		The statement now follows from Theorem~\ref{thm:TV_distance}.
	\end{proof}\\
	
	\section{Discussion}
	\label{sec:conclusion_cmd_algo}
	
	This chapter proposes a novel approach for unsupervised domain adaptation with neural networks that relies on a metric-based regularization of the learning process.
	The regularization aims at implementing the principle of learning new data representations such that finitely many moments of the domain-specific representations are similar.
	The proposed metric is motivated by instability issues that can arise in the application of integral probability metrics on polynomial function spaces.
	Some relations of the new moment distance to other probability metrics are provided and a bound on the misclassification error of our method is derived.
	To underpin the relevance of our ideas beyond the conditions studied in Chapter~\ref{chap:learning_bounds}, we test our approach on an artificial dataset and $21$ standard benchmark tasks for domain adaptation based on $6$ large scale datasets.
	
	Compared to related approaches, additional assumptions on the distributions are needed to theoretically prove the success of our method.
	However, it turns our that often a lower misclassification error can be achieved compared to related approaches that are based on stronger concepts of similarity.
	In addition, the accuracy of our method is often not very sensitive to changes of the regularization parameter.
	The time complexity of our approach is linear in the number of hidden nodes of the network.
	
	We found that, due to its conceptual simplicity, its solid performance, its low sensitivity \wrt~parameter changes and its low time complexity, our approach serves as a good starting point for further application-specific improvements in domain adaptation applications.

\newpage

\chapter{Industrial Applications}
\label{chap:applications}

In this chapter, we show how our ideas can be applied to construct new algorithms for industrial regression problems.
We discuss two problems arising in two different fields: industrial manufacturing, which we discuss in Section~\ref{sec:cyclical_manufacturing}, and analytical chemistry, which we discuss in Section~\ref{sec:analytical_chemistry}.

The first problem is a special case of multi-source domain generalization for regression as described in Subsection~\ref{subsec:domain_adaptation_for_binary_classification}.
Motivated by our learning bound proposed in Chapter~\ref{chap:learning_bounds}, we propose a new algorithm that is based on the similarity of the first moments of multiple distributions.
Our method outperforms classical algorithms in several domain adaptation experiments with real-world data.
Moreover, it finds well-performing regression models for previously unseen domains which is not possible with classical regression methods.

The second problem from the area of analytical chemistry is unsupervised domain adaptation for regression.
Motivated by our metric-based regularization proposed in Chapter~\ref{chap:cmd_algorithm}, we propose a new regularization strategy for the domain adaptation of linear regression models.
We adapt the partial least squares regression algorithm for the calibration of chemical measurement systems.
In contrast to standard approaches in this field which are based on so-called \textit{transfer samples}, our algorithm only uses unlabeled data from the application measurement system.
Theoretical properties of the algorithm are discussed and it is tested on three real-world datasets.

\section{Industrial Manufacturing}
\label{sec:cyclical_manufacturing}

\begin{figure}[ht]
	\centering
	\includegraphics[width=\linewidth]{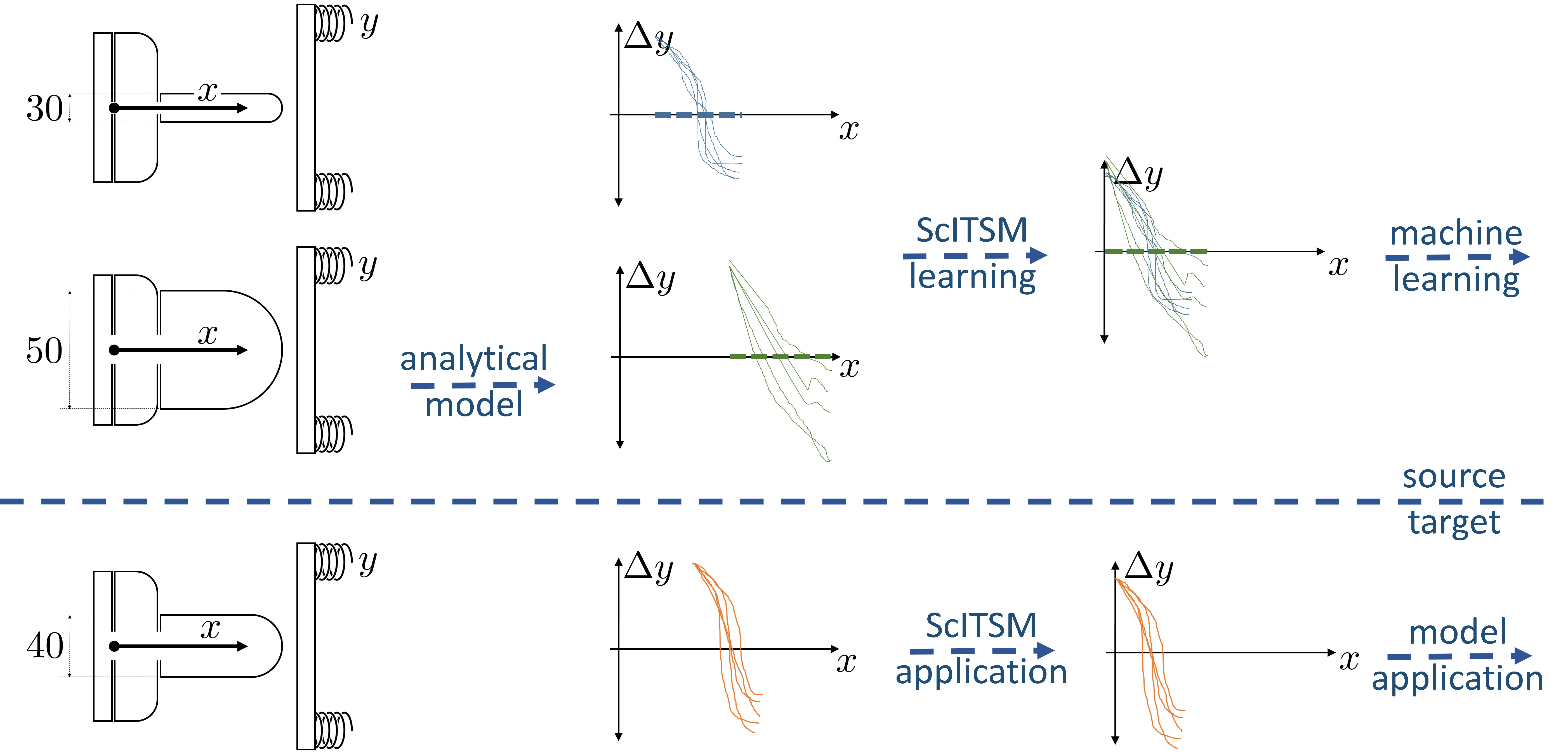}
	\caption[Schematic sketch of the proposed Scenario-Invariant Time Series Mapping algorithm.]{Schematic sketch of ScITSM for two-source domain generalization with feature $x$ and target $\Delta y$. Left column: Differently parametrized tools acting with feature $x$ on a workpiece causing target feature $y$. Four basic training steps are performed:
		(a) Collection of training data from source domains (representing tools parametrized by $30$ and $50$);
		(b) pre-processing, \eg~analytic modeling, normalization and subsampling;
		(c) ScITSM for aligning source data distributions (lines in the right column) based on parametric domain-dependent corrected and smoothed mean curves (dashed lines);
		(d) training of a single machine learning model based on the aligned data of all source domains.
		The prediction for an unseen target domain (parametrized by $40$) is based on three steps: (a) Collection of target domain data; (b) application of ScITSM; (c) prediction of $\Delta y$ using the trained machine learning model.}
	\label{fig:grafical_abstract_scitsm}
\end{figure}

In industrial manu\-facturing processes, data is often collected from different operating conditions and environments leading to different distributions.
One example is the drilling of steel components~\cite{pena2005monitoring,ferreiro2012bayesian} where different machine settings can lead to different torque curves during time.
Other examples can be found in the optical inspection of textures or surfaces~\cite{malaca2016online,stubl2012discrepancy,zuavoianu2017multi}, where different lightening conditions and texture classes can lead to variations in measurements.

Many of these problems are multi-source domain generalization problems as described in Section~\ref{sec:domain_adaptation}.
Given labeled data from multiple source domains, the goal is to find a model that performs well on some application data with a distribution different from the source distributions.
Note that in contrast to unsupervised domain adaptation, in domain generalization no target data, neither labeled nor unlabeled, is given.

We aim at predicting time series from target domains arising in problems of industrial manu\-facturing, \eg~torque curves.

We propose a new domain generalization method called {\it scenario-invariant time series mapping} (ScITSM) that leverages available information in multiple similar domains and applies it to the prediction of previously unseen domains.
ScITSM follows the principle of learning new data representations and maps the data in a new space where the first moments of the domain-specific data distributions are aligned.
Our method is illustrated in Figure~\ref{fig:grafical_abstract_scitsm}.

The performance of ScITSM is demonstrated by experiments on a real-world problem of industrial manufacturing.
Details of the application must be kept confidential, so it is introduced here in an abstracted way.
In particular, a schematic sketch of the application is shown in Figure~\ref{fig:grafical_abstract_scitsm}, the results of the experiments are presented and parts of the collected and preprocessed data are shown.
The results indicate that prediction accuracy can be significantly improved by ScITSM.

This section is organized as follows:
Subsection~\ref{subsec:relatedwork_scitsm} gives relations to the state-of-the-art.
Subsection~\ref{subsec:problem_scitsm} describes the problem.
Subsection~\ref{subsec:approach_scitsm} details our algorithm.
Finally, Subsection~\ref{subsec:use_case} gives our experiments and results on industrial data.

\subsection{Related Work}
\label{subsec:relatedwork_scitsm}

Published domain adaptation algorithms in manu\-facturing applications are rather scarce.
Successful application in chemistry-oriented manu\-facturing processes with the usage of che\-mo\-me\-tric modeling techniques are presented in~\cite{malli2017standard}.
Another successful application of domain adaptation in intelligent manu\-facturing for improving product quality was presented in~\cite{luis2010inductive}.

The presented method corresponds to the domain adaptation subtask of domain generalization~\cite{muandet2013domain}.
As such, our problem setting is similar to the one of some domain generalization algorithms in the area of kernel methods~\cite{muandet2013domain,grubinger2015domain,grubinger2017multi,deshmukhmulticlass,gan2016learning,erfani2016robust} and neural networks~\cite{ghifary2015,li2017deeper,li2017learning}.

However, to the best of our knowledge there is no domain generalization method that accounts for multiple source domains and temporal information in time series data in related fields.

\subsection{Problem Formulation}
\label{subsec:problem_scitsm}

For simplicity, we formulate the problem of multi-source domain generalization for time series of equal length $t$.
Such time series are obtained as results of subsampling procedures as it is the case in our application in Section~\ref{subsec:use_case}.
In addition to the standard assumptions in domain generalization~\cite{muandet2013domain,sugiyama2012machine,ben2014domain}, we assume for each domain a given parameter vector identifying some properties of the underlying real-world setting, \eg~corresponding tool dimensions or material properties.
\needspace{15\baselineskip}
\begin{restatable}[Multi-Source Domain Generalization for Regression]{problemrep}{domain_generalization_for_regression}%
	\label{problem:domain_generalization_for_regression_scitsm}%
	Consider $s$ source domains $(p_1,l),\ldots,(p_s,l)$ and a target domain $(q,l)$ with labeling function $l:\mathbb{R}^{d\times t}\to \mathbb{R}^t$, probability density functions $p_1,\ldots,p_s,q\in\mathcal{M}\left(\mathbb{R}^{d\times t}\right)$ and $s+1$ corresponding parameter vectors $\boldsymbol{\rho}_1,\ldots,\boldsymbol{\rho}_s,\boldsymbol{\rho}_q\in\mathbb{R}^z$.
	
	Given $s$ source samples $X_1,\ldots, X_s$ drawn from $p_1,\ldots,p_s$, respectively, with corresponding labels $Y_1=l(X_1),\ldots, Y_s=l(X_s)$ and parameters $\boldsymbol{\rho}_1,\ldots,\boldsymbol{\rho}_s,\boldsymbol{\rho}_q$, find some $f:\mathbb{R}^{d\times t}\to\mathbb{R}^t$ with a small target error
	\begin{equation}
		\label{eq:error_domain_generalization}
		\int_{\mathbb{R}^{d\times t}} \norm{f(\x)-l(\x)}_2 \diff\x.
	\end{equation}
\end{restatable}%
Note that, except for the parameter vector $\boldsymbol{\rho}_q$, no information is given about data in the target domain.

\subsection{Motivating Learning Bound}
\label{subsec:learning_bound}

Intuitively the error in Eq.~\eqref{eq:error_domain_generalization} cannot be small if the target domain is too different from the source domains.
However, if the data distributions of the domains are similar, this error can be small as shown by the following theorem.
The proof is obtained as extension of Theorem~1 in~\cite{ben2010theory} to multiple sources and time series.

\begin{restatable}{thmrep}{mainthmscitsm}%
	\label{thm:scitsm}%
	Consider some $p_1,\ldots,p_s, q\in \mathcal{M}\left(\mathbb{R}^{d\times t}\right)$ and a labeling function $l:\mathbb{R}^{d\times t}\to [0,1]^t$. Then the following holds for all integrable functions $f:\mathbb{R}^{d\times t}\to [0,1]^t$:
	\begin{align}
		\label{eq:error_bound_scitsm}
		\begin{split}
			\int \norm{f-l}_2 q \leq \frac{1}{s} \sum_{i=1}^{s} \int \norm{f-l}_2 p_i + \frac{2\sqrt{t}}{s} \sum_{j=1}^s d_\mathrm{TV} (p_j,q).
		\end{split}
	\end{align}
\end{restatable}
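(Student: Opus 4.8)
The plan is to reduce the multi-source inequality to a single per-source comparison and then average. First I would rewrite the left-hand side trivially as $\int \norm{f-l}_2 q = \frac{1}{s}\sum_{j=1}^s \int \norm{f-l}_2 q$, so that it suffices to control each copy $\int \norm{f-l}_2 q$ against its source counterpart $\int \norm{f-l}_2 p_j$ separately and then sum the resulting estimates.

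For a fixed source index $j$, I would express the difference as a single integral against the signed measure with density $q-p_j$, namely $\int \norm{f-l}_2 q - \int \norm{f-l}_2 p_j = \int \norm{f-l}_2\,(q-p_j)$, and bound its absolute value by $\int \norm{f-l}_2\,|q-p_j|$. The crucial ingredient is a uniform pointwise bound on the integrand: since both $f$ and $l$ take values in $[0,1]^t$, each coordinate of $f-l$ lies in $[-1,1]$, so $\norm{f(\x)-l(\x)}_2 \leq \sqrt{t}$ for every $\x$. Pulling this constant out of the integral gives $\int \norm{f-l}_2\,|q-p_j| \leq \sqrt{t}\,\norm{p_j-q}_{L^1}$, and Theorem~\ref{thm:TV_distance} converts the $L^1$-difference into total variation via $\norm{p_j-q}_{L^1}=2\,d_\mathrm{TV}(p_j,q)$. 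Hence $\int \norm{f-l}_2 q \leq \int \norm{f-l}_2 p_j + 2\sqrt{t}\,d_\mathrm{TV}(p_j,q)$.

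Finally I would average these $s$ inequalities over $j=1,\ldots,s$, which yields exactly the claimed bound in Eq.~\eqref{eq:error_bound_scitsm}. There is no genuine obstacle here; the only step requiring care is the pointwise estimate $\norm{f-l}_2 \leq \sqrt{t}$, which is precisely what produces the $\sqrt{t}$ factor and records the passage from the scalar binary-classification setting of Theorem~\ref{thm:lone_domain_adaptation_bound} to vector-valued time-series outputs in $[0,1]^t$.

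It is worth noting why this argument is simpler than the single-source bound in Eq.~\eqref{eq:simple_da_equation}: the adaptability term $\lambda^*$ is absent because the same labeling function $l$ is shared across all domains, so the triangle-inequality detour through an auxiliary hypothesis $h\in\mathcal{F}$ used in the proof of Theorem~\ref{thm:lone_domain_adaptation_bound} is unnecessary, and the bound reduces to a direct comparison of the fixed integrand $\norm{f-l}_2$ under the different densities.
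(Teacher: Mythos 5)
Your proof is correct and follows essentially the same route as the paper's: both arguments rest on the pointwise bound $\norm{f-l}_2 \leq \sqrt{t}$ for $[0,1]^t$-valued functions and the identity $\norm{p_j-q}_{L^1} = 2\,d_\mathrm{TV}(p_j,q)$ from Theorem~\ref{thm:TV_distance}. The only difference is cosmetic — you prove a per-source inequality and average, while the paper compares $q$ against the mixture $\frac{1}{s}\sum_i p_i$ and then splits it into per-source terms via the triangle inequality, which amounts to the same computation in a different order.
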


\begin{proof}
	The following holds:
	% 		{\small
	\begingroup
	\allowdisplaybreaks
	\begin{align*}
		\int\norm{f-l}_2 q &= \int \norm{f-l}_2 q + \frac{1}{s}\sum_{i=1}^s \int\norm{f-l}_2 p_i
		- \frac{1}{s}\sum_{i=1}^s \int\norm{f-l}_2 p_i\\
		&= \frac{1}{s}\sum_{i=1}^s \int\norm{f-l}_2 p_i + \int \norm{f-l}_2
		\left(q-\frac{1}{s}\sum_{i=1}^s p_i\right)\\
		&\leq \frac{1}{s}\sum_{i=1}^s \int\norm{f-l}_2 p_i + \int \norm{f-l}_2
		\left|q-\frac{1}{s}\sum_{i=1}^s p_i\right|\\
		&= \frac{1}{s}\sum_{i=1}^s \int\norm{f-l}_2 p_i + \int \norm{f-l}_2
		\left|\frac{1}{s}\sum_{i=1}^s q-\frac{1}{s}\sum_{i=1}^s p_i\right|\\
		&\leq \frac{1}{s}\sum_{i=1}^s \int\norm{f-l}_2 p_i + \int \norm{f-l}_2
		\frac{1}{s}\sum_{i=1}^s \left|q- p_i\right|\\
		&\leq \frac{1}{s}\sum_{i=1}^s \int\norm{f-l}_2 p_i
		+ \sup_{\x\in \mathbb{R}^{N\times t}}\norm{f(\x)-l(\x)} \int \frac{1}{s}\sum_{i=1}^s \left|q- p_i\right|\\
		&\leq \frac{1}{s}\sum_{i=1}^s \int\norm{f-l}_2 p_i
		+ \sup_{\y,\y'\in [0,1]^t}\norm{\y-\y'} \frac{1}{s}\sum_{i=1}^s \int \left|q- p_i\right|\\
		&=\frac{1}{s}\sum_{i=1}^s \int\norm{f-l}_2 p_i
		+ \sup_{\x \in [-1,1]^t}\norm{\x} \frac{1}{s}\sum_{i=1}^s \int \left|q- p_i\right|\\
		&=\frac{1}{s}\sum_{i=1}^s \int\norm{f-l}_2 p_i
		+  \frac{\sqrt{t}}{s}\sum_{i=1}^s \int \left|q- p_i\right|\\
		&=\frac{1}{s}\sum_{i=1}^s \int\norm{f-l}_2 p_i
		+ \frac{2 \sqrt{t}}{s}\sum_{i=1}^s d_\mathrm{TV}(p_i,q)
	\end{align*}
	\endgroup
	% 	}%
	where the last equality follows from Lemma~\ref{thm:TV_distance}.
\end{proof}\\

Theorem~\ref{thm:scitsm} shows that the error in the target domain can be expected to be small if the mean over all errors in the source domains is small and the mean distance of the target distribution to the source distributions is small.
For simplicity, Theorem~\ref{thm:scitsm} assumes a target feature in the unit cube which can be realized in practice \eg~by additional normalization procedures.

Our method tries to minimize the target error on the left-hand side of Eq.~\eqref{eq:error_bound_scitsm} by mapping the data in a new space where an approximation of the right-hand side is minimized.
The minimization of the second term on the right-hand side is tackled by aligning all source distributions in the new space.
For example consider the right column in Figure~\ref{fig:grafical_abstract_scitsm}.
The minimization of the first term is tackled by subsequent regression.

It is important to note that the alignment of only the source distributions does not minimize the second term on the right-hand side, if the target density $q$ is too different from all the source densities~\cite{ben2010theory}.
As there is no data given from $q$ in Problem~\ref{problem:domain_generalization_for_regression_scitsm}, we cannot identify such cases based on samples.
As one possible solution to this problem, we propose to consider only domains with parameter vectors $\boldsymbol{\rho}_q$ representing physical dimensions of tool settings that are similar to related tool settings with parameters $\boldsymbol{\rho}_1,\ldots,\boldsymbol{\rho}_s$.
See Figure~\ref{fig:scenarios} for an example.

\begin{figure}[t]
	\center
	\includegraphics[scale=0.55]{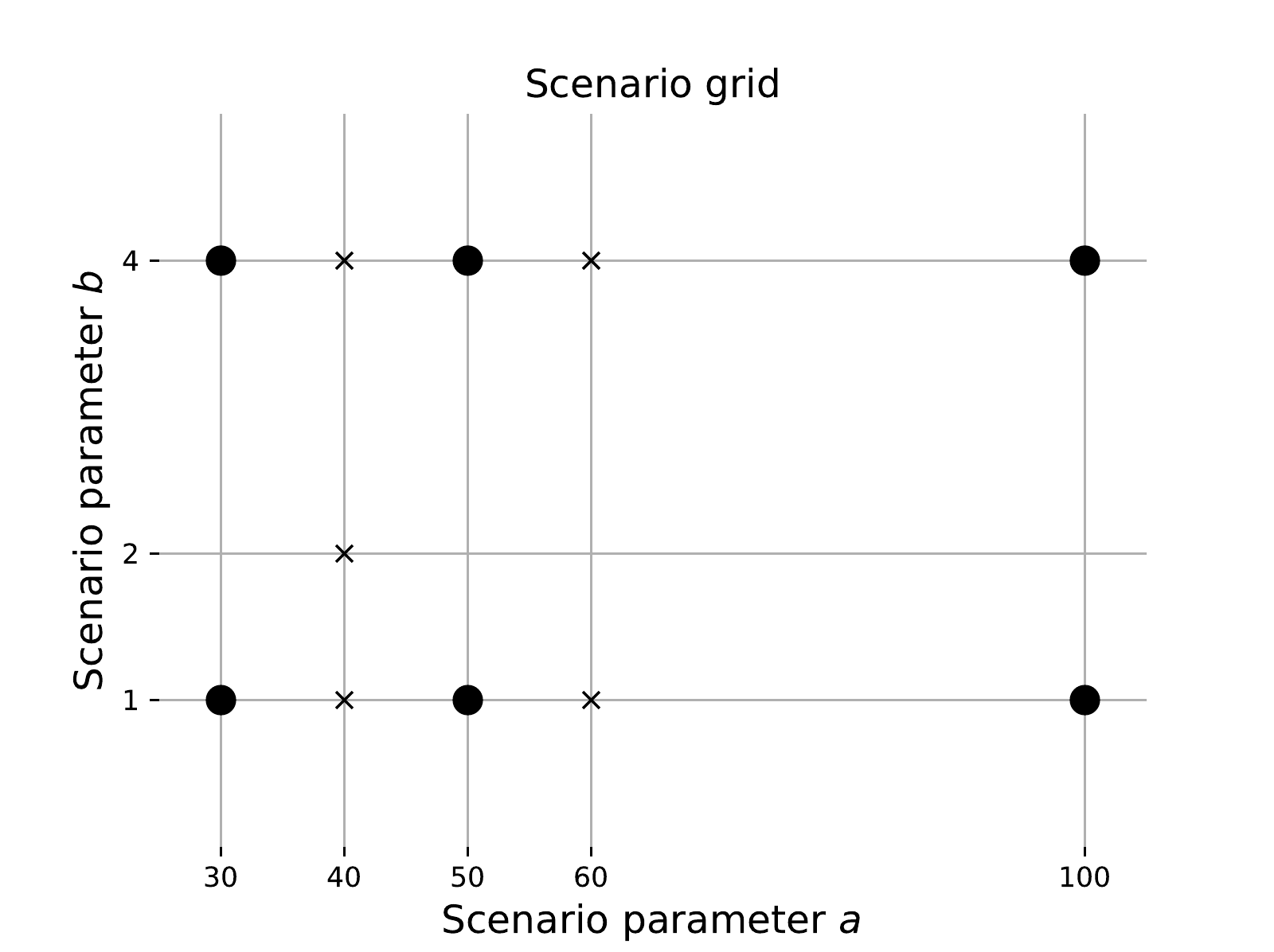}
	\caption[Physical parameters of different domains.]{Use case domains with parameters $a$ (horizontal axis) and $b$ (vertical axis). Source domains are marked by dots and target domains by crosses.}
	\label{fig:scenarios}
\end{figure}

\subsection{Scenario-Invariant Time Series Mapping}
\label{subsec:approach_scitsm}

Let us consider some source samples $X_1,\ldots,X_s\in\mathbb{R}^{k\times d\times t}$ with label feature vectors $Y_1,\ldots,Y_s\in\mathbb{R}^{k\times t}$ and parameter vectors $\boldsymbol{\rho}_1,\ldots,\boldsymbol{\rho}_s\in\mathbb{N}^z$, \eg~parameters $30$ and $50$ in Figure~\ref{fig:grafical_abstract_scitsm}.
For simplicity of the subsequent description, the number of samples $k$ is assumed to be equal for each domain.

The goal of ScITSM is to compute a mapping
\begin{gather}
	\label{eq:final_transformation}
	\begin{split}
		\Psi : \mathbb{R}^{d\times t}\times \mathbb{R}^z &\to \mathbb{R}^{t}\\
		(\x,\boldsymbol{\rho})&\mapsto \Psi(\x,\boldsymbol{\rho})
	\end{split}
\end{gather}
which transforms a time series $\x$ of a domain parametrized by $\boldsymbol{\rho}$ to a new time series $\Psi(\x,\vec p)$ such that the latent samples $\Psi(X_1,\boldsymbol{\rho}_1),\ldots,\Psi(X_s,\boldsymbol{\rho}_s)$ are similar and such that a subsequently learned regression model $f:\mathbb{R}^{t}\to\mathbb{R}^t$ performs well on each domain, where $\Psi(X,\boldsymbol{\rho})=\{\Psi(\x,\boldsymbol{\rho})\mid \x\in X\}$.

The computation of the function $\Psi$ in ScITSM involves three processing steps: Step~1: Calculation of a mean curve for each source domain, Step~2: Learning of correction functions at equidistant fixed time steps, and, Step~3: Smooth connection of correction functions.

{\bf Step~1 (Calculation of Mean Curves):}
In a first step a smooth curve called {\it mean curve} is fitted for each source domain, see \eg~dashed lines in middle column of Figure~\ref{fig:grafical_abstract_scitsm}.
Therefore, for each of the domain samples $X_1,\ldots,X_s$, the mean value for each of the $d$ features and $t$ time steps is computed and a spline curve is fitted subsequently by means of the algorithm proposed in~\cite{dierckx1982fast}.
This process results in a multiset $\widehat X\in\mathbb{R}^{s\times d\times t}$ storing the mean curves, \ie~the rows, for each of the $s$ source domains.

{\bf Step~2 (Learning of Equidistant Corrections):}
After the mean curves are computed, $b$ equidistant points $t_1,\ldots,t_b$ are fixed and $b$ corresponding {\it correction functions}
\begin{align}
	\Phi_1,\ldots,\Phi_b:\mathbb{R}^z\to\mathbb{R}^d
\end{align}
are learned which map a parameter vector $\boldsymbol{\rho}_i$ corresponding to the $i$-th domain close to the corresponding points $\widehat x_{t_1},\ldots,\widehat x_{t_b}$ of the $i$-th mean curve $\widehat{\vec x}_i=(\widehat x_1,\ldots,\widehat x_t)$, \ie~the $i$-th row of $\widehat X$.
This is done under the constraint of similar predictions $\Phi_{t'}(\vec p_i),\Phi_{t''}(\vec p_i)$ of nearby time steps $t',t''$ of two points $\widehat x_{t'},\widehat x_{t''}$ on the mean curve.
We apply ideas from the multi-task learning approach proposed in~\cite{evgeniou2004regularized} that aims at similar predictions by means of similar parameters $\theta_1,\ldots,\theta_b$ of the learning functions $\Phi_1,\ldots,\Phi_b$.
More precisely, we propose the following objective function:
\begin{align}
	\label{eq:objective_scitsm}
	\min_{\Phi_1,\ldots,\Phi_b} \sum_{j=1}^{b} \left( \sum_{i=1}^{s} \norm{\widehat{X}_{i,:,t_j} - \Phi_j\left(\boldsymbol{\rho}_i\right)}_2 + \alpha \sum_{r=\max(1,j-u)}^{\min(j+u,b)} \frac{\norm{\theta_j-\theta_r}_2^2}{\delta^{|j-r|-1}} + \beta\norm{\theta_j}_1\right),
\end{align}
where $X_{i,:,j}$ is the vector of features corresponding to the $i$-th domain and the $j$-th timestep and $\theta_j\in\mathbb{R}^z$ refers to the parameter vector of $\Phi_j$, \eg~$\Phi_j(\boldsymbol{\rho})=\langle \theta_j, \boldsymbol{\rho}\rangle +c$ is a linear model with parameter vector $\theta_j\in\mathbb{R}^z$ and bias $c\in\mathbb{R}$.
The first term of Eq.~\eqref{eq:objective_scitsm} ensures that the prediction of the correction functions applied on the mean curves are not far away from the mean curves itself.
The second term of Eq.~\eqref{eq:objective_scitsm} ensures similar parameter vectors of $2 u$ nearby correction functions, where $u\in\mathbb{N}$ and $\alpha,\delta\in\mathbb{R}$ are hyper-parameters.
The last term ensures sparse parameter vectors by means of $L^1$-regularization~\cite{andrew2007scalable} with hyper-parameter $\beta\in\mathbb{R}$.

{\bf Step 3 (Smooth Connection):}
To obtain a time series of length $t$, we aim at a smooth connection of the functions $\Phi_1,\ldots,\Phi_b$ between the points $t_1,\ldots,t_b$.
This is done by applying ideas from moving average filtering~\cite{makridakis1977adaptive}.
For a new time step $v\leq t$, we denote by
\begin{align}
	\begin{split}
		R(v)=\Big\{&\big(\floor{v}-u+1,\ceil{v}+u-1\big),\big(\floor{v}-u+2,\ceil{v}+u-2\big),\ldots,\big(\floor{v},\ceil{v}\big)\Big\}
	\end{split}
\end{align}
a set of pairs constructed from the equidistant timesteps $t_1,\ldots,t_b$ in a nested order, where $\floor{v}$ respectively $\ceil{v}$ denote the largest respectively smallest number in $\{t_1,\ldots,t_b\}$ being smaller respectively larger than $t$.
The coordinates of the final transformation vector ${\Psi(\vec x,\boldsymbol{\rho})=(\Psi_1(\vec x,\boldsymbol{\rho}),\ldots,\Psi_T(\vec x,\boldsymbol{\rho}))^\text{T}}$ in Eq.~\eqref{eq:final_transformation} are obtained by
\begin{align}
	\label{eq:smoothing}
	\begin{split}
		\Psi_v(\vec x,\boldsymbol{\rho})= \vec x_v -\sum_{(i,j)\in R(v)} \frac{\gamma^{\frac{|R(v)|-2i+2}{2}}\left(\Phi_{i}(\boldsymbol{\rho}) + (v-i) \frac{\Phi_{j}(\boldsymbol{\rho}) - \Phi_{i}(\boldsymbol{\rho})}{j - i}\right)}{\sum_{(i,j)\in R(v)} \gamma^{\frac{|R(v)|-2i+2}{2}}}
	\end{split}
\end{align}
where $|R(v)|$ is the cardinality of $R(v)$ and $\gamma\in(0,1]$ is the smoothing hyper-parameter.
That is, for each vector element $\vec x_v$ of the time series $\vec x$, a sum is subtracted which describes a weighted average of linear interpolations between the points $\Phi_i$ and $\Phi_j$ for each time step pair $(i,j)\in R(v)$.
ScITSM is summarized by Algorithm~\ref{alg:ScITSM}.

\begin{algorithm}
	\SetAlgoLined
	\KwIn{Samples $X_1,\ldots,X_s \in \mathbb{R}^{k\times d\times t}$, scenario parameters $\boldsymbol{\rho}_1,\ldots,\boldsymbol{\rho}_s\in\mathbb{R}^z$ and hyper-parameters $\alpha,\beta,\gamma\in\mathbb{R}$, $b,u\in\mathbb{N}$ and $\delta\in (0,1]$
	}
	\KwOut{Mapping $\Psi:\mathbb{R}^{d\times t}\times \mathbb{R}^z\to\mathbb{R}^{t}$}~\\
	
	\Stepone{Calculation of mean curve tensor $\widehat{X}\in\mathbb{R}^{s\times d\times t}$}
	\Steptwo{Computation of correction functions according to Eq.~\eqref{eq:objective_scitsm}}
	\Stepthree{Computation of transformation $\Psi$ using Eq.~\eqref{eq:smoothing}.}
	%	}
	\caption[Scenario-invariant time series mapping.]{Scenario-invariant time series mapping (ScITSM)}
	\label{alg:ScITSM}
\end{algorithm}

{\bf Subsequent Regression:}
Consider a transformation function $\Psi : \mathbb{R}^{d\times t}\times \mathbb{R}^z \to \mathbb{R}^{t}$ as computed by ScITSM, a previously unseen target scenario sample $X_q=(\vec x_1,\ldots,\vec x_k)$ of size $k$ drawn from $q\in\mathcal{M}\left(\mathbb{R}^{d\times t}\right)$ and a corresponding parameter vector $\boldsymbol{\rho}_q\in\mathbb{R}^z$, \eg~parameter $40$ in Figure~\ref{fig:grafical_abstract}.
As motivated in Subsection~\ref{subsec:learning_bound}, the distribution of the transformed sample $\Psi(X_p,\boldsymbol{\rho}_q)$ is assumed to be similar to the distributions of the samples $\Psi(X_1,\boldsymbol{\rho}_1),\ldots, \Psi(X_s,\boldsymbol{\rho}_s)$ which is induced by the selection of an appropriate corresponding parameter space, see \eg~Figure~\ref{fig:grafical_abstract} and Figure~\ref{fig:scenarios}.
Subsequently to ScITSM, a regression function
\begin{align}
	f:\mathbb{R}^{t}\to\mathbb{R}^t
\end{align}
is trained using the concatenated input sample $(\Psi(X_1,\boldsymbol{\rho}_1);\ldots; \Psi(X_s,\boldsymbol{\rho}_s))$ and its corresponding concatenated label values $(Y_1;\ldots;Y_s)$.
Finally, the target features of $X_q$ can be computed by $f(\Psi(X_q,\boldsymbol{\rho}_q))$.

Theorem~\ref{thm:scitsm} indicates that the empirical error
\begin{align}
	\frac{1}{d}\sum_{i=1}^d\norm{ f(\Psi(\vec x_i,\boldsymbol{\rho}_q))-l(\vec x_i)}_2
\end{align}
of the function $f\circ \Psi$ on a new unseen target sample is small if the empirical error is small on the source samples.

\subsection{Empirical Evaluations}
\label{subsec:use_case}

We integrated our approach described in Section~\ref{subsec:approach_scitsm} into the data-flow of an industrial machine learning pipeline used to implement a virtual sensor~\cite{wang2009sensor} in an intelligent manufacturing setting similar to the one described in Figure~\ref{fig:grafical_abstract_scitsm}.

\paragraph{Dataset}
Our use case consists of $11$ domains based on physical tool settings with parameters describing physical tool dimensions as illustrated in Figure~\ref{fig:scenarios}.
For each domain, we collected around $50$ time series.
We applied some application-specific normalization and transformation steps to each time series including its subtraction from a finite element simulation of the mechanical tool process.
Some representative resulting time series from the source domains are illustrated in Figure~\ref{fig:1figs} on the left.
For our experiments we choose $6$ out of $11$ domains as source domains and $5$ domains as target domains.
The target domains are chosen such that its parametrization is well captured by the parametrization of the source domains as shown in Figure~\ref{fig:scenarios}.

\begin{figure}[t]
	\makebox[\linewidth][c]{%
		\begin{subfigure}[b]{.5\textwidth}
			\centering
			\includegraphics[width=.95\textwidth]{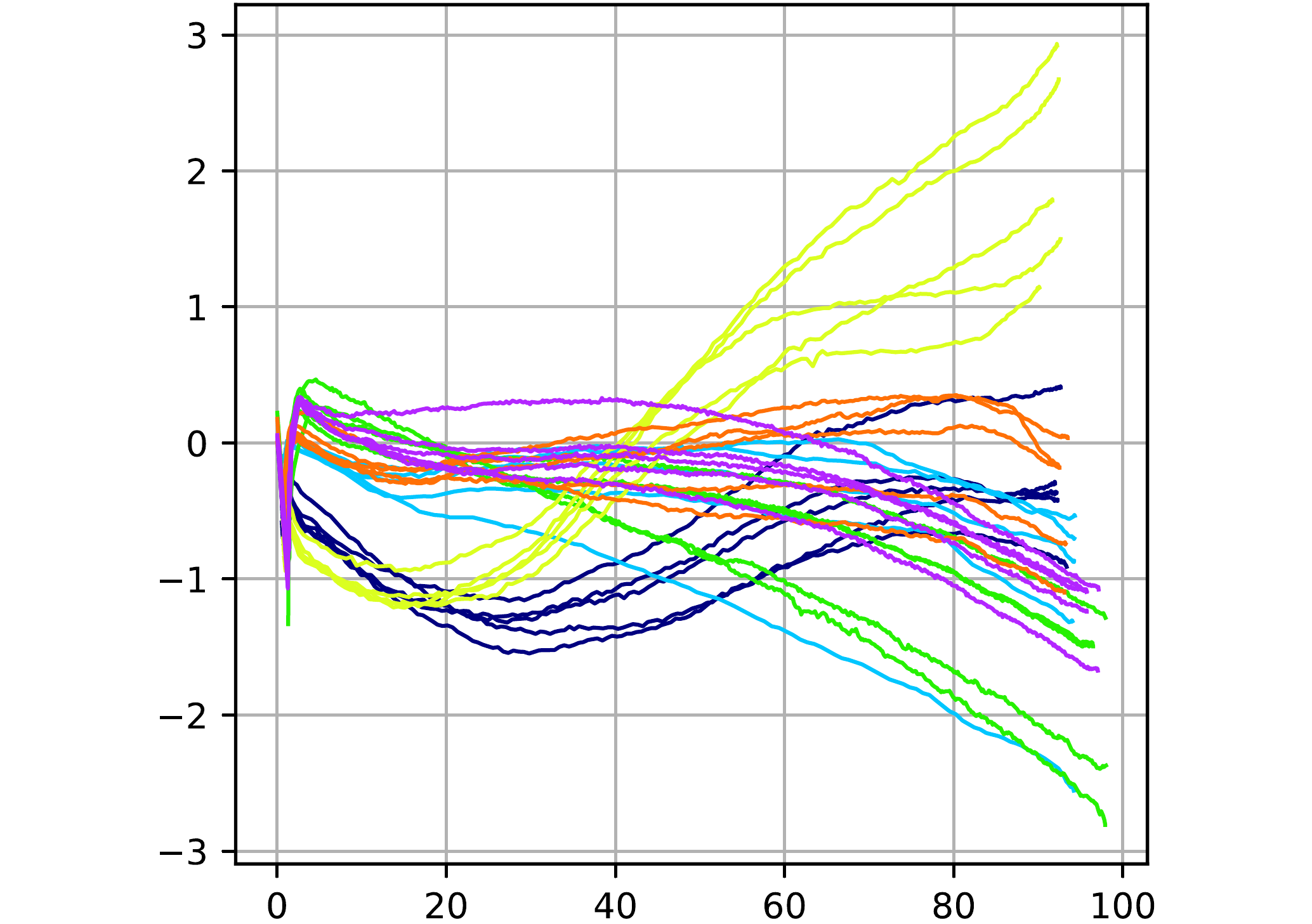}
		\end{subfigure}%
		\begin{subfigure}[b]{.5\textwidth}
			\centering
			\includegraphics[width=0.95\textwidth]{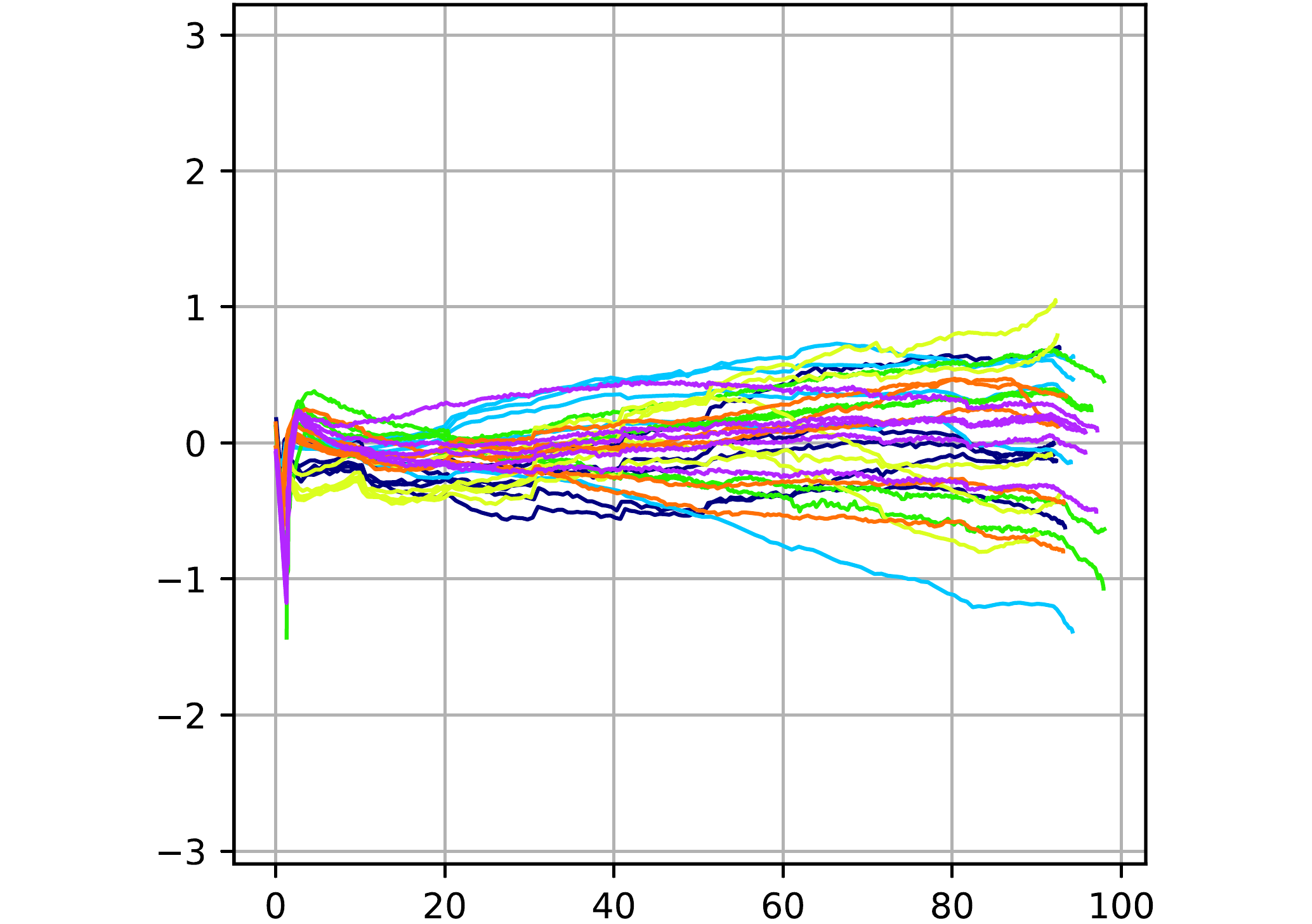}
		\end{subfigure}%
	}
	\caption[Some selected pre-processed time series of source domains before and after the application of Scenario-Invariant Time Series Mapping.]{Some selected pre-processed time series of source domains (different colors) before (left) and after (right) the application of ScITSM.}
	\label{fig:1figs}
\end{figure}

\paragraph{Validation Procedure}
To estimate the performance of the proposed ScITSM on previously unseen domains, we evaluate different regression models based on an unsupervised transductive training protocol~\cite{ganin2016domain,gong2013connecting,chopra2013dlid,long2016joint} combined with cross-validation on source domains.
In a first step, we select appropriate hyper-parameters in a semi-automatic way.
That is, the parameters are fixed by a method expert based only on the unsupervised data from the source domains without considering any labels, \ie~output values, or target samples.
The decision is based on visual quantification of the distribution alignment in the representation space.
As a result, the hyper-parameters are the same for all subsequently trained regression models.
The result of some representative time series is illustrated in Figure~\ref{fig:1figs}.

For evaluating the performance of regression models trained subsequently to ScITSM we use 10-fold cross-validation~\cite{varma2006bias}.
That is, in each of 10 steps, $90\%$ of the data points, \ie~$90\%$ of each source domain, are chosen as training data and $10\%$ as validation data.
Since no data of the target domains is used for training, the models are evaluated on the whole data of the target domains in each fold.
Using this protocol, 10 different root-mean squared errors for each model and each domain are computed, properly aggregated and, together with its standard deviation, reported in Table~\ref{tab:multi-model}.

To show the advantage of using more than one source domain, we additionally optimize each regression model using the training data of only a single source domain as shown in Table~\ref{tab:single_model}.

We compare the following regression models and we use the following parameter sets for selection:
\begin{itemize}
	\item \textit{Bayesian Ridge Regression}~\cite{mackay1992bayesian}: The four gamma priors are searched in the set $\{10^{-3},10^{-4}, 10^{-5}, 10^{-6}\}$ and the iterative algorithm is stopped when a selected error in the set $\{10^{-2}, 10^{-3},10^{-4}, 10^{-5}\}$ is reached.
	\item \textit{Random Forest}~\cite{breiman2001random}: We used $100$ estimators, the maximum depth is searched in the set $\{1,2,4, 8,\ldots,\infty\}$ where $\infty$ refers to a pure expansion of the leaves and the minimum number of splits is selected in the set $\{2,4,8,\ldots,1024\}$.
	\item \textit{Support Vector Regression}~\cite{smola2004tutorial} (SVR) with sigmoid kernel: The epsilon parameter is selected from the set $\{10^{-1},10^{-2},10^{-3}\}$, the parameter $C$ is selected in $\{10^{-5},5\cdot 10^{-4},10^{-4},5\cdot 10^{-3},10^{-3}\}$ and the algorithm is stopped when a selected error in the set $\{10^{-3}, 10^{-5}\}$ is reached.
	\item \textit{Support Vector Regression} with Gaussian kernel: The epsilon parameter is selected from the set $\{10^{-1},10^{-2},10^{-3}\}$, the parameter $C$ is selected in $\{10, 25, 30\}$, the bandwidth parameter is selected in the set $\{10^{-5},10^{-4},10^{-3},10^{-2},10^{-1},1\}$ and the algorithm is stopped when a selected error in the set $\{10^{-3}, 10^{-2}\}$ is reached.
\end{itemize}

\begin{figure}[t]
	\centering 
	\includegraphics[width=0.6\linewidth]{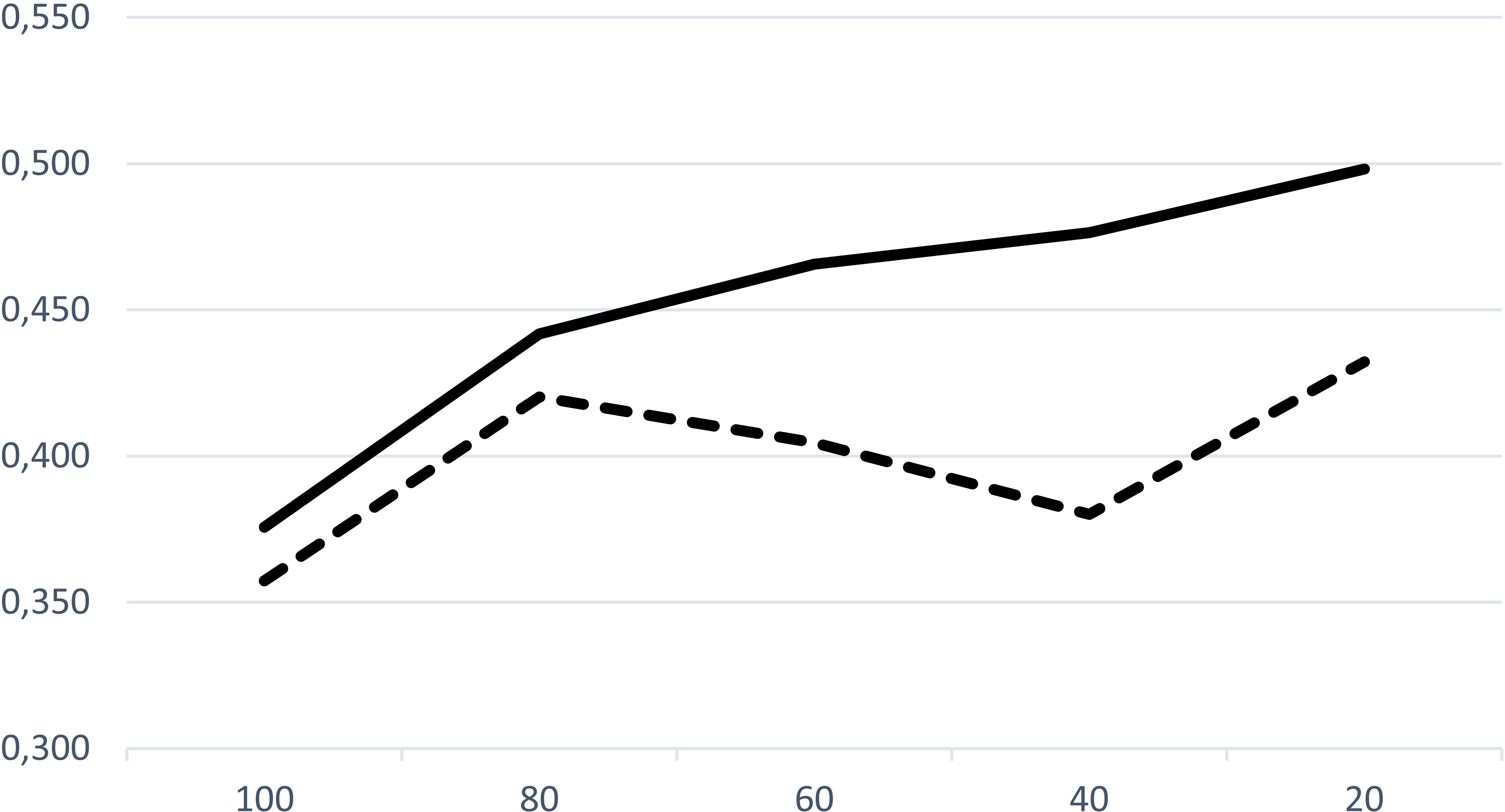}
	\caption[Performance dependency on sample size of support vector regression with and without Scenario-Invariant Time Series Mapping.]{Performance dependency on sample size of support vector regression with Gaussian kernel without applying ScITSM (solid) and with the proposed ScITSM (dashed). Horizontal axis: Percentage of training data; Vertical axis: Average root  mean squared error over all unseen target domains except negative transfer domain $(2,40)$.}
	\label{fig:svr}
\end{figure}

\paragraph{Results}
Figure~\ref{fig:1figs} illustrates some selected time series pre-processed by ScITSM.
It can be seen that the diversity caused by different source domains is reduced resulting in more homogeneous time series for subsequent regression.
Table~\ref{tab:multi-model} shows the results of applying ScITSM to multiple source domains.
The application of ScISTM improves all regression models in average root mean squared error except the support vector regression model based on Gaussian kernel.

The domain $(2,40)$ is the only domain where the application of ScITSM reduces the performance of support vector regression models by a large margin.
From Figure~\ref{fig:scenarios} it can be seen that both tool dimensions $2$ and $40$ are not considered in the source domains.
We conclude that at least one dimension should be considered in the source domains in our use case, otherwise the domain distributions are too different.
This well known phenomenon is often called {\it negative transfer}~\cite{pan2010survey}.

It is interesting to observe that the random forest models overfit the source domains.
This can be seen by a low average root mean squared error on the source domains compared to the target domains.
Consequently, it is hard for ScITSM to improve the performance on the source domains (average error decreased only to $97.59\%$ of that of the raw models) where the target domains errors are improved by a large margin.
The target domain improvement is without considering domain $(4,60)$ where the random forest model performed best over all models.
This improvement is not unexpected, as the overfitting of source domains can imply performance improvements in some very similar target domains.
However, our goal is an improvement in many domains, not in single ones.

In general ScITSM improves the results of regression models in $9$ out of $11$ domains, where the remaining two results have explainable reasons of negative transfer and overfitting.

In principle it is possible that a high root mean squared error of the models without ScITSM is caused by mixing data from different domains, \ie~negative transfer happens.
To exclude this possibility, we train one model for each domain and computed the root mean squared error for all other domains.
In a first step, we observe that no model is able to generalize to domains other than the single training one.
The resulting root mean squared errors of the single domain trained models are excessively high and give no further information. 
One possible reason is that the domains are too different.
For example, consider a model trained on the yellow time series in Figure~\ref{fig:1figs}.
Obviously this model will not perform well on the green time series.
This experiment underpins that generalization is not possible for models trained only on single domains, \ie~the standard regression case, and that the considered problem of domain generalization is important in our use case.

It is interesting to observe that even models trained on single domains can be improved by considering data from different domains.
To see this, consider Table~\ref{tab:single_model}.
Each column denoted by 'without ScITSM' shows the performance of different models trained on data from a single domain only.
This is in contrast to Table~\ref{tab:multi-model} where each column shows errors of the same model on different domains.
The application of ScITSM almost always improves the performance of classical regression models.
This is interesting as one may expect that models trained on data from a specific domain cannot be improved by data from different domains. 
However, this positive effect of transfer learning can happen \eg~when a high number of domains is considered with a comparably low sample sizes.

Another interesting question is about the effect of ScITSM when the amount of source domain samples decreases.
Therefore, we consider the average root mean squared errors over all target domains of the best regression models, \ie~SVR with Gaussian kernel, for a varying number of source samples.
The result is shown in Figure~\ref{fig:svr}.
In our example the positive effect of ScITSM gets stronger when the sample size of all domains decreases by a certain percentage value.

Our procedure of choosing appropriate parameters for ScITSM requires expert knowledge about our method.
In our use case, long-term knowledge from several years resulted in a well-performing default setting.
It is interesting to observe that this default setting gives a high performance independently of the data size as indicated by Figure~\ref{fig:svr}.
It is important to note that the selection of appropriate parameters is sophisticated in the considered problem of domain generalization, as no data of the target domains is given.
We refer to Subsection~\ref{subsec:domain_adaptation_by_nns} for a discussion of this problem.
By using our expert knowledge based method for parameter tuning, the resulting performance of the regression models in the source domains cannot be directly interpreted as estimating the generalization error.
However, in this work, we are more interested in the generalization error of the unseen target domains, which are not effected.

We finally conclude that our method successfully enables the improvement of the performance of regression models in previously unseen domains by using information from multiple similar source domains.
The result is obtained by a single regression model, which is conceptually and computationally simpler than the application of multiple single models for separate domains.

\begin{table}[ht]
	\tiny
	\centering
	\resizebox{\linewidth}{!}{%
		\begin{tabular}{ |l|r|r|r|r|r|r|r|r|r| }
			\hline
			& \multicolumn{3}{|c|}{Bayesian Ridge} & \multicolumn{3}{|c|}{Random Forest}\\
			\hline
			Scenario & without ScITSM & with ScITSM & perc. & without ScITSM & with ScITSM & perc. \\
			\hline
			(1, 30) & 0.443 (0.082) & 0.239 (0.056) & \textit{53.93} & 0.259 (0.109) & 0.262 (0.082) & 101.13\\
			(1, 50) & 0.645 (0.070) & 0.359 (0.103) & \textit{55.69} & 0.322 (0.140) & 0.311 (0.111) & \textit{96.62}\\
			(1, 100) & 0.431 (0.140) & 0.299 (0.070) & \textit{69.34} & 0.308 (0.090) & 0.267 (0.064) & \textit{86.48}\\
			(4, 30) & 0.690 (0.117) & 0.334 (0.077) & \textit{48.47} & 0.346 (0.095) & 0.372 (0.064) & 107.31\\
			(4, 50) & 0.431 (0.052) & 0.243 (0.090) & \textit{56.44} & 0.317 (0.098) & 0.238 (0.051) & \textit{75.11}\\
			(4, 100) & 0.488 (0.105) & 0.235 (0.064) & \textit{48.05} & 0.197 (0.077) & 0.234 (0.101) & 118.87\\
			\hline
			Average & 0.521	(0.094) &	0.285	(0.077) &	\textit{55.32} &	0.292	(0.102) &	0.281	(0.079) &	\textit{97.59}\\
			\hline
			(1, 40) & 0.523 (0.078) & 0.403 (0.125) & \textit{77.12} & 0.707 (0.243) & 0.418 (0.163) & \textit{59.12}\\
			(1, 60) & 0.709 (0.058) & 0.394 (0.087) & \textit{55.54} & 0.461 (0.148) & 0.381 (0.099) & \textit{82.72}\\
			(2, 40) & 0.576 (0.092) & 0.426 (0.117) & \textit{73.90} & 0.949 (0.236) & 0.440 (0.108) & \textit{46.34}\\
			(4, 40) & 0.426 (0.031) & \textbf{0.342 (0.076)} & \textit{80.30} & 1.062 (0.238) & 0.399 (0.114) & \textit{37.57}\\
			(4, 60) & 0.519 (0.110) & 0.371 (0.142) & \textit{71.58} & \textbf{0.291 (0.060)} & 0.395 (0.165) & 135.76 \\
			\hline
			Average & 0.551	(0.074) &	0.387	(0.109) &	\textit{71.69} &	0.694	(0.185) &	0.407	(0.130) &	\textit{72.30}\\
			\hline
			\hline
			& \multicolumn{3}{|c|}{SVR (sigmoid)} & \multicolumn{3}{|c|}{SVR (RBF)}\\
			\hline
			Scenario & without ScITSM & with ScITSM & perc. & without ScITSM & with ScITSM & perc.\\
			\hline
			(1, 30) & 0.586 (0.114) & 0.253 (0.081) & \textit{43.17} & 0.243 (0.072) & \textbf{0.238 (0.068)} & \textit{97,64} \\
			(1, 50) & 0.519 (0.221) & 0.364 (0.170) & \textit{70.15} & 0.229 (0.092) & \textbf{0.226 (0.078)} & \textit{98.46} \\
			(1, 100) & 0.694 (0.202) & 0.379 (0.159) & \textit{54.63} & 0.249 (0.064) & \textbf{0.242 (0.070)} & \textit{97.26} \\
			(4, 30) & 1.697 (0.341) & 0.407 (0.067) & \textit{23.97} & 0.342 (0.122) & \textbf{0.294 (0.098)} & \textit{85.95} \\
			(4, 50) & 0.363 (0.154) & 0.325 (0.141) & \textit{89.66} & 0.201 (0.060) & \textbf{0.192 (0.042)} & \textit{95.71} \\
			(4, 100) & 0.682 (0.199) & 0.341 (0.090) & \textit{49.93} & 0.186 (0.059) & \textbf{0.166 (0.032)} & \textit{89.00} \\
			\hline
			Average & 0.757 (0.205) & 0.345 (0.118) & \textit{55.25} &  0.242 (0.078) & \textbf{0.226 (0.065)} & \textit{93.28} \\
			\hline
			(1, 40) & 0.491 (0.142) & 0.483 (0.134) & \textit{98.34} & 0.445 (0.151) & \textbf{0.387 (0.129)} & \textit{87.13} \\
			(1, 60) & 0.637 (0.208) & 0.450 (0.134) & \textit{70.70} & 0.337 (0.079) & \textbf{0.321 (0.064)} & \textit{95.24} \\
			(2, 40) & 0.518 (0.085) & 0.570 (0.158) & 109.95 & \textbf{0.314 (0.055)} & 0.385 (0.096) & 122.72 \\
			(4, 40) & 0.684 (0.189) & 0.452 (0.153) & \textit{66.08} & 0.382 (0.156) & 0.378 (0.156) & \textit{98.66} \\
			(4, 60) & 0.507 (0.202) & 0.487 (0.196) & \textit{96.08} & 0.334 (0.056) & 0.339 (0.134) & 101.45 \\
			\hline
			Average & 0.567	(0.165) & 0.488 (0.155) &	\textit{88.23} & \textbf{0.362 (0.099)} & 0.363 (0.116)& 101.04\\
			\hline 
		\end{tabular}
	}
	\caption[Root mean squared error and standard deviation of different regression models with and without Scenario-Invariant Time Series Mapping.]{Root mean squared error (and standard deviation) of regression models evaluated using $10$-fold cross-validation. Best values of domains are shown in boldface, improvements of ScITSM are shown by italic numbers.}
	\label{tab:multi-model}
\end{table}

\begin{table}[h!]
	\tiny
	\centering
	\resizebox{\linewidth}{!}{%
		\begin{tabular}{ |l|r|r|r|r|r|r|r|r|r| }
			\hline
			& \multicolumn{3}{|c|}{Bayesian Ridge} & \multicolumn{3}{|c|}{Random Forest}\\
			\hline
			Scenario & without ScITSM & with ScITSM & perc. & without ScITSM & with ScITSM & perc. \\
			\hline
			(1,30)  & 0.215 (0.069)& 0.210 (0.065) & \textit{97.66}  & 0.255 (0.079) & 0.261 (0.078) & 102.15\\
			(1,50)  & 0.202 (0.047)& 0.202 (0.048) & 100.00  & 0.370 (0.172) & 0.352 (0.151) & \textit{95.05}\\
			(1,100) & 0.342 (0.112)& 0.341 (0.109) & \textit{99.67}  & 0.325 (0.100) & 0.330 (0.127) & 101.55\\
			(4,30)  & 0.275 (0.072)& 0.275 (0.074) & 100.09 &      0.351 (0.090) & 0.334 (0.094) & \textit{95.00}\\
			(4,50)  & 0.217 (0.069)& 0.217 (0.070) & 100.00  & 0.301 (0.091) & 0.292 (0.081) & \textit{96.84}\\
			(4,100) & 0.197 (0.057)& 0.196 (0.058) & \textit{99.42}  & 0.240 (0.058) & 0.269 (0.095) & 111.70\\
			\hline
			\hline
			& \multicolumn{3}{|c|}{SVR (sigmoid)} & \multicolumn{3}{|c|}{SVR (RBF)}\\
			\hline
			Scenario & without ScITSM & with ScITSM & perc. & without ScITSM & with ScITSM & perc.\\
			\hline
			(1,30) & 0.404 (0.096) & 0.273 (0.099) & \textit{67.54} & 0.390 (0.157) & 0.380 (0.161) & \textit{97.42}\\
			(1,50) & 0.486 (0.223) & 0.394 (0.222) & \textit{81.01} & 0.364 (0.173) & 0.357 (0.159) & \textit{98.12}\\
			(1,100) & 0.656 (0.229) & 0.405 (0.167) & \textit{61.72} & 0.360 (0.201) & 0.369 (0.194) & 102.24\\
			(4,30) & 1.130 (0.149) & 0.440 (0.071) & \textit{38.97} & 0.502 (0.298) & 0.438 (0.244) & \textit{87.17}\\
			(4,50) & 0.382 (0.176) & 0.354 (0.174) & \textit{92.76} & 0.323 (0.108) & 0.322 (0.110) & \textit{99.67}\\
			(4,100) & 0.580 (0.181) & 0.364 (0.094) & \textit{62.80} & 0.215 (0.080) & 0.234 (0.102) & 108.64\\
			\hline
		\end{tabular}
	}
	\caption[Root mean squared error and standard deviation of different regression models trained on a single source domain with and without Scenario-Invariant Time Series Mapping.]{Root mean squared error (and standard deviation) of regression models trained and evaluated on a single source domain, \ie~one model per domain, using $10$-fold cross-validation.}
	\label{tab:single_model}
\end{table}

\section{Analytical Chemistry}
\label{sec:analytical_chemistry}

Recently, domain adaptation techniques attracted considerable attention in analytical chemistry since adaptation of calibration models, model maintenance and calibration transfer between similar analytical devices are recurring tasks~\cite{malli2017standard,NikzadLughoferCernudaReischerKantnerPawliczekBrandstetter18,luoma2018additive,workman2018review,andries2017penalized}.

Yet the success of domain adaptation techniques on the type of data typically derived from chemical measurement systems has been limited.
One reason might be that the assumptions of the underlying models do not comply with the properties of the data.
Primarily, most of the domain adaptation techniques developed over the past decade involve non-linear hypotheses, which is the natural choice for applications in \eg~computer vision, text mining or natural language processing.

This prompted us to revisit three typical phenomenons often observed in spectroscopic applications: 1.~A linear input-output relationship, 2.~approximately normally distributed data and 3.~multicollinearity among input dimensions.
In particular, a linear input-output relationship is often motivated by \textit{Beer-Lambert's} law.
This physical law describes a linear relationship between absorbance of electromagnetic radiation and analyte concentration \cite{Swinehart1962}, \ie 
\begin{equation}
	\label{eqn:Beer-Lampert}
	A = -\log{\frac{I_0}{I}} = \epsilon\cdot c\cdot o,
\end{equation}
where $A$ denotes absorbance, $\epsilon$ is the characteristic substance specific absorptivity of the analyte, $c$ the concentration in solution and $o$ the optical path length. $I_0$ is the raw intensity for $c=0$, \ie~the background signal, and $I$ is the attenuated signal.

Note that the linear dependence of the measured signal on concentration might be violated due to \eg~light scattering, non-linear interactions between different analytes or sample inhomogeneities.
However, Beer-Lambert's law holds surprisingly well for a wide array of analytical techniques~\cite{mark2010chemometrics}.

In this section, we propose a new algorithm for regression that combines the principle of learning new data representations with an old technique that strongly influenced the field of chemometrics: The non-linear iterative partial least squares algorithm~\cite{wold1975soft}.
Our algorithm aims at mapping the input data on a low-dimensional subspace explaining a high amount of information of the output variable and at the same time a small difference between first and second moments of the domain-specific samples.
The directions of this subspace are computed consecutively as closed-form solution of a convex optimization problem.
Each iteration of our algorithm is followed by matrix deflation yielding orthogonal, domain-invariant latent variables with high predictive power \wrt~the output variable in the source domain.
Our method is called \textit{domain-invariant iterative partial least squares} (DIPALS).

The rest of this section is organized as follows:
Subsection~\ref{subsec:related_work_dipls} gives a brief overview of related works.
Subsection~\ref{subsec:problem_dipls} describes the problem.
Subsection~\ref{subsec:learning_bound_dipls} gives a motivating learning bound.
Subsection~\ref{subsec:algo_dipls} describes our algorithm.
Subsection~\ref{subsec:parameter_setting} proposes a parameter heuristic for the regularization parameter.
Finally, Subsection~\ref{subsec:experiments_dipls} compares our algorithm to different domain adaptation techniques on two benchmark datasets from analytical chemistry.

\subsection{Related Work}
\label{subsec:related_work_dipls}

State-of-the-art domain adaptation algorithms are summarized in Subsection~\ref{subsec:da_algorithms}.

In the present contribution we introduce an algorithm for regression that takes into account three observations from analytical chemistry: A linear input-output relationship, approximately normally distributed data and multicollinearity among input dimensions.

In contrast to non-linear kernel based approaches, we aim at a linear projection of the data motivated by Beer Lambert's law.
In contrast to linear kernel based approaches, we aim at distribution similarity by considering also second moments which we motivate by approximately normally distributed data.
In contrast to neural network based approaches, we compute an orthogonal projection leading to a small number of latent variables.
In each iteration, we obtain one coordinate of the projection as closed-form solution which is motivated by a small number of needed iterations induced by the orthogonality and high input collinearity.

\subsection{Problem Formulation}
\label{subsec:problem_dipls}

In this section, we consider the problem of unsupervised domain adaptation for regression under the covariate-shift assumption.
Our formulation follows Problem~\ref{problem:da_for_binary_classification}.
For simplicity we assume equal sample sizes for the source and the target domain.

\Needspace{15\baselineskip}
\begin{restatable}[Unsupervised Domain Adaptation for Regression]{problemrep}{domain_adaptation_for_regression}%
	\label{problem:da_for_regression}%
	Consider a source domain $\left(p,l\right)$ and a target domain $\left(q,l\right)$ with $p,q\in\MR$ and some labeling function $l:\mathbb{R}^d\to [0,1]$.
	
	Given a source sample $X_p=\{\x_1,\ldots,\x_k\}$ drawn from $p$ with corresponding labels $Y_p=\{l(\x_1),\ldots,l(\x_k)\}$ and a target sample $X_q=\{\x_1',\ldots,\x_k'\}$ drawn from $q$ without labels, find some function $f:\mathbb{R}^d\to [0,1]$ with a small target risk
	\begin{align}
		\label{eq:target_risk_domain_adaptation_for_regression}
		\int_{\mathbb{R}^d} \left|f(\x)-l(\x)\right| q(\x)\diff\x.
	\end{align}%
\end{restatable}

\subsection{Motivating Learning Bound}
\label{subsec:learning_bound_dipls}

In the following, we motivate our algorithm by means of a new learning bound under three typical characteristics often observed in chemical data: Linear dependency between input and output, multicollinearity of input signals, and, approximately normally distributed data.

Therefore, let us consider two integrable functions $g:\mathbb{R}^d\to\mathbb{R}^s\in\mathcal{G}$ and $f:\mathbb{R}^s\to[0,1]\in\mathcal{F}$.
Assume that the probability density functions $\tilde p$ and $\tilde q$ of the pushforward measures ${\mu\circ g^{-1}}$ and ${\nu\circ g^{-1}}$, respectively, exist, where $\mu$ and $\nu$ are the probability measures corresponding to $p$ and $q$, respectively.
Let $d_\mathrm{KL}(\mathcal{N}_{{\tilde p}},\mathcal{N}_{{\tilde q}})$ denote the KL-divergence between the two probability density functions $\mathcal{N}_{\tilde p}$ and $\mathcal{N}_{\tilde q}$ of the Normal distributions with equal mean and covariance as $\tilde p$ and $\tilde q$, respectively.
Based on these notations, we obtain the following statement.
\Needspace{10\baselineskip}
\begin{restatable}[]{thmrep}{dipls}%
	\label{thm:dipls}%
	Consider two domains $(p,l), (q,l)$ and the function $f\circ g$ inducing the latent distributions $\tilde p, \tilde q$ and the Normal distributions $\mathcal{N}_{{\tilde p}}, \mathcal{N}_{{\tilde q}}$ as defined above.
	Then the following holds:
	\begin{equation}
		\label{eq:err_inequality}
		\int \left|f\circ g-l\right| q \leq \int \left|f\circ g-l\right| p + \sqrt{2 d_\mathrm{KL}(\mathcal{N}_{{\tilde p}},\mathcal{N}_{{\tilde q}})} + \lambda^* + \sqrt{ 8\epsilon}
	\end{equation}
	where
	\begin{equation}
		\epsilon=\max\left\{d_\mathrm{KL}(\mathcal{N}_{\tilde p}, {\tilde p}),d_\mathrm{KL}(\mathcal{N}_{\tilde q}, {\tilde q})\right\}
	\end{equation}
	and
	\begin{equation}
		\lambda^* = \inf_{f\in\mathcal{F}} \left(\int \left|f\circ g-l\right| q+\int \left|f\circ g-l\right| p\right).
	\end{equation}
\end{restatable}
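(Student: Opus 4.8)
The plan is to prove \Cref{thm:dipls} by following the same structure as \Cref{thm:lone_domain_adaptation_bound} (Ben-David et al.), but replacing the $L^1$-difference between the latent densities with a chain of triangle-inequality estimates that route through the matched Normal distributions $\mathcal{N}_{\tilde p}$ and $\mathcal{N}_{\tilde q}$. First I would apply the ``change of variables'' argument (Theorem~4.1.11 in~\cite{dudley2002real}, exactly as used in \Cref{subsec:learning_bound_cmd_application}) to pass from the risk on $\mathbb{R}^d$ to the risk on the latent space $\mathbb{R}^s$, writing $\int|f\circ g - l|\,q = \int|f - l_q|\,\tilde q$ and similarly for $p$, where $l_p,l_q$ are the induced labeling functions defined via the pushforward as in Eq.~\eqref{eq:ben_david_labeling_function}. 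This reduces the statement to a domain adaptation bound between the two latent domains $(\tilde p, l_p)$ and $(\tilde q, l_q)$.

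Next I would invoke \Cref{thm:lone_domain_adaptation_bound} applied to the latent domains, which yields
\begin{align*}
	\int \left|f-l_q\right| \tilde q \leq \int \left|f-l_p\right| \tilde p + \norm{\tilde p - \tilde q}_{L^1} + \lambda^*.
\end{align*}
The task then becomes bounding $\norm{\tilde p - \tilde q}_{L^1}$ by the three remaining terms in Eq.~\eqref{eq:err_inequality}. Here the key step is a triangle inequality in $L^1$ through the two matched Normals:
\begin{align*}
	\norm{\tilde p - \tilde q}_{L^1} \leq \norm{\tilde p - \mathcal{N}_{\tilde p}}_{L^1} + \norm{\mathcal{N}_{\tilde p} - \mathcal{N}_{\tilde q}}_{L^1} + \norm{\mathcal{N}_{\tilde q} - \tilde q}_{L^1}.
\end{align*}
To each of the three summands I would apply Pinsker's inequality in the form of Eq.~\eqref{eq:relationship:Pinsker}, namely $\norm{\mu - \nu}_{L^1} = 2\,d_\mathrm{TV}(\mu,\nu) \leq \sqrt{2\,d_\mathrm{KL}(\mu,\nu)}$. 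The middle term becomes $\sqrt{2\,d_\mathrm{KL}(\mathcal{N}_{\tilde p},\mathcal{N}_{\tilde q})}$, which is the second term of the claimed bound. The two outer terms are each at most $\sqrt{2\epsilon}$ by the definition of $\epsilon = \max\{d_\mathrm{KL}(\mathcal{N}_{\tilde p},\tilde p), d_\mathrm{KL}(\mathcal{N}_{\tilde q},\tilde q)\}$, so together they contribute at most $2\sqrt{2\epsilon} = \sqrt{8\epsilon}$, matching the last term.

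The main obstacle I anticipate is making the Pinsker applications on the outer terms fully rigorous in the right \emph{direction} of the KL argument: $d_\mathrm{KL}$ is asymmetric, and the definition of $\epsilon$ fixes the order as $d_\mathrm{KL}(\mathcal{N}_{\tilde p}, \tilde p)$ (Normal first), whereas the symmetric quantity $d_\mathrm{TV}$ appearing after Pinsker is order-free, so I must check that Pinsker's inequality in the form used (Eq.~\eqref{eq:relationship:Pinsker}) is being applied with the argument ordering that is actually finite and well-defined. This is a genuine concern because $d_\mathrm{KL}(\mathcal{N}_{\tilde p}, \tilde p)$ requires $\mathcal{N}_{\tilde p} \ll \tilde p$, which need not hold in general; the honest version of the proof likely needs either an absolute-continuity assumption folded into the hypotheses on $g$ (the function class $\mathcal{G}$ in Eq.~\eqref{eq:G} already guarantees the latent densities exist) or the observation that Pinsker bounds $d_\mathrm{TV}$ by $\sqrt{\tfrac12 d_\mathrm{KL}}$ regardless, so that $\norm{\tilde p - \mathcal{N}_{\tilde p}}_{L^1} = 2 d_\mathrm{TV}(\tilde p, \mathcal{N}_{\tilde p}) \leq \sqrt{2\,d_\mathrm{KL}(\mathcal{N}_{\tilde p}, \tilde p)}$ whenever the right-hand KL is finite. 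The remaining steps are routine once this ordering and finiteness bookkeeping is settled.
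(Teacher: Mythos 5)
Your proposal is correct and takes essentially the same route as the paper's proof: both reduce to the latent domains via the induced labeling functions, change of variables (Theorem~4.1.11 in~\cite{dudley2002real}) and Theorem~\ref{thm:lone_domain_adaptation_bound}, then bound the resulting distance by a triangle inequality through $\mathcal{N}_{\tilde p},\mathcal{N}_{\tilde q}$ followed by Pinsker's inequality and the definition of $\epsilon$. The only cosmetic difference is that you phrase the middle step in $\norm{\cdot}_{L^1}$ while the paper phrases it in $d_\mathrm{TV}$, which are interchangeable by Theorem~\ref{thm:TV_distance}, and your remark on the direction of the KL arguments is a fair point that the paper glosses over but handles identically via the symmetry of $d_\mathrm{TV}$.
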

\begin{proof}
	Following~\cite{ben2007analysis}, we define the labeling functions ${l_p:\mathbb{R}^s\to [0,1]}$ by
	\begin{align*}
		l_p(\vec a)=\frac{\int_{\{\x\mid g(\x) =\vec a\}} l(\x) p(\x) \diff\x }{\int_{\{\x\mid g(\x) =\vec a\}} p(\x)\diff\x}
	\end{align*}
	and $l_q$ analogously.
	Applying Theorem~\ref{thm:lone_domain_adaptation_bound} together with Theorem~\ref{thm:TV_distance} to the two domains $(p,l_p)$ and $(q,l_q)$ yields
	\begin{align*}
		\int \left|f-l_q\right| \tilde q \leq \int \left|f-l_p\right| \tilde p + 2 d_\text{TV}(\tilde p,\tilde q)+\inf_{f\in\mathcal{F}} \left(\int \left|f-l_q\right| \tilde q+\int \left|f-l_p\right| \tilde p\right)
	\end{align*}
	where $d_\text{TV}$ refers to the total variation distance.
	From the ``change of variables'' Theorem~4.1.11 in~\cite{dudley2002real} we obtain
	\begin{align*}
		\int \left| f-l_p \right| \tilde p
		=\int \left| f-l_p \right| \diff(P\circ g^{-1})
		= \int \left|f-l_p\right|\circ g\diff P
		= \int \left| f\circ g-l\right| p
	\end{align*}
	which, together with the application of the Triangle inequality for $d_\text{TV}$, implies that
	\begin{align*}
		\int \left| f\circ g-l\right| q \leq \int \left| f\circ g-l\right| p  +\lambda^*+ 2 d_\text{TV}(\mathcal{N}_{\tilde p},\mathcal{N}_{\tilde q})+ 2 d_\text{TV}(\tilde p,\mathcal{N}_{\tilde p}) + 2 d_\text{TV}(\tilde q,\mathcal{N}_{\tilde q}).
	\end{align*}
	Eq.~\eqref{eq:err_inequality} then follows from Theorem~\ref{thm:relationships_measurable} and the definition of $\epsilon$.
\end{proof}\\\\
Theorem~\ref{thm:dipls} shows that the error in the target domain can be bounded in terms of the error in the source domain, the KL-divergence between Normal approximations of the latent distributions, a corresponding approximation error $\epsilon$ and the domain adaptation error $\lambda^*$.
Sample-based upper bounds can be obtained by means of Theorem~\ref{thm:problem_solution}.

Theorem~\ref{thm:dipls} suggests a small target error if the terms on the right-hand side of Eq.~\eqref{eq:err_inequality} are small.
In the following, we motivate different algorithmic properties under which, in combination with the three observations from chemical measurements, each of these terms can be expected to be small.

{\bf Domain Adaptation Error $\lambda^*$:} Beer Lambert's law states a linear relationship between output variables and inputs.
Therefore, we assume a target function $l:\mathbb{R}^d\to [0,1]$ that is well approximable by a linear function, \ie~$l(\vec x)\approx \vec x^\text{T} \vec d$ for some $\vec d\in\mathbb{R}^d$.
For such a target function $l$ and each linear function $g:\mathbb{R}^d\to\mathbb{R}^s$ with $g(\vec x)=(\vec x\T\vec A)\T$ and orthogonal matrix $\vec A\in\mathbb{R}^{d\times s}$, it always exits a linear function $f\in\mathcal{F}$, \eg~$f(\vec x)=x\T \vec A\T\vec d$, such that $l\approx f\circ g$ and $\lambda^*\approx 0$.
We therefore aim at finding a function $f\circ g$ with orthogonal projection $g:\mathbb{R}^d\to\mathbb{R}^s$ and linear function $f:\mathbb{R}^s\to [0,1]$.

{\bf Source Error $\int\left|f\circ g-l\right|$:} To overcome numerical instabilities caused by the observed high multicollinearity of the input data, the non-linear iterative partial least squares algorithm has been proposed to find a linear latent variable model $f\circ g$ as defined above with a small source error.
This algorithm serves as a starting point for our method.

{\bf Approximation Error $\epsilon$:} One implication of the assumption of approximately normally distributed input data is that the application of the linear transformation $g$ leads to latent densities ${\tilde p}$ and ${\tilde q}$ that are well approximable by Normal densities.
It is therefore reasonable to assume a small $\epsilon$ in Theorem~\ref{thm:dipls}.
Similarly to the error in our analysis in Chapter~\ref{chap:learning_bounds}, the term $\epsilon$ can be interpreted as an upper bound on the information stored in the densities $p$ and $q$ in addition to the first two moments~\cite{cover2012elements}.

{\bf Distribution Divergence $d_\mathrm{KL}(\mathcal{N}_{\tilde p},\mathcal{N}_{\tilde q})$:} It follows from \eg~Theorem~30.2 in~\cite{billingsley2008probability} that the convergence $d_\mathrm{KL}(\mathcal{N}_{p_n}, \mathcal{N}_{p_\infty})\to 0$ for $n\to\infty$ of some zero mean centered distributions ${p_n,n\in\mathbb{N}}$ and $p_\infty$ is implied by the convergence of the respective covariances ${\sigma_n\to\sigma_{\infty}}$.
This motivates us to aim at zero means and similar covariance matrices of $\tilde p$ and $\tilde q$.

\subsection{Domain-Invariant Iterative Partial Least Squares}
\label{subsec:algo_dipls}

Let $\vec X_p\in\mathbb{R}^{k\times d}$ and $\vec X_q\in\mathbb{R}^{k\times d}$ be the two matrices consisting of all input signals $\x$ as rows $\x\T$ and let $\y\in \mathbb{R}^{k}$ be the vector of corresponding outputs.

As motivated in Subsection~\ref{subsec:learning_bound_dipls}, we aim at computing linear functions $f:\mathbb{R}^s\to \mathbb{R}$ with $f(\vec t)=\vec t\T\vec c$ for $\vec c\in\mathbb{R}^{s}$ and $g:\mathbb{R}^d\to\mathbb{R}^s$ with $g(\vec x)=(\vec x\T\vec A)\T$ for orthogonal $\vec A\in\mathbb{R}^{d\times s}$ such that the source error is minimized and the sample covariance matrices of the latent samples $\vec X_p \vec A$ and $\vec X_q \vec A$ are similar.
To handle collinearity in the inputs, we rely on a regularized version of the non-linear iterative partial least squares algorithm.

{\bf Step 0 (Initialization):} The initial step of our algorithm consists of zero mean centering of the inputs and outputs such that $\E[\vec X_p]=\E[\vec X_q]={\E[\vec y]=0}$ where $\E[\vec X]$ refers to the column-wise empirical mean of the matrix $\vec X$.

Then, we follow the basic ideas of the non-linear iterative partial least squares algorithm by iterating over the following steps to compute one direction of the latent mapping and a corresponding regression coefficient after another.

{\bf Step 1 (Domain-Invariant Projection):}
The following objective function is considered:
\begin{align}
	\label{eq:objective_function}
	\min_{\vec w\T\vec w=1} \norm{\vec X_p - \vec y \vec w\T}^2_\text{F} + \gamma \vec w\T \boldsymbol\Lambda \vec w
\end{align}
where $\norm{.}_\text{F}$ refers to the Frobenius norm, $\gamma$ is the domain-regularization parameter and 
\begin{align}
	\label{eqn:BigL}
	\boldsymbol\Lambda=\vec K \mathrm{diag}(|\lambda_1|,\ldots,|\lambda_d|) \vec K\T
\end{align}
is the matrix obtained by taking the absolute value of all eigenvalues $\lambda_1,\ldots,\lambda_d$ in the eigendecomposition
\begin{align}
	\label{eq:diff_matrix}
	\begin{split}
		\vec K \mathrm{diag}&(\lambda_1,\ldots,\lambda_d) \vec K\T = \frac{1}{k-1}\vec X_p^\text{T} \vec X_p-\frac{1}{k-1}\vec X_q^\text{T} \vec X_q
	\end{split}
\end{align}
with corresponding eigenvector matrix $\vec K$ of the difference of the domain-specific covariance matrices.
The first term in Eq.~\eqref{eq:objective_function} corresponds to the ordinary non-linear iterative partial least squares objective and its minimum is obtained by the direction $\vec w$ where $\vec X_p$ has maximum sample covariance with $\vec y$~\cite{wold1975soft}.
The second term in Eq.~\eqref{eq:objective_function} is our contribution and represents an upper bound on the absolute difference between the source sample variance and the target sample variance in the direction $\vec w$, see Subsection~\ref{subsec:properties_dipls} for its discussion.
The unique solution of Eq.~\eqref{eq:objective_function} is achieved by the vector
\begin{align}
	\label{eq:optimal_w}
	\vec w\T = \frac{\vec y\T\vec X_p}{\vec y\T\vec y}\left(\vec I + \frac{\gamma}{\vec y\T\vec y}\boldsymbol\Lambda\right)^{-1}
\end{align}
divided by its length $\vec w\T \vec w$.
The coordinates $\vec t_p$ and $\vec t_q$ of the projections corresponding to the direction $\vec w$ can be computed by
\begin{align}
	\vec t_p=\vec X_p \vec w\quad\text{and}\quad \vec t_q=\vec X_q \vec w.
\end{align}

{\bf Step 2 (Regression):}
Classical ordinary least squares regression of $\vec y$ on $\vec t_p$ yields 
\begin{align}
	c=(\vec t_p\T \vec t_p)^{-1} \vec t_p\T \vec y.
\end{align}

{\bf Step 3 (Deflation):}
Following the Gram-Schmidt process, our algorithm removes the variation in $\vec X_p$ explained by the current latent variable by subtracting the projection of $\vec X_p$ along $\vec t_p$, \ie~the following update is performed
\begin{align}
	\vec X_p = \vec X_p - \vec t_p(\vec t_p\T \vec t_p)^{-1}\vec t_p\T \vec X_p.
\end{align}
The matrix $\vec X_q$ is updated analogously by means of $\vec t_q$.
After each iteration, the coordinates of the vectors are properly aggregated to obtain the final regression vector $\vec b$ such that $f(g(\vec x))=\vec x\T\vec b$.
See Algorithm~\ref{alg:dipls} for the formulas and~\cite{geladi1986partial} for its derivations.
The projection matrix $\vec A$ such that $g(\vec x)=(\vec x\T\vec A)\T$ can be computed by the relationship \cite{MANNE1987187}:
\begin{align}
	\vec A = \vec W(\vec P\T \vec W)^{-1}.
\end{align}

\SetKwInOut{StepCombine}{Combine}
\begin{algorithm}
	\SetAlgoLined
	\KwIn{Source sample $\vec X_p$, labels $\vec y$ , target sample $\vec X_q$, number of latent variables $s$ and regularization weighting $\gamma$
	}
	\KwOut{Regression vector $\vec b \in\mathbb{R}^d$ such that $f(g(\vec x))=\vec x\T\vec b$}~\\
	
	\Init{Set $\vec W=(\vec w_i,\ldots,\vec w_s)$, $\vec P=(\vec p_i,\ldots,\vec p_s)$, $\vec c=(c_1,\ldots,c_s)$, $\vec y_0 = \vec y - \E[\vec y]$, $\vec S_0 = \vec X_p - \E[\vec X_p]$ and $\vec T_0 = \vec X_q - \E[\vec X_q]$}
	\For{$i\in\{1,\ldots,s\}$}
	{
		\Stepone{
			Compute eigenvalues $\lambda_1^{(i)},\ldots,\lambda_d^{(i)}$ and
			eigenvector matrix $\vec K_i$ of
			$\frac{1}{k-1}\vec S_{i-1}^\text{T} \vec S_{i-1}-\frac{1}{k-1}\vec T_{i-1}^\text{T} \vec T_{i-1}$
			and define
			$\vec v_i\T = \frac{\vec y_i\T\vec S_{i-1}}{\vec y_i\T\vec y_i}\left(\vec I + \frac{\gamma}{\vec y_i\T\vec y_i}\boldsymbol\Lambda_i\right)^{-1}$
			such that $\vec w_i=\vec v_i/\norm{\vec v_i}_2$
			with $\boldsymbol\Lambda_i=\vec K_i \mathrm{diag}(|\lambda_1^{(i)}|,\ldots,|\lambda_d^{(i)}|) \vec K_i\T$
		}
		\Steptwo{
			Set $c_i = (\vec t_p\T \vec t_p)^{-1} \vec t_p\T \vec y_i$ with $\vec t_p  = \vec S_{i-1} \vec w_{i}$, $\vec t_q  = \vec T_{i} \vec w_i$
		}
		\Stepthree{
			Set $\vec p\T_i  = (\vec t_p\T \vec t_p)^{-1}\vec t_p \T \vec S_{i-1} $, $\vec q\T_i  = (\vec t_p \T \vec t_q )^{-1}\vec t_q \T \vec T_{i-1}$,
			$\vec S_i  = \vec S_{i-1}  -\vec t_p \vec p_i \T $,
			$\vec T_i  = \vec T_{i-1}  -\vec t_q \vec q_i \T$ and
			$\vec y_i = \vec y_{i-1} - c_{i}\vec t_p$
		}
	}
	\StepCombine{$\vec b=\vec W(\vec P\T \vec W)^{-1}\vec c$}
	\caption[Domain-invariant iterative partial least squares.]{Domain-invariant iterative partial least squares (DIPALS)}
	\label{alg:dipls}
\end{algorithm}

\subsection{Parameter Heuristic}
\label{subsec:parameter_setting}

Consider the matrix $\boldsymbol{\Lambda}$ from Eq.~\eqref{eqn:BigL} and the vector $\vec w_0$ corresponding to the unconstrained objective function of the non-linear iterative partial least squares algorithm, \ie~$\gamma=0$ in Eq.~\eqref{eq:objective_function}.
We propose to use the value
\begin{align}
	\label{eqn:param_heuristic}
	\gamma_i=\frac{\lVert\vec S_i - \vec y_i \vec w_0\T\rVert^2_\text{F}}{\vec w_0\T\boldsymbol\Lambda\vec w_0}
\end{align}
differently in each iteration of Step~1 in Algorithm~\ref{alg:dipls}.
This setting leads to equal weighting of the terms in the objective Eq.~\eqref{eq:objective_function} in the direction $\vec w_0$.

\subsection{Properties of Algorithm}
\label{subsec:properties_dipls}

The optimum of the first term in the objective function in Eq.~\eqref{eq:objective_function} is achieved by the direction $\vec w_0$ where the sample covariance $\frac{1}{k-1}\vec w_0\T\vec X_p\T\vec y$ between $\vec X_p$ and the output vector $\vec y$ is maximal~\cite{geladi1986partial}.
As a result, the classical non-linear iterative partial least squares algorithm well handles multicollinearity of the input sample.

The value of our regularizer $\vec w^\text{T} \boldsymbol\Lambda \vec w$ with $\vec w\T \vec w=1$ is nothing but the value of the Rayleigh quotient of the positive semi-definite matrix $\boldsymbol \Lambda$.
It is therefore convex and its summation preserves the convexity of the original non-linear iterative partial least squares objective, \ie~the first term in Eq.~\eqref{eq:objective_function}.
As a result, the unique solution of the objective function can be obtained as the root of its derivative and has the form of Eq.~\eqref{eq:optimal_w}.

Our regularizer is an upper bound on the absolute difference
\begin{align}
	\label{eq:non_conv_obj}
	\left| \frac{1}{k-1}\vec w\T\vec X_p^\text{T} \vec X_p\vec w-\frac{1}{k-1}\vec w\T\vec X_q\T \vec X_q\vec w \right|
\end{align}
between the domain-specific sample variances in the direction $\vec w$.
To see this, consider the eigenvector matrix $\vec K$ and the eigenvalues $\lambda_1,\ldots,\lambda_d$ as in Eq.~\eqref{eq:diff_matrix}.
Then, by letting $\vec v=(v_1,\ldots,v_d)=\vec K\T\vec w$, Eq.~\eqref{eq:non_conv_obj} is equal to
\begin{align*}
	|\vec w\T\vec K \mathrm{diag}(\lambda_1,\ldots,\lambda_d)&\vec K\T\vec w|
	= |v_1^2\lambda_1+\ldots+v_d^2\lambda_d|
	\leq |v_1^2\lambda_1|+\ldots+|v_d^2\lambda_d|\\
	&= v_1^2|\lambda_1|+\ldots+v_d^2|\lambda_d|
	= \vec v\T \mathrm{diag}(|\lambda_1|,\ldots,|\lambda_d|)\vec v
	= \vec w\T \boldsymbol{\Lambda}\vec w.
\end{align*}
This shows that the proposed regularizer corresponds to an upper bound on the difference between the source sample variance and the target sample variance in the direction $\vec w$.
It can therefore be interpreted as biasing the non-linear iterative partial least squares solutions towards directions with a low variance difference between the domains in the projection space.

The derivations above allow to interpret the regularization strength $\gamma$ as trade-off between high input-output covariance in the source domain and low variance difference between the domains.
This leads to intuitive heuristics for default values of $\gamma$ as proposed in Subsection~\ref{subsec:parameter_setting}.
With the value of $\gamma$ as in Eq.~\eqref{eqn:param_heuristic} we articulate our preference of treating regression and domain alignment as equal important in a range around the optimal non-linear iterative partial least squares solution.

\subsection{Empirical Evaluations}
\label{subsec:experiments_dipls}

In this subsection we compare our method with several state-of-the-art domain adaptation techniques on two benchmark datasets from analytical chemistry.

\paragraph{Datasets}
We consider two benchmark datasets from analytical chemistry: The \textit{Corn} dataset and the \textit{Tablets} dataset.
The Corn dataset is a well established dataset used to benchmark instrument standardization algorithms in analytical chemistry and comprises near-infrared spectra from a set of $80$ corn samples measured on $3$ similar spectrometers (m5,mp5 and mp6)\footnote{\url{http://www.eigenvector.com/data/Corn/} (accessed April 11, 2018)}.
The goal is to predict oil, water, starch and protein contents from the corresponding spectra.
The Tablets dataset was originally published by the \textit{international diffuse reflectance conference} in 2002 and consists of near-infrared spectra of $654$ pharmaceutical tablets recorded on two spectrometers at $650$ individual wavelengths\footnote{\url{http://www.eigenvector.com/data/tablets/} (accessed January 14, 2019)}.
The goal is to predict the active pharmaceutical ingredient concentration from the near-infrared spectra.

\paragraph{Validation Procedure}
For the Corn dataset we consider domain adaptation between the different instruments by defining the source domain as the first $40$ samples and the target domain as the following $40$ samples of the dataset.
Given the four output variables, this translates into $24$ domain adaptation scenarios.
We split the target domain data randomly into an unlabelled training and a test set comprising $24$ and $16$ samples, respectively.
For the Tablets dataset, we split the data into a calibration, a validation and a test set comprising $155$, $40$ and $460$ samples measured on both instruments.
We consider domain adaptation between calibration and test sets from the two instruments including the wavelength range $600$-$1600$ nanometres and proceed in analogy with the experiments on the Corn dataset. We compare the following approaches:
\begin{itemize}
	\item \textit{Partial Least Squares} (PLS)~\cite{wold1975soft}: The number of latent variables is searched in the set $\{1,\ldots,12\}$ using $10$-fold cross-validation in the source domain.
	\item \textit{Correlation Alignment}~\cite{sun2016return}: CORAL applied to the projections of the training set of the PLS model followed by ordinary least squares regression.
	\item \textit{Transfer Component Analysis}~\cite{pan2011domain}: Motivated by Beer Lambert's law, we use a linear kernel. We search the best number of latent variables in the set $\{1,\dots,30\}$ and $\mu \in \{10^{-10},10^{-9},\dots,1\}$ by using all labels in the target domain.
	\item \textit{Joint Distribution Optimal Transport} (JDOT)~\cite{courty2017joint}: We vary $\alpha \in \{10^{-10},10^{-9},\dots,1\}$ as proposed in the original paper.
	In addition, we vary the linear kernel ridge regression parameter $\lambda$ in the range $\{10^{-10},10^{-9},\dots,1\}$. Both parameters are tuned using target labels.
	\item \textit{DIPALS} with heuristic: The parameter $\gamma$ is set for each latent variable using the parameter heuristic described in Subsection~\ref{subsec:parameter_setting}.
	\item \textit{DIPALS} with source training: The parameter $\gamma$ is trained by means of $10$-fold cross-validation on the source data in the set $\{0.1,1,\dots,10^{10}\}$ for $i \in \{1,\dots,s\}$ latent variables.
\end{itemize}
Note that we apply the target test set for searching the best parameters for TCA and JDOT.
Without using target labels, we were not able to get competitive results on our datasets.

\paragraph{Results}
The domain differences observed in the Corn datasets occur mainly due to changes in the instruments' response and are mostly manifested in offsets between the corresponding spectra.
All in all, we found similar performance of TCA and DIPALS with slightly better results with the former for prediction of oil content and with the latter when predicting moisture as shown in Table~\ref{tab:Corn}.
Although JDOT could improve the accuracy on the target task for determination of protein and starch compared to the PLS, accuracy was significantly lower in most scenarios compared to DIPALS despite tuning of the hyper parameters using target labels.
Finally, no improvement of the PLS model could be achieved with CORAL.

\begin{table}[ht]
	% 	\small
	\centering
	\resizebox{\linewidth}{!}{%
		\begin{tabular}{ |l|l|c|c|c|c|c|c| }
			\hline
			Response & Scenario & NIPALS & CORAL & TCA (Sup) & JDOT (Sup) & DIPALS (Heur) & DIPALS (Source) \\
			\hline
			\multirow{6}{*}{Protein}     & m5$\rightarrow$mp5  & 0.68$\pm$0.13 & 0.65$\pm$0.14 & 0.40$\pm$0.03 & 0.61$\pm$0.06 & \bf 0.38$\pm$0.08        & 0.39$\pm$0.09 \\
			& m5$\rightarrow$mp6  & 0.70$\pm$0.12 & 0.77$\pm$0.11 & 0.41$\pm$0.05 & 0.57$\pm$0.04 & 0.\bf 41$\pm$0.03        & 0.42$\pm$0.10 \\
			& mp5$\rightarrow$m5  & 0.70$\pm$0.11 & 0.71$\pm$0.09 & \bf 0.43$\pm$0.09 & 0.57$\pm$0.05 & 0.44$\pm$0.07        & 0.44$\pm$0.08 \\
			& mp5$\rightarrow$mp6 & 0.65$\pm$0.12 & 0.66$\pm$0.12 & \bf 0.36$\pm$0.04 & 0.59$\pm$0.07 & 0.50$\pm$0.08        & 0.49$\pm$0.04 \\
			& mp6$\rightarrow$m5  & 0.69$\pm$0.10 & 0.69$\pm$0.08 & 0.43$\pm$0.09 & 0.58$\pm$0.07 & \bf 0.43$\pm$0.07        & 0.43$\pm$0.12 \\
			& mp6$\rightarrow$mp5 & 0.66$\pm$0.11 & 0.61$\pm$0.15 & 0.41$\pm$0.05 & 0.44$\pm$0.05 & 0.40$\pm$0.06        & \bf 0.36$\pm$0.03 \\
			\hline
			\multirow{6}{*}{Starch}      & m5$\rightarrow$mp5  & 1.19$\pm$0.17 & 1.11$\pm$0.19 & 0.70$\pm$0.08 & 0.76$\pm$0.08 & 0.69$\pm$0.18        & \bf 0.66$\pm$0.10 \\
			& m5$\rightarrow$mp6  & 1.08$\pm$0.17 & 1.11$\pm$0.21 & 0.67$\pm$0.10 & 0.75$\pm$0.08 & \bf 0.64$\pm$0.15        & 0.68$\pm$0.12 \\
			& mp5$\rightarrow$m5  & 1.38$\pm$0.26 & 1.30$\pm$0.17 & 0.68$\pm$0.11 & 0.83$\pm$0.06 & 0.71$\pm$0.08        & \bf 0.67$\pm$0.15 \\
			& mp5$\rightarrow$mp6 & 1.20$\pm$0.16 & 1.27$\pm$0.13 & \bf 0.68$\pm$0.09 & 0.82$\pm$0.08 & 0.80$\pm$0.14        & 0.72$\pm$0.15 \\
			& mp6$\rightarrow$m5  & 1.38$\pm$0.17 & 1.48$\pm$0.23 & \bf 0.64$\pm$0.10 & 0.82$\pm$0.07 & 0.76$\pm$0.14        & 0.69$\pm$0.15 \\
			& mp6$\rightarrow$mp5 & 1.21$\pm$0.14 & 1.30$\pm$0.11 & \bf 0.55$\pm$0.08 & 0.81$\pm$0.06 & 0.72$\pm$0.18        & 0.89$\pm$0.19 \\
			\multirow{6}{*}{Oil}         & m5$\rightarrow$mp5  & 0.27$\pm$0.03 & 0.27$\pm$0.05 & \bf 0.15$\pm$0.02 & 0.20$\pm$0.03 & 0.19$\pm$0.03        & 0.19$\pm$0.01 \\
			\hline
			& m5$\rightarrow$mp6  & 0.24$\pm$0.05 & 0.30$\pm$0.03 & \bf 0.14$\pm$0.01 & 0.22$\pm$0.03 & 0.17$\pm$0.02        & 0.18$\pm$0.04 \\
			& mp5$\rightarrow$m5  & 0.21$\pm$0.02 & 0.24$\pm$0.03 & \bf 0.16$\pm$0.02 & 0.22$\pm$0.02 & 0.26$\pm$0.04        & 0.21$\pm$0.02 \\
			& mp5$\rightarrow$mp6 & 0.21$\pm$0.03 & 0.21$\pm$0.03 & \bf 0.15$\pm$0.02 & 0.20$\pm$0.03 & 0.20$\pm$0.02        & 0.21$\pm$0.03 \\
			& mp6$\rightarrow$m5  & 0.22$\pm$0.02 & 0.23$\pm$0.03 & \bf 0.17$\pm$0.01 & 0.22$\pm$0.03 & 0.23$\pm$0.05        & 0.19$\pm$0.02 \\
			& mp6$\rightarrow$mp5 & 0.20$\pm$0.03 & 0.22$\pm$0.03 & \bf 0.16$\pm$0.02 & 0.21$\pm$0.02 & 0.21$\pm$0.04        & 0.18$\pm$0.01 \\
			\multirow{6}{*}{Moisture}    & m5$\rightarrow$mp5  & 0.24$\pm$0.03 & 0.28$\pm$0.04 & 0.24$\pm$0.02 & 0.30$\pm$0.04 & \bf 0.22$\pm$0.04        & 0.22$\pm$0.08 \\
			\hline
			& m5$\rightarrow$mp6  & 0.27$\pm$0.03 & 0.27$\pm$0.04 & 0.27$\pm$0.02 & 0.31$\pm$0.04 & 0.25$\pm$0.03        & \bf 0.20$\pm$0.02 \\
			& mp5$\rightarrow$m5  & 0.26$\pm$0.03 & 0.27$\pm$0.03 & 0.29$\pm$0.03 & 0.27$\pm$0.05 & \bf 0.23$\pm$0.06        & 0.25$\pm$0.06 \\
			& mp5$\rightarrow$mp6 & 0.27$\pm$0.02 & 0.26$\pm$0.04 & 0.28$\pm$0.02 & 0.31$\pm$0.03 & 0.23$\pm$0.04        & \bf 0.20$\pm$0.03 \\
			& mp6$\rightarrow$m5  & 0.28$\pm$0.03 & 0.28$\pm$0.03 & 0.31$\pm$0.03 & 0.24$\pm$0.01 & \bf 0.22$\pm$0.03        &  0.22$\pm$0.05 \\
			& mp6$\rightarrow$mp5 & 0.27$\pm$0.03 & 0.26$\pm$0.02 & 0.26$\pm$0.03 & 0.31$\pm$0.04 & \bf 0.17$\pm$0.02        & 0.20$\pm$0.03 \\
			\hline 
		\end{tabular}
	}
	\caption[Average root mean squared error and standard deviation of the proposed Domain-Invariant Partial Least Squares algorithm on the Corn dataset.]{Average root mean squared errors and standard deviations for $10$-fold cross-validation on Corn dataset. The best value for each scenario is indicated in bold. (Sup) indicates supervised hyper-parameter selection using target domain labels, (Heur) indicates parameter setting using Eq.~\eqref{eqn:param_heuristic} and (Source) indicates parameter setting using cross-validation on source.}
	\label{tab:Corn}
\end{table}

Similar to the Corn datasets, the Tablets dataset involves domain adaptation between similar near-infrared spectrometers.
Accordingly, we found similar overall performance of DIPALS and TCA on the target tasks as shown by Table~\ref{tab:tablets}.
In contrast, JDOT could not surpass the performance of the PLS model, which can be explained with the fact that the Tablets dataset contains several $\vec y$-direction outliers, \ie~spectra with wrongly assigned values of active pharmaceutical ingredients, that apparently lead to erroneous transport of the joint distribution.

\begin{table}[ht]
	% 	\small
	\centering
	\resizebox{\linewidth}{!}{%
		\begin{tabular}{ |l|c|c|c|c|c|c|c| }
			\hline
			Scenario & NIPALS & CORAL & TCA (Sup) & JDOT (Sup) & DIPALS (Heur) & DIPALS (Source) \\
			\hline
			cal1$\rightarrow$test2 & 9.48$\pm$0.64 & 9.56$\pm$0.58 & 8.50$\pm$0.59 & 12.86$\pm$1.07 & \bf 7.69$\pm$0.47        & 8.04$\pm$0.53 \\
			test2$\rightarrow$cal1 & 8.18$\pm$0.94 & 7.89$\pm$1.01 & 7.23$\pm$1.04 & 10.26$\pm$0.67 & \bf 6.54$\pm$1.25        & 7.58$\pm$1.59 \\
			cal2$\rightarrow$test1 & 8.35$\pm$0.58 & 7.51$\pm$0.73 & \bf 6.63$\pm$0.92 & 13.46$\pm$0.50 & 7.12$\pm$0.68        & 6.75$\pm$0.48 \\
			test1$\rightarrow$cal2 & 8.12$\pm$1.55 & 8.39$\pm$1.33 & \bf 7.26$\pm$1.83 & 10.24$\pm$0.55 & 7.66$\pm$1.66        & 8.43$\pm$1.25\\
			\hline 
		\end{tabular}
	}
	\caption[Average root mean squared error and standard deviation of Domain-Invariant Partial Least Squares on the Tablets dataset.]{Average root mean squared errors and standard deviations for $10$-fold cross-validation on Tablets dataset. The best value for each scenario is indicated in bold. (Sup) indicates supervised hyperparameter selection using target domain labels, (Heur) indicates parameetr setting using Eq.~\eqref{eqn:param_heuristic} and (Source) indicates parameter setting using cross-validation on source.}
	\label{tab:tablets}
\end{table}

\section{Discussion}
\label{sec:conclusion_applications}

In this section, we applied our ideas on two industrial domain adaptation problems.
To extend the scope of problems in this thesis and to show the general applicability of the proposed ideas, we choose two regression problems.

The first problem is in the area of industrial manufacturing.
We propose to transform the data in a new space such that the first moments of samples produced by multiple different tool settings are similar.
In a real world application of industrial manu\-facturing, the proposed methods significantly reduce the prediction error on data originating from already seen tool settings.
The biggest benefit of the proposed method is that it can be applied to unseen data from new unseen tool settings without the need of time and cost intensive collection of training data using these settings.

The second problem is in the area of analytical chemistry.
We consider unsupervised domain adaptation for multivariate regression under linear input-output relationship, multicollinearity and approximately normally distributed domains -- a situation frequently encountered in analytical chemistry.
Motivated by our ideas from Chapter~\ref{chap:cmd_algorithm}, we propose a novel metric-based regularization that performs domain adaptation under the non-iterative partial least squares framework.
Our approach outperforms different state-of-the-art domain adaptation techniques for linear regression on two benchmark datasets from analytical chemistry.
In contrast to state-of-the-art calibration methods in this field, our method does not require labeled calibration samples in the target domain.

Unfortunately, parameter selection becomes an important issue without labels in the target domain.
In the first application this issue is mitigated, but not completely solved, by focusing on domains originating from similar physical tool settings.
In the second problem, our ideas from Chapter~\ref{chap:cmd_algorithm} are applied to utilize unlabeled data from the target domain.

Although better results might be achieved by using target labels, the lack of information in our industrial problem settings is due to high costs in money and resources.
The proposed methods therefore underpin the usefulness of our ideas for resource and money restricted applications.

\newpage

\chapter{Conclusion}
\label{chap:conclusion}

In this thesis we study domain adaptation under weak assumptions on the similarity of source and target distribution.
Our assumptions are based on moment distances which realize weaker similarity concepts than most other common probability metrics.
Under this new setting, we provide new insights to main components of statistical learning:
\begin{itemize}
	\item In Chapter~\ref{chap:learning_bounds} we formalize the novel problem setting, give conditions for the convergence of a discriminative model under this setting and derive bounds describing its generalization ability.
	For smooth densities with weakly coupled marginals, our conditions can be made as precise as required based on the number of moments and the smoothness of the distributions.
	\item In Chapter~\ref{chap:cmd_algorithm} we implement the domain adaptation principle of learning new data representations such that our moment assumptions are satisfied.
	We provide a new moment distance for the regularization of the stochastic optimization of neural networks and provide several properties including some relations to other probability metrics, a dual form, a computationally efficient estimation and a learning bound for the regularization.
	To underpin the relevance of our ideas beyond the conditions studied in Chapter~\ref{chap:learning_bounds}, we perform empirical experiments on a new artificial dataset and $21$ standard benchmark tasks for domain adaptation which are based on $6$ large scale datasets.
	Results show that our method often outperforms related alternatives which are based on stronger assumptions on the similarity of distributions.
	\item In Chapter~\ref{chap:applications} we apply our ideas on two industrial regression problems.
	In contrast to classical approaches in these fields, our new moment-based methods achieve low errors on new domains with missing target labels.
\end{itemize}
Our focus on studying weak assumptions on the similarity of distributions enables straight forward extensions using stronger assumptions, \eg~new learning bounds and algorithms.
All in all, we consider that throughout this thesis we introduce some theoretical and computational novelties that can benefit the field of statistical learning and, in particular, domain adaptation.

With regard to the work discussed in this thesis, we would primarily like to extend the proposed bounds on the difference between distributions by further upper bounding the entropy-based terms in terms of smoothness of log-densities as it is done \eg~in~\cite{barron1991approximation}.
Such bounds can lead to estimates of the number of moments needed such that an underlying smooth distribution is defined up to arbitrary accuracy which is, to the best of our knowledge, an open problem~\cite{schmudgen2017moment,tardella2001note}.
Concerning improved algorithms for domain adaptation, future plans are centered around entropy minimization as suggested by our learning bounds.
Generally in industrial applications with low sample sizes we consider a significant potential for moment distance based domain adaptation as a starting point for developing more problem-specific distance concepts.

\newpage
\bibliographystyle{plain}
\bibliography{thesis.bib}

\chapter*{}

\AddToShipoutPicture*{\BackgroundPic}
\blfootnote{The quote above is attributed to Albert Einstein and written in English with some letters exchanged by numbers, \eg~the letter \textit{E} is exchanged by the number \textit{3} and the letter \textit{A} is exchanged by the number \textit{4}. The text without exchanged numbers is: \textit{The measure of intelligence is the ability to change.} However, the example above shows that humans can adapt from the distribution of english texts to the different distribution underlying the text above, and therefore, are able to solve problems of domain adaptation.}

\end{document}